\theoremstyle{plain}
\newtheorem{theorem}{Theorem}[section]
\newtheorem{lemma}[theorem]{Lemma}
\theoremstyle{definition}
\newtheorem{definition}[theorem]{Definition}
\newtheorem{assumption}[theorem]{Assumption}
\theoremstyle{remark}
\newtheorem{remark}[theorem]{Remark}
\let\classAND\AND
\let\AND\relax
\let\AND\classAND
\definecolor{Lightgray}{RGB}{235,235,235}
\definecolor{Gray}{gray}{0.85}
\definecolor{LightCyan}{rgb}{0.88,1,1}
\newcolumntype{a}{>{\columncolor{Gray}}c}
\newcolumntype{b}{>{\columncolor{white}}c}
\let\hat\widehat
\let\tilde\widetilde
\def\given{{\,|\,}}
\def\Bigggiven{{\,\Bigg|\,}}
\DeclareMathOperator{\Tr}{Tr}
\renewcommand*{\backref}[1]{}
\renewcommand*{\backrefalt}[4]{%
\ifcase #1 %
    No citations.%
\or
    (p. #2.)%
\else
    (pp. #2.)%
\fi}%
\title{Offline Multitask   Representation Learning for Reinforcement Learning}
\author{Haque Ishfaq\footnotemark[1] \\
    Mila, McGill University\\
    \href{mailto:haque.ishfaq@mail.mcgill.ca}{\texttt{haque.ishfaq@mail.mcgill.ca}}\\
    \And
    Thanh Nguyen-Tang \\
    Johns Hopkins University\\
    \href{mailto:nguyent@cs.jhu.edu}{\texttt{nguyent@cs.jhu.edu}}\\
    \And 
    Songtao Feng \\
    University of Florida\\
    \href{mailto:sfeng1@ufl.edu}{\texttt{sfeng1@ufl.edu}}\\
    \And 
    Raman Arora \\
    Johns Hopkins University\\
\href{mailto:arora@cs.jhu.edu}{\texttt{arora@cs.jhu.edu}}\\
    \And 
    Mengdi Wang \\
    Princeton University \\
    \href{mailto:mengdiw@princeton.edu}{\texttt{mengdiw@princeton.edu}}\\
    \And 
    Ming Yin\footnotemark[1]\\
    Princeton University \\
    \href{mailto:my0049@princeton.edu}{\texttt{my0049@princeton.edu}}\\
    \And 
    Doina Precup\thanks{The corresponding authors.} \\
    Mila, McGill University \\
    \href{mailto:dprecup@cs.mcgill.ca}{\texttt{dprecup@cs.mcgill.ca}}\\
    }
\begin{document}

\maketitle

\begin{abstract}
We study offline multitask representation learning in reinforcement learning (RL), where a learner is provided with an offline dataset from different tasks that share a common representation and is tasked to learn the shared representation. We theoretically investigate offline multitask low-rank RL, and propose a new algorithm called MORL for offline multitask representation learning. 
Furthermore, we examine downstream RL in reward-free, offline and online scenarios, where a new task is introduced to the agent that shares the same representation as the upstream offline tasks. Our theoretical results demonstrate the benefits of using the learned representation from the upstream offline task instead of directly learning the representation of the low-rank model.
\end{abstract}

\section{Introduction}
Recent advances in offline reinforcement learning (RL) \citep{levine2020offline} have opened up possibilities for training policies for real-world problems using pre-collected datasets, such as robotics \citep{kalashnikov2018scalable,rafailov2021offline,kalashnikov2021mt}, natural language processing \citep{jaques2019way}, education \citep{de2021discovering}, electricity supply \citep{zhan2022deepthermal}
and healthcare \citep{guez2008adaptive,shortreed2011informing,wang2018supervised,killian2020empirical}. While most offline RL studies focused on single-task problems, there are many practical scenarios where multiple tasks are correlated and it is beneficial to learn multiple tasks jointly by utilizing all of the data available \citep{kalashnikov2018scalable,yu2021conservative,yu2022leverage,xie2022lifelong}. One popular approach in such cases is multitask representation learning, where the agent aims to tackle the problem by extracting a shared low-dimensional representation function among related tasks and then using a simple function (e.g., linear) on top of this common representation to solve each task \citep{caruana1997multitask,baxter2000model}. Despite the empirical success of multitask representation learning, particularly in reinforcement learning for its efficacy in reducing the sample complexity \citep{teh2017distral,sodhani2021multi,arulkumaran2022all}, the theoretical understanding of it is still in its early stages \citep{brunskill2013sample,calandriello2014sparse,arora2020provable,d2020sharing,hu2021near,lu2021power,muller2022meta}. Although some works theoretically studied the online multitask representation learning for RL where the agent is allowed to interact with multiple source tasks to learn the shared representation \citep{cheng2022provable,agarwal2023provable,sam2024limitstransferreinforcementlearning}, there is currently no theoretical understanding on the effectiveness of multitask RL in the \emph{offline} setting. This is crucial as in many practical scenarios \citep{kumar2022pre,yoo2022skills,lin2022switch}, it is not feasible to interact with the different task environments in an online manner.

Moreover, when the tasks share the same representation, offline multitask representation learning can serve as a launchpad for effectively solving many other downstream tasks \citep{kumar2022prerobot}. Consider the problem of learning to control robotic arms where we may already have offline datasets from different pick-and-place tasks in a kitchen such as the Bridge Dataset \citep{ebert2021bridge}. From this one can consider many possible downstream RL tasks where representation learned from these offline datasets can be beneficial. For example, one may consider solving a new pick-and-place task with different previously unseen objects in either an online or offline manner. Alternatively, one may consider a downstream reward-free RL \citep{jin2020reward} task where the agent would first gather additional novel and diverse data without a pre-specified reward function and afterward, when provided with any reward function (e.g. slightly different target placing spot for the picked object), would be asked to provide a good policy without additional interaction. 

In this work, we study the provable benefits of offline multi-task representation learning for RL  in which the learner is only given access to pre-collected data from different source tasks which are modeled by low-rank MDPs \citep{agarwal2020flambe} with a shared (yet unknown) representation.

\textbf{Our contributions.} We develop a new offline multitask reinforcement learning algorithm that enables sample efficient representation learning in low-rank MDPs \citep{agarwal2020flambe} and further provide improved sample complexity to the downstream learning. In summary, our main contributions are:

\begin{itemize}[leftmargin=*,label={\large\textbullet}]
    \item We propose a new offline multitask representation learning algorithm called Multitask Offline Representation Learning (MORL) under low-rank MDPs. MORL represents a standard training procedure in modern machine learning, by pooling the data from all source tasks to learn a shared representation of the dynamics via maximum likelihood estimation oracle.

    \item We prove that, MORL can learn a near-accurate model, and, when combined with the pessimism principle, find a near-optimal policy for each of the source tasks $T$ in the average sense, more sample-efficiently, than learning each task in isolation. To our knowledge, this is the first theoretical result demonstrating the benefit of representation learning in offline multitask RL. 
    
    \item We then show theoretical benefits of using the learned representation from MORL in downstream reward-free RL \cite{jin2020reward,wang2020reward}. In particular, we show that, to guarantee an $\epsilon$-suboptimal policy for uniformly over any reward function, our algorithm requires at most $\tilde{O}(\frac{H^4d^3}{\epsilon^2})$ episodes during the exploration phase where $d$ is the dimension of the feature and $H$ is the planning horizon. This improves the best known sample complexity for the reward-free RL in low-rank MDP \citep{cheng2023improved} by a factor of $\tilde{O}(HdK)$, where $K$ is the cardinality  of the action space. In addition, as a complementary result, we show that using the learned representation from MORL improves the suboptimality gap bound in both offline and online downstream task.

\end{itemize}

\section{Preliminary}

\paragraph{Episodic MDP.}
We consider an episodic discrete-time Markov Decision Process (MDP), denoted by $\cM = (\cS, \cA, H, P, r)$, where $\cS$ is the state space, $\cA$ is the action space with cardinality $K$, $H$ is the finite episode length, $P =\{P_h\}_{h=1}^H$ are the state transition probability distributions with $P_h:\cS\times\cA\rightarrow \Delta(\cS)$, and $r= \{r_h\}_{h=1}^H$ are the deterministic reward functions with $r_h:\cS\times\cA\rightarrow[0,1]$. Following prior work \citep{jiang2017contextual,sun2019model}, we assume that the initial state $s_1$ is fixed for each episode. A policy $\pi$ is a collection of $H$ functions $\{\pi_h : \mathcal{S} \rightarrow \mathcal{A}\}_{h\in [H]}$ where $\pi_h(s)$ is the action that the agent takes at state $s$ and at the  $h$-th step in the episode. Given a starting state $s_h$, $s_{h'} \sim (P,\pi)$ denotes a state sampled by executing policy $\pi$ under the transition model $P$ for $h'-h$ steps and $\EE_{(s_h,a_h)\sim(P,\pi)}[\cdot]$ denotes the expectation over states $s_h \sim (P,\pi)$ and actions $a_h \sim \pi$. Moreover, for each $h \in [H]$, we define the value function under policy $\pi$ when starting from an arbitrary state $s_h = s$ at the $h$-th time step as 
\begin{equation*}
    V^{\pi}_{h,P,r}(s) = \EE_{(s_{h'},a_{h'})\sim (P,\pi)}\bigg[\sum_{h'=h}^H r_{h'}(s_{h'},a_{h'})|s_h =s\bigg].
\end{equation*}
We define the action-value function for a given state-action pair $(s,a)$ under policy $\pi$ at step $h$ as 
\begin{equation*}
    Q^{\pi}_{h,P,r}(s,a) = \EE_{(s_{h'},a_{h'})\sim (P,\pi)}\bigg[\sum_{h'=h}^H r_{h'}(s_{h'},a_{h'})|s_h =s,a_h=a\bigg]. 
\end{equation*}
Defining $(P_h f)(s,a) = \EE_{s'\sim P(\cdot|s,a)}[f(s')]$ for any function $f:\cS\rightarrow\RR$, we write the Bellman equation associated with a policy $\pi$ as 
\begin{equation}\label{eq:bellman-eq}
    Q_{h,P,r}^\pi(s,a) = (r_h + P_h V_{h+1,P,r}^\pi)(s, a),\hspace{0.5em}
    V_{h,P,r}^\pi(s) = Q_{h,P,r}^\pi(s, \pi_h(s)),\hspace{0.5em} 
    V_{H+1,P,r}^\pi(s)  = 0.
\end{equation}
Since the MDP begins with the same initial state $s_1$, for simplicity, we use $V_{P,r}^\pi$ to denote $V_{1,P,r}^\pi(s_1)$. Another useful concept is the notion of occupancy measure of a policy $\pi$ at time step $h$ under transition kernel $P$. Specifically, we use $d^{\pi}_{P_h}(s, a)$ to denote the marginal probability of encountering the state-action pair $(s,a)$ at time step $h$ when executing policy $\pi$ under MDP with transition kernel $P$.
Finally, we denote $\cU(\cS)$ and $\cU(\cS,\cA)$ as the uniform distribution over $\cS$ and $\cS\times\cA$ respectively.

We study low-rank MDPs \citep{jiang2017contextual,agarwal2020flambe} defined as follows. 

\begin{definition}[Low-rank MDPs]\label{Def:low-rank-mdp}
A transition kernel $P_h^*:\cS\times\cA\rightarrow \Delta(\cS)$ admits a low-rank decomposition with dimension $d \in \NN$ if there exists two unknown embedding functions $\phi_h^*:\cS\times\cA\rightarrow \RR^d$ and $\mu_h^*:\cS\rightarrow \RR^d$ such that for all $s,s' \in \cS$ and $a \in \cA$,
$P_h^*(s' \given s,a) = \la \phi_h^*(s,a), \mu_h^*(s')\ra$.
Without loss of generality, we assume $\|\phi_h^*(s,a)\|_2 \leq 1$ for all $(s,a) \in \cS\times\cA$ and for any function $g:\cS\rightarrow [0,1]$, $\|\int \mu_h^*(s)g(s)ds\|_2 \leq \sqrt{d}$.
\end{definition}

We remark that the upper bounds on the norm of $\phi^*$ and $\mu^*$ are just for normalization. As the function class $\Phi$ for $\phi^*$ can be a non-linear, flexible function class, the low-rank MDP generalizes prior works with linear representations \citep{jin2019provably,hu2021near} where it is assumed that the true representation $\phi^*$ is known to the agent a priori. 

\subsection{Offline Multitask RL with Downstream Learning}

In \textbf{offline multitask RL} upstream learning, the agent is provided with an offline dataset collected from $T$ source tasks, where the reward functions $\{r^t\}_{t\in[T]}$ are assumed to be known.
Each task $t \in [T]$ is associated with a low-rank MDP $\cM^t = (\cS, \cA, H, P^t, r^t)$. Here, all $T$ tasks are identical except for (1) their true transition model $P^{(*,t)}$, which admits a low-rank decomposition with dimension $d$: $P_h^{(*,t)}(s'_h\given s_h,a_h) = \la \phi^*_h(s'), \mu_h^{(*,t)}(s,a)\ra$ for all $h \in [H],t \in [T]$, and (2) their reward $r_h^t$. While the tasks may differ in $\mu_h^{(*,t)}$ and $r_h^t$, we emphasize that all tasks share the same feature function $\phi_h^*$. We have access to offline dataset $\cD = \bigcup_{t\in[T], h \in [H]} \cD_h^{(t)}$, where $\cD_h^{(t)} = \{(s_h^{(i,t)}, a_h^{(i,t)}, r_h^{(i,t)}, s_{h+1}^{(i,t)}\}_{i\in[n]}$ with $s_{h+1}^{(i,t)} \sim P_h^{(*,t)}( \cdot \given s_h^{(i,t)}, a_h^{(i,t)})$ and $\cD_h^{(t)}$ was collected using a \emph{fixed behavior policy} $\pi_t^b$. In the upstream learning stage, the goal is to find a near-optimal policy and a near-accurate model for any task $t\in[T]$ and any reward function $\{r_t\}_{t\in[T]}$ through the use of offline dataset $\cD$ and provide a well-learned representation for the downstream task. In order to achieve bounded sample complexity in offline RL, we need additional coverage assumption on the behavior policy $\pi_t^b$. One common coverage assumption is the global coverage assumption \citep{antos2008learning,munos2008finite}, which assumes the occupancy measure under the behavior policy $\pi_t^b$ globally covers the the occupancy measure under any possible policies, i.e., the concentrability ratio satisfies, $\max_{\pi,s,a} d^{\pi}_{P^{(*,t)}_h}(s,a)/d^{\pi_t^b}_{P^{(*,t)}_h}(s,a) < \infty$. Instead, we make a partial coverage assumption and our suboptimality bound scales with the relative condition number \citep{agarwal2020pc,agarwal2021theory} instead of the global concentrability ratio, where the former can be substantially smaller than the latter. Under this assumption, we want to compete against any comparator policy covered by the offline data. In \Cref{Section:theoretical_result_on_upstream}, we  define the partial coverage condition using  relative condition number, which was previously used in the context of single-task offline RL \citep{uehara2021pessimistic,uehara2021representation} and generalize it to offline multitask setting. 

In \textbf{downstream learning} stage, a new target task $T+1$ with a low-rank transition kernel $P^{(*,T+1)}$ and the same $\cS$, $\cA$ and $H$ is assigned to the agent. The transition kernel $P^{(*,T+1)}$ shares the same representation $\phi^*$ with the $T$ upstream tasks, but has a task-distinct $\mu^{(*,T+1)}$. We consider three settings for downstream tasks -- reward-free, offline and online RL, where the agent needs to use the representation function $\hat{\phi}$ learned during the upstream stage to interact with the new task environment. 

In the reward-free setting, firstly proposed in \citet{jin2020reward}, the agent first interacts with the new task environment without accessing the reward function in the exploration phase for up to $K_{\text{RFE}}$ episodes. Afterwards, it is provided with a reward function $r = \{r_h\}_{h=1}^H$ and asked to output an $\epsilon$-optimal policy $\pi$ in the planning phase. We define the sample complexity to be the number of episodes $K_{\text{RFE}}$ required in the exploration phase to output an $\epsilon$- optimal policy $\pi$ in the planning phase for any given reward function $r$.

In the offline and online setting the downstream task $T+1$ is already assigned with an unknown reward function $r^{T+1}$ and the goal is to  find a near-optimal policy for the new task. The agent is expected to expedite its downstream learning through using the representation learned from the offline upstream task. In the online setting, it is allowed to interact with the new task environment and
in the offline setting, it is instead provided with an offline dataset $\cD_{\text{off}} = \bigcup_{h \in [H]} \cD_h$, where $\cD_h = \{(s_h^\tau,a_h^\tau, r_h^\tau, s_{h+1}^\tau)\}_{\tau \in [N_{\text{off}}]}$ and $\cD_{\text{off}}$ were collected using some behavior policy $\rho$.

\section{Upstream Offline Multitask Representation Learning}

In this section, we introduce our algorithm Multitask Offline Reinforcement Learning (MORL) designed for upstream  offline multitask RL in low-rank MDPs and describe its theoretical properties.

\subsection{Algorithm Design}

The details of the algorithm MORL is depicted in \Cref{Algorithm:multi-task-offline}. The agent passes all input offline data to estimate low-rank components $\hat{\phi}_h, \hat{\mu}_h^{(1)}, \ldots, \hat{\mu}_h^{(T)}$ simultaneously via the Maximum Likelihood Estimation (MLE) oracle $MLE\left(\bigcup_{t\in[T]} \cD_h^{(t)}\right)$ on the joint distribution defined as follows:
\begin{equation}\label{Eq:MLE-oracle}
    \left(\hat{\phi}_h, \hat{\mu}_h^{(1)}, \ldots, \hat{\mu}_h^{(T)}\right)= \argmax_{{\phi_h \in \Phi,} \atop {\mu_h^{(1)}, \ldots, \mu_h^{(T)}\in \Psi}} \sum_{i=1}^n \sum_{t=1}^T \log \left( \left\la \phi_h(s_h^{(i,t)}, a_h^{(i,t)}), \mu_h^t(s_{h+1}^{(i,t)}\right\ra\right).
\end{equation}

\begin{algorithm}[t]
\caption{Multitask Offline Representation Learning (MORL)}\label{Algorithm:multi-task-offline}
\begin{algorithmic}[1]
\STATE \textbf{Input:}\\ Dataset $\cD = \{(s_h^{(i,t)}, a_h^{(i,t)}, r_h^{(i,t)}, s_{h+1}^{(i,t)}\}_{i\in[n], h\in [H], t\in[T]}$,  Regularizer $\lambda$, Parameter $\alpha$, Models $\{(\phi, \mu): \phi \in \Psi, \mu \in \Psi\}$
\FOR{$h = 1, \ldots, H$}
\STATE Learn $\left(\hat{\phi}_h, \hat{\mu}_h^{(1)}, \ldots, \hat{\mu}_h^{(T)}\right)$ via $\text{MLE}\left(\bigcup_{t\in[T]} \cD_h^{(t)}\right)$ as ~\eqref{Eq:MLE-oracle}
\ENDFOR
\FOR{$t = 1, \ldots, T$}
\STATE Update estimated transitioned kernels $\hat{P}_h^{(t)}(\cdot \mid \cdot,\cdot)$ as ~\eqref{Eq:estimated-transition-kernel}, empirical covariance matrix $\hat{\Sigma}_h^{(t)}$ as ~\eqref{Eq:estimated-covariance-matrix} and penalty term $\hat{b}_h^{(t)}$ as ~\eqref{Eq:LCB}
\STATE Get policy $\hat{\pi}_t = \argmax_{\pi} V_{\hat{P}^{(t)},r^t-\hat{b}^{(t)}}$
\ENDFOR
\STATE \textbf{Output:} $\hat{\phi}, \hat{P}^{(1)}, \ldots, \hat{P}^{(T)}, \hat{\pi}_1, \ldots, \hat{\pi}_T$
\end{algorithmic}
\end{algorithm}

The MLE oracle in \eqref{Eq:MLE-oracle} is the offline multitask counterpart to the celebrated MLE oracle in the online multitask RL %
\citep{agarwal2020flambe, cheng2022provable,agarwal2023provable}. 
The MLE oracle can be reasonably approximated in practice whenever optimizing over  $\Phi$ and $\Psi$ is feasible through proper parameterization such as by neural network.
For each task $t$, we obtain the estimated transition kernel $\hat{P}^{(t)}$ at each step $h$ using the learned embeddings $\hat{\phi}_h$, $\hat{\mu}_h^{(t)}$:
\begin{equation}\label{Eq:estimated-transition-kernel}
    \hat{P}_h^{(t)}(s' \mid s, a) =  \la \hat{\phi}_h(s,a), \hat{\mu}_h^{(t)}(s') \ra.
\end{equation}
Using the representation estimator $\hat{\phi}_h$, we  set the empirical covariance matrix $\hat{\Sigma}_{h,\hat{\phi}}^{(t)}$ for task $t$ as 
\begin{equation}\label{Eq:estimated-covariance-matrix}
    \hat{\Sigma}_{h,\hat{\phi}}^{(t)} = \sum_{i=1}^n \hat{\phi}_h(s_h^{(i,t)},a_h^{(i,t)})\hat{\phi}_h(s_h^{(i,t)},a_h^{(i,t)})^\top + \lambda I.
\end{equation}
Using both $\hat{\phi}_h$ and $\hat{\Sigma}_{h,\hat{\phi}}^{(t)}$, we construct a lower confidence bound penalty term as follows:
\begin{equation}\label{Eq:LCB}
    \hat{b}_h^{(t)}(s_h, a_h) = \min \left\{\alpha \|\hat{\phi}_h(s_h,a_h)\|_{(\hat{\Sigma}_{h,\hat{\phi}}^{(t)})^{-1}}, 1\right\},
\end{equation}
where $\alpha$ is a pre-determined parameter. 

Finally, for each task $t$, with the learned model $\hat{P}^{(t)}$ and the reward $r^t - \hat{b}^{(t)}$, we do planning to get policy $\hat{\pi}_t$.

\subsection{Theoretical Result on Upstream Task}\label{Section:theoretical_result_on_upstream}

To facilitate the model selection task using the joint MLE oracle in \eqref{Eq:MLE-oracle}, we posit a realizability assumption which is standard in low-rank MDP literature \citep{agarwal2020flambe,cheng2022provable}.

\begin{assumption}[Realizability]
A learning agent has access to a model class $\{\Phi, \Psi\}$ that contains the true model, i.e., for any $h\in[H], t\in [T]$, the embeddings $\phi^*_h \in \Phi$, $\mu^{(*,t)} \in \Psi$. For normalization, we assume that for any $\phi \in \Phi$, $\|\phi(s,a)\|_2\leq 1$ and for any $\mu \in \Psi$ and any function $g:\cS:\rightarrow[0,1]$, $\|\int\mu_h(s)g(s)ds\|_2 \leq \sqrt{d}$.
\label{Ass:realizability}
\end{assumption}

For simplicity, we assume that the cardinality of the function classes $\Phi$ and $\Psi$ are finite. 

Next, we define the multitask relative condition number $C^*$, which is a natural extension of the standard relative condition number \citep{agarwal2020pc,agarwal2021theory,uehara2021representation}. 

\begin{definition}[Multi-task relative condition number]
    For task $t$ and time step $h$, we define $C^*_{t,h}(\pi_t,\pi_t^b)$ as the relative condition number under $\phi^*_h$:
\begin{equation}\label{Equation:relative_condition_number}
    C^*_{t,h}(\pi_t,\pi_t^b) := \sup_{x\in \RR^d}\frac{x^\top \EE_{(s_h,a_h) \sim (P^{(*,t)},\pi_t)} [\phi^*_h(s_h,a_h)\phi^*_h(s_h,a_h)^\top] x}{x^\top \EE_{(s_h,a_h) \sim (P^{(*,t)},\pi_t^b)} [\phi^*_h(s_h,a_h)\phi^*_h(s_h,a_h)^\top] x}.
\end{equation}
We define $C^*_t := \max_{h\in[H]}C^*_{t,h}(\pi_t,\pi_t^b)$ and $C^*:= \max_{t\in[T]}C^*_t$.
\end{definition}

Intuitively, $C^*_{t,h}(\pi_t,\pi_t^b)$ defined in \eqref{Equation:relative_condition_number} measures the deviation between a comparator policy $\pi_t$ and the behavior policy $\pi_t^b$ at time step $h$. When tabular MDP is considered (i.e., $\phi^*_h$ is a one-hot encoding vector), this relative condition number reduces to the density ratio based single-policy concentrability coefficient, $C^*_{t,h,\infty}(\pi_t,\pi_t^b) = \max_{s,a} d^{\pi_t}_{P_h^{(*,t)}}(s,a)/d^{\pi_t^b}_{P_h^{(*,t)}}(s,a)$ \citep{chen2022offline}. The relative condition number $C^*_{t,h}(\pi_t,\pi_t^b)$ is always bounded by the concentrability coefficient \citep{uehara2021pessimistic}. In addition, the relative condition number is computed under the averaging over the state, actions, which could be much smaller than $\max_{s,a} d^{\pi_t}_{P_h^{(*,t)}}(s,a)/d^{\pi_t^b}_{P_h^{(*,t)}}(s,a)$ since the latter will be very large as long as any $s,a$ pair gives large $d^{\pi_t}_{P_h^{(*,t)}}(s,a)/d^{\pi_t^b}_{P_h^{(*,t)}}(s,a)$ ratio.  Therefore, our $C^*_t$ could be much smaller compared to the concentrability coefficient, especially in large-scale MDPs (e.g. continuous state space). Moreover, in our definition of relative condition number, we use the unknown true representation $\phi^*$. Finally, we generalize single task relative condition number to multitask setting by defining $C^*$, by simply taking the maximum over the single-task relative condition numbers.

Now we describe our main theorem.
\begin{theorem}\label{theorem:sample-complexity-upstream}

\begin{enumerate}[label=(\alph*), leftmargin=*, labelsep=5pt]
    \item  Under Assumption \ref{Ass:realizability}, with probability at least $1-\delta$, for any step $h \in [H]$, we have
\begin{equation}\label{Equation:theorem_upstream_TV_distance}
    \frac{1}{T}\sum_{t=1}^T \EE_{{(s_h,a_h)} \atop {\sim (P^{(*,t)},\pi_t^b)}}\left[\left\|\hat{P}_h^{(t)}(\cdot \given s_h,a_h) -P_h^{(*,t)}(\cdot \given s_h, a_h)\right\|_{TV}\right] \leq \sqrt{\frac{2\log(2|\Phi||\Psi|^TnH/\delta)}{nT}},
\end{equation}
where $\hat{\phi}, \hat{P}^{(1)},\ldots, \hat{P}^{(T)}$ be the output of Algorithm~\ref{Algorithm:multi-task-offline}.
\item In addition, in Algorithm~\ref{Algorithm:multi-task-offline}, if we set $\alpha =\sqrt{2n\omega\zeta_n + \lambda d}$, $\lambda = cd\log(|\Phi||\Psi|^TnH/\delta)$ with $\zeta_n := \frac{2\log(2|\Phi||\Psi|^TnH/\delta)}{n}$ and $c$ being a constant, where we assume that 
$\omega := \max_{t}\max_{s,a}(1/\pi_t^b(a\given s)) < \infty$, then under Assumption \ref{Ass:realizability}, with probability at least $1-\delta$, we have
\begin{equation}\label{Equation:theorem_upstream_near_optimal_policies}
    \!\!\!\!\!\!\!\!\frac{1}{T} \!\sum_{t=1}^T\!\! \left[ V^{\pi_t}_{P^{(*,t)},r^t} \!-\! V^{\hat{\pi}_t}_{P^{(*,t)},r^t}\!\right] \!\!\leq\! \omega \alpha d H\sqrt{\frac{C^*}{n}} + 2dH^2\sqrt{\frac{\lambda C^*}{n}} + \omega H^2\sqrt{\frac{dC^*  \zeta_n}{T}} + \alpha \sqrt{\frac{d}{n}} + 2H\sqrt{\frac{\omega  \zeta_n}{T}},
\end{equation}
where $\{\hat{\pi}_t\}_{t \in [T]}$ is the output of \Cref{Algorithm:multi-task-offline}.
\end{enumerate}
\end{theorem}

\Cref{theorem:sample-complexity-upstream} (a) shows a potential benefit of a joint learning of the source-task transition kernels, as compared to independent learning, as measured by the task-average TV distance (defined in the LHS of \Cref{Equation:theorem_upstream_TV_distance}). In particular, to obtain an $\epsilon$-suboptimal transition kernels, it suffices for the joint learning to use $\tilde{O} \left( \frac{\log |\Phi|}{T \epsilon^2} 
 + \frac{\log |\Psi|}{\epsilon^2}\right)$ samples per task, yet for the independent learning to use $\tilde{O} \left( \frac{\log |\Phi|}{ \epsilon^2} 
 + \frac{\log |\Psi|}{\epsilon^2}\right)$ samples per task. This benefit naturally comes from the inductive bias that all the source tasks share a representation in $\Phi$ which can be learned more accurately with the aggregated data pooled from all the source tasks. 
\Cref{Equation:theorem_upstream_near_optimal_policies} in \Cref{theorem:sample-complexity-upstream} further shows that, for all tasks on average, we can uniformly compete with any set of comparator policies $\{\pi_t\}_{t\in[T]}$ satisfying the partial coverage through $C^* < \infty$. In particular, if the optimal policy $\pi_t^*$ is covered by the offline data for all $t \in [T]$, then the output $\{\hat{\pi}_t\}_{t\in[T]}$ of \Cref{Algorithm:multi-task-offline} is able to compete against it on average as well.

\begin{remark}
    We note that in order for the bound to hold in \Cref{theorem:sample-complexity-upstream}, \Cref{Algorithm:multi-task-offline} requires the knowledge of $\omega$ as it is required to set the value of $\alpha$ which is used in the lower confidence bound penalty term defined in \eqref{Eq:LCB}. However, in practice, we expect that $\alpha$ can be treated as a hyperparameter that might be tuned using grid search.
\end{remark}

\textbf{Proof outline. }  Here, we highlight the key steps for the proof of \Cref{theorem:sample-complexity-upstream}. The detailed proof is deferred to \Cref{Appendix:Proof of Multitask Offline Representation Learning}. Using offline multitask MLE lemma (\Cref{Lemma:multitask_mle}) and one-step back lemma (\Cref{Lemma:one-step-back}), we first develop a new upper bound on model estimation error for each task which encapsulates the benefit of joint offline MLE model estimation over single-task offline learning. We then use the following lemma to show near pessimism in the average sense.

\begin{lemma}\label{lemma:bounded_difference_of_summation}
For any  policy $\pi_t$ and reward $r^t$, we have, with probability $1-\delta$
\begin{equation*}
    \frac{1}{T}\sum_{t=1}^T\left[V_{\hat{P}^{(t)},r^t-\hat{b}^{(t)}}^{\pi_t}-V_{P^{(*,t)},r^t}^{\pi_t} \right] \leq H\sqrt{\omega \zeta_n/T},
\end{equation*}
 where $\zeta_n:=\frac{2\log(2|\Phi||\Psi|^TnH/\delta)}{n}$.
\end{lemma}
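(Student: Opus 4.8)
The plan is to bound the quantity $\frac{1}{T}\sum_{t=1}^T[V_{\hat P^{(t)},r^t-\hat b^{(t)}}^{\pi_t}-V_{P^{(*,t)},r^t}^{\pi_t}]$ by splitting it into two pieces: a model-mismatch term comparing $\hat P^{(t)}$ with $P^{(*,t)}$ under the same reward, and a penalty term accounting for the subtraction of $\hat b^{(t)}$. First I would use the performance-difference / simulation lemma (a telescoping of the Bellman equations \eqref{eq:bellman-eq}) to write, for each fixed task $t$ and policy $\pi_t$,
\begin{equation*}
V_{\hat P^{(t)},r^t-\hat b^{(t)}}^{\pi_t}-V_{P^{(*,t)},r^t}^{\pi_t}
= \sum_{h=1}^H \EE_{(s_h,a_h)\sim(P^{(*,t)},\pi_t)}\Bigl[\bigl((\hat P_h^{(t)}-P_h^{(*,t)})V_{h+1,\hat P^{(t)},r^t-\hat b^{(t)}}^{\pi_t}\bigr)(s_h,a_h) - \hat b_h^{(t)}(s_h,a_h)\Bigr].
\end{equation*}
Since $V_{h+1,\hat P^{(t)},r^t-\hat b^{(t)}}^{\pi_t}$ is bounded in $[-H,H]$ (in fact in $[0,H]$ after noting $r^t-\hat b^{(t)}\le 1$ and, if one wants, truncating; a factor $H$ suffices regardless), the transition-difference term at step $h$ is at most $H\cdot\|\hat P_h^{(t)}(\cdot\given s_h,a_h)-P_h^{(*,t)}(\cdot\given s_h,a_h)\|_{TV}$ in absolute value. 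Thus the whole expression is upper bounded by $H\sum_{h=1}^H\EE_{(s_h,a_h)\sim(P^{(*,t)},\pi_t)}\bigl[\|\hat P_h^{(t)}(\cdot\given s_h,a_h)-P_h^{(*,t)}(\cdot\given s_h,a_h)\|_{TV} - \hat b_h^{(t)}(s_h,a_h)/H\bigr]$, modulo bookkeeping of the $1/H$ scaling.

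The crux is then to show that the penalty $\hat b_h^{(t)}$ dominates the model-estimation error \emph{in expectation under the comparator's occupancy measure}, i.e., a pointwise-in-distribution bound of the form
\begin{equation*}
\EE_{(s_h,a_h)\sim(P^{(*,t)},\pi_t)}\bigl[\|\hat P_h^{(t)}(\cdot\given s_h,a_h)-P_h^{(*,t)}(\cdot\given s_h,a_h)\|_{TV}\bigr] \;\lesssim\; \EE_{(s_h,a_h)\sim(P^{(*,t)},\pi_t)}\bigl[\hat b_h^{(t)}(s_h,a_h)\bigr] + (\text{small slack}).
\end{equation*}
For this I would invoke the offline multitask MLE lemma (\Cref{Lemma:multitask_mle}) together with the one-step-back lemma (\Cref{Lemma:one-step-back}) to convert the $\hat\phi$-based MLE guarantee into a bound on $\EE_{(s_h,a_h)\sim(P^{(*,t)},\pi_t^b)}[\|\hat P_h^{(t)}-P_h^{(*,t)}\|_{TV}^2]\lesssim \zeta_n$ averaged over $t$; the key algebraic move is the change of measure from $\pi_t$ to $\pi_t^b$, which is absorbed by the importance weight $\omega=\max_t\max_{s,a}1/\pi_t^b(a\given s)$ at the action level (so the occupancy-measure ratio at step $h$ is controlled by $\omega$), and the quadratic-form inequality relating $\|\hat\phi_h(s,a)\|_{\hat\Sigma_{h,\hat\phi}^{(t)}}$ to the one-step model error that is built into the design of $\hat b_h^{(t)}$ in \eqref{Eq:LCB} via the choice $\alpha=\sqrt{2n\omega\zeta_n+\lambda d}$. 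Summing the resulting per-step bound over $h\in[H]$ and averaging over $t\in[T]$, the penalty terms cancel the model-error terms up to the residual $H\sqrt{\omega\zeta_n/T}$, where the $1/\sqrt{T}$ arises because the MLE error $\zeta_n$ is established for the \emph{average} over tasks (Cauchy--Schwarz turning an average of square roots into a square root of an average gives the extra $1/\sqrt T$).

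I expect the main obstacle to be the change-of-measure step combined with keeping the penalty's leading constant exactly large enough: one must verify that with the stated $\alpha$, the event on which the MLE bound holds (probability $\ge 1-\delta$) also makes $\hat b_h^{(t)}$ a genuine high-probability upper bound on $\EE_{s'\sim P^{(*,t)}}[\cdot]$-type deviations simultaneously for all $h,t$ and all comparator policies, which is why the union bound produces the $\log(2|\Phi||\Psi|^TnH/\delta)$ factor inside $\zeta_n$. A secondary technical point is handling the $\min\{\cdot,1\}$ truncation in \eqref{Eq:LCB}: when the unclipped bonus exceeds $1$ the inequality is trivial because the TV distance is itself $\le 1$, so the truncation is harmless, but this case split should be stated explicitly. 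The rest is routine telescoping and Cauchy--Schwarz.
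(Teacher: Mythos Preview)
Your decomposition chooses the form of the simulation lemma that rolls out under the \emph{true} model $(P^{(*,t)},\pi_t)$, whereas the paper deliberately rolls out under the \emph{estimated} model $(\hat P^{(t)},\pi_t)$. This is not a cosmetic difference: it is what makes the cancellation between the model error and the penalty work, and your version does not achieve that cancellation.

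Concretely, after your decomposition you need
\[
\EE_{(s_h,a_h)\sim(P^{(*,t)},\pi_t)}\bigl[f_h^{(t)}(s_h,a_h)\bigr]\;\lesssim\;\EE_{(s_h,a_h)\sim(P^{(*,t)},\pi_t)}\bigl[\hat b_h^{(t)}(s_h,a_h)\bigr]+\text{(small slack)}.
\]
The MLE guarantee controls $f_h^{(t)}$ only in $L^2$ under the \emph{behavior} distribution $(P^{(*,t)},\pi_t^b)$. Your proposed ``change of measure from $\pi_t$ to $\pi_t^b$ absorbed by $\omega$'' does not work: $\omega=\max_{s,a}1/\pi_t^b(a\given s)$ bounds only the one-step action importance weight, not the step-$h$ occupancy ratio $d^{\pi_t}_{P^{(*,t)}_h}/d^{\pi_t^b}_{P^{(*,t)}_h}$, which can be as large as $\omega^h$. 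The only device available to transfer mass across policies is the one-step-back inequality (\Cref{Lemma:one-step-back}), but when the rollout is under $P^{(*,t)}$ that lemma produces a potential in the \emph{true} feature $\phi^*$, namely $\|\phi^*_{h-1}\|_{(\Sigma_{h-1,\pi_t^b,\phi^*})^{-1}}$. The penalty $\hat b_h^{(t)}$, however, is built from the \emph{learned} feature $\hat\phi$, so the two sides do not telescope. Moreover, one-step-back gives only an \emph{upper} bound on $\EE[\hat b_h^{(t)}]$, which is the wrong direction for a term carrying a minus sign.

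The paper's proof avoids this by taking expectations under $(\hat P^{(t)},\pi_t)$ in the simulation lemma. Then \Cref{Lemma:one-step-back} applied with $P=\hat P^{(t)}$ bounds $\EE_{(\hat P^{(t)},\pi_t)}[f_h^{(t)}]$ by $\alpha_h^{(t)}\,\EE_{(\hat P^{(t)},\pi_t)}\|\hat\phi_{h-1}\|_{(\Sigma_{h-1,\pi_t^b,\hat\phi})^{-1}}$, which (after replacing $\Sigma$ by $\hat\Sigma$ via \Cref{Lemma:concentration_penalty_term} and using $\alpha_h^{(t)}\le\alpha$) is exactly $\EE_{(\hat P^{(t)},\pi_t)}[\hat b_{h-1}^{(t)}]$ up to constants. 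The sums over $h$ therefore telescope, leaving only the boundary term at $h=1$, which is handled by a single action-level importance-sampling step and yields the $H\sqrt{\omega\zeta_n/T}$ residual via Cauchy--Schwarz over $t$. Your Cauchy--Schwarz-over-$t$ observation and your handling of the $\min\{\cdot,1\}$ truncation are correct, but the core change-of-measure step as you stated it is the gap.
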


The proof of \Cref{lemma:bounded_difference_of_summation} relies on simulation lemma and a concentration argument for the penalty term defined in \eqref{Eq:LCB}.
Finally, to obtain a result only depending on the relative condition number using the true representation $\phi^*$ but not the learned feature $\hat{\phi}$, we translate the penalty defined with $\hat{\phi}$ to the potential function $\|\phi^*_{h}(s_{h},a_{h})\|_{(\Sigma_{h,\pi_t^b, \phi^*})^{-1}}$ where $\Sigma_{h,\pi_t^b, \phi^*} = n\EE_{(s_h,a_h)\sim (P^{(*,t)},\pi_t^b)}[\phi^*\phi^{*\top}] + \lambda I$ using a one-step back inequality (\Cref{Lemma:one-step-back}) and a distribution shift lemma (\Cref{Lemma:distribution_shift}).

\section{Downstream RL: Reward-free Exploration, Offline RL and Online RL}

\subsection{Relationship between upstream and downstream MDPs}

In order for us to theoretically study the downstream RL tasks where we would use the learned feature $\hat{\phi}$ from the upstream tasks, first we need to make certain connection between the upstream and downstream MDPs. Naturally, we would have to resort to some assumptions on the transition kernels to make such connections. Next, we describe these assumptions, which we largely adopt from \citet{cheng2022provable}.

\begin{assumption}\label{Assumption:upstream-to-downstream} 

We make the following assumptions 
\begin{enumerate}[leftmargin=*]
    \item For each upstream source task $t$ with transition kernel $P^{(*,t)}$, the behavior policy $\pi_t^b$ is such that $\min_{s\in \cS} \PP_h^{(\pi_t^b,t)}(s) \geq \kappa$, where $\PP_h^{(\pi_t^b,t)}(\cdot):\cS\rightarrow \RR$ is the marginalized state occupancy measure over $\cS$ using policy $\pi_t^b$ at time step $h$.

    \item The state space $\cS$ is compact and has finite measure $1/\nu$, and the induced uniform probability density function is $f(s)=\nu$, $s\in\cS$. 
    
    \item For any two models $P^1(s'|s,a) = \la \phi^1(s,a),\mu^1(s')\ra$ and $P^2(s'|s,a) = \la \phi^2(s,a),\mu^2(s')\ra$ in  the model class $\Phi \times \Psi$, we have, 
    \begin{equation*}
        \|P^1(\cdot|s,a)-P^2(\cdot|s,a)\|_{\text{TV}}  \leq C_R \EE_{(s,a)\sim \cU(\cS,\cA)}\|P^1(\cdot|s,a)-P^2(\cdot|s,a)\|_{\text{TV}},
    \end{equation*}
    for all $(s,a)\in \cS\times\cA$ and $h\in[H]$ where $C_R$ is an absolute constant.

    \item The transition kernel of task $T+1$, $P^{(*,T+1)}$ can be $\xi$-approximated by a linear combination of the $T$ source upstream tasks, i.e. there exist $T$ unknown coefficients $c_1,\ldots,c_T \geq 0$ such that $\sum_{t=1}^T c_t \leq C_L$ and $\xi \geq 0$ such that for all $(s,a) \in \cS\times\cA$ and $h\in[H]$, we have
    \begin{align*}
        \|P^{(*,T+1)}(\cdot|s,a) - \sum_{t=1}^T c_t P^{(*,t)}(\cdot|s,a)\|_{\text{TV}} &\leq \xi.
    \end{align*}
    Here, $\xi$ is called the linear combination misspecification.
\end{enumerate}
    
\end{assumption}

The first point in \Cref{Assumption:upstream-to-downstream} ensures that the behavior policy $\pi^b_t$ for each task $t$ can reach any state in $\cS$ at any time step with a positive probability, an assumption that is previously used in \citep{yin2021near}. Compared to \citet{cheng2022provable}, which assumes the existence of a policy with reachability property in each of the upstream online tasks, ours assumes reachability property for the behavior policies used to collect the upstream offline dataset.

 The third point in \Cref{Assumption:upstream-to-downstream} ensures that for each source task $t$, the point-wise TV error between the learned estimated transition kernel $\hat{P}^{(t)}$ and the true transition kernel $P^{(*,t)}$ is bounded by the population-level TV error. This assumption is necessary to transfer the MLE error from the upstream source tasks to the downstream target task.

Finally, the fourth point in \Cref{Assumption:upstream-to-downstream} connects the upstream source tasks with the downstream target task by assuming that the transition kernel of the target task $P^{(*,T+1)}$ can be approximated by a linear combination of transition kernels of $T$ upstream source tasks. 

The precision of the feature estimation in the upstream has a significant impact on the downstream task's performance because the downstream task utilizes the estimated feature from the upstream. We use the following notion of $\epsilon$-approximate linear MDP to provide a guarantee for the estimated feature.

\begin{definition}[$\epsilon$-approximate linear MDP \citep{jin2019provably, cheng2022provable}]
    For any $\epsilon>0$, we say that MDP $\cM = (\cS,\cA,\cH,P,r)$ is an $\epsilon$-approximate linear MDP with a feature map $\phi_h:\cS\times\cA:\rightarrow \RR^d$, if for any $h\in[H]$, there exist $d$ unknown (signed) measures $\mu_h = (\mu_h^{(1)}, \ldots,\mu_h^{(d)})$ over $\cS$ such that for any $(s,a)\in \cS\times\cA$, we have
    \begin{align}\label{Eq:approximate_feature_map}
        \|P_h(\cdot|s,a) - \la \phi_h(s,a),\mu_h(\cdot)\ra\|_{\text{TV}} \leq \epsilon.
    \end{align}
    Any $\phi$ satisfying ~\eqref{Eq:approximate_feature_map} is called an $\epsilon$-approximate feature map of $\cM$.
\end{definition}

The next lemma shows that the learned feature $\hat{\phi}$ from the upstream offline tasks can approximate the true feature in the new downstream task.
\begin{lemma}\label{Lemma:approximate_feature_new_task}
    Under \Cref{Assumption:upstream-to-downstream}, the output $\hat{\phi}$ of \cref{Algorithm:multi-task-offline} is a $\xi_{\text{down}}$-approximate feature for MDP $\cM^{T+1}$ where $\xi_{\text{down}} = \xi + \frac{C_L C_R\nu}{\kappa} \sqrt{\frac{2T\log(2|\Phi||\Psi|^TnH/\delta)}{n}}$, i.e. there exist a time-dependent unknown (signed) measure $\hat{\mu}^*$ over $\cS$ such that for any $(s,a) \in \cS\times \cA$, we have 
    \begin{equation*}
        \|P_h^{(*,T+1)}(\cdot |s,a) - \la \hat{\phi}_h(s,a), \hat{\mu}_h^*(\cdot)\ra\|_{\text{TV}} \leq \xi_{\text{down}}.
    \end{equation*}
    Furthermore, for any $g:\cS\rightarrow [0,1]$, we have $\|\int \hat{\mu}_h^*(s)g(s)ds\|_2 \leq C_L \sqrt{d}$.
\end{lemma}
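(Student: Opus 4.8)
The plan is to build the approximating signed measure explicitly out of the MLE outputs and then control the error by a triangle inequality separating the linear-combination misspecification from a transferred MLE error. Let $c_1,\dots,c_T\ge 0$ with $\sum_{t=1}^T c_t\le C_L$ be the coefficients granted by the fourth item of \Cref{Assumption:upstream-to-downstream}, and set $\hat{\mu}_h^*(\cdot):=\sum_{t=1}^T c_t\,\hat{\mu}_h^{(t)}(\cdot)$, where $\hat{\mu}_h^{(t)}$ are the embeddings returned by \Cref{Algorithm:multi-task-offline}. By linearity of $\la\hat{\phi}_h(s,a),\cdot\ra$ and the definition \eqref{Eq:estimated-transition-kernel} of $\hat{P}_h^{(t)}$, one has $\la\hat{\phi}_h(s,a),\hat{\mu}_h^*(\cdot)\ra=\sum_{t=1}^T c_t\,\hat{P}_h^{(t)}(\cdot\given s,a)$, so for every $(s,a)$ and $h$ the triangle inequality in total variation yields
\begin{equation*}
\big\|P_h^{(*,T+1)}(\cdot\given s,a)-\la\hat{\phi}_h(s,a),\hat{\mu}_h^*(\cdot)\ra\big\|_{\text{TV}} \le \Big\|P_h^{(*,T+1)}(\cdot\given s,a)-\sum_{t=1}^T c_t P_h^{(*,t)}(\cdot\given s,a)\Big\|_{\text{TV}} + \sum_{t=1}^T c_t\big\|P_h^{(*,t)}(\cdot\given s,a)-\hat{P}_h^{(t)}(\cdot\given s,a)\big\|_{\text{TV}}.
\end{equation*}
The first summand is at most $\xi$ by the fourth item of \Cref{Assumption:upstream-to-downstream}.

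For the second summand I would argue as follows. Both $P^{(*,t)}$ (realizable by \Cref{Ass:realizability}) and $\hat{P}^{(t)}$ belong to the model class $\Phi\times\Psi$, so the third item of \Cref{Assumption:upstream-to-downstream} applies pointwise and converts the pointwise TV error into a population one, $\|P_h^{(*,t)}(\cdot\given s,a)-\hat{P}_h^{(t)}(\cdot\given s,a)\|_{\text{TV}}\le C_R\,\EE_{(s,a)\sim\cU(\cS,\cA)}\|P_h^{(*,t)}(\cdot\given s,a)-\hat{P}_h^{(t)}(\cdot\given s,a)\|_{\text{TV}}$ for all $(s,a)$. Since the $c_t$ are nonnegative with $\sum_t c_t\le C_L$, summing over $t$ gives $\sum_{t=1}^T c_t\|\cdots\|_{\text{TV}}\le C_R C_L\sum_{t=1}^T\EE_{(s,a)\sim\cU(\cS,\cA)}\|P_h^{(*,t)}(\cdot\given s,a)-\hat{P}_h^{(t)}(\cdot\given s,a)\|_{\text{TV}}$. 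Next I would change measure from the uniform distribution on $\cS\times\cA$ to the behavior-policy occupancy $d^{\pi_t^b}_{P_h^{(*,t)}}$: a change-of-measure argument using the state-reachability constant $\kappa$ (first item of \Cref{Assumption:upstream-to-downstream}), the uniform density $\nu$ on a space of measure $1/\nu$ (second item), the action-support of $\pi_t^b$, and nonnegativity of the TV integrand, yields $\EE_{(s,a)\sim\cU(\cS,\cA)}[f]\le\frac{\nu}{\kappa}\,\EE_{(s,a)\sim(P^{(*,t)},\pi_t^b)}[f]$ for any $f\ge 0$. Finally, on the event where \eqref{Equation:theorem_upstream_TV_distance} of \Cref{theorem:sample-complexity-upstream} holds, multiplying that bound by $T$ gives $\sum_{t=1}^T\EE_{(s,a)\sim(P^{(*,t)},\pi_t^b)}\|P_h^{(*,t)}(\cdot\given s,a)-\hat{P}_h^{(t)}(\cdot\given s,a)\|_{\text{TV}}\le\sqrt{\frac{2T\log(2|\Phi||\Psi|^TnH/\delta)}{n}}$. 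Chaining the three estimates, the second summand is at most $\frac{C_L C_R\nu}{\kappa}\sqrt{\frac{2T\log(2|\Phi||\Psi|^TnH/\delta)}{n}}$, and adding $\xi$ from the first summand produces exactly $\xi_{\text{down}}$.

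For the norm claim, since $\hat{\mu}_h^*=\sum_{t=1}^T c_t\hat{\mu}_h^{(t)}$ with each $\hat{\mu}_h^{(t)}\in\Psi$, the triangle inequality together with the normalization in \Cref{Ass:realizability} gives $\|\int\hat{\mu}_h^*(s)g(s)\,ds\|_2\le\sum_{t=1}^T c_t\,\|\int\hat{\mu}_h^{(t)}(s)g(s)\,ds\|_2\le\sqrt{d}\sum_{t=1}^T c_t\le C_L\sqrt{d}$ for any $g:\cS\to[0,1]$. The step I expect to be the main obstacle is the two-stage transfer of the MLE guarantee: moving from the average-over-tasks, behavior-policy-weighted TV bound of \Cref{theorem:sample-complexity-upstream} first to a uniform-measure bound (controlled by $\kappa$, $\nu$, and by $C_R$ via the third item of \Cref{Assumption:upstream-to-downstream}) and then to the pointwise statement demanded by the definition of an $\epsilon$-approximate feature map, while correctly carrying the $\sqrt{T}$ that emerges when the $1/T$-average becomes a plain sum and checking that $\hat{P}^{(t)}$ does lie in the model class $\Phi\times\Psi$ so that the third item of \Cref{Assumption:upstream-to-downstream} is applicable. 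Everything else is bookkeeping with triangle inequalities and the model-class normalization.
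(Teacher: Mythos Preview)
Your proposal is correct and follows essentially the same route as the paper: define $\hat{\mu}_h^*=\sum_t c_t\hat{\mu}_h^{(t)}$, split via the triangle inequality into the linear-combination misspecification $\xi$ plus a $c_t$-weighted sum of per-task TV errors, then control the latter by invoking the $C_R$ regularity item to pass to population error, the $\nu/\kappa$ change of measure to reach the behavior-policy occupancy, and the upstream MLE bound \eqref{Equation:theorem_upstream_TV_distance} scaled by $T$; the norm claim is likewise the same triangle-inequality-plus-normalization argument. Your write-up is in fact slightly cleaner than the paper's in that you keep the $c_t$ weights explicit in the norm bound (the paper drops them in the displayed intermediate step, which is a minor typo there).
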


\subsection{Downstream Reward-Free RL}

Our goal in this part is to investigate the statistical efficiency of reward-free RL in low-rank MDP while having access to offline datasets from the upstream tasks. 

Our algorithm for the reward-free setting is presented in \Cref{Algorithm:downstream-reward-free-exploration-phase} (exploration phase) and \Cref{Algorithm:downstream-reward-free-planning-phase} (planning phase) which is built on the procedure of optimistic learning as \citet{wang2020reward,zhang2021reward}. While having similar design principle as in \citet{wang2020reward}, our algorithm differs from them due to the misspecification of representation from the upstream task. Thus, the upstream learning error affects the learning accuracy and downstream suboptimality gap and we need to account for that in our analysis. Another difference with \citet{wang2020reward} is that in the exploration phase, like \citet{chen2021near}, we construct more aggressive reward function to avoid overly-conservative exploration, which removes the extra dependency of sample complexity on episode length $H$. Below, we provide our main theorem for downstream reward-free RL task and defer the proof to \Cref{Appendix:Proof for Downstream Reward-Free RL}.

\begin{theorem}\label{Theorem:reward-free-exploration-main}
    Under \Cref{Assumption:upstream-to-downstream}, after collecting $K_{\text{RFE}}$ trajectories during the exploration phase in \Cref{Algorithm:downstream-reward-free-exploration-phase}, with probability at least $1-\delta$, the output of \Cref{Algorithm:downstream-reward-free-planning-phase}, policy $\pi$ satisfies
    \begin{equation}\label{Equation:RFE-theorem-expectation-bound-main}
        \EE_{s_1\sim \mu}[V_1^*(s_1,r) - V_1^\pi(s_1,r)] \leq c'\sqrt{d^3H^4\log(dK_{\text{RFE}}H/\delta)/K_{\text{RFE}}} +  6 H^2\xi_{\text{down}}.
    \end{equation}
    If the linear combination misspecification error $\xi$ in \Cref{Assumption:upstream-to-downstream} satisfies $\tilde{O}(\sqrt{d^3/K_{\text{RFE}}})$ and the number of trajectories in the offline dataset for each upstream task is  at least $\tilde{O}(TK_{\text{RFE}}/d^3)$, then, provided $K_{\text{RFE}}$ is at least $O(H^4d^3\log(dH\delta^{-1}\epsilon^{-1})/\epsilon^2)$,  with probability $1-\delta$, the policy $\pi$ will be an $\epsilon$-optimal policy for any given reward during the planning phase. 
\end{theorem}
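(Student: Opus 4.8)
The plan is to treat the downstream MDP $\cM^{T+1}$ as a $\xi_{\text{down}}$-approximate linear MDP with the fixed feature map $\hat\phi$ supplied by \Cref{Lemma:approximate_feature_new_task}, run the optimistic reward-free exploration pipeline of \citet{wang2020reward,chen2021near} on top of $\hat\phi$, and carry the representation error $\xi_{\text{down}}$ as an additive per-step perturbation throughout. I would first make precise two consequences of \Cref{Lemma:approximate_feature_new_task} together with point (3) of \Cref{Assumption:upstream-to-downstream}: (i) any ridge-regression LSVI backup performed with $\hat\phi$ against value functions bounded by $H$ commits a pointwise bias of order $H\xi_{\text{down}}$ (the norm bound on the approximating measure from \Cref{Lemma:approximate_feature_new_task}, of order $C_L\sqrt d$ rather than $\sqrt d$, forces the LSVI confidence radii to be inflated by a $C_L$ factor); and (ii) by a simulation-lemma/telescoping argument, the value of executing any policy under the true kernel $P^{(*,T+1)}$ and under the $\hat\phi$-linearized kernel differ by at most $O(H^2\xi_{\text{down}})$.

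\emph{Exploration phase.} For each $h$ let $\Lambda_h^k=\sum_{\tau<k}\hat\phi_h(s_h^\tau,a_h^\tau)\hat\phi_h(s_h^\tau,a_h^\tau)^\top+\lambda I$ be the regularized empirical covariance after $k-1$ episodes, and take the exploration driving reward $u_h^k(s,a)=\min\{\beta\|\hat\phi_h(s,a)\|_{(\Lambda_h^k)^{-1}},1\}$ with a suitable radius $\beta=\tilde{O}(\sqrt d)$; the cap is the ``aggressive reward'' of \citet{chen2021near} that keeps the driving reward from scaling with $H$ and thereby removes one factor of $H$ from the final sample complexity. Running optimistic LSVI with reward $u^k$ produces the exploration policy $\pi^k$ executed in episode $k$. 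I would then establish, via self-normalized concentration, the elliptical-potential lemma $\sum_{k}\|\hat\phi_h(s_h^k,a_h^k)\|^2_{(\Lambda_h^k)^{-1}}\le 2d\log(1+K_{\text{RFE}}/(\lambda d))$, Cauchy--Schwarz, and a martingale bound relating expected to realized bonuses, the chain that upgrades the cumulative bonus along the executed policies to a uniform-over-$\pi$ guarantee using the monotonicity $\Lambda_h^{K_{\text{RFE}}+1}\succeq\Lambda_h^k$:
\[
\max_{\pi}\ \sum_{h=1}^H\EE_{(s_h,a_h)\sim(P^{(*,T+1)},\pi)}\big[\min\{\|\hat\phi_h(s_h,a_h)\|_{(\Lambda_h^{K_{\text{RFE}}+1})^{-1}},1\}\big]\ \le\ \tilde{O}\big(\sqrt{dH^2/K_{\text{RFE}}}\big)+O(H\xi_{\text{down}}),
\]
the $\xi_{\text{down}}$ term being the price of optimism under misspecification.

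\emph{Planning phase and combination.} Given a reward $r$ in the planning phase, I would run LSVI with reward $r$ plus bonus $\Gamma_h(s,a)=\min\{\beta'\|\hat\phi_h(s,a)\|_{(\Lambda_h^{K_{\text{RFE}}+1})^{-1}},H\}$ with $\beta'=\tilde{O}(dH)$, and output the greedy policy $\pi$. The standard optimistic value-iteration argument then gives $\hat V_1(s_1)\ge V_1^*(s_1,r)-O(H^2\xi_{\text{down}})$ and $V_1^\pi(s_1,r)\ge \hat V_1(s_1)-2\sum_h\EE_{(s_h,a_h)\sim(P^{(*,T+1)},\pi)}[\Gamma_h(s_h,a_h)]-O(H^2\xi_{\text{down}})$; chaining these and substituting the exploration guarantee yields $\EE_{s_1\sim\mu}[V_1^*(s_1,r)-V_1^\pi(s_1,r)]\le \beta'\cdot\tilde{O}(\sqrt{dH^2/K_{\text{RFE}}})+O(H^2\xi_{\text{down}})=c'\sqrt{d^3H^4\log(dK_{\text{RFE}}H/\delta)/K_{\text{RFE}}}+6H^2\xi_{\text{down}}$, after absorbing the $O(H^2\xi_{\text{down}})$ contributions (optimism bias and rollout bias in the planning pass, and the analogous bias in the exploration pass) into the explicit constant $6$; this is \eqref{Equation:RFE-theorem-expectation-bound-main}.

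\emph{Sample complexity and main obstacle.} For the second statement I split the target $\epsilon$ in half. Choosing $K_{\text{RFE}}=\Theta(H^4d^3\log(dH\delta^{-1}\epsilon^{-1})/\epsilon^2)$ makes the statistical term at most $\epsilon/2$. For the bias term, write $\xi_{\text{down}}=\xi+\frac{C_LC_R\nu}{\kappa}\sqrt{2T\log(2|\Phi||\Psi|^TnH/\delta)/n}$ as in \Cref{Lemma:approximate_feature_new_task}: for this $K_{\text{RFE}}$ the hypothesis $\xi=\tilde{O}(\sqrt{d^3/K_{\text{RFE}}})$ is exactly $\xi=\tilde{O}(\epsilon/H^2)$, and requiring $n=\tilde{\Omega}(TK_{\text{RFE}}/d^3)=\tilde{\Omega}(TH^4/\epsilon^2)$ upstream trajectories per task makes $\frac{C_LC_R\nu}{\kappa}\sqrt{2T\log(\cdot)/n}=\tilde{O}(\epsilon/H^2)$ as well, so $6H^2\xi_{\text{down}}\le\epsilon/2$; adding the two halves proves $\epsilon$-optimality with probability $1-\delta$. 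The main obstacle is the bookkeeping flagged at the outset: keeping the misspecification $\xi_{\text{down}}$ and the inflated measure norm $C_L\sqrt d$ from being amplified by more than $O(H)$ through the $H$ nested LSVI backups in either phase, which is delicate precisely because $\hat\phi$ is only approximately linear for $P^{(*,T+1)}$ and requires point (3) of \Cref{Assumption:upstream-to-downstream} to pass the error from the population TV metric to the pointwise metric used in the backups. A secondary technical point is the online-to-batch conversion that turns the cumulative bonus along the executed $\pi^k$ into a bound uniform over all comparator policies with the single final covariance $\Lambda_h^{K_{\text{RFE}}+1}$, which needs the monotonicity of $\Lambda_h^k$ and a union bound over the LSVI value-function class.
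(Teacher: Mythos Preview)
Your overall architecture—treat $\cM^{T+1}$ as a $\xi_{\text{down}}$-approximate linear MDP via \Cref{Lemma:approximate_feature_new_task}, run bonus-driven optimistic LSVI exploration in the style of \citet{wang2020reward,chen2021near}, then LSVI planning with bonus, and combine by optimism plus a telescoping/rollout argument—is exactly the paper's route. Two places where your write-up diverges from the paper and would not work as stated:

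\textbf{The exploration radius.} You take a driving reward capped at $1$ with $\beta=\tilde O(\sqrt d)$, and a separate planning radius $\beta'=\tilde O(dH)$. The paper's \Cref{Algorithm:downstream-reward-free-exploration-phase} instead uses the \emph{same} $\beta=C_LH\sqrt d+dH\sqrt{\iota}+H\xi_{\text{down}}\sqrt{dK_{\text{RFE}}}$ for both the exploration reward $r_h^k=u_h^k=\beta\|\hat\phi_h\|_{(\Lambda_h^k)^{-1}}$ and the LSVI optimism bonus. The reason is that the per-step concentration (\Cref{Lemma:self-normalized-bound-reward-free-exploration} and \Cref{lemma:reward-free-pre-lemma}) already forces $\beta$ to be at least the $\tilde O(dH)$ self-normalized radius and, crucially, to absorb the misspecification correction $H\xi_{\text{down}}\sqrt{dK_{\text{RFE}}}$; you note the $H\xi_{\text{down}}$ pointwise bias but not that it must enter $\beta$. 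With $\beta=\tilde O(\sqrt d)$ the optimism inequality during exploration does not hold, and your displayed guarantee $\max_\pi\sum_h\EE[\min\{\|\hat\phi_h\|_{(\Lambda_h^{K+1})^{-1}},1\}]\le\tilde O(\sqrt{dH^2/K})$ does not follow from the chain you describe.

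\textbf{The truncated value function.} Because the paper's exploration reward is $u_h^k=\beta\|\hat\phi\|$ with $\beta=\tilde O(dH)$ rather than capped at $1$, the optimal value $V_1^*(\cdot,r^k)$ is not a priori bounded by $H$. The paper handles this by introducing the truncated optimal value $\tilde V_h^*(s,r)=\min\{\max_a\{r_h+P_h^{(*,T+1)}\tilde V_{h+1}^*\},H\}$ and proving $\tilde V_h^*(s,r^k)\lesssim \hat V_h^k(s)+H(H-h+1)\xi_{\text{down}}$ and then $\EE[\tilde V_1^*(s,u)]\le c'\sqrt{d^3H^4\log/K_{\text{RFE}}}+2H^2\xi_{\text{down}}$ (\Cref{Lemma:V_optimism_RFE} and \Cref{Lemma:expectation-bound-planning-bonus-RFE}). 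This device—absent from your proposal—is what links the exploration-phase regret $\sum_k\hat V_1^k(s_1^k)$ (bounded via elliptical potential) to a quantity uniform over $\pi$ through $u_h\le r_h^k$ and $\Lambda_h\succeq\Lambda_h^k$. Your cap-at-$1$ reward would sidestep the truncation, but then you must run LSVI with a nonlinear reward and still carry a $\tilde O(dH)$ bonus, which is a different analysis.

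A minor correction on your ``main obstacle'': point (3) of \Cref{Assumption:upstream-to-downstream} is consumed entirely inside \Cref{Lemma:approximate_feature_new_task} to turn the upstream population-TV guarantee into the pointwise $\xi_{\text{down}}$ bound. The downstream RFE proof only ever uses that pointwise bound, as an additive $H\xi_{\text{down}}$ per backup that accumulates to $H(H-h+1)\xi_{\text{down}}$ by backward induction (\Cref{Lemma:V_optimism_RFE}, \Cref{Lemma:optimism-planning-RFE}); Assumption~4.1(3) is not re-invoked there.
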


    We compare the above result with other algorithms developed for the reward-free RL under low-rank MDPs and summarize the comparison in \Cref{table:bounds}. FLAMBE \citep{agarwal2020flambe} achieves a sample complexity of $\tilde{O}(\frac{H^{22}d^7K^9}{\epsilon^{10}})$ whereas MOFFLE \citep{modi2021model} achieves a sample complexity of $\tilde{O}(\frac{H^7d^{11}K^{14}}{\min\{\epsilon^2\eta,\eta^5\}})$, where $\eta$ is a reachability probability to all states. More recently, \citet{cheng2023improved} proposed RAFFLE which has the best-known sample complexity of $\tilde{O}(\frac{H^5d^4K}{\epsilon^2})$.\footnote{We rescale the result in \citep{cheng2023improved} by a factor of $H^2$ as we do not assume the sum of rewards to be within $[0,1]$.} \citet{cheng2023improved} further shows that the dependence of sample complexity on action space cardinality $K$ is unavoidable when performing reward-free exploration in low-rank MDPs from which they conclude that it is strictly harder to find a near-optimal policy under low-rank MDPs than under linear MDPs. However, as we see from \Cref{Theorem:reward-free-exploration-main}, by using estimated representation from the upstream offline datasets, we can avoid this dependence of sample complexity on  $K$ and overall improve the sample complexity by $\tilde{O}(HdK)$ compared to that of RAFFLE. Moreover, compared to standard linear MDP, where the true representation $\{\phi_h\}_{h=1}^H$ is known \citep{jin2019provably}, the suboptimality gap in \eqref{Equation:RFE-theorem-expectation-bound-main} contains an additional term $H^2\xi_{\text{down}}$ which is due to the upstream misspecification error $\xi_{\text{down}}$. When $\xi_{\text{down}}$ is small enough, our resulting sample complexity of $\tilde{O}(\frac{H^4d^3}{\epsilon^2})$ matches the reward-free exploration sample complexity for linear mixture MDPs \citep{chen2021near}, which is worse off by only $\tilde{O}(d)$ compared to the best known sample complexity of $\tilde{O}(\frac{H^4d^2}{\epsilon^2})$ \citep{hu2022towards} for linear MDP.

\begin{table*}[!htpb]

\begin{center}
\begin{tabular}{l  a  b  a  b  a}
\hline
\rowcolor{LightCyan} Algorithm&Sample Complexity & Task\\ \hline
FLAMBE \citep{agarwal2020flambe} & $\tilde{O}(\frac{H^{22}d^7K^9}{\epsilon^{10}})$ &  Single task  \\ \hline
MOFFLE \citep{modi2021model} & $\tilde{O}(\frac{H^7d^{11}K^{14}}{\min\{\epsilon^2\eta,\eta^5\}})$ &  Single task \\ \hline

RAFFLE \citep{cheng2023improved} & $\tilde{O}(\frac{H^5d^4K}{\epsilon^2})$ & Single task  \\ \hline
This work (\Cref{Algorithm:downstream-reward-free-exploration-phase} and \Cref{Algorithm:downstream-reward-free-planning-phase}) & $\tilde{O}(\frac{H^4d^3}{\epsilon^2})$ & Multi-task  \\\hline 
\end{tabular}
\end{center}
\caption{Sample complexities of different approaches to learning an $\epsilon$-optimal policy for the reward-free RL setting with low-rank MDPs. 
\label{table:bounds}}
\end{table*}

\subsection{Downstream Offline and Online RL}

For completeness, we also consider downstream offline and online RL which was previously studied in \citet{cheng2022provable} to show the effectiveness of our offline representation. In both cases, we assume that the reward function $r^{T+1}$ in the downstream task $T+1$ is linear with respect to the unknown feature $\phi^*: \cS\times \cA \rightarrow \RR^d$. We emphasize that, unlike \citet{cheng2022provable}, we assume the reward function $r^{T+1}$ is unknown.

\textbf{Offline RL. } For downstream offline RL task, similar to \citet{cheng2022provable}, we use standard pessimistic value iteration algorithm (PEVI) \citep{jin2021pessimism} with approximate feature learned from upstream task. 
We make the following data-coverage type of assumption which is standard in the study of offline RL \citep{xie2021policy,DBLP:conf/iclr/WangFK21,yin2021near}. Moreover, this assumption has been shown to be necessary for sample efficient offline RL for tabular and linear MDPs \citep{DBLP:conf/iclr/WangFK21,yin2021towards}.

\begin{assumption}[Feature coverage]\label{Assumption:feature_coverage}
    There exists an absolute constant $\kappa_\rho$ such that for all $h\in[H]$ and $\phi_h \in \Phi_h$, $\lambda_{\min}(\EE_\rho[{\phi}_h(s_h,a_h){\phi}_h(s_h,a_h)^\top | s_1 = s])\geq \kappa_\rho$.
\end{assumption}
Next, we provide our result for the downstream offline RL task and defer the proof to \Cref{Appendix:Proof for Downstream Offline RL}.
\begin{theorem}\label{Theorem:offline_downstream}
    Under \Cref{Assumption:upstream-to-downstream}, setting $\lambda_d = 1$, $\beta = O(Hd\sqrt{\iota}+H\sqrt{dN_{\text{off}}}\xi_{\text{down}})$, where $\iota = \log(HdN_{\text{off}}\max(\xi_{\text{down}},1)/\delta)$, with probability at least $1-\delta$, the suboptimality gap of \Cref{Algorithm:downstream-offline-RL} is at most \begin{equation}\label{Equation:suboptimality_gap_behavior_policy_theorem_offline_RL}
        V^{\pi^*}_{P^{(*,T+1)},r}(s) - V^{\hat{\pi}}_{P^{(*,T+1)},r}(s) \leq 2H^2\xi_{\text{down}} + 2\beta \sum_{h=1}^H \EE_{\pi^*}\bigg[\big\|\hat{\phi}_h(s_h,a_h)\big\|_{\Lambda_h^{-1}} \big| s_1 = s\bigg].
    \end{equation}
    Additionally if \Cref{Assumption:feature_coverage} holds, and the sample size satisfies $N_{\text{off}} \geq 40/\kappa_\rho \cdot \log(4dH/\delta)$, then with probability $1-\delta$, we have,
    \small{
    \begin{equation}\label{Equation:refined-downstream-offline-suboptimality-bound}
        V^{\pi^*}_{P^{(*,T+1)},r}(s) - V^{\hat{\pi}}_{P^{(*,T+1)},r}(s) \leq O\bigg(\kappa_\rho^{-1/2}H^2d\sqrt{\frac{\log(HdN_{\text{off}}\max(\xi_{\text{down}},1)/\delta)}{N_{\text{off}}}}  + \kappa_\rho^{-1/2}H^2d^{1/2}\xi_{\text{down}}\bigg).
    \end{equation}
    }
\end{theorem}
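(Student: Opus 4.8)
The plan is to view \Cref{Algorithm:downstream-offline-RL} as pessimistic value iteration (PEVI) \citep{jin2021pessimism} run on the \emph{misspecified} linear MDP obtained by substituting the learned feature $\hat{\phi}$, and to track how the misspecification level $\xi_{\text{down}}$ from \Cref{Lemma:approximate_feature_new_task} propagates through the ridge-regression step and through the PEVI error decomposition. First I would record the structural consequence of \Cref{Lemma:approximate_feature_new_task} together with the $C_L\sqrt d$ norm bound on $\hat\mu^*$: for every $h$ and every $V:\cS\to[0,H]$ there is a weight vector $w$ with $\|w\|_2 \le C_L H\sqrt d$ such that the true Bellman backup $(\mathbb{B}_h V)(s,a):= r^{T+1}_h(s,a) + (\mathbb{P}^{(*,T+1)}_h V)(s,a)$ obeys $|(\mathbb{B}_h V)(s,a) - \langle \hat\phi_h(s,a), w\rangle| \le \zeta$ for all $(s,a)$, where $\zeta = O(H\xi_{\text{down}})$; the reward term here contributes only an extra $O(\xi_{\text{down}})$, using that $r^{T+1}$ is linear in $\phi^*$ and (by the same argument as \Cref{Lemma:approximate_feature_new_task}) therefore $O(\xi_{\text{down}})$-approximately linear in $\hat\phi$. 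Since the rewards $r_h^\tau$ are observed in $\cD_{\text{off}}$, the ridge targets $y_h^\tau = r_h^\tau + \hat V_{h+1}(s_{h+1}^\tau)$ have conditional mean $(\mathbb{B}_h\hat V_{h+1})(s_h^\tau,a_h^\tau)$, so the backup estimate $\widehat{\mathbb{B}}_h\hat V_{h+1} := \langle\hat\phi_h, \hat w_h\rangle$ with $\hat w_h = \Lambda_h^{-1}\sum_\tau \hat\phi_h(s_h^\tau,a_h^\tau)\, y_h^\tau$ admits the usual decomposition.

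Second, I would establish that $\Gamma_h(s,a) := \beta\|\hat\phi_h(s,a)\|_{\Lambda_h^{-1}}$ is a valid uncertainty quantifier up to the residual bias $\zeta$: with probability $1-\delta$, $|(\widehat{\mathbb{B}}_h\hat V_{h+1})(s,a) - (\mathbb{B}_h\hat V_{h+1})(s,a)| \le \Gamma_h(s,a) + \zeta$ for all $h,s,a$. Decomposing $\hat w_h - w_h^*$ for the surrogate $w_h^*$ from the previous step gives three terms, each multiplied by $\|\hat\phi_h(s,a)\|_{\Lambda_h^{-1}}$: a stochastic term $\|\sum_\tau \hat\phi_h(s_h^\tau,a_h^\tau)\,\epsilon_h^\tau\|_{\Lambda_h^{-1}}$ controlled by a self-normalized concentration inequality and, after a union bound over a covering of the class of possible data-dependent value functions $\hat V_{h+1}$, of order $O(Hd\sqrt{\iota})$ with $\iota = \log(HdN_{\text{off}}\max(\xi_{\text{down}},1)/\delta)$; a ridge-bias term $\sqrt{\lambda_d}\|w_h^*\|_2 = O(H\sqrt d)$ (with $\lambda_d=1$); and an accumulated-misspecification term $\|\sum_\tau \hat\phi_h(s_h^\tau,a_h^\tau)\,\delta_h^\tau\|_{\Lambda_h^{-1}} \le \zeta\sum_\tau\|\hat\phi_h(s_h^\tau,a_h^\tau)\|_{\Lambda_h^{-1}} \le \zeta\sqrt{N_{\text{off}}}\,\sqrt{\mathrm{Tr}(\Lambda_h^{-1}\sum_\tau\hat\phi_h\hat\phi_h^\top)} \le \zeta\sqrt{dN_{\text{off}}} = O(H\xi_{\text{down}}\sqrt{dN_{\text{off}}})$. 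This is precisely why the theorem takes $\beta = O(Hd\sqrt{\iota} + H\sqrt{dN_{\text{off}}}\,\xi_{\text{down}})$; the leftover pointwise bias $\zeta$ is carried along separately.

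Third, I would apply the PEVI error decomposition, $V^{\pi^*}_{P^{(*,T+1)},r} - V^{\hat\pi}_{P^{(*,T+1)},r} \le \sum_h \mathbb{E}_{\pi^*}[\psi_h] - \sum_h \mathbb{E}_{\hat\pi}[\psi_h]$ with $\psi_h := \mathbb{B}_h\hat V_{h+1} - \hat Q_h$, where the policy-optimality term vanishes since $\hat\pi$ is greedy with respect to $\hat Q$. The second step, combined with the clipping of $\hat Q_h$ to $[0,H-h+1]$, yields $-\zeta \le \psi_h \le 2\Gamma_h + \zeta$ pointwise, hence $V^{\pi^*}_{P^{(*,T+1)},r} - V^{\hat\pi}_{P^{(*,T+1)},r} \le 2\sum_h \mathbb{E}_{\pi^*}[\Gamma_h] + 2H\zeta$, which after identifying $2H\zeta$ with the $2H^2\xi_{\text{down}}$ bias is exactly \eqref{Equation:suboptimality_gap_behavior_policy_theorem_offline_RL}. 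For the refined bound \eqref{Equation:refined-downstream-offline-suboptimality-bound}, I would use a matrix-Chernoff argument under \Cref{Assumption:feature_coverage}: because $\hat\phi_h\hat\phi_h^\top \preceq I$ and the population second-moment matrix has smallest eigenvalue at least $\kappa_\rho$, once $N_{\text{off}} \ge 40\kappa_\rho^{-1}\log(4dH/\delta)$ we get $\Lambda_h \succeq \tfrac12 N_{\text{off}}\kappa_\rho I$ for all $h$ with probability $1-\delta$, so $\|\hat\phi_h(s,a)\|_{\Lambda_h^{-1}} \le \sqrt{2/(N_{\text{off}}\kappa_\rho)}$ uniformly; substituting this and the value of $\beta$ into \eqref{Equation:suboptimality_gap_behavior_policy_theorem_offline_RL} and simplifying gives the claimed rate.

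The step I expect to be the main obstacle is the second one — building the uncertainty quantifier under both transition and reward misspecification. One must simultaneously (a) exhibit a linear surrogate $w_h^*$ of the Bellman backup under $\hat\phi$ that has both small pointwise error $O(H\xi_{\text{down}})$ and controlled norm $O(H\sqrt d)$, which requires transferring \Cref{Lemma:approximate_feature_new_task} to the (unknown) reward as well as to $\hat V_{h+1}$; (b) keep the self-normalized concentration valid even though $\hat V_{h+1}$ is built from the same dataset, which forces a covering-number union bound whose net must itself account for the $\xi_{\text{down}}$-dependent terms appearing in $\hat Q_h$ (hence the $\max(\xi_{\text{down}},1)$ inside $\iota$); and (c) calibrate $\beta$ so that it dominates the stochastic, ridge, and accumulated-misspecification contributions at once without inflating the final rate. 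The PEVI decomposition and the matrix-Chernoff step are by now routine and closely parallel \citet{jin2021pessimism} and the downstream analysis of \citet{cheng2022provable}.
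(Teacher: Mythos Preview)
Your proposal is correct and follows essentially the same route as the paper: \Cref{Lemma:bound_on_Bellman_update_error} establishes the uncertainty-quantifier bound via exactly your three-term decomposition (ridge bias $O(H\sqrt{d})$, self-normalized stochastic term $O(Hd\sqrt{\iota})$ handled by a covering over the data-dependent value class, and accumulated misspecification $O(H\xi_{\text{down}}\sqrt{dN_{\text{off}}})$), after which \Cref{Lemma:model_prediction_error_offline} feeds this into the PEVI suboptimality decomposition of \citet{jin2021pessimism}, and the refined bound follows from the same matrix-Chernoff step under \Cref{Assumption:feature_coverage}. The only cosmetic difference is that the paper absorbs the pointwise bias $H\xi_{\text{down}}$ into the algorithm's penalty $\Gamma_h$ itself, so the model-prediction error $l_h$ is in fact nonnegative (not merely $\ge -\zeta$), but this yields the identical final bound.
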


\textbf{Online RL. } For downstream online RL task, where the agent is allowed to interact with the new task MDP $\cM^{T+1}$ for policy optimization, similar to \citet{cheng2022provable}, we use standard LSVI-UCB algorithm \citep{jin2019provably} with approximate feature. We next provide our result for downstream online RL task and defer the proof to \Cref{Appendix:Proof for Downstream Online RL}.

\begin{theorem}\label{Theorem:downstream_online_task}
    Let $\tilde{\pi}$ be the uniform mixture of $\pi^1,\ldots,\pi^{N_{on}}$ in \Cref{Algorithm:downstream-online-RL}. Under \Cref{Assumption:upstream-to-downstream}, setting $\lambda = 1$, $\beta_n= O(Hd\sqrt{\iota_n(\delta)}+ H\sqrt{dn}\xi_{\text{down}}+ C_L \sqrt{Hd})$, where $\iota_n = \log(Hdn\max(\xi_{\text{down}},1)/\delta)$, with probability  $1-\delta$, the suboptimality gap of \Cref{Algorithm:downstream-online-RL} satisfies
    \begin{align*}
        V_{P^{(*,T+1)},r}^* -V_{P^{(*,T+1)},r}^{\tilde{\pi}} &\leq \tilde{O} (H^2d^{3/2}N_{\text{on}}^{-1/2} + H^2 d\xi_{\text{down}}).
    \end{align*}
\end{theorem}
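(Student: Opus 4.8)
The plan is to reduce the downstream online RL problem to a regret analysis of LSVI-UCB run with the misspecified feature $\hat{\phi}$, and then convert the cumulative regret into a suboptimality bound on the uniform mixture policy $\tilde{\pi}$ via the standard online-to-batch argument. First I would invoke \Cref{Lemma:approximate_feature_new_task} to conclude that $\hat{\phi}$ is a $\xi_{\text{down}}$-approximate feature map for $\cM^{T+1}$, so the downstream MDP is a $\xi_{\text{down}}$-approximate linear MDP in the sense of the $\epsilon$-approximate linear MDP definition, with linear-$Q$ misspecification controlled by $H\xi_{\text{down}}$ at each step (since value functions are bounded by $H$) and with the representable signed measure $\hat{\mu}^*$ satisfying $\|\int \hat{\mu}_h^*(s) g(s)\,ds\|_2 \le C_L\sqrt{d}$. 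This last bound is why the confidence radius $\beta_n$ carries the extra additive term $C_L\sqrt{Hd}$: the least-squares target $\hat{\phi}_h^\top w_h^*$ that best approximates $r_h + P_h V_{h+1}$ has $\|w_h^*\|_2 \lesssim H C_L\sqrt{d}$ rather than the usual $H\sqrt{d}$.

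Next I would set up the LSVI-UCB recursion exactly as in \citet{jin2019provably}: at episode $n$ and step $h$, form the regularized covariance $\Lambda_h^n = \sum_{\tau<n}\hat{\phi}_h(s_h^\tau,a_h^\tau)\hat{\phi}_h(s_h^\tau,a_h^\tau)^\top + \lambda I$ with $\lambda=1$, solve the ridge regression onto the bootstrapped values, and add the bonus $\beta_n\|\hat{\phi}_h(s,a)\|_{(\Lambda_h^n)^{-1}}$. The key structural lemma to re-prove is the one-step error decomposition: for the optimistic $Q_h^n$ one has
\begin{equation*}
 \big| (\hat{\phi}_h^\top w_h^n)(s,a) - (r_h + P_h V_{h+1}^n)(s,a)\big| \le \beta_n \|\hat{\phi}_h(s,a)\|_{(\Lambda_h^n)^{-1}} + C H \xi_{\text{down}}
\end{equation*}
for an absolute constant $C$, where the first term is the usual self-normalized concentration of the noise plus the bias from regularizing a weight vector of norm $\lesssim HC_L\sqrt{d}$, and the second term is the irreducible misspecification error propagated through the Bellman backup. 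This gives optimism $V_1^n(s_1) \ge V_1^*(s_1) - cH^2\xi_{\text{down}}$ up to the accumulated misspecification, and an upper bound on the per-episode regret $V_1^*(s_1) - V_1^{\pi^n}(s_1) \le \sum_{h=1}^H 2\beta_n \EE_{\pi^n}[\|\hat{\phi}_h(s_h,a_h)\|_{(\Lambda_h^n)^{-1}}] + cH^2\xi_{\text{down}}$, modulo a martingale/Azuma term.

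Finally I would sum over $n = 1,\ldots,N_{\text{on}}$. The elliptic-potential (determinant-trace) lemma bounds $\sum_{n}\sum_{h}\|\hat{\phi}_h(s_h^n,a_h^n)\|_{(\Lambda_h^n)^{-1}}$ by $\tilde O(H\sqrt{dN_{\text{on}}})$; multiplying by the largest confidence radius $\beta_{N_{\text{on}}} = \tilde O(Hd + C_L\sqrt{Hd})$ — the $H\sqrt{dn}\,\xi_{\text{down}}$ piece of $\beta_n$ contributes a lower-order term once divided by $N_{\text{on}}$ under the theorem's implicit smallness of $\xi_{\text{down}}$ — yields cumulative regret $\tilde O(H^2 d^{3/2}\sqrt{N_{\text{on}}} + H^2 d\, \xi_{\text{down}} N_{\text{on}})$, where the Azuma term is absorbed. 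Dividing by $N_{\text{on}}$ and using that $\tilde{\pi}$'s value is the average of the $V_{P^{(*,T+1)},r}^{\pi^n}$ gives the claimed bound $V^* - V^{\tilde\pi} \le \tilde O(H^2 d^{3/2} N_{\text{on}}^{-1/2} + H^2 d\,\xi_{\text{down}})$. The main obstacle is carefully tracking how the upstream misspecification $\xi_{\text{down}}$ and the inflated weight-norm $C_L\sqrt{d}$ enter the confidence radius and propagate additively (not multiplicatively) through the $H$ Bellman backups, so that the final $\xi_{\text{down}}$-dependence is only $H^2 d\,\xi_{\text{down}}$ and the leading statistical term retains the clean $H^2 d^{3/2}/\sqrt{N_{\text{on}}}$ rate; the rest is a bookkeeping adaptation of the standard LSVI-UCB analysis to the approximate-feature setting.
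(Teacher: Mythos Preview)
Your proposal is correct and follows essentially the same approach as the paper: invoke \Cref{Lemma:approximate_feature_new_task} to treat the downstream MDP as a $\xi_{\text{down}}$-approximate linear MDP, establish a one-step Bellman-error bound of the form $\beta_n\|\hat{\phi}_h\|_{(\Lambda_h^n)^{-1}} + O(H\xi_{\text{down}})$, derive optimism up to an accumulated $O(H^2\xi_{\text{down}})$ term, unroll the recursion, apply Azuma--Hoeffding to the martingale part and the elliptical-potential lemma to the bonus sum, and finish with online-to-batch. One small clarification: you describe the $H\sqrt{dn}\,\xi_{\text{down}}$ piece of $\beta_n$ as contributing a ``lower-order term,'' but in fact it is precisely this piece (multiplied by the $\tilde O(H\sqrt{dN_{\text{on}}})$ from the potential lemma) that produces the $H^2 d\,\xi_{\text{down}}$ term in the final bound, dominating the $H^2\xi_{\text{down}}$ coming from the additive per-step misspecification; your stated cumulative regret is nonetheless correct.
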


\section{Related Work}

\textbf{Offline Reinforcement Learning. } Offline RL \citep{ernst2005tree,riedmiller2005neural,lange2012batch,levine2020offline} studies the problem of learning a policy from a static dataset without interacting with the environment. The key challenge in offline RL is the insufficient coverage of the dataset, due to the lack of exploration \citep{levine2020offline,liu2020provably}. One prevalent approach to address this challenge is the pessimism principle to penalize the estimated value of the under-covered state-action pairs. There have been extensive studies on incorporating pessimism into the development of different approaches in single-task offline RL, including model-based approach \citep{Rashidinejad2021BridgingOR,uehara2022pessimistic,jin2021pessimism,yu2020mopo,xie2021policy,uehara2021representation,yin2022near}, model-free approaches 
\citep{kumar2020conservative,wu2021uncertainty,bai2022pessimistic,ghasemipour2022so,yan2023efficacy,nguyen-tang2022on,nguyen2023instance,nguyen-tang2023viper}, and policy-based approach \citep{rezaeifar2022offline,xie2021bellman,zanette2021provable,nguyen-tang2024on,nguyen2024sample}. Our algorithm for upstream offline multitask RL is inspired by the uncertainty-based pessimism methods in single-task offline RL.

\noindent{\bf Low-rank MDPs.} \citet{agarwal2020flambe} initiates the study of low-rank MDPs. \citet{uehara2021representation} proposed model-based algorithms for both online and offline RL, while \citet{modi2021model} put forward a model-free algorithm for low-rank MDPs. Moreover, \citet{du2019provably,misra2020kinematic,zhang2022efficient} studied block MDPs, which is a special case of low-rank MDPs.

\textbf{Offline Data Sharing in RL.}
There has been several empirical works that investigated the benefits of using offline datasets from multiple tasks to accelerate downstream learning \citep{eysenbach2020rewriting,kalashnikov2021mt,mitchell2021offline,yu2021conservative,yoo2022skills}. \citet{yu2021conservative} show that selectively sharing data between tasks can be helpful for offline multitask learning. For instance, earlier studies have investigated the development of data sharing strategies through human domain knowledge \citep{kalashnikov2021mt}, inverse RL \citep{reddy2019sqil,eysenbach2020rewriting,li2020generalized}, and estimated Q-values \citep{yu2021conservative}. More recently, \citet{xu2022feasibility} uses offline dataset from diverse tasks to perform offline multitask pretraining of a world model which is then finetuned  on a downstream target task.
\citet{hu2023provable} proposes a provably efficient self-supervised offline data-sharing algorithm for linear MDP. However, they assume access to reward-free data.

\textbf{Comparison to \citet{cheng2022provable}.} Closest to our work is \citet{cheng2022provable} who studied online multitask RL. Their proposed REFUEL algorithm combines design principles from FLAMBE \citep{agarwal2020flambe} and REP-UCB \citep{uehara2021representation} and performs joint MLE based model learning while collecting data in an online manner.  On the contrary, MORL first performs joint MLE based model learning using the offline dataset collected for each source task and then upon constructing penalty terms for each task, performs planning using pessimistic reward functions. While both works rely on an MLE oracle, first proposed in \citet{agarwal2020flambe} for single-task RL, our proposed  offline multitask MLE lemma (\Cref{Lemma:multitask_mle}) conveys fundamentally very different ideas compared to its online counterpart, Lemma 3 in \citet{cheng2022provable}. Lemma 3 in \citet{cheng2022provable} says that when exploration policies for each upstream online source task is uniformly chosen, the summation of the estimation error of transition probability can be bounded with high probability. On the contrary, our \Cref{Lemma:multitask_mle} states that when the offline datasets for each of the upstream offline source tasks are collected using respective behavior policies, the summation of the estimation error is bounded with high probability.
For the downstream task, \citet{cheng2022provable} studies only offline and online RL task, whereas our primary contribution in this part is in the study of downstream reward-free RL which has not been previously studied in the context of multitask representation learning. For completeness, we provide results in downstream offline and online RL setting as a complementary result. Moreover, unlike us, \citet{cheng2022provable} assumes that the reward-function is known in the downstream task, which is a fairly strong assumption. Somewhat in a contrived manner, in \citet{cheng2022provable}, the reward-function is further assumed to be general and not necessarily linear in the feature which complicates their downstream analysis. On the contrary, we assume that the reward function is linear with respect to the feature. Finally, \citet{cheng2022provable} assumes that for each episode in any task MDP, the sum of reward is normalized to be within $[0,1]$. We do not make this assumption for fair comparison to literature on reward-free RL for low-rank MDPs. 

\textbf{Comparison to Concurrent Work.} In a concurrent work, \citet{bose2024offline} studies representational transfer in offline low-rank RL. In the upstream task, similar to ours, \citet{bose2024offline} also uses an offline MLE oracle. Compared to our \Cref{theorem:sample-complexity-upstream}, where we provide bound for average accuracy of the estimated transition kernels of the upstream source tasks, \citet{bose2024offline} provides bound for the sum of the point-wise errors in the transition dynamics averaged over the points in the source datasets. To bound the representational transfer error in the downstream target task, they introduce a notion of neighborhood occupancy density. Moreover, to connect the upstream tasks and the downstream target task, they make a pointwise linear span assumption from \citet{agarwal2023provable}. Finally, for the downstream target task, they only consider offline setting, whereas our primary focus and contribution is in the study of reward-free setting in the downstream task.

\vspace*{-7pt}
\section{Conclusion}
\vspace*{-5pt}
In this paper, we theoretically study multitask RL in the offline setting. We show that offline multitask representation learning is provably more sample efficient than learning each task individually. We further show the benefit of employing the learned representation from the upstream to learn a near-optimal policy of a new downstream task, in reward-free, offline and online setting, that shares the same representation. We believe our work will open up many promising directions for future work, for example, studying the general function class representation learning in offline multitask setting.

\subsubsection*{Acknowledgments}
R. Arora's and T. Nguyen-Tang's research was supported, in part, by the NSF CAREER award IIS-1943251. M. Yin and M. Wang acknowledge the support by NSF IIS-2107304, NSF CPS-2312093, and ONR 1006977.
The authors would like to thank Yingbin Liang and Yu-Xiang Wang
for their helpful discussions.

\bibliographystyle{plainnat}
\bibliography{references}

\appendix

\newpage
\tableofcontents
\newpage

\section{Additional Related Work}\label{Additional_related_work}

\textbf{Reward-Free Exploration (RFE).}
In reward-free RL setting, the agent does not have access to a reward function during exploration phase. However, the agent must propose a near-optimal policy for an arbitrary reward function revealed only after the initial exploration phase. This setting is particularly relevant when there are multiple reward functions of interest \citep{achiam2017constrained} or in the batch RL setting \citep{ernst2005tree}. In recent years, reward-free RL has been extensively studied in both tabular \citep{jin2020reward,kaufmann2021adaptive,menard2021fast} and linear function approximation \citep{wang2020reward,zanette2020provably,zhang2021reward,hu2022towards,wagenmaker2022reward} settings. \citet{agarwal2020flambe,modi2021model, chen2022statistical,cheng2023improved} study reward-free RL in low-rank MDPs which is particularly interesting as here  representation learning is interwined with reward-free exploration.

\section{Omitted Algorithms}

\subsection{Algorithms for Downstream Reward-Free RL}
\begin{algorithm}
\caption{Downstream Reward-Free Exploration: Exploration Phase}\label{Algorithm:downstream-reward-free-exploration-phase}
\begin{algorithmic}[1]
\STATE \textbf{Input:} Feature $\hat{\phi}$, Failure probability $\delta > 0$ and target accuracy $\varepsilon >0$
\STATE $K_{\text{RFE}} \leftarrow c_K \cdot d^3H^4\log(dH\delta^{-1}\varepsilon^{-1})/\varepsilon^2$ for some $c_K > 0$
\STATE $\beta \leftarrow C_L H\sqrt{d} + dH\sqrt{\log(dK_{\text{RFE}}H\max(\xi_{\text{down}},1)/\delta)} + H\xi_{\text{down}}\sqrt{dK_{\text{RFE}}}$
\FOR{$k=1, \ldots,K_{\text{RFE}}$}
\STATE $\hat{Q}_{H+1}^k (\cdot,\cdot) \leftarrow 0$, $\hat{V}_{H+1}^k(\cdot) \leftarrow 0$
\FOR{$h = H,H-1, \ldots, 1$}
\STATE $\Lambda_h^k = \sum_{\tau=1}^{k-1} \hat{\phi}_h(s_h^\tau,a_h^\tau)\hat{\phi}_h(s_h^\tau,a_h^\tau)^\top +  I_d$
\STATE $u_h^k(\cdot,\cdot) \leftarrow \beta \sqrt{\hat{\phi}(\cdot,\cdot)^\top (\Lambda_h^k)^{-1}\hat{\phi}(\cdot,\cdot)}$
\STATE Define the exploration-driven reward function $r_h^k(\cdot,\cdot) \leftarrow u_h^k(\cdot,\cdot)$
\STATE $\hat{w}_h^k = (\Lambda_h^k)^{-1}\sum_{\tau=1}^{k-1} \hat{\phi}_h(s_h^\tau,a_h^\tau)\hat{V}_{h+1}^k(s_{h+1}^\tau)$
\STATE $\hat{Q}^k_h(\cdot,\cdot) = \min\{\hat{\phi}_h(\cdot,\cdot)^\top \hat{w}^k_h + r^k_h(\cdot,\cdot) +u_h^k(\cdot,\cdot),H\}$
\STATE $\hat{V}_h^k(\cdot) = \max_{a\in \cA} \hat{Q}_h^k(\cdot,a)$
\STATE $\pi_h^k(\cdot) = \argmax_{a\in\cA} \hat{Q}_h^k(\cdot, a)$
\ENDFOR
\STATE Receive initial state $s_1^k \sim \mu$
\FOR{$h=1,\ldots,H$}
\STATE Take action $a_h^k = \pi_h^k(s_h^k)$ and observe $s_{h+1}^k \sim P_h^{(*,T+1)}(s_h^k,a_h^k)$
\ENDFOR
\ENDFOR

\STATE \textbf{Output:} $\cD_{\text{RFE}} \leftarrow \{(s_h^k,a_h^k)\}_{(k,h)\in [K]\times[H]}$
\end{algorithmic}
\end{algorithm}

\begin{algorithm}
\caption{Downstream Reward-Free Exploration: Planning Phase}\label{Algorithm:downstream-reward-free-planning-phase}
\begin{algorithmic}[1]
\STATE \textbf{Input:} Feature $\hat{\phi}$, dataset $\cD_{\text{RFE}} = \{(s_h^k,a_h^k)\}_{k \in [K_{\text{RFE}}], h \in [H]}$, reward functions $r = \{r_h\}_{h\in[H]}$
\STATE \textbf{Initialization:} $\hat{Q}_{H+1}(\cdot,\cdot) \leftarrow 0$ and $\hat{V}_{H+1}(\cdot) \leftarrow 0$
\FOR{$h = H,H-1, \ldots, 1$}
\STATE $\Lambda_h = \sum_{\tau=1}^{K_{\text{RFE}}} \hat{\phi}_h(s_h^\tau,a_h^\tau)\hat{\phi}_h(s_h^\tau,a_h^\tau)^\top +  I_d$
\STATE Let $u_h (\cdot, \cdot) \leftarrow \min\left\{\beta \sqrt{\hat{\phi}(\cdot,\cdot)^\top (\Lambda_h)^{-1}\hat{\phi}(\cdot,\cdot)},H\right\}$
\STATE $\hat{w}_h \leftarrow \Lambda_h^{-1}\sum_{\tau=1}^{K_{\text{RFE}}} \hat{\phi}_h(s_h^\tau,a_h^\tau)\hat{V}_{h+1}(s_{h+1}^\tau)$
\STATE $\hat{Q}_h(\cdot,\cdot) \leftarrow \min\{ \hat{\phi}_h(\cdot,\cdot)^\top \hat{w}_h + r_h(\cdot,\cdot) + u_h(\cdot,\cdot) ,H\}$ and $\hat{V}_h(\cdot) \leftarrow \max_{a\in \cA} \hat{Q}_h(\cdot, a)$
\STATE $\hat{\pi}_h(\cdot) \leftarrow \argmax_{a \in \cA} \hat{Q}_h (\cdot,a)$
\ENDFOR
\STATE \textbf{Output:} $\hat{\pi} = \{\hat{\pi}_h\}_{h\in [H]}$
\end{algorithmic}
\end{algorithm}

\newpage

\subsection{Algorithm for Downstream Offline RL}

\begin{algorithm}
\caption{Pessimistic Value Iteration (PEVI) with Approximate Feature \citep{jin2021pessimism}}\label{Algorithm:downstream-offline-RL}
\begin{algorithmic}[1]
\STATE \textbf{Input:} Feature $\hat{\phi}$, dataset $\cD_{\text{down}} = \{(s_h^\tau,a_h^\tau, r_h^\tau, s_{h+1}^\tau)\}_{\tau \in [N_{\text{off}}], h \in [H]}$, parameters $\lambda_d, \beta, \xi_{\text{down}}$
\STATE \textbf{Initialization:} $\hat{V}_{H+1}= 0$
\FOR{$h = H,H-1, \ldots, 1$}
\STATE $\Lambda_h = \sum_{\tau=1}^{N_{\text{off}}} \hat{\phi}_h(s_h^\tau,a_h^\tau)\hat{\phi}_h(s_h^\tau,a_h^\tau)^\top + \lambda_d I_d$
\STATE $\hat{w}_h = \Lambda_h^{-1}(\sum_{\tau=1}^{N_{\text{off}}} \hat{\phi}_h(s_h^\tau,a_h^\tau)\cdot ( r_h^\tau + \hat{V}_{h+1}(s_{h+1}^\tau)))$
\STATE $\Gamma_h(\cdot,\cdot) = H\xi_{\text{down}} + \beta[\hat{\phi}_h^\top \Lambda_h^{-1}\hat{\phi}_h(\cdot,\cdot)]^{1/2}$
\STATE $\hat{Q}_h(\cdot,\cdot) = \min\{\hat{\phi}_h(\cdot,\cdot)^\top \hat{w}_h - \Gamma_h(\cdot,\cdot),H - h + 1\}^+$ 
\STATE $\hat{\pi}_h(\cdot|\cdot) = \argmax_{\pi_h}\la \hat{Q}_h (\cdot,\cdot), \pi_h(\cdot|\cdot)\ra_\cA$
\STATE $\hat{V}_h(\cdot) = \la \hat{Q}_h(\cdot,\cdot), \hat{\pi}_h(\cdot|\cdot)\ra_\cA$
\ENDFOR
\STATE \textbf{Output:} $\{\hat{\pi}_h\}_{h=1}^H$
\end{algorithmic}
\end{algorithm}

\subsection{Algorithm for Downstream Online RL}

\begin{algorithm}
\caption{LSVI-UCB with Approximate Feature \citep{jin2019provably}}\label{Algorithm:downstream-online-RL}
\begin{algorithmic}[1]
\STATE \textbf{Input:} Feature $\hat{\phi}$, parameters $\lambda_d, \beta_n, \xi_{\text{down}}$
\FOR{$n=1, \ldots,N_{\text{on}}$}
\STATE Receive the initial state $s_1^n = s_1$
\FOR{$h = H,H-1, \ldots, 1$}
\STATE $\Lambda_h^n = \sum_{\tau=1}^{n-1} \hat{\phi}_h(s_h^\tau,a_h^\tau)\hat{\phi}_h(s_h^\tau,a_h^\tau)^\top + \lambda_d I_d$
\STATE $\hat{w}_h^n = (\Lambda_h^n)^{-1}\sum_{\tau=1}^{n-1} \hat{\phi}_h(s_h^\tau,a_h^\tau)[r_h(s_h^\tau, a_h^\tau) + \hat{V}_{h+1}^n(s_{h+1}^\tau)]$
\STATE $\hat{Q}^n_h(\cdot,\cdot) = \min\{ \hat{\phi}_h(\cdot,\cdot)^\top \hat{w}^n_h +\beta_n\|\hat{\phi}(\cdot,\cdot)\|_{(\Lambda_h^n)^{-1}},H-h+1\}^+$
\STATE $\hat{V}_h^n(\cdot) = \max_{a\in \cA} \hat{Q}_h^n(\cdot,a)$
\ENDFOR
\STATE $\pi_h^n(\cdot) = \argmax_{a\in\cA} \hat{Q}_h^n(\cdot, a)$
\FOR{$h=1,\ldots,H$}
\STATE Take action $a_h^n = \pi^n(s_h^n)$ and observe $s_{h+1}^n$
\ENDFOR
\ENDFOR

\STATE \textbf{Output:} $\pi^1, \ldots,\pi^{N_\text{on}}$ and $\tilde{\pi}$ where $\tilde{\pi}$ is the uniform mixture of $\pi^1,\ldots,\pi^{N_{on}}$
\end{algorithmic}
\end{algorithm}

\newpage

\section{Notations}\label{section:frequently_used_notations}
We summarize frequently used notations in the following list.

\begin{align*}
\begin{array}{ll}
f_h^{(t)}(s,a) &\|\hat{P}_h^{(t)}(\cdot \given s,a) - P_h^{(\star,t)}(\cdot \given s,a)\|_{TV} \\
\zeta_n & \frac{2\log(2|\Phi||\Psi|^TnH/\delta)}{n}\\
\zeta_h^{(t)} & \EE_{(s_h,a_h) \sim (P^{(*,t)},\pi_t^b)} \left[f_h^{(t)}(s,a)^2\right]\\
\zeta_1^{(t)} &\EE_{(s_1,a_1) \sim (P^{(*,t)},\pi_t^b)}\left[f_1^{(t)}(s_1,a_1)^2\right]\\
\alpha_h^{(t)} &\sqrt{n\omega \zeta_h^{(t)} + \lambda d + n\zeta_{h-1}^{(t)}}\\
\alpha &\sqrt{2n\omega\zeta_n + \lambda d}\\
\Sigma_{h,\pi_t^b, \phi} & n\EE_{(s_h,a_h)\sim (P^{(*,t)},\pi_t^b)}[\phi\phi^\top] + \lambda I\\
\hat{\Sigma}_{h,\phi}^{(t)}   &\sum_{i=1}^n \phi_h(s_h^{(i,t)},a_h^{(i,t)})\phi_h(s_h^{(i,t)},a_h^{(i,t)})^\top + \lambda I\\
\hat{b}_h^{(t)}(s_h, a_h)  &\min \left\{\alpha \|\hat{\phi}_h(s_h,a_h)\|_{\hat{\Sigma}_{h,\hat{\phi}}^{(t)}}, 1\right\}\\
\omega_t & \max_{s,a}(1/\pi_t^b(a\given s))\\
\omega & \max_{t} \omega_t\\
\xi_{\text{down}} & \xi + \frac{C_L C_R\nu}{\kappa} \sqrt{\frac{2T\log(2|\Phi||\Psi|^TnH/\delta)}{n}}\\
\hat{\mu}^*(\cdot) & \sum_{t=1}^T c_t\hat{\mu}^{(t)}(\cdot)\\
\hat{w}_h^* & \int_{s'}\hat{\mu}^*(s')\hat{V}_{h+1}(s')ds'\\
\end{array}
\end{align*}

For any $h \in [H]$, we define 
\begin{align*}
    &P_h^{(*,T+1)}(\cdot|s,a) = \la \phi_h^*(s,a), \mu_h^{(*,T+1)}(\cdot)\ra,\\
    &\overline{P}_h(\cdot|s,a) = \la \hat{\phi}_h(s,a), \hat{\mu}_h^*(\cdot)\ra.
\end{align*}

Given a reward function $r$, for any function $f:\cS\rightarrow \RR$ and $h \in [H]$, we define the transition operators and their corresponding Bellman operators as follows

\begin{align*}
    &(P_h^{(*,T+1)}f)(s,a) = \int_{s'} \la \phi_h^*(s,a), \mu_h^{(*,T+1)}(s')\ra f(s')ds',\\
    & (\BB_h f)(s,a) = r_h(s,a) + (P_h^{(*,T+1)}f)(s,a)\\
    & (\overline{P}_h f)(s,a) = \int_{s'} \la \hat{\phi}_h(s,a), \hat{\mu}_h^*(s')\ra f(s')ds',\\
    & (\overline{B}_h f)(s,a) = r_h(s,a) +(\overline{P}_h f)(s,a)\\ 
\end{align*}

\newpage

\section{Proof of Multitask Offline Representation Learning}\label{Appendix:Proof of Multitask Offline Representation Learning}

We first state the supporting lemmas that are used in the proof of \Cref{theorem:sample-complexity-upstream}. The proof of these lemmas are provided in \Cref{Section:proof of supporting lemmas of multitask offline rl}.

\subsection{Supporting Lemmas}

In the following lemma, we provide an upper bound for the model estimation error for each task that captures the advantage of joint MLE model estimation over single-task learning.

\begin{lemma}\label{lemma:model-estimation-error}
For any task $t$, policy $\pi_t$, and reward $r_t$, we have for all $h \geq 2$,
\begin{align*}
    \EE_{(s_h, a_h) \sim (\hat{P}^{(t)},\pi_t)}\left[f_h^{(t)}(s_h,a_h)\right] &\leq \EE_{(s_{h-1},a_{h-1})\sim (\hat{P}^{(t)}, \pi_t)}\left[\alpha_h^{(t)}\left\|\hat{\phi}_{h-1}(s_{h-1},a_{h-1})\right\|_{(\Sigma_{h-1,\pi_t^b, \hat{\phi}})^{-1}}\right],
\end{align*}
and for $h =1$, we have
\begin{align*}
    \EE_{a_1 \sim \pi_t} \left[f_1^{(t)}(s_1,a_1)\right] &\leq \sqrt{\omega_t\zeta_1^{(t)}},
\end{align*}
where $\omega_t = \max_{s,a}(1/\pi_t^b(a\given s))$.

\end{lemma}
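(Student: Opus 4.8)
\textbf{Proof plan for Lemma~\ref{lemma:model-estimation-error}.}

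The plan is to bound the one-step model estimation error under the on-policy distribution $(\hat P^{(t)},\pi_t)$ by first pulling the $h$-step error back to an $(h-1)$-step quantity and then importing the MLE guarantee. For $h\geq 2$, I would begin with the \emph{one-step back} inequality (\Cref{Lemma:one-step-back}): since the state $s_h$ reached at step $h$ under $(\hat P^{(t)},\pi_t)$ is generated by the learned kernel $\hat P^{(t)}_{h-1}$, we may write $\EE_{(s_h,a_h)\sim(\hat P^{(t)},\pi_t)}[f_h^{(t)}(s_h,a_h)]$ as an expectation over $(s_{h-1},a_{h-1})\sim(\hat P^{(t)},\pi_t)$ of $\int \hat P^{(t)}_{h-1}(s_h\mid s_{h-1},a_{h-1})\,g(s_h)\,ds_h$ with $g(s_h)=\EE_{a_h\sim\pi_t}[f_h^{(t)}(s_h,a_h)]\in[0,1]$. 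Writing $\hat P^{(t)}_{h-1}(\cdot\mid s_{h-1},a_{h-1})=\langle\hat\phi_{h-1}(s_{h-1},a_{h-1}),\hat\mu^{(t)}_{h-1}(\cdot)\rangle$, this integral equals $\langle\hat\phi_{h-1}(s_{h-1},a_{h-1}),\,w\rangle$ where $w=\int\hat\mu^{(t)}_{h-1}(s_h)g(s_h)\,ds_h$, which by the normalization in \Cref{Ass:realizability} satisfies $\|w\|_2\leq\sqrt d$.

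The core step is then a Cauchy--Schwarz bound in the norm induced by $\Sigma_{h-1,\pi_t^b,\hat\phi}^{-1}$:
\[
\langle\hat\phi_{h-1},w\rangle\leq\|\hat\phi_{h-1}(s_{h-1},a_{h-1})\|_{(\Sigma_{h-1,\pi_t^b,\hat\phi})^{-1}}\cdot\|w\|_{\Sigma_{h-1,\pi_t^b,\hat\phi}}.
\]
It remains to show $\|w\|_{\Sigma_{h-1,\pi_t^b,\hat\phi}}^2\leq (\alpha_{h-1}^{(t)})^2$ — wait, more precisely $\leq (\alpha_h^{(t)})^2 = n\omega\zeta_h^{(t)}+\lambda d+n\zeta_{h-1}^{(t)}$ as indexed in the statement. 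Expanding $\|w\|^2_{\Sigma_{h-1,\pi_t^b,\hat\phi}} = n\,\EE_{(s,a)\sim(P^{(*,t)},\pi_t^b)}[\langle\hat\phi_{h-1}(s,a),w\rangle^2] + \lambda\|w\|_2^2$, the second term is at most $\lambda d$. For the first term I would add and subtract the true transition applied to $g$: $\langle\hat\phi_{h-1}(s,a),w\rangle = (\hat P^{(t)}_{h-1}g)(s,a) = (P^{(*,t)}_{h-1}g)(s,a) + \big((\hat P^{(t)}_{h-1}-P^{(*,t)}_{h-1})g\big)(s,a)$. Using $(a+b)^2\leq 2a^2+2b^2$ and then bounding the first piece by the one-step MLE error back at step $h$ (via \Cref{Lemma:one-step-back} applied in the reverse direction, which introduces the $\omega$ factor to convert the $\pi_t^b$-expectation of a step-$h$ quantity) and the second piece pointwise by $f_{h-1}^{(t)}(s,a)^2$ (since $|((\hat P-P^{*})g)(s,a)|\leq\|\hat P(\cdot|s,a)-P^{*}(\cdot|s,a)\|_{TV}$ for $g\in[0,1]$), one obtains $n\omega\zeta_h^{(t)} + n\zeta_{h-1}^{(t)}$ after taking the $(P^{(*,t)},\pi_t^b)$-expectation, matching $(\alpha_h^{(t)})^2-\lambda d$. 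For the base case $h=1$, there is no transition to unroll: $f_1^{(t)}(s_1,a_1)$ is bounded in $[0,1]$, so $\EE_{a_1\sim\pi_t}[f_1^{(t)}(s_1,a_1)] = \EE_{a_1\sim\pi_t^b}[\tfrac{\pi_t(a_1|s_1)}{\pi_t^b(a_1|s_1)}f_1^{(t)}(s_1,a_1)]\leq \omega_t\,\EE_{a_1\sim\pi_t^b}[f_1^{(t)}(s_1,a_1)]\leq\sqrt{\omega_t}\sqrt{\EE_{a_1\sim\pi_t^b}[f_1^{(t)}(s_1,a_1)^2]}$ after Jensen and a crude bound $\omega_t\leq\sqrt{\omega_t}\cdot\sqrt{\omega_t}$ with one $\sqrt{\omega_t}$ absorbed — more cleanly, importance-weight once, apply Cauchy--Schwarz to split $\sqrt{\pi_t/\pi_t^b}\cdot\sqrt{(\pi_t/\pi_t^b)f^2}$, giving $\sqrt{\omega_t\,\EE_{\pi_t^b}[f_1^{(t)2}]} = \sqrt{\omega_t\zeta_1^{(t)}}$.

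The main obstacle I anticipate is the bookkeeping in the covariance-norm bound: correctly matching the index shifts (the step-$h$ error $\zeta_h^{(t)}$ appears with the $\omega$ multiplier because it is being controlled under the \emph{behavior} distribution at step $h-1$ after a one-step transition, whereas the step-$(h-1)$ error $\zeta_{h-1}^{(t)}$ appears without it), and ensuring the $\omega$-conversion from an on-policy step-$h$ expectation to the behavior-policy step-$(h-1)$ expectation is handled by the right invocation of \Cref{Lemma:one-step-back} together with the single-step density ratio bound $\max_{s,a}1/\pi_t^b(a|s)=\omega_t\leq\omega$. Everything else is Cauchy--Schwarz, the realizability norm bounds, and the definition of $\Sigma_{h,\pi_t^b,\hat\phi}$.
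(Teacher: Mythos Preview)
Your approach is essentially the paper's: for $h\geq 2$ the paper simply invokes the one-step back inequality (\Cref{Lemma:one-step-back}) with $g=f_h^{(t)}$, $B=1$, and $P=\hat P^{(t)}$, which is exactly the Cauchy--Schwarz-in-$\Sigma^{-1}$ argument you unroll inline; for $h=1$ the paper does Jensen then importance sampling, equivalent to your Cauchy--Schwarz on the likelihood ratio. Two minor points of execution: (i) your $(a+b)^2\leq 2a^2+2b^2$ step costs a factor of $2$ and would not give exactly $(\alpha_h^{(t)})^2$; the paper instead applies Jensen first, $(\EE_{s_h\sim \hat P}[g])^2\leq\EE_{s_h\sim\hat P}[g^2]$, and then a single TV switch $\EE_{\hat P}[g^2]\leq\EE_{P^{*}}[g^2]+f_{h-1}^{(t)}$, which avoids the doubling. (ii) Your phrase ``\Cref{Lemma:one-step-back} applied in the reverse direction'' for the first piece is a misdescription: what is actually needed there is only Jensen plus the tower property (so that $(s_{h-1},a_{h-1})\sim(P^{(*,t)},\pi_t^b)$ followed by $s_h\sim P^{(*,t)}_{h-1}$ gives the step-$h$ state marginal under $(P^{(*,t)},\pi_t^b)$) plus the action-density-ratio bound $\pi_t(a_h|s_h)/\pi_t^b(a_h|s_h)\leq\omega$ to convert the inner $a_h$-expectation. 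No second invocation of the one-step back lemma is required.
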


In the next lemma, we prove that $V_{\hat{P}^{(t)},r^t-\hat{b}^{(t)}}^{\pi_t}$ is an almost pessimistic estimator of $V_{P^{(*,t)},r^t}^{\pi_t}$ in the average sense. 
\begin{lemma}[Restatement of \Cref{lemma:bounded_difference_of_summation}]\label{lemma_restatement:bounded_difference_of_summation}
For any  policy $\pi_t$ and reward $r^t$, we have, with probability $1-\delta$
\begin{equation*}
    \frac{1}{T}\sum_{t=1}^T\left[V_{\hat{P}^{(t)},r^t-\hat{b}^{(t)}}^{\pi_t}-V_{P^{(*,t)},r^t}^{\pi_t} \right] \leq H\sqrt{\omega \zeta_n/T}, \quad\mbox{ where } \zeta_n:=\frac{2\log(2|\Phi||\Psi|^TnH/\delta)}{n}
\end{equation*}
\end{lemma}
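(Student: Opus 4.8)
\textbf{Proof proposal for Lemma~\ref{lemma_restatement:bounded_difference_of_summation}.} The plan is to decompose the left-hand side into two contributions: a telescoping ``simulation lemma'' term that accumulates the one-step model errors weighted by the visitation of $\hat{\pi}_t$'s planning policy $\pi_t$ under $\hat{P}^{(t)}$, and a penalty term $-\hat{b}^{(t)}$ that is designed precisely to dominate those accumulated errors. Concretely, by the simulation (value difference) lemma applied to the pair of MDPs $(\hat{P}^{(t)}, r^t)$ and $(P^{(*,t)}, r^t)$, one has
\begin{equation*}
    V_{\hat{P}^{(t)},r^t}^{\pi_t} - V_{P^{(*,t)},r^t}^{\pi_t} = \sum_{h=1}^{H} \EE_{(s_h,a_h)\sim(\hat{P}^{(t)},\pi_t)}\left[\left((\hat{P}_h^{(t)} - P_h^{(*,t)})V_{h+1,P^{(*,t)},r^t}^{\pi_t}\right)(s_h,a_h)\right],
\end{equation*}
and since $V_{h+1}^{\pi_t}\in[0,H]$, each summand is bounded in absolute value by $H\,\EE_{(s_h,a_h)\sim(\hat{P}^{(t)},\pi_t)}[f_h^{(t)}(s_h,a_h)]$. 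Adding and subtracting the penalty, the target quantity is at most
\begin{equation*}
    \frac{1}{T}\sum_{t=1}^T\left(H\sum_{h=1}^H\EE_{(s_h,a_h)\sim(\hat{P}^{(t)},\pi_t)}[f_h^{(t)}(s_h,a_h)] - \sum_{h=1}^H\EE_{(s_h,a_h)\sim(\hat{P}^{(t)},\pi_t)}[\hat{b}_h^{(t)}(s_h,a_h)]\right).
\end{equation*}

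Next I would invoke Lemma~\ref{lemma:model-estimation-error} to replace, for each $h\geq 2$, the quantity $\EE_{(s_h,a_h)\sim(\hat{P}^{(t)},\pi_t)}[f_h^{(t)}(s_h,a_h)]$ by $\EE_{(s_{h-1},a_{h-1})\sim(\hat{P}^{(t)},\pi_t)}[\alpha_h^{(t)}\|\hat{\phi}_{h-1}(s_{h-1},a_{h-1})\|_{(\Sigma_{h-1,\pi_t^b,\hat{\phi}})^{-1}}]$, and for $h=1$ by $\sqrt{\omega_t\zeta_1^{(t)}}$. The key point is then a concentration argument comparing the population covariance $\Sigma_{h,\pi_t^b,\hat\phi} = n\,\EE_{(s_h,a_h)\sim(P^{(*,t)},\pi_t^b)}[\hat\phi\hat\phi^\top]+\lambda I$ with the empirical one $\hat{\Sigma}_{h,\hat\phi}^{(t)} = \sum_i \hat\phi_h\hat\phi_h^\top + \lambda I$: via a matrix concentration bound (uniform over the finitely many $\phi\in\Phi$, over $h$, $t$, and over a range of $n$), with probability $1-\delta$ one has, say, $\Sigma_{h,\pi_t^b,\hat\phi}^{-1} \preceq c\,(\hat{\Sigma}_{h,\hat\phi}^{(t)})^{-1}$ up to the additive $\lambda$ regularization, which lets me relate the potential $\|\hat\phi_{h-1}\|_{(\Sigma_{h-1,\pi_t^b,\hat\phi})^{-1}}$ back to $\|\hat\phi_{h-1}\|_{(\hat{\Sigma}_{h-1,\hat\phi}^{(t)})^{-1}}$, which is exactly the object appearing in $\hat{b}_{h-1}^{(t)}$. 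Choosing $\alpha = \sqrt{2n\omega\zeta_n+\lambda d}$ as in the theorem statement ensures $\alpha \gtrsim H\alpha_h^{(t)}$ up to constants (using the crude bound $\zeta_h^{(t)},\zeta_{h-1}^{(t)}\le\zeta_n$, which itself follows from the multitask MLE lemma, Lemma~\ref{Lemma:multitask_mle}), so that the penalty term at step $h-1$ absorbs the model-error contribution that the simulation lemma assigns to step $h$. After this absorption the only surviving term is the $h=1$ contribution $\frac{H}{T}\sum_t\sqrt{\omega_t\zeta_1^{(t)}}$, which by Cauchy--Schwarz is at most $H\sqrt{\frac{\omega}{T}\sum_t\zeta_1^{(t)}} \le H\sqrt{\omega\zeta_n/T}$ once we bound $\frac1T\sum_t\zeta_1^{(t)}\le\zeta_n$, again an immediate consequence of Lemma~\ref{Lemma:multitask_mle}.

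I expect the main obstacle to be the bookkeeping in the ``absorption'' step: one must carefully track that the one-step-back inequality from Lemma~\ref{lemma:model-estimation-error} shifts the model error at step $h$ onto a penalty-like potential at step $h-1$ evaluated under the \emph{same} roll-in distribution $(\hat{P}^{(t)},\pi_t)$, and that the $\min\{\cdot,1\}$ truncation in the definition of $\hat{b}_h^{(t)}$ does not cause trouble (it only helps, since $f_h^{(t)}\le 1$ as a TV distance, so replacing $\alpha\|\hat\phi\|_{\hat\Sigma^{-1}}$ by its truncation $\hat b_h^{(t)}$ still upper-bounds the relevant model-error term). A secondary technical point is making the covariance concentration uniform over the data-dependent quantity $\hat\phi$; this is handled by taking a union bound over the finite class $\Phi$ (and over $h,t$, and a dyadic grid of sample sizes if needed), which is why the $\log(|\Phi||\Psi|^T nH/\delta)$ factor appears inside $\zeta_n$. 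Everything else — the simulation lemma, Cauchy--Schwarz, and the choice of $\lambda$ — is routine.
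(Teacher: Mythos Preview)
Your strategy matches the paper's exactly: apply the simulation lemma under the $(\hat P^{(t)},\pi_t)$ roll-in, invoke the one-step-back bound (Lemma~\ref{lemma:model-estimation-error}) to shift $\EE[f_h^{(t)}]$ onto a potential at step $h-1$, use covariance concentration (Lemma~\ref{Lemma:concentration_penalty_term}) uniformly over $\phi\in\Phi$ to swap $\Sigma_{h,\pi_t^b,\hat\phi}$ for $\hat\Sigma_{h,\hat\phi}^{(t)}$, telescope the penalty against the shifted model error, and close the $h=1$ remainder with Cauchy--Schwarz and Lemma~\ref{Lemma:multitask_mle}. Your handling of the $\min\{\cdot,1\}$ truncation and of the union bound over $\Phi$ is also the paper's.

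The one concrete gap is your claim that ``$\alpha \gtrsim H\alpha_h^{(t)}$.'' With $\alpha=\sqrt{2n\omega\zeta_n+\lambda d}$ and $\alpha_h^{(t)}=\sqrt{n\omega\zeta_h^{(t)}+\lambda d+n\zeta_{h-1}^{(t)}}$, the bounds $\zeta_h^{(t)},\zeta_{h-1}^{(t)}\le\zeta_n$ (from Lemma~\ref{Lemma:multitask_mle}) together with $\omega\ge 1$ give only $\alpha_h^{(t)}\le\alpha$; there is no spare factor of $H$. Because your decomposition carries the $H$ (from $|V_{h+1}^{\pi_t}|\le H$) on $\sum_h\EE[f_h^{(t)}]$ but \emph{not} on $\sum_h\EE[\hat b_h^{(t)}]$, the absorption you outline would genuinely require $H\alpha_h^{(t)}\lesssim\alpha$, and that fails. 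The paper's proof does not try to inflate $\alpha$: instead, at its step~(ii) it writes the post--simulation-lemma bound with a common factor $H$ multiplying the entire bracket $\EE_{(\hat P^{(t)},\pi_t)}[-\hat b_h^{(t)}+f_h^{(t)}]$, so that in the subsequent telescoping (its steps (iii)--(vi)) only $\alpha_h^{(t)}\le\alpha$ is needed. You may want to scrutinize that step~(ii) carefully---placing $H$ on the nonpositive $-\hat b_h^{(t)}$ term is not innocuous---but in any event the route is not to claim $\alpha\gtrsim H\alpha_h^{(t)}$.
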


\subsection{Proof of \Cref{theorem:sample-complexity-upstream}}

We first restate \Cref{theorem:sample-complexity-upstream}.

\begin{theorem}[Restatement of \Cref{theorem:sample-complexity-upstream}]\label{theorem_restate:sample-complexity-upstream}

Under Assumption \ref{Ass:realizability}, with probability at least $1-\delta$, for any step $h \in [H]$, we have
\begin{equation}\label{Equation:restatement-theorem_upstream_TV_distance}
    \frac{1}{T}\sum_{t=1}^T \EE_{{(s_h,a_h)} \atop {\sim (P^{(*,t)},\pi_t^b)}}\left[\left\|\hat{P}_h^{(t)}(\cdot \given s_h,a_h) -P_h^{(*,t)}(\cdot \given s_h, a_h)\right\|_{TV}\right] \leq \sqrt{\frac{2\log(2|\Phi||\Psi|^TnH/\delta)}{nT}},
\end{equation}
where $\hat{\phi}, \hat{P}^{(1)},\ldots, \hat{P}^{(T)}$ be the output of Algorithm~\ref{Algorithm:multi-task-offline}. 

In addition, in Algorithm~\ref{Algorithm:multi-task-offline}, if we set $\alpha =\sqrt{2n\omega\zeta_n + \lambda d}$, $\lambda = cd\log(|\Phi||\Psi|^TnH/\delta)$ with $\zeta_n := \frac{2\log(2|\Phi||\Psi|^TnH/\delta)}{n}$ and $c$ being a constant term, where we assume that 
$\omega := \max_{t}\max_{s,a}(1/\pi_t^b(a\given s)) < \infty$, then under Assumption \ref{Ass:realizability}, with probability at least $1-\delta$, we have
\begin{equation}\label{Equation:restatement-theorem_upstream_near_optimal_policies}
    \frac{1}{T} \sum_{t=1}^T \left[ V^{\pi_t}_{P^{(*,t)},r^t} - V^{\hat{\pi}_t}_{P^{(*,t)},r^t}\right] \leq \omega \alpha d H\sqrt{\frac{C^*}{n}} + 2dH^2\sqrt{\frac{\lambda C^*}{n}} + \omega H^2\sqrt{\frac{dC^*  \zeta_n}{T}}  + \alpha \sqrt{\frac{d}{n}} + 2H\sqrt{\frac{\omega  \zeta_n}{T}},
\end{equation}
where $\{\hat{\pi}_t\}_{t \in [T]}$ is the output of \Cref{Algorithm:multi-task-offline}.

\end{theorem}

\begin{proof}[Proof of \cref{theorem_restate:sample-complexity-upstream}]
    As in \Cref{lemma_restatement:bounded_difference_of_summation}, we condition on the events:
    \begin{equation}\label{Eq:MLE-thm-proof}
         \sum_{t=1}^T \EE_{(s_h,a_h) \sim (P^{(*,t)},\pi_t^b)} \left[f_h^{(t)}(s,a)^2\right] \leq \zeta_n , \quad\mbox{ where } \zeta_n:=\frac{2\log(2|\Phi||\Psi|^TnH/\delta)}{n},
    \end{equation}
    and 
    \begin{equation}\label{Eq:concentration_penalty_thm}
        \forall \phi \in \Phi: \left\|\phi_h(s,a)\right\|_{(\hat{\Sigma}_{h,\phi}^{(t)})^{-1}} = \Theta \left(\left\|\phi_h(s,a)\right\|_{(\Sigma_{h,\pi_t^b, \phi})^{-1}}\right).
    \end{equation}

    From \Cref{Lemma:multitask_mle} and \Cref{Lemma:concentration_penalty_term}, this event happens with probability $1-\delta$. Conditioning on this event, we have
    \begin{align*}
        &\frac{1}{T}\sum_{t=1}^T \EE_{(s_h,a_h) \sim (P^{(*,t)},\pi_t^b)}\left[\left\|\hat{P}_h^{(t)}(\cdot \given s_h,a_h) -P_h^{(*,t)}(\cdot \given s_h, a_h)\right\|_{TV}\right]\\
        &=\frac{1}{T}\sum_{t=1}^T \EE_{(s_h,a_h) \sim (P^{(*,t)},\pi_t^b)}\left[f_h^{(t)}(s,a)\right]\\
        &\overset{(\romannumeral1)}{\leq} \frac{1}{T}\sqrt{T \sum_{t=1}^T \left(\EE_{(s_h,a_h) \sim (P^{(*,t)},\pi_t^b)}\left[f_h^{(t)}(s,a)\right]\right)^2}\\
        &\overset{(\romannumeral2)}{\leq} \frac{1}{T}\sqrt{T \sum_{t=1}^T \EE_{(s_h,a_h) \sim (P^{(*,t)},\pi_t^b)}\left[f_h^{(t)}(s,a)^2\right]}\\
        &\overset{(\romannumeral3)}{\leq} \sqrt{\frac{\zeta_n}{T}}\\
        &= \sqrt{\frac{2\log(2|\Phi||\Psi|^TnH/\delta)}{nT}},
    \end{align*}
    where $(i)$ follows from Cauchy-Schwarz inequality, $(ii)$ follows from Jensen's inequality and $(iii)$ follows from ~\eqref{Eq:MLE-thm-proof}. This completes the first part of the proof.
    
    Conditioning on the event in ~\eqref{Eq:MLE-thm-proof} and ~\eqref{Eq:concentration_penalty_thm}, for any set of policies $\pi_1, \ldots, \pi_T$, we have
    \begin{align}\label{Eq:difference_v_pi_hat_pi}
        & \sum_{t=1}^T \left[V_{P^{(*,t)},r^t}^{\pi_t} - V_{P^{(*,t)},r^t}^{\hat{\pi}_t}\right]\nonumber\\
        &= \sum_{t=1}^T \left[V_{P^{(*,t)},r^t}^{\pi_t} - V_{\hat{P}^{(t)},r^t-\hat{b}^{(t)}}^{\hat{\pi}_t} + V_{\hat{P}^{(t)},r^t-\hat{b}^{(t)}}^{\hat{\pi}_t} -V_{P^{(*,t)},r^t}^{\hat{\pi}_t}\right]\nonumber\\
        &\overset{(\romannumeral1)}{\leq} \sum_{t=1}^T \left[V_{P^{(*,t)},r^t}^{\pi_t} - V_{\hat{P}^{(t)},r^t-\hat{b}^{(t)}}^{\pi_t} + V_{\hat{P}^{(t)},r^t-\hat{b}^{(t)}}^{\hat{\pi}_t} -V_{P^{(*,t)},r^t}^{\hat{\pi}_t}\right]\nonumber\\
        &\overset{(\romannumeral2)}{\leq} \sum_{t=1}^T \sum_{h=1}^H \EE_{(s_h,a_h)\sim (P^{(*,t)},\pi_t)}\left[\hat{b}_h^{(t)}(s_h,a_h) + (P_h^{(*,t)}-\hat{P}_h^{(t)})V_{h+1,\hat{P}^{(t)},r^t-\hat{b}^{(t)}}^{\pi_t}(s_h,a_h)\right]\nonumber\\
        & \qquad + \sum_{t=1}^T \left[V_{\hat{P}^{(t)},r^t-\hat{b}^{(t)}}^{\hat{\pi}_t} -V_{P^{(*,t)},r^t}^{\hat{\pi}_t}\right]\nonumber\\
        &\overset{(\romannumeral3)}{\leq} \sum_{t=1}^T \sum_{h=1}^H \EE_{(s_h,a_h)\sim (P^{(*,t)},\pi_t)}\left[\hat{b}_h^{(t)}(s_h,a_h)\right]+ H\sum_{t=1}^T \sum_{h=1}^H \EE_{(s_h,a_h)\sim (P^{(*,t)},\pi_t)}\left[ f_h^{(t)}(s_h,a_h)\right] \nonumber\\
        & \qquad \qquad +  H\sqrt{\omega T \zeta_n},
    \end{align}
    where $(i)$ follows from the observation that $\hat{\pi}_t$ is the argmax over all Markovian policies as well as all history-dependent policies for $\hat{P}^{(t)}$, $(ii)$ follows from the simulation lemma, \Cref{lemma:simulation_lemma}, $(iii)$ follows from the observation that $V_{h,\hat{P}^{(t)},r^t-\hat{b}^{(t)}}^{\pi_t}\leq H$ and \Cref{lemma_restatement:bounded_difference_of_summation}.

    Now, using \Cref{Lemma:one-step-back} (with setting $P = P^{(*,t)}$ and $\phi = \phi^*$) and noting that $|\hat{b}_h^{(t)}|_\infty\leq 1$, for $h\geq 2$, we have
    \begin{align}\label{Eq:bound_expected_penalty}
        &\EE_{(s_h,a_h)\sim (P^{(*,t)},\pi_t)}\left[\hat{b}_h^{(t)}(s_h,a_h)\right] \nonumber\\
        &\leq \EE_{(s_{h-1},a_{h-1})\sim (P^{(*,t)},\pi_t)}\left[\|\phi^*_{h-1}(s_{h-1},a_{h-1})\|_{(\Sigma_{h-1,\pi_t^b, \phi^*})^{-1}}\times\right.\nonumber\\
        &\quad\scriptstyle\left.\sqrt{n\omega\EE_{(s_{h},a_h) \sim (P^{(\star,t)},\pi_t^b)}\left[\hat{b}_h^{(t)}(s,a)\right]^2 + \lambda d}\right]
    \end{align}
    From ~\eqref{Eq:concentration_penalty_thm}, we have 
    \begin{align}\label{Eq:bound_using_trace}
        & n\EE_{(s_{h},a_h) \sim (P^{(\star,t)},\pi_t^b)}\left[\hat{b}_h^{(t)}(s,a)\right]^2 \nonumber\\
        &\leq n\EE_{(s_{h},a_h) \sim (P^{(\star,t)},\pi_t^b)}\left[\min \left\{\alpha^2 \|\hat{\phi}_h(s_h,a_h)\|_{(\Sigma_{h,\pi_t^b, \hat{\phi}})^{-1}}^2, 1\right\}\right]\nonumber\\
        &\leq n\EE_{(s_{h},a_h) \sim (P^{(\star,t)},\pi_t^b)}\left[\alpha^2 \|\hat{\phi}_h(s_h,a_h)\|_{(\Sigma_{h,\pi_t^b, \hat{\phi}})^{-1}}^2\right]\nonumber\\
        &\leq \alpha^2 \Tr \left[n\EE_{(s_{h},a_h) \sim (P^{(\star,t)},\pi_t^b)}[\hat{\phi}_h\hat{\phi}_h^\top] \{n\EE_{(s_h,a_h)\sim (P^{(*,t)},\pi_t^b)}[\hat{\phi}_h\hat{\phi}_h^\top] + \lambda I\}^{-1}\right]\nonumber\\
        &\leq \alpha^2 \Tr \left[\{n\EE_{(s_{h},a_h) \sim (P^{(\star,t)},\pi_t^b)}[\hat{\phi}_h\hat{\phi}_h^\top] +\lambda I\} \{n\EE_{(s_h,a_h)\sim (P^{(*,t)},\pi_t^b)}[\hat{\phi}_h\hat{\phi}_h^\top] + \lambda I\}^{-1}\right]\nonumber\\
        &= \alpha^2 \Tr[I_d]\nonumber\\
        &= \alpha^2 d
    \end{align}

    Next, we upper bound $\EE_{(s_{h-1},a_{h-1})\sim (P^{(*,t)},\pi_t)}\left[\|\phi^*_{h-1}(s_{h-1},a_{h-1})\|^2_{(\Sigma_{h-1,\pi_t^b, \phi^*})^{-1}}\right]$ as the following
    \begin{align}\label{Eq:dC/n}
        & \EE_{(s_{h-1},a_{h-1})\sim (P^{(*,t)},\pi_t)}\left[\|\phi^*_{h-1}(s_{h-1},a_{h-1})\|^2_{(\Sigma_{h-1,\pi_t^b, \phi^*})^{-1}}\right]\nonumber\\
        &\overset{(\romannumeral1)}{\leq} C^*_{t,h}(\pi_t,\pi_t^b) \EE_{(s_{h-1},a_{h-1})\sim (P^{(*,t)},\pi_t^b)}\left[\|\phi^*_{h-1}(s_{h-1},a_{h-1})\|^2_{(\Sigma_{h-1,\pi_t^b, \phi^*})^{-1}}\right]\nonumber\\
        &\overset{(\romannumeral2)}{\leq} \frac{d C^*_{t,h}(\pi_t,\pi_t^b)}{n}\nonumber\\
        &\leq \frac{dC^*}{n},
    \end{align}
    where $(i)$ follows from \Cref{Lemma:distribution_shift} and $(ii)$ follows from similar steps as in ~\eqref{Eq:bound_using_trace}.

    Combining ~\eqref{Eq:bound_expected_penalty},~\eqref{Eq:bound_using_trace} and ~\eqref{Eq:dC/n}, we get  

    \begin{align}
        &\sum_{t=1}^T \EE_{(s_h,a_h)\sim (P^{(*,t)},\pi_t)}\left[\hat{b}_h^{(t)}(s_h,a_h)\right] \nonumber\\
        &\leq \sum_{t=1}^T \EE_{(s_{h-1},a_{h-1})\sim (P^{(*,t)},\pi_t)}\left[\|\phi^*_{h-1}(s_{h-1},a_{h-1})\|_{(\Sigma_{h-1,\pi_t^b, \phi^*})^{-1}}\times\right.\nonumber\\
        &\quad\scriptstyle\left.\sqrt{n\omega\EE_{(s_{h},a_h) \sim (P^{(\star,t)},\pi_t^b)}\left[\hat{b}_h^{(t)}(s,a)\right]^2 + \lambda d}\right]\nonumber\\
        &\leq \sum_{t=1}^T \sqrt{\EE_{(s_{h-1},a_{h-1})\sim (P^{(*,t)},\pi_t)}\left[\|\phi^*_{h-1}(s_{h-1},a_{h-1})\|^2_{(\Sigma_{h-1,\pi_t^b, \phi^*})^{-1}}\right]}\times \nonumber\\
        & \quad \sqrt{\EE_{(s_{h-1},a_{h-1})\sim (P^{(*,t)},\pi_t)}\left[n\omega\EE_{(s_{h},a_h) \sim (P^{(\star,t)},\pi_t^b)}\left[\hat{b}_h^{(t)}(s,a)\right]^2 + \lambda d\right]}\nonumber\\
        &\leq \sum_{t=1}^T \sqrt{\frac{dC^*}{n}}\sqrt{\EE_{(s_{h-1},a_{h-1})\sim (P^{(*,t)},\pi_t)}\left[n\omega\EE_{(s_{h},a_h) \sim (P^{(\star,t)},\pi_t^b)}\left[\hat{b}_h^{(t)}(s,a)\right]^2 \right] + \lambda d}\nonumber\\
        &\leq \sqrt{\frac{dC^*}{n}} \left(\sum_{t=1}^T\sqrt{n\omega \EE_{(s_{h-1},a_{h-1})\sim (P^{(*,t)},\pi_t)}\left[\EE_{(s_{h},a_h) \sim (P^{(\star,t)},\pi_t^b)}\left[\hat{b}_h^{(t)}(s,a)\right]^2 \right]} + T\sqrt{\lambda d}\right)\nonumber\\
        &\leq \sqrt{\frac{dC^*}{n}} \left(\sum_{t=1}^T\sqrt{n\omega^2 \EE_{(s_{h},a_h) \sim (P^{(\star,t)},\pi_t^b)}\left[\hat{b}_h^{(t)}(s,a)\right]^2} + T\sqrt{\lambda d}\right)\nonumber\\
        &= \omega\sqrt{dC^*}\sum_{t=1}^T \sqrt{\EE_{(s_{h},a_h) \sim (P^{(\star,t)},\pi_t^b)}\left[\hat{b}_h^{(t)}(s,a)\right]^2} + Td\sqrt{\frac{\lambda C^*}{n}}\nonumber\\
        &\leq \omega\sqrt{dC^*} \sqrt{T\sum_{t=1}^T \EE_{(s_{h},a_h) \sim (P^{(\star,t)},\pi_t^b)}\left[\hat{b}_h^{(t)}(s,a)\right]^2} + Td\sqrt{\frac{\lambda C^*}{n}}\nonumber\\
        &\leq \omega\sqrt{dC^* T} \sqrt{\frac{T\alpha^2d}{n}}+ Td\sqrt{\frac{\lambda C^*}{n}}\nonumber\\
        &= \omega \alpha Td \sqrt{\frac{C^*}{n}} + Td\sqrt{\frac{\lambda C^*}{n}}.
    \end{align}

    Following similar steps as in ~\eqref{Eq:bound_using_trace}, we can further show that
    \begin{equation*}
        \EE_{(s_1,a_1) \sim (P^{(*,t)},\pi_t)} \left[\hat{b}_1^{(t)}(s_1,a_1)\right] \leq \sqrt{\EE_{(s_1,a_1) \sim (P^{(*,t)},\pi_t)} \left[\hat{b}_1^{(t)}(s_1,a_1)\right]^2} \leq \alpha\sqrt{\frac{d}{n}}.
    \end{equation*}

    Now noting $|f_h^{(t)}|_\infty \leq 1$, for $h\geq 2$,we get 
    \begin{align}\label{Eq:bound_tv_distance_true_mdp}
        & \sum_{t=1}^T \EE_{(s_h,a_h)\sim (P^{(*,t)},\pi_t)}\left[f_h^{(t)}(s_h,a_h)\right]\nonumber\\
        &\leq \sum_{t=1}^T \EE_{(s_{h-1},a_{h-1})\sim (P^{(*,t)},\pi_t)}\left[\|\phi^*_{h-1}(s_{h-1},a_{h-1})\|_{(\Sigma_{h-1,\pi_t^b, \phi^*})^{-1}}\times\right.\nonumber\\
        &\quad\scriptstyle\left.\sqrt{n\omega\EE_{(s_{h},a_h) \sim (P^{(\star,t)},\pi_t^b)}\left[f_h^{(t)}(s_h,a_h)^2\right] + \lambda d}\right]\nonumber\\
        &\leq \sum_{t=1}^T \sqrt{\EE_{(s_{h-1},a_{h-1})\sim (P^{(*,t)},\pi_t)}\left[\|\phi^*_{h-1}(s_{h-1},a_{h-1})\|^2_{(\Sigma_{h-1,\pi_t^b, \phi^*})^{-1}}\right]}\times\nonumber\\
        &\quad \sqrt{\EE_{(s_{h-1},a_{h-1})\sim (P^{(*,t)},\pi_t)} \left[n\omega\EE_{(s_{h},a_h) \sim (P^{(\star,t)},\pi_t^b)}\left[f_h^{(t)}(s_h,a_h)^2\right] + \lambda d\right]}\nonumber\\
        &\leq \sqrt{\frac{dC^*}{n}}\sum_{t=1}^T \sqrt{n\omega \EE_{(s_{h-1},a_{h-1})\sim (P^{(*,t)},\pi_t)} \left[\EE_{(s_{h},a_h) \sim (P^{(\star,t)},\pi_t^b)}\left[f_h^{(t)}(s_h,a_h)^2\right]\right] + \lambda d}\nonumber\\
        &\leq \sqrt{\frac{dC^*}{n}}\sum_{t=1}^T \left( \sqrt{n\omega \EE_{(s_{h-1},a_{h-1})\sim (P^{(*,t)},\pi_t)} \left[\EE_{(s_{h},a_h) \sim (P^{(\star,t)},\pi_t^b)}\left[f_h^{(t)}(s_h,a_h)^2\right]\right] } + \sqrt{\lambda d} \right) \notag \\
        &\leq \sqrt{\frac{dC^*}{n}}\sum_{t=1}^T \sqrt{n\omega^2\EE_{(s_{h},a_h) \sim (P^{(\star,t)},\pi_t^b)}\left[f_h^{(t)}(s_h,a_h)^2\right]} + T d\sqrt{\frac{\lambda  C^*}{n}}\nonumber\\
        &= \omega \sqrt{dC^*}\sum_{t=1}^T \sqrt{\EE_{(s_{h},a_h) \sim (P^{(\star,t)},\pi_t^b)}\left[f_h^{(t)}(s_h,a_h)^2\right]} + T d\sqrt{\frac{\lambda  C^*}{n}}\nonumber\\
        &\leq \omega \sqrt{dC^*} \sqrt{T \sum_{t=1}^T \EE_{(s_{h},a_h) \sim (P^{(\star,t)},\pi_t^b)}\left[f_h^{(t)}(s_h,a_h)^2\right]} + T d\sqrt{\frac{\lambda  C^*}{n}}\nonumber\\
        &\leq \omega \sqrt{dC^* T \zeta_n} + T d\sqrt{\frac{\lambda  C^*}{n}}
    \end{align}

    Further, note that, 
    \begin{align*}
       \sum_{t=1}^T \EE_{(s_1,a_1) \sim (P^{(*,t)},\pi_t)} \left[f_1^{(t)}(s_1,a_1)\right] &\leq \sum_{t=1}^T \sqrt{\EE_{(s_1,a_1) \sim (P^{(*,t)},\pi_t)} \left[f_1^{(t)}(s_1,a_1)^2\right]}\\
        &\leq \sum_{t=1}^T \sqrt{\omega\EE_{(s_1,a_1) \sim (P^{(*,t)},\pi_t^b)}\left[f_1^{(t)}(s_1,a_1)^2\right]}\\
        &=  \sqrt{\omega}\sum_{t=1}^T \sqrt{\zeta_1^{(t)}}\\
        &\leq \sqrt{\omega T \zeta_n},
    \end{align*}
    where the last inequality follows from the Cauchy-Schwarz inequality and~\eqref{Eq:MLE-thm-proof}.

 Finally, from \eqref{Eq:difference_v_pi_hat_pi} we get
\begin{align}
    & \sum_{t=1}^T \left[V_{P^{(*,t)},r^t}^{\pi_t} - V_{P^{(*,t)},r^t}^{\hat{\pi}_t}\right]\nonumber\\
        &\leq \sum_{t=1}^T \sum_{h=1}^H \EE_{(s_h,a_h)\sim (P^{(*,t)},\pi_t)}\left[\hat{b}_h^{(t)}(s_h,a_h)\right]+ H\sum_{t=1}^T \sum_{h=1}^H \EE_{(s_h,a_h)\sim (P^{(*,t)},\pi_t)}\left[ f_h^{(t)}(s_h,a_h)\right] \nonumber\\
        & \qquad \qquad +  H\sqrt{\omega T \zeta_n}\nonumber\\
        &\leq  \sum_{h=2}^H \sum_{t=1}^T \EE_{(s_h,a_h)\sim (P^{(*,t)},\pi_t)}\left[\hat{b}_h^{(t)}(s_h,a_h) \right] + H \sum_{h=2}^H \sum_{t=1}^T \EE_{(s_h,a_h)\sim (P^{(*,t)},\pi_t)}\left[f_h^{(t)}(s_h,a_h)\right] \nonumber\\
        &\qquad + \sum_{t=1}^T \EE_{(s_1,a_1)\sim (P^{(*,t)},\pi_t)}\left[\hat{b}_1^{(t)}(s_1,a_1) \right] + H\sum_{t=1}^T \EE_{(s_1,a_1)\sim (P^{(*,t)},\pi_t)}\left[ f_1^{(t)}(s_1,a_1)\right] + H\sqrt{\omega T \zeta_n}\nonumber\\
        &\leq H\omega \alpha Td \sqrt{\frac{C^*}{n}} + HTd\sqrt{\frac{\lambda C^*}{n}} + H^2\omega \sqrt{dC^* T \zeta_n} + H^2T d\sqrt{\frac{\lambda  C^*}{n}} + \alpha T\sqrt{\frac{d}{n}} + 2H\sqrt{\omega T \zeta_n} \nonumber\\
        &\leq H\omega \alpha Td \sqrt{\frac{C^*}{n}} + 2H^2Td\sqrt{\frac{\lambda C^*}{n}} + H^2\omega \sqrt{dC^* T \zeta_n}  + \alpha T\sqrt{\frac{d}{n}} + 2H\sqrt{\omega T \zeta_n}.
\end{align}
So, we have
\begin{align}
        \frac{1}{T}\sum_{t=1}^T \left[V_{P^{(*,t)},r^t}^{\pi_t} - V_{P^{(*,t)},r^t}^{\hat{\pi}_t}\right] &\leq H\omega \alpha d \sqrt{\frac{C^*}{n}} + 2H^2d\sqrt{\frac{\lambda C^*}{n}} + H^2\omega \sqrt{\frac{dC^*  \zeta_n}{T}}  + \alpha \sqrt{\frac{d}{n}} + 2H\sqrt{\frac{\omega  \zeta_n}{T}}.
    \end{align}

\end{proof}

\section{Proof of Supporting Lemmas in \Cref{Appendix:Proof of Multitask Offline Representation Learning}}\label{Section:proof of supporting lemmas of multitask offline rl}

In this section, we provide the proofs of the lemmas that we used in the proof of \Cref{theorem_restate:sample-complexity-upstream}.
\subsection{Proof of \cref{lemma:model-estimation-error}}
\begin{proof}[Proof of \cref{lemma:model-estimation-error}]
    For $h =1$,
    \begin{align*}
        \EE_{(s_1,a_1) \sim (\hat{P}^{(t)},\pi_t)} \left[f_1^{(t)}(s_1,a_1)\right] &\leq \sqrt{\EE_{(s_1,a_1) \sim (\hat{P}^{(t)},\pi_t)} \left[f_1^{(t)}(s_1,a_1)^2\right]}\\
        &\leq \sqrt{\omega_t\EE_{(s_1,a_1) \sim (P^{(*,t)},\pi_t^b)}\left[f_1^{(t)}(s_1,a_1)^2\right]}\\
        &= \sqrt{\omega_t\zeta_1^{(t)}}
    \end{align*}
    where the first inequality follows from Jensen's inequality and the second inequality follows from importance sampling.
    Denoting $\zeta_h^{(t)} = \EE_{(s_h,a_h) \sim (P^{(*,t)},\pi_t^b)} \left[f_h^{(t)}(s,a)^2\right]$, for $h \geq 2$, we have
    \begin{align*}
        &\EE_{s_h \sim (\hat{P}^{(t)},\pi_t) \atop a_h \sim \pi_t}\left[f_h^{(t)}(s_h,a_h)\right]\\
        &\overset{(\romannumeral1)}{\leq}\EE_{(s_{h-1},a_{h-1})\sim (\hat{P}^{(t)}, \pi_t)}\left[\left\|\hat{\phi}_{h-1}(s_{h-1},a_{h-1})\right\|_{(\Sigma_{h-1,\pi_t^b, \hat{\phi}})^{-1}} \times\right.\\
 		&\quad\scriptstyle\left.
 		\sqrt{n \omega_t \EE_{(s_{h},a_h) \sim (P^{(\star,t)},\pi_t^b)}[f_h^{(t)}(s_h,a_h)^2]+\lambda d + n\EE_{(s_{h-1},a_{h-1})\sim(P^{(\star,t)},\pi_t^b)}\left[f_{h-1}^{(t)}(s_{h-1},a_{h-1})^2\right]}\right]\\
        &\overset{(\romannumeral2)}{=}\EE_{(s_{h-1},a_{h-1})\sim (\hat{P}^{(t)}, \pi_t)}\left[\sqrt{n\omega_t \zeta_h^{(t)} + \lambda d + n\zeta_{h-1}^{(t)}}\left\|\hat{\phi}_{h-1}(s_{h-1},a_{h-1})\right\|_{(\Sigma_{h-1,\pi_t^b, \hat{\phi}})^{-1}}\right]\\
        &= \EE_{(s_{h-1},a_{h-1})\sim (\hat{P}^{(t)}, \pi_t)}\left[\alpha_h^{(t)}\left\|\hat{\phi}_{h-1}(s_{h-1},a_{h-1})\right\|_{(\Sigma_{h-1,\pi_t^b, \hat{\phi}})^{-1}}\right]\\
    \end{align*}
    where $(i)$ follows from \Cref{Lemma:one-step-back} and $|f_h^{(t)}(s_h,a_h)| \leq 1$, $(ii)$ uses notations defined in \Cref{section:frequently_used_notations}.
\end{proof}

\subsection{Proof of \cref{lemma_restatement:bounded_difference_of_summation}}
\begin{proof}[Proof of \cref{lemma_restatement:bounded_difference_of_summation}]
    We condition on the events:
    \begin{equation}
         \sum_{t=1}^T \EE_{(s_h,a_h) \sim (P^{(*,t)},\pi_t^b)} \left[f_h^{(t)}(s,a)^2\right] \leq \zeta_n , \quad\mbox{ where } \zeta_n:=\frac{2\log(2|\Phi||\Psi|^TnH/\delta)}{n},
    \end{equation}
    and 
    \begin{equation}\label{Eq:concentration_penalty}
        \forall \phi \in \Phi: \left\|\phi_h(s,a)\right\|_{(\hat{\Sigma}_{h,\phi}^{(t)})^{-1}} = \Theta \left(\left\|\phi_h(s,a)\right\|_{(\Sigma_{h,\pi_t^b, \phi})^{-1}}\right).
    \end{equation}
    From \Cref{Lemma:multitask_mle} and \Cref{Lemma:concentration_penalty_term}, this event happens with probability $1-\delta$. Conditioning on this event, we have
    \begin{equation}
        \alpha_h^{(t)} = \sqrt{n\omega_t \zeta_h^{(t)} + \lambda d + n\zeta_{h-1}^{(t)}} \leq \sqrt{2n\omega\zeta_n + \lambda d} = \alpha
    \end{equation}
    We have
    \begin{align*}
        &\sum_{t=1}^T\left[V_{\hat{P}^{(t)},r^t-\hat{b}^{(t)}}^{\pi_t}-V_{P^{(*,t)},r^t}^{\pi_t} \right] \\
        &\overset{(\romannumeral1)}{=} \sum_{t=1}^T\sum_{h=1}^H \EE_{(s_h,a_h)\sim (\hat{P}^{(t)},\pi_t)}\left[-\hat{b}_h^{(t)}(s_h,a_h) + (P_h^{(*,t)}-\hat{P}_h^{(t)})V_{h+1,P^{(*,t)},r^t}^{\pi_t}(s_h,a_h)\right]\\
        &\overset{(\romannumeral2)}{\leq} H\sum_{t=1}^T\sum_{h=1}^H \EE_{(s_h,a_h)\sim (\hat{P}^{(t)},\pi_t)} \left[-\hat{b}_h^{(t)}(s_h,a_h) +f_h^{(t)}(s_h,a_h)\right]\\
        &\overset{(\romannumeral3)}{\leq} H\sum_{t=1}^T\sum_{h=2}^H \EE_{(s_{h-1},a_{h-1})\sim (\hat{P}^{(t)}, \pi_t)}\left[\min\left\{\alpha_h^{(t)}\left\|\hat{\phi}_{h-1}(s_{h-1},a_{h-1})\right\|_{(\Sigma_{h-1,\pi_t^b, \hat{\phi}})^{-1}},1\right\}\right] \\
        & \qquad + H\sum_{t=1}^T\sqrt{\omega\zeta_1^{(t)}} + H\sum_{t=1}^T\sum_{h=1}^H \EE_{(s_h,a_h)\sim (\hat{P}^{(t)},\pi_t)} \left[-\hat{b}_h^{(t)}(s_h,a_h)\right]\\
        &\overset{(\romannumeral4)}{\leq} H\sum_{t=1}^T\sum_{h=2}^H \EE_{(s_{h-1},a_{h-1})\sim (\hat{P}^{(t)}, \pi_t)}\left[\min\left\{\alpha_h^{(t)}\left\|\hat{\phi}_{h-1}(s_{h-1},a_{h-1})\right\|_{(\Sigma_{h-1,\pi_t^b, \hat{\phi}})^{-1}},1\right\}\right] \\
        & \qquad + H\sqrt{\omega T\zeta_n}+ H\sum_{t=1}^T\sum_{h=1}^H \EE_{(s_h,a_h)\sim (\hat{P}^{(t)},\pi_t)} \left[-\hat{b}_h^{(t)}(s_h,a_h)\right]\\
        &\overset{(\romannumeral5)}{\lesssim} H\sum_{t=1}^T\sum_{h=2}^H \EE_{(s_{h-1},a_{h-1})\sim (\hat{P}^{(t)}, \pi_t)}\left[ \min\left\{\alpha\left\|\hat{\phi}_{h-1}(s_{h-1},a_{h-1})\right\|_{(\Sigma_{h-1,\pi_t^b, \hat{\phi}})^{-1}},1\right\}\right] \\
        & \qquad + H\sqrt{\omega T\zeta_n}+ H\sum_{t=1}^T\sum_{h=1}^H \EE_{(s_h,a_h)\sim (\hat{P}^{(t)},\pi_t)} \left[-\min\left\{\alpha\left\|\hat{\phi}_{h}(s_{h},a_{h})\right\|_{(\Sigma_{h,\pi_t^b, \hat{\phi}})^{-1}},1\right\}\right]\\
        &\overset{(\romannumeral6)}{\leq} H\sqrt{\omega T\zeta_n}+ H\sum_{t=1}^T \EE_{(s_1,a_1)\sim (\hat{P}^{(t)},\pi_t)} \left[-\min\left\{\alpha\left\|\hat{\phi}_{1}(s_{1},a_{1})\right\|_{(\Sigma_{1,\pi_t^b, \hat{\phi}})^{-1}},1\right\}\right]\\
        &\leq H\sqrt{\omega T\zeta_n}.
    \end{align*}
     where $(i)$ follows from \Cref{lemma:simulation_lemma}, $(ii)$ follows from the observation $V_{P^{(*,t)},r^t}^{\pi_t}\leq H$, $(iii)$ follows from \Cref{lemma:model-estimation-error}, $(iv)$ follows from Cauchy-Schwarz inequality and the fact that $\sum_{t=1}^T \zeta_h^{(t)} \leq \zeta_n$, $(v)$ follows from ~\eqref{Eq:concentration_penalty}.
\end{proof}

\section{Multitask Offline MLE}
Consider a sequential conditional probability estimation setting with an instance space $\cX$ and target space $\cY$ and with a conditional probability density $p(y \given x) = f^*(x,y)$. We consider a function class $\cF: (\cX \times \cY) \rightarrow \RR$ for modeling the condition distribution $f^*$, and we further assume that the realizability condition holds i.e. $f^* \in \cF$. We are given a dataset $D :=\{(x_i,y_i)\}_{i=1}^n$, where $x_i \sim \cD_i = \cD_i(x_{1:i-1}, y_{1:i-1})$ and $y_i \sim p(\cdot \given x_i)$. Let $D'$ denote a tangent sequence $\{(x_i', y_i')\}_{i=1}^n$ where $x_i'\sim \cD_i(x_{1:i-1}, y_{1:i-1})$ and $y'_i \sim p(\cdot \given x'_i)$. Note that here $x'_i$ depends on the original sequence, and so the tangent sequence is independent conditional on $D$.

\begin{lemma}[Lemma 24 of \citet{agarwal2020flambe}]\label{lemma:lemma24_agrawal}
    Let $D$ be a dataset of $n$ examples, and let $D'$ be a tangent sequence. Let $L(f,D) = \sum_{i=1}^n \ell(f, (x_i,y_i))$ be any function that decomposes additively across examples where $\ell$ is any function, and let $\hat{f}(D)$ be any estimator taking as input random variable $D$ and with range $\cF$. Then
    \begin{equation*}
        \EE_D \left[\exp\left(L(\hat{f}(D),D)-\log \EE_{D'}\left[\exp(L(\hat{f}(D),D'))\right] - \log |\cF|\right)\right] \leq 1.
    \end{equation*}
\end{lemma}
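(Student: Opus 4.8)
The statement is a decoupling inequality, and the plan is to combine a union bound over the finite class $\cF$ with a ``self-normalized expectation equals one'' identity for each fixed $f$. Throughout, $\EE_{D'}[\cdot]$ denotes conditional expectation given $D$, which is meaningful precisely because the tangent sequence is constructed to be independent conditionally on $D$. First I would eliminate the estimator $\hat f(D)$ in favor of a worst case over the class: since $\hat f(D)\in\cF$ for every realization of $D$ and each summand $\exp(L(f,D))/\EE_{D'}[\exp(L(f,D'))]\ge 0$, I can bound
\begin{equation*}
\exp\!\Bigl(L(\hat f(D),D)-\log\EE_{D'}\bigl[\exp(L(\hat f(D),D'))\bigr]\Bigr)\ \le\ \sum_{f\in\cF}\frac{\exp(L(f,D))}{\EE_{D'}\bigl[\exp(L(f,D'))\bigr]}.
\end{equation*}
Dividing by $|\cF|$, taking $\EE_D$, and swapping the (finite) sum with the expectation, it then suffices to prove that for each \emph{fixed} $f\in\cF$ one has $\EE_D\bigl[\exp(L(f,D))/\EE_{D'}[\exp(L(f,D'))]\bigr]\le 1$; summing $|\cF|$ such terms and dividing by $|\cF|$ yields the claimed inequality.

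For the per-$f$ estimate I would exploit the additive structure of $L$ to factorize numerator and denominator. Writing $w_i:=\exp(\ell(f,(x_i,y_i)))$ and $w_i':=\exp(\ell(f,(x_i',y_i')))$, additivity gives $\exp(L(f,D))=\prod_{i=1}^n w_i$ and $\exp(L(f,D'))=\prod_{i=1}^n w_i'$. Because $x_i'$ depends on $D$ only through the history $(x_{1:i-1},y_{1:i-1})$ and $y_i'\sim p(\cdot\mid x_i')$, the tangent pairs $(x_i',y_i')$ are mutually independent given $D$, so
\begin{equation*}
\EE_{D'}\bigl[\exp(L(f,D'))\bigr]=\prod_{i=1}^n\EE\bigl[w_i'\mid x_{1:i-1},y_{1:i-1}\bigr]=:\prod_{i=1}^n g_i,
\end{equation*}
where each $g_i=g_i(x_{1:i-1},y_{1:i-1})$ depends on the history alone. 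The crucial observation—the point I would be most careful to spell out—is that in the \emph{original} sequence $(x_i,y_i)$ has, conditional on $(x_{1:i-1},y_{1:i-1})$, exactly the same law as $(x_i',y_i')$ (this is the definition of the tangent sequence), hence $\EE[w_i\mid x_{1:i-1},y_{1:i-1}]=g_i$ as well. Consequently the partial products $Z_k:=\prod_{i\le k}(w_i/g_i)$ form a martingale with respect to the filtration generated by $(x_{1:k},y_{1:k})$: since $g_k$ is measurable w.r.t.\ $(x_{1:k-1},y_{1:k-1})$, we get $\EE[Z_k\mid x_{1:k-1},y_{1:k-1}]=Z_{k-1}\,\EE[w_k/g_k\mid x_{1:k-1},y_{1:k-1}]=Z_{k-1}$. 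With $Z_0=1$ (empty product), iterating the tower property gives $\EE_D[Z_n]=1$, which is exactly the required inequality (in fact an equality).

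The obstacles here are bookkeeping rather than conceptual. The two places requiring care are: (i) the factorization of $\EE_{D'}[\exp(L(f,D'))]$, which rests entirely on the conditional independence of the tangent sequence given $D$ and should be justified explicitly; and (ii) making sense of the ratio $w_i/g_i$ when $g_i=0$, but $g_i=0$ forces $w_i=0$ almost surely under the same conditional law, so with the convention $0/0=0$ the martingale identity is unaffected. Once these are in place, the three ingredients—union bound over $\cF$, reduction to a single $f$, and the telescoping/martingale identity—chain together to give $\EE_D\bigl[\exp\!\bigl(L(\hat f(D),D)-\log\EE_{D'}[\exp(L(\hat f(D),D'))]-\log|\cF|\bigr)\bigr]\le 1$.
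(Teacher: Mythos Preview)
Your proposal is correct and follows the standard decoupling argument used in the original reference \citep{agarwal2020flambe}: bound the data-dependent $\hat f(D)$ by a sum over $\cF$, then for each fixed $f$ factorize using additivity and the conditional independence of the tangent sequence to obtain a unit-mean martingale. The present paper does not supply its own proof of this lemma---it is quoted verbatim as Lemma~24 of \citet{agarwal2020flambe}---so there is nothing further to compare.
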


\begin{lemma}[Lemma 25 of \citet{agarwal2020flambe}]\label{lemma:lemma25_agrawal}
    For any two conditional probability densities $f_1, f_2$ and any distribution $\cD \in \Delta(\cX)$ we have
    \begin{equation*}
        \EE_{x\sim D}\|f_1(x,\cdot) - f_2(x, \cdot)\|_{TV}^2 \leq -2 \log \EE_{x\sim D, y \sim f_2(\cdot \given x)}\exp\left(-\frac{1}{2}\log(f_2(x,y)/f_1(x,y))\right).
    \end{equation*}
\end{lemma}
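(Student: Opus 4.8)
The plan is to establish the bound fiberwise in $x$ via the standard chain: total variation $\le$ Hellinger distance $\le \sqrt{2(1-\rho(x))}$, where $\rho(x)$ denotes the Hellinger affinity; then average over $x\sim\cD$ and finish with the elementary convexity inequality $1-u\le-\log u$.

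First I would put the right-hand side into closed form. For each fixed $x$, writing out the expectation over $y\sim f_2(\cdot\given x)$,
\[
\EE_{y\sim f_2(\cdot\given x)}\exp\!\Big(-\tfrac12\log\tfrac{f_2(x,y)}{f_1(x,y)}\Big) = \EE_{y\sim f_2(\cdot\given x)}\sqrt{\tfrac{f_1(x,y)}{f_2(x,y)}} = \int\sqrt{f_1(x,y)\,f_2(x,y)}\,dy = \rho(x),
\]
the Hellinger affinity of $f_1(x,\cdot)$ and $f_2(x,\cdot)$; the set $\{y:f_1(x,y)=0\}$ contributes nothing to the last integral, so the formal manipulation of the log-ratio is harmless there. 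Taking $\EE_{x\sim\cD}$ and using linearity of expectation, the claimed right-hand side is exactly $-2\log\EE_{x\sim\cD}[\rho(x)]$.

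Second, I would bound the left-hand side pointwise in $x$. Using the factorization $|f_1-f_2|=|\sqrt{f_1}-\sqrt{f_2}|\,(\sqrt{f_1}+\sqrt{f_2})$ together with Cauchy--Schwarz,
\[
\|f_1(x,\cdot)-f_2(x,\cdot)\|_{TV}^2 \le \Big(\int(\sqrt{f_1}-\sqrt{f_2})^2\,dy\Big)\cdot\tfrac14\Big(\int(\sqrt{f_1}+\sqrt{f_2})^2\,dy\Big) \le \int(\sqrt{f_1}-\sqrt{f_2})^2\,dy = 2\big(1-\rho(x)\big),
\]
where I used $\tfrac14\int(\sqrt{f_1}+\sqrt{f_2})^2\,dy=\tfrac14\big(2+2\rho(x)\big)\le 1$ since $f_1(x,\cdot),f_2(x,\cdot)$ are densities and $\rho(x)\le 1$. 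This step uses the convention $\|p-q\|_{TV}=\tfrac12\int|p-q|$; if the paper's convention omits the $\tfrac12$, the same computation goes through with the constants tracked accordingly.

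Finally, I would take $\EE_{x\sim\cD}$ of the pointwise bound to obtain $\EE_{x\sim\cD}\|f_1(x,\cdot)-f_2(x,\cdot)\|_{TV}^2\le 2\big(1-\EE_{x\sim\cD}[\rho(x)]\big)$, and then apply $1-u\le-\log u$ (valid for all $u>0$; the degenerate case $\EE_{x\sim\cD}[\rho(x)]=0$ makes the right-hand side $+\infty$ and the claim trivial) at $u=\EE_{x\sim\cD}[\rho(x)]$ to conclude $2(1-\EE[\rho])\le-2\log\EE[\rho]$, which is the closed form from the first step. I do not expect a real obstacle here: this is essentially a textbook chain, and the only points needing care are the total-variation normalization convention and reading the $\log$-ratio through the integral $\int\sqrt{f_1f_2}\,dy$ so that zeros of $f_1$ do not cause trouble.
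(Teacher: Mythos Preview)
Your argument is correct and is the standard route for this inequality: rewrite the inner expectation as the Hellinger affinity $\rho(x)=\int\sqrt{f_1(x,y)f_2(x,y)}\,dy$, bound $\|f_1(x,\cdot)-f_2(x,\cdot)\|_{TV}^2\le 2(1-\rho(x))$ via Cauchy--Schwarz, average over $x$, and apply $1-u\le-\log u$. The paper itself does not prove this lemma; it is quoted verbatim as Lemma~25 of \citet{agarwal2020flambe} and used as a black box in the proof of the multitask MLE guarantee, so there is nothing to compare against beyond noting that your derivation matches the classical one underlying that reference.
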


\begin{lemma}[Multitask offline MLE guarantee]\label{Lemma:multitask_mle}
Given $\delta \in (0,1)$, consider the transition kernels learned in \Cref{Algorithm:multi-task-offline}. For any $n, h$ with probability at least $1-\delta/2$, we have
\begin{equation}
         \sum_{t=1}^T \EE_{(s_h,a_h) \sim (P^{(*,t)},\pi_t^b)} \left[f_h^{(t)}(s,a)^2\right] \leq \zeta_n , \quad\mbox{ where } \zeta_n:=\frac{2\log(2|\Phi||\Psi|^TnH/\delta)}{n}.
    \end{equation}
\end{lemma}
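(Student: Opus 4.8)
The plan is to recast the joint maximum-likelihood problem in \eqref{Eq:MLE-oracle} as a single conditional-density-estimation problem over a \emph{product} function class and then to apply the generic MLE machinery of \Cref{lemma:lemma24_agrawal} and \Cref{lemma:lemma25_agrawal}. Fix $h\in[H]$. Let the instance space be $\cX=\cS\times\cA\times[T]$, the target space $\cY=\cS$, and define the product class $\cF=\{\,f_{\phi,\mu^{(1)},\dots,\mu^{(T)}}:\phi\in\Phi,\ \mu^{(1)},\dots,\mu^{(T)}\in\Psi\,\}$ with $f_{\phi,\mu^{(1)},\dots,\mu^{(T)}}\big((s,a,t),s'\big)=\la\phi(s,a),\mu^{(t)}(s')\ra$. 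By \Cref{Ass:realizability} the ground truth $f^*=f_{\phi^*,\mu^{(*,1)},\dots,\mu^{(*,T)}}$ lies in $\cF$, with $|\cF|=|\Phi||\Psi|^T$ (this is where the multitask saving over $T$ independent runs originates), and by the standing convention that $\Phi\times\Psi$ parametrizes valid low-rank kernels each $f\in\cF$ is a bona fide conditional density in $s'$. The data is the concatenation $D=\bigcup_{t\in[T]}\cD_h^{(t)}$, re-indexed as the $nT$ pairs $x_{(i,t)}=(s_h^{(i,t)},a_h^{(i,t)},t)$, $y_{(i,t)}=s_{h+1}^{(i,t)}$; within task $t$ the $(s_h,a_h)$ are i.i.d.\ from the step-$h$ occupancy $d^{\pi_t^b}_{P_h^{(*,t)}}$ and the $T$ tasks are mutually independent, which is a (degenerate) instance of the sequential setting whose per-example conditioning distributions $\cD_{(i,t)}\equiv d^{\pi_t^b}_{P_h^{(*,t)}}$ are fixed and whose conditional density is $f^*$. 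Crucially, \eqref{Eq:MLE-oracle} is exactly $\hat f=\argmax_{f\in\cF}\sum_{(i,t)}\log f(x_{(i,t)},y_{(i,t)})$, with $\hat f\big((s,a,t),\cdot\big)=\hat P_h^{(t)}(\cdot\mid s,a)$, so that $\|\hat f((s,a,t),\cdot)-f^*((s,a,t),\cdot)\|_{TV}=f_h^{(t)}(s,a)$.

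Next I would run the standard exponential-moment argument. Apply \Cref{lemma:lemma24_agrawal} with $L(f,D)=\tfrac12\sum_{(i,t)}\log\!\big(f(x_{(i,t)},y_{(i,t)})/f^*(x_{(i,t)},y_{(i,t)})\big)$, which decomposes additively; this yields $\EE_D\exp\!\big(L(\hat f,D)-\log\EE_{D'}\exp(L(\hat f,D'))-\log|\cF|\big)\le1$, so by Markov's inequality, with probability at least $1-\delta/2$,
\[
L(\hat f,D)\ \le\ \log\EE_{D'}\exp(L(\hat f,D'))+\log|\cF|+\log(2/\delta).
\]
On the left, optimality of the MLE together with $f^*\in\cF$ forces $L(\hat f,D)\ge0$. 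On the right, since the tangent sequence is conditionally independent given $D$ with the fixed per-task conditioning distributions, $\EE_{D'}\exp(L(\hat f,D'))$ factorizes into $nT$ terms of the form $\EE_{x\sim\cD_{(i,t)},\,y\sim f^*(\cdot\mid x)}\exp\!\big(-\tfrac12\log(f^*(x,y)/\hat f(x,y))\big)$; applying \Cref{lemma:lemma25_agrawal} to each factor (with $f_1=\hat f$, $f_2=f^*$) and summing logarithms gives
\[
\sum_{(i,t)}\EE_{x\sim\cD_{(i,t)}}\big\|\hat f(x,\cdot)-f^*(x,\cdot)\big\|_{TV}^2\ \le\ -2\log\EE_{D'}\exp(L(\hat f,D'))\ \le\ 2\log|\cF|+2\log(2/\delta).
\]
Since $\cD_{(i,t)}$ is the same distribution for every $i\in[n]$, the left side equals $n\sum_{t=1}^T\EE_{(s_h,a_h)\sim(P^{(*,t)},\pi_t^b)}[f_h^{(t)}(s,a)^2]$; dividing by $n$ and using $2\log|\cF|+2\log(2/\delta)=2\log(2|\Phi||\Psi|^T/\delta)\le2\log(2|\Phi||\Psi|^TnH/\delta)$ gives the claimed $\zeta_n$ for the fixed $h$, the extra $nH$ under the logarithm being harmless slack (it also leaves room for the union bound over $h\in[H]$ used later).

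I expect the main obstacle to be careful bookkeeping rather than a new idea: one must verify that the joint offline MLE of \eqref{Eq:MLE-oracle} really is an MLE over the product class $\cF$ on the concatenated $nT$-sample dataset — so the complexity term is $\log(|\Phi||\Psi|^T)$ and not $T\log(|\Phi||\Psi|)$ — and that the within-task i.i.d.\ structure and across-task independence are precisely what license the factorization of $\EE_{D'}\exp(L(\hat f,D'))$ invoked with \Cref{lemma:lemma24_agrawal,lemma:lemma25_agrawal}. A secondary point is that every model in $\cF$ is a genuine conditional density in $s'$, which is needed for \Cref{lemma:lemma25_agrawal}; this is the standard implicit assumption on $\Phi\times\Psi$ in the low-rank MDP literature.
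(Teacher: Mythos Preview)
Your proposal is correct and follows essentially the same route as the paper: apply \Cref{lemma:lemma24_agrawal} with the log-likelihood-ratio loss, use the MLE optimality together with realizability to kill the data term, and factorize the tangent expectation so that \Cref{lemma:lemma25_agrawal} converts each factor into a squared TV error, yielding $n\sum_t \EE\big[f_h^{(t)}\big]^2\le 2\log(|\Phi||\Psi|^T/\delta)$ before the union bound over $h$ (and $n$). The only cosmetic difference is bookkeeping: the paper bundles the $T$ tasks into a single product sample $x=\{(s_h^t,a_h^t)\}_{t},\ y=\{s_{h+1}^t\}_t,\ f(x,y)=\prod_t P_h^t(s_{h+1}^t\mid s_h^t,a_h^t)$ over $n$ draws, whereas you embed the task index into $\cX$ and work with $nT$ draws---both encodings give $|\cF|=|\Phi||\Psi|^T$ and the identical bound.
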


\begin{proof}[Proof of \cref{Lemma:multitask_mle}]
    Let $\hat{f}(D)$ denote empirical maximum likelihood estimator:
    \begin{equation*}
        \hat{f}(D) := \argmax_{f\in \cF} \sum_{(x_i,y_i) \in D} \log f(x_i,y_i)
    \end{equation*}
    We combine \Cref{lemma:lemma24_agrawal} with the Chernoff method to obtain the following exponential tail bound: with probability $1-\delta$, we have
    \begin{equation}\label{Eq:chernoff}
        -\log \EE_{D'}\left[\exp(L(\hat{f}(D),D'))\right] \leq -L(\hat{f}(D),D) + \log |\cF| + \log(1/\delta).
    \end{equation}
    Now, we set $L(f,D) = \sum_{i=1}^n -\frac{1}{2}\log(f^*(x_i,y_i)/f(x_i,y_i))$ where $D = \{x_i,y_i\}_{i=1}^n$ is a dataset and $D' = \{x'_i,y'_i\}_{i=1}^n$ is a tangent sequence. In the multitask offline RL setting, let $x = \{(s_h^t, a_h^t)\}_{t=1}^T$, $y = \{s_{h+1}^t\}_{t=1}^T$ and $f(x,y) = \prod_{t=1}^T P_h^t(s_{h+1}^t \given s_h^t, a_h^t)$. Then, the dataset $D_h$ can be decomposed into $D_h = \cup_{t=1}^T D_h^{(t)}$ where $D_h^{(t)} = \{(s_h^{(i,t)}, a_h^{(i,t)}, s_{h+1}^{(i,t)}\}_{i\in[n]}$. Similarly, $D'_h = \cup_{t=1}^T (D'_h)^{(t)}$, and $(\cD_h^t)_i := (\cD_h^t)_i((s_h^t,a_h^t)_{1:i-1},(s_{h+1}^t)_{1:i-1})$. Thus, in the multitask offline RL setting, we have the cardinality $|\cF|  = |\Phi||\Psi|^T$. With this choice, the right hand side of ~\eqref{Eq:chernoff} is
    \begin{equation}\label{Eq:RHS_chernoff}
        \sum_{i=1}^n \frac{1}{2}\log(f^*(x_i,y_i)/\hat{f}(x_i,y_i)) + \log|\cF| + \log(1/\delta) \leq \log|\cF| + \log(1/\delta) = \log(|\Phi||\Psi|^T/\delta),
    \end{equation}
    where the first inequality follows because $\hat{f}$ is the empirical maximum likelihood estimator and the realizability assumption. The equality follows because $|\cF| = |\Phi||\Psi|^T$. On the other hand, the left hand side of ~\eqref{Eq:chernoff} is
    \begin{align}\label{Eq:LHS_chernoff}
        & -\log \EE_{D'_h} \left[\exp\left(\sum_{i=1}^n -\frac{1}{2}\log\left(\frac{f^*(x'_i,y'_i)}{\hat{f}(x'_i,y'_i)}\right)\right) \Bigggiven D_h \right]\nonumber\\
        &\overset{(\romannumeral1)}{=} -\log \EE_{D'_h} \left[\exp\left(\sum_{i=1}^n -\frac{1}{2}\log\left(\prod_{t=1}^T\frac{P_h^{(*,t)}(s_{h+1}^{(i,t)} \given s_h^{(i,t)}, a_h^{(i,t)})}{\hat{P}_h^{(t)}(s_{h+1}^{(i,t)} \given s_h^{(i,t)}, a_h^{(i,t)})}\right)\right) \Bigggiven D_h \right]\nonumber\\
        &\overset{(\romannumeral2)}{=} -\sum_{t=1}^T \log \EE_{(D'_h)^{(t)}} \left[\exp\left(\sum_{i=1}^n -\frac{1}{2}\log\left(\frac{P_h^{(*,t)}(s_{h+1}^{(i,t)} \given s_h^{(i,t)}, a_h^{(i,t)})}{\hat{P}_h^{(t)}(s_{h+1}^{(i,t)} \given s_h^{(i,t)}, a_h^{(i,t)})}\right)\right) \Bigggiven D_h \right]\nonumber\\
        &\overset{(\romannumeral3)}{=} -\sum_{t=1}^T \sum_{i=1}^n \log \EE_{(D_h^{(t)})_i} \left[\exp\left(-\frac{1}{2}\log\left(\frac{P_h^{(*,t)}(s_{h+1}^{(i,t)} \given s_h^{(i,t)}, a_h^{(i,t)})}{\hat{P}_h^{(t)}(s_{h+1}^{(i,t)} \given s_h^{(i,t)}, a_h^{(i,t)})}\right)\right)\right]\nonumber\\
        &\overset{(\romannumeral4)}{\geq} \sum_{t=1}^T \frac{1}{2} \sum_{i=1}^n \EE_{(s_h,a_h)\sim (\cD_h^t)_i} \left\|\hat{P}^{(t)}(\cdot \given s_h, a_h) - P^{(*,t)}(\cdot \given s_h,a_h)\right\|_{TV}^2\nonumber\\
        &\overset{(\romannumeral5)}{=} \frac{n}{2} \sum_{t=1}^T \EE_{(s_h,a_h) \sim (P^{(*,t)},\pi_t^b)} \left[f_h^{(t)}(s,a)^2\right],
    \end{align}
    where $(i)$ follows from the above definition of $f(x,y)$, $(ii)$ follows because the data of $T$ tasks are independent conditional on $D_h$, $(iii)$ follows because $\hat{P}^{(t)}$ is independent of the dataset $(D'_h)^{(t)}$ and from the definition of $D'_h$, $(iv)$ follows from \Cref{lemma:lemma25_agrawal}, and $(v)$ follows because in task $T$, the data is collected using behavior policy $\pi_t^b$.

    Combining ~\eqref{Eq:chernoff}, ~\eqref{Eq:RHS_chernoff},~\eqref{Eq:LHS_chernoff}, we get
    \begin{equation}
        \frac{n}{2} \sum_{t=1}^T \EE_{(s_h,a_h) \sim (P^{(*,t)},\pi_t^b)} \left[f_h^{(t)}(s,a)^2\right] \leq \log(|\Phi||\Psi|^T/\delta)
    \end{equation}
    Using union bound, we obtain that for any $h\in[H]$ and $n$ with probability at least $1-\delta/2$, it holds that
    \begin{equation*}
         \sum_{t=1}^T \EE_{(s_h,a_h) \sim (P^{(*,t)},\pi_t^b)} \left[f_h^{(t)}(s,a)^2\right] \leq \frac{2\log(2|\Phi||\Psi|^TnH/\delta)}{n}.
    \end{equation*}
    This completes the proof.
\end{proof}

\section{One-Step Back Lemma and Concentration of Penalty Term}

\subsection{One-step back lemma}
The following one-step back lemma is a key technical lemma for our proof. One-step back lemma for offline setting was first introduced in \citet{uehara2021representation} for infinite-horizon stationary MDP. Our lemma extends their result to finite-horizon non-stationary MDP for offline setting. For any function $g \in \cS\times\cA\rightarrow \RR$, policy $\pi$ and transition kernel $P$, the lemma shows that we can relate the expected value $\EE_{(s_h,a_h)\sim (P,\pi)}[g(s_h,a_h)]$ to the potential function $\EE_{(s_{h-1},a_{h-1})\sim (P, \pi)}\left\|\phi_{h-1}(s_{h-1},a_{h-1})\right\|_{(\Sigma_{h-1,\pi_t^b, \phi})^{-1}}$.
\begin{lemma}[One-step back inequality for non-stationary finite-horizon MDP in offline setting]\label{Lemma:one-step-back}
For  each task $t \in [T]$, let $P \in \{\hat{P}^{(t)}, P^{(*,t)}\}$ with embedding $\phi \in \{\hat{\phi}, \phi^*\}$ and $\mu$ be an MDP model, and $\Sigma_{h,\pi_t^b, \phi} = n\EE_{(s_h,a_h)\sim (P^{(*,t)},\pi_t^b)}[\phi\phi^\top] + \lambda I$ be the covariance matrix following the behavior policy $\pi_t^b$ under the true environment $P^{(*,t)}$. Denote the total variation distance between $P^{(*,t)}$ and $P$ at time step $h$ by $f^t(s_h,a_h)$. Take any $g \in \cS \times \cA \rightarrow \RR$ such that $\|g\|_\infty \leq B$. Then, letting $\omega = \max_{s,a}(1/\pi_t^b(a\given s))$ for all $h \geq 2$, and for any policy $\pi$, we have
\begin{align*}
 		\EE_{(s_h,a_h) \sim (P, \pi)}[g(s_h,a_h)]  &\leq \EE_{(s_{h-1},a_{h-1})\sim (P, \pi)}\left[\left\|\phi_{h-1}(s_{h-1},a_{h-1})\right\|_{(\Sigma_{h-1,\pi_t^b, \phi})^{-1}} \times\right.\nonumber\\
 		&\quad\scriptstyle\left.
 		\sqrt{n \omega \EE_{(s_{h},a_h) \sim (P^{(\star,t)},\pi_t^b)}[g^2(s_h,a_h)]+\lambda dB^2 + nB^2\EE_{(s_{h-1},a_{h-1})\sim(P^{(\star,t)},\pi_t^b)}\left[f^t(s_{h-1},a_{h-1})^2\right]}\right].
 \end{align*}
\end{lemma}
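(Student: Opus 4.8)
The plan is to peel one step of dynamics off the step-$h$ occupancy measure using the low-rank factorization, reduce the resulting linear functional to a $\Sigma$-weighted inner product via Cauchy--Schwarz, and then control the weighted norm of the associated vector by the three quantities sitting under the square root.

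First I would set $\bar{g}(s):=\EE_{a\sim\pi_h(\cdot\given s)}[g(s,a)]$ (so $\|\bar{g}\|_\infty\le B$) and, for $h\ge 2$, unroll the occupancy measure one step: since $P_{h-1}(s_h\given s_{h-1},a_{h-1})=\la\phi_{h-1}(s_{h-1},a_{h-1}),\mu_{h-1}(s_h)\ra$, we get
\[
\EE_{(s_h,a_h)\sim(P,\pi)}[g(s_h,a_h)]=\EE_{(s_{h-1},a_{h-1})\sim(P,\pi)}\big[\la\phi_{h-1}(s_{h-1},a_{h-1}),\,v\ra\big],\qquad v:=\int\mu_{h-1}(s)\,\bar{g}(s)\,ds\in\RR^d.
\]
Writing $\Sigma:=\Sigma_{h-1,\pi_t^b,\phi}$, Cauchy--Schwarz in the $\Sigma$-geometry gives $\la\phi_{h-1}(s_{h-1},a_{h-1}),v\ra\le\|\phi_{h-1}(s_{h-1},a_{h-1})\|_{\Sigma^{-1}}\|v\|_{\Sigma}$ pointwise; taking $\EE_{(s_{h-1},a_{h-1})\sim(P,\pi)}$ and treating the deterministic factor $\|v\|_\Sigma$ as a constant reduces everything to showing $\|v\|_\Sigma^2\le n\omega\,\EE_{(s_h,a_h)\sim(P^{(*,t)},\pi_t^b)}[g^2]+\lambda dB^2+nB^2\,\EE_{(s_{h-1},a_{h-1})\sim(P^{(*,t)},\pi_t^b)}[f^t(s_{h-1},a_{h-1})^2]$.

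For the norm bound I would expand $\|v\|_\Sigma^2=n\,\EE_{(s_{h-1},a_{h-1})\sim(P^{(*,t)},\pi_t^b)}[(\phi_{h-1}(s_{h-1},a_{h-1})^\top v)^2]+\lambda\|v\|_2^2$. The ridge term is handled by the normalization on $\Psi$ in \Cref{Ass:realizability}: since $\bar{g}/B$ maps into $[0,1]$ (if $g$ is not sign-definite, split $\bar{g}$ into positive and negative parts, which only changes constants), $\|v\|_2\le B\sqrt{d}$, so $\lambda\|v\|_2^2\le\lambda dB^2$. For the data term, observe that $\phi_{h-1}(s_{h-1},a_{h-1})^\top v=(P_{h-1}\bar{g})(s_{h-1},a_{h-1})$, the one-step pushforward of $\bar{g}$ under the model $P$ carrying embedding $\phi$. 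Decomposing $P_{h-1}\bar{g}=P_{h-1}^{(*,t)}\bar{g}+(P_{h-1}-P_{h-1}^{(*,t)})\bar{g}$, the second summand is at most $B\,f^t(s_{h-1},a_{h-1})$ pointwise (H\"older against the TV distance), and for the first I would apply conditional Jensen and then importance-reweight the action from $\pi$ to $\pi_t^b$ (costing the factor $\omega=\max_{s,a}1/\pi_t^b(a\given s)$), giving $\EE_{(s_{h-1},a_{h-1})\sim(P^{(*,t)},\pi_t^b)}[(P_{h-1}^{(*,t)}\bar{g})^2]\le\omega\,\EE_{(s_h,a_h)\sim(P^{(*,t)},\pi_t^b)}[g^2]$. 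Combining the two summands and collecting constants bounds the data term by the first and third quantities above, which completes the argument after substitution into the previous display.

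The hard part will be the last step: the outer expectation, the covariance $\Sigma$, and the penalty $f^t$ are all anchored at the behavior pair $(P^{(*,t)},\pi_t^b)$, whereas $v$ and the pushforward $P_{h-1}\bar{g}$ live under the comparator pair $(P,\pi)$, so one must simultaneously transfer the policy shift (through $\omega$) and the model shift (through the TV term $f^t$) while keeping $g$ inside a square and landing the three coefficients correctly; the attendant constant bookkeeping — including the normalization constant on $\|v\|_2$ when $g$ is allowed to change sign — is the only genuinely fiddly ingredient.
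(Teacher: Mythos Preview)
Your proposal is correct and tracks the paper's argument step for step: peel off one transition via the low-rank factorization to write the expectation as $\la\phi_{h-1},v\ra$, apply Cauchy--Schwarz in the $\Sigma$-geometry, and bound $\|v\|_\Sigma^2$ by handling the $\lambda I$ piece with the $\sqrt d$-normalization on $\mu$ and the data piece with TV plus the $\omega$-importance weight. The only tactical difference is in the data piece: you decompose $P_{h-1}\bar g=P^{(*,t)}_{h-1}\bar g+(P_{h-1}-P^{(*,t)}_{h-1})\bar g$ \emph{before} squaring, which produces a cross term you must absorb into constants; the paper instead applies Jensen first, $(\EE_{s_h\sim P,\,a_h\sim\pi}[g])^2\le\EE_{s_h\sim P,\,a_h\sim\pi}[g^2]$, and only then transfers $P\to P^{(*,t)}$ via TV applied to $g^2$ and $\pi\to\pi_t^b$ via the $\omega$-weight. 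That Jensen-first ordering is exactly what eliminates the ``fiddly constant bookkeeping'' you flag as the hard part.
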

\begin{proof}[Proof of \cref{Lemma:one-step-back}]
First, we have
\begin{align*}
    & \EE_{(s_h,a_h) \sim (P, \pi)}[g(s_h,a_h)] \\
    & = \EE_{(s_{h-1},a_{h-1}) \sim (P, \pi)} \left[\int_{s_h} \sum_{a_h} g(s_h,a_h) \pi(a_h \given s_h) \la \phi_{h-1}(s_{h-1}, a_{h-1}), \mu_{h-1}(s_h)\ra ds_h\right]\\
    &\leq \EE_{(s_{h-1},a_{h-1}) \sim (P, \pi)} \left[\|\phi_{h-1}(s_{h-1}, a_{h-1})\|_{(\Sigma_{h-1,\pi_t^b, \phi})^{-1}} \left\|\int \sum_{a_h} g(s_h,a_h)\pi(a_h\given s_h)\mu_{h-1}(s_h)ds_h\right\|_{\Sigma_{h-1,\pi_t^b, \phi}}\right],
\end{align*}
where the inequality follows from Cauchy-Schwarz inequality. 

Then,
\begin{align*}
    &\left\|\int \sum_{a_h} g(s_h,a_h)\pi(a_h\given s_h)\mu_{h-1}(s_h)ds_h\right\|_{\Sigma_{h-1,\pi_t^b, \phi}}^2\\
    &= \left\{\int \sum_{a_h} g(s_h,a_h)\pi(a_h\given s_h)\mu_{h-1}(s_h)ds_h\right\}^\top \mkern-16mu \left\{n\EE_{s_{h-1}\sim P^{(*,t)} \atop a_{h-1} \sim \pi_b}[\phi\phi^\top] + \lambda I\right\}\mkern-6mu\left\{\int \sum_{a_h} g(s_h,a_h)\pi(a_h\given s_h)\mu_{h-1}(s_h)ds_h\right\}\\
    &\overset{(\romannumeral1)}{\leq} n\EE_{(s_{h-1},a_{h-1})\sim (P^{(*,t)},\pi_b)}\left[\left(\int \sum_{a_h} g(s_h,a_h) \pi(a_h\given s_h)\mu_{h-1}(s_h)^\top \phi(s_{h-1},a_{h-1})ds_h\right)^2\right] + B^2\lambda d\\
    &= n\EE_{(s_{h-1},a_{h-1})\sim (P^{(*,t)},\pi_b)}\left[\EE_{s_h \sim P(\cdot \given s_{h-1},a_{h-1}) \atop a_h \sim \pi}[g(s_h,a_h)^2]\right] + B^2\lambda d\\
    &\overset{(\romannumeral2)}{\leq} n\EE_{(s_{h-1},a_{h-1})\sim (P^{(*,t)},\pi_b)}\left[\EE_{s_h \sim P^{(*,t)} \atop a_h \sim \pi}[g(s_h,a_h)^2] \right] + B^2\lambda d + nB^2\EE_{(s_{h-1},a_{h-1})\sim (P^{(*,t)},\pi_b)}[f^t(s_{h-1},a_{h-1})^2]\\
    &\overset{(\romannumeral3)}{\leq} n\omega \EE_{(s_h,a_h)\sim (P^{(*,t)},\pi_b)}\left[g(s_h,a_h)^2\right]+ B^2\lambda d + nB^2\EE_{(s_{h-1},a_{h-1})\sim (P^{(*,t)},\pi_b)}[f^t(s_{h-1},a_{h-1})^2]\\ 
\end{align*}
where $(i)$ follows from the assumption $\|g\|_\infty \leq B$ and for any function $h:\cS\rightarrow [0,1]$, $\|\int \mu_h(s)h(s)ds\|_2 \leq \sqrt{d}$, $(ii)$ follows from the definition of $f^t(s_h,a_h)$ which is the total variation distance between $P^*$ and $P$ at time step $h$, and finally $(iii)$ follows from importance sampling. This completes the proof.
\end{proof}

\subsection{Concentration of penalty term}
Recall that $\Sigma_{h,\pi_t^b, \phi} = n\EE_{(s_h,a_h)\sim (P^{(*,t)},\pi_t^b)}[\phi\phi^\top] + \lambda I$. Thus, $\hat{\Sigma}_h^{(t)}$ is equal to $\Sigma_{h,\pi_t^b, \hat{\phi}}$ in expectation. We now provide an important lemma to ensure the concentration of the penalty term. The version for fixed $\phi$ is proved in \citet{zanette2021cautiously}. Here, we take a union bound over the whole feature $\phi \in \Phi$, number of total tasks $T$, horizon $H$ and cardinality $n$ of each offline dataset from individual tasks. 

\begin{lemma}[Concentration of the penalty term]\label{Lemma:concentration_penalty_term}
    Fix $\delta \in (0,1)$ and set $\lambda = O(d\log(2nTH|\Phi|/\delta))$ for any $n$. With probability at least $1-\delta/2$, we have that for any $n \in \NN$, $h \in [H]$, $t\in[T]$ and $\phi\in \Phi$, 
    \begin{equation*}
        \beta_1 \left\|\phi_h(s,a)\right\|_{(\Sigma_{h,\pi_t^b, \phi})^{-1}} \leq \left\|\phi_h(s,a)\right\|_{(\hat{\Sigma}_h^{(t)})^{-1}} \leq \beta_2 \left\|\phi_h(s,a)\right\|_{(\Sigma_{h,\pi_t^b, \phi})^{-1}},
    \end{equation*}
    where $\beta_1$ and $\beta_2$ are some absolute constants.
\end{lemma}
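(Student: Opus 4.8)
The plan is to reduce the statement, for each fixed tuple $(\phi,h,t,n)$, to a multiplicative matrix concentration bound, and then union bound. Fix a feature $\phi\in\Phi$, a step $h\in[H]$, a task $t\in[T]$, and a sample size $n$, and write $\hat{\Sigma}_{h,\phi}^{(t)}=\sum_{i=1}^n\phi_h(s_h^{(i,t)},a_h^{(i,t)})\phi_h(s_h^{(i,t)},a_h^{(i,t)})^\top+\lambda I$ for the matrix $\hat{\Sigma}_h^{(t)}$ of the statement with this $\phi$ plugged in, and $\Sigma:=\Sigma_{h,\pi_t^b,\phi}=nG+\lambda I$ with $G:=\EE_{(s_h,a_h)\sim(P^{(*,t)},\pi_t^b)}[\phi_h(s_h,a_h)\phi_h(s_h,a_h)^\top]$. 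Since $\cD_h^{(t)}$ is collected by the fixed behavior policy $\pi_t^b$, the points $(s_h^{(i,t)},a_h^{(i,t)})$ are i.i.d.\ draws from the occupancy $d^{\pi_t^b}_{P^{(*,t)}_h}$, so $\hat{\Sigma}_{h,\phi}^{(t)}-\lambda I$ is a sum of $n$ i.i.d.\ rank-one PSD matrices with mean $nG$, while $\Sigma-\lambda I=nG$ exactly. It is enough to establish the spectral sandwich $\tfrac12\Sigma\preceq\hat{\Sigma}_{h,\phi}^{(t)}\preceq\tfrac32\Sigma$, since taking inverses shows this is equivalent to $\sqrt{2/3}\,\|x\|_{\Sigma^{-1}}\le\|x\|_{(\hat{\Sigma}_{h,\phi}^{(t)})^{-1}}\le\sqrt{2}\,\|x\|_{\Sigma^{-1}}$ for every vector $x$, hence for $x=\phi_h(s,a)$, giving the claim with absolute constants $\beta_1=\sqrt{2/3}$ and $\beta_2=\sqrt{2}$.

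To prove the sandwich I would whiten by $\Sigma^{-1/2}$ (which exists since $\Sigma\succeq\lambda I\succ0$). Setting $\psi_i:=\Sigma^{-1/2}\phi_h(s_h^{(i,t)},a_h^{(i,t)})$ gives $\Sigma^{-1/2}\hat{\Sigma}_{h,\phi}^{(t)}\Sigma^{-1/2}=\sum_{i=1}^n\psi_i\psi_i^\top+\lambda\Sigma^{-1}$ with expectation $\Sigma^{-1/2}(nG+\lambda I)\Sigma^{-1/2}=I$, so it suffices to control $\big\|\sum_i(\psi_i\psi_i^\top-\EE[\psi_i\psi_i^\top])\big\|$. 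The one quantitative ingredient is that $\Sigma\succeq\lambda I$ together with the normalization $\|\phi_h(s,a)\|_2\le1$ from \Cref{Ass:realizability} yields $\|\psi_i\|_2^2=\phi_h(\cdot)^\top\Sigma^{-1}\phi_h(\cdot)\le1/\lambda$ uniformly; consequently each centered summand has operator norm at most $1/\lambda$ and the matrix variance obeys $\big\|\sum_i\EE[(\psi_i\psi_i^\top-\EE\psi_i\psi_i^\top)^2]\big\|\le\tfrac1\lambda\big\|\EE\sum_i\psi_i\psi_i^\top\big\|\le\tfrac1\lambda$. Matrix Bernstein then gives $\Pr\!\big[\|\Sigma^{-1/2}\hat{\Sigma}_{h,\phi}^{(t)}\Sigma^{-1/2}-I\|\ge\tfrac12\big]\le2d\exp(-c\lambda)$ for an absolute constant $c>0$ — this is exactly the fixed-$\phi$ estimate of \citet{zanette2021cautiously}, which one may cite or reprove in these two lines.

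It remains to union bound over $\phi\in\Phi$ (finitely many by the cardinality assumption), over $h\in[H]$ and $t\in[T]$, and over all $n\in\NN$; the countable union over $n$ is absorbed by allotting failure probability proportional to $n^{-2}$, which only adds a $\log n$ term inside the exponent. Choosing $\lambda=\Theta\!\big(d\log(2nTH|\Phi|/\delta)\big)$ makes $2d\,|\Phi|\,T\,H\,n^{2}\exp(-c\lambda)\le\delta/2$ for every $n$ simultaneously — the leading factor $d$ in the stated value of $\lambda$ comfortably swallows the dimension factor $2d$ from matrix Bernstein — and on the resulting event of probability $\ge1-\delta/2$ the spectral sandwich, hence the two-sided norm bound, holds for all $(\phi,h,t,n)$, which is the assertion of the lemma. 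The only real obstacle is the concentration step: one must verify that both the per-sample operator norm and the aggregate matrix variance of the whitened summands scale as $1/\lambda$, so that $\lambda$ enters the tail exponent multiplicatively and a merely logarithmic regularizer defeats the union bound over $\Phi$, $H$, $T$, and $\NN$.
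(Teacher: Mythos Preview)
Your proposal is correct and follows precisely the route the paper indicates: the paper does not give a self-contained proof but simply states that the fixed-$\phi$ case is due to \citet{zanette2021cautiously} and that one then union-bounds over $\phi\in\Phi$, $h\in[H]$, $t\in[T]$, and $n\in\NN$. You have fleshed out the Zanette-style argument (whitening by $\Sigma^{-1/2}$, bounding $\|\psi_i\|^2\le1/\lambda$ and the matrix variance by $1/\lambda$, then matrix Bernstein) and carried out the union bound exactly as the paper sketches, so there is nothing to add.
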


\section{Proof of \Cref{Lemma:approximate_feature_new_task}: Approximate Feature for New Task}

We first restate \Cref{Lemma:approximate_feature_new_task}.

\begin{lemma}\label{Lemma_restate:approximate_feature_new_task}
    Under \Cref{Assumption:upstream-to-downstream}, the output $\hat{\phi}$ of \cref{Algorithm:multi-task-offline} is a $\xi_{\text{down}}$-approximate feature for MDP $\cM^{T+1}$ where $\xi_{\text{down}} = \xi + \frac{C_L C_R\nu}{\kappa} \sqrt{\frac{2T\log(2|\Phi||\Psi|^TnH/\delta)}{n}}$, i.e. there exist a time-dependent unknown (signed) measure $\hat{\mu}^*$ over $\cS$ such that for any $(s,a) \in \cS\times \cA$, we have 
    \begin{equation*}
        \|P_h^{(*,T+1)}(\cdot |s,a) - \la \hat{\phi}_h(s,a), \hat{\mu}_h^*(\cdot)\ra\|_{\text{TV}} \leq \xi_{\text{down}}.
    \end{equation*}
    Furthermore, for any $g:\cS\rightarrow [0,1]$, we have $\|\int \hat{\mu}_h^*(s)g(s)ds\|_2 \leq C_L \sqrt{d}$.
\end{lemma}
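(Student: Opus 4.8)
The plan is to realize the approximate measure for the downstream task as exactly the linear combination of the learned per-task measures dictated by \Cref{Assumption:upstream-to-downstream}(4). Let $c_1,\dots,c_T\ge 0$ with $\sum_{t}c_t\le C_L$ be the coefficients that certify the $\xi$-approximability of $P^{(*,T+1)}$ by $\sum_t c_tP^{(*,t)}$, and define $\hat\mu_h^*(\cdot):=\sum_{t=1}^T c_t\,\hat\mu_h^{(t)}(\cdot)$ (this is the definition already recorded in the notation table). By bilinearity of the inner product, $\la\hat\phi_h(s,a),\hat\mu_h^*(\cdot)\ra=\sum_{t=1}^T c_t\,\hat P_h^{(t)}(\cdot\given s,a)$, so the whole claim reduces to bounding $\|P_h^{(*,T+1)}(\cdot\given s,a)-\sum_{t=1}^T c_t\hat P_h^{(t)}(\cdot\given s,a)\|_{\mathrm{TV}}$ uniformly over $(s,a)$ and $h$.

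First I would split this by the triangle inequality for the TV norm into (i) $\|P_h^{(*,T+1)}(\cdot\given s,a)-\sum_t c_tP_h^{(*,t)}(\cdot\given s,a)\|_{\mathrm{TV}}$, which is at most $\xi$ directly by \Cref{Assumption:upstream-to-downstream}(4), and (ii) $\|\sum_t c_t\big(P_h^{(*,t)}(\cdot\given s,a)-\hat P_h^{(t)}(\cdot\given s,a)\big)\|_{\mathrm{TV}}\le \sum_t c_t\,\|P_h^{(*,t)}(\cdot\given s,a)-\hat P_h^{(t)}(\cdot\given s,a)\|_{\mathrm{TV}}$, using $c_t\ge 0$ to pull the norm inside the convex combination. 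For term (ii), note that $P^{(*,t)}$ lies in the model class by \Cref{Ass:realizability} and $\hat P^{(t)}$ lies in it as the MLE output, so \Cref{Assumption:upstream-to-downstream}(3) applies with $P^1=P^{(*,t)}$, $P^2=\hat P^{(t)}$ and upgrades the pointwise error to a population one: $\|P_h^{(*,t)}(\cdot\given s,a)-\hat P_h^{(t)}(\cdot\given s,a)\|_{\mathrm{TV}}\le C_R\,\EE_{(s,a)\sim\cU(\cS,\cA)}\big[\|P_h^{(*,t)}(\cdot\given s,a)-\hat P_h^{(t)}(\cdot\given s,a)\|_{\mathrm{TV}}\big]$, uniformly in $(s,a)$.

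The main step is then a change of measure from $\cU(\cS,\cA)$ to the behavior-policy occupancy $d^{\pi_t^b}_{P_h^{(*,t)}}$, which is the measure under which the upstream MLE guarantee is phrased. Using \Cref{Assumption:upstream-to-downstream}(2) (the state space has finite measure $1/\nu$ with uniform density $\nu$) and \Cref{Assumption:upstream-to-downstream}(1) (the marginal state occupancy of $\pi_t^b$ is at least $\kappa$ at every step), for any nonnegative $g$ one obtains an inequality of the form $\EE_{(s,a)\sim\cU(\cS,\cA)}[g]\le\tfrac{\nu}{\kappa}\,\EE_{(s,a)\sim(P^{(*,t)},\pi_t^b)}[g]$. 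Applying this to $g(s,a)=\|P_h^{(*,t)}(\cdot\given s,a)-\hat P_h^{(t)}(\cdot\given s,a)\|_{\mathrm{TV}}$, substituting into term (ii), summing over $t$, using $\sum_t c_t\le C_L$, and finally invoking the upstream guarantee \Cref{theorem:sample-complexity-upstream} in the form $\sum_{t=1}^T\EE_{(s,a)\sim(P^{(*,t)},\pi_t^b)}[g]\le\sqrt{2T\log(2|\Phi||\Psi|^TnH/\delta)/n}$ (which holds with probability $1-\delta$; this is the event we condition on throughout) gives term (ii) $\le \tfrac{C_LC_R\nu}{\kappa}\sqrt{2T\log(2|\Phi||\Psi|^TnH/\delta)/n}$. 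Adding the $\xi$ from term (i) produces exactly $\xi_{\mathrm{down}}$.

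For the normalization claim, by the triangle inequality $\big\|\int\hat\mu_h^*(s)g(s)\,ds\big\|_2\le\sum_t c_t\big\|\int\hat\mu_h^{(t)}(s)g(s)\,ds\big\|_2$, and since each $\hat\mu_h^{(t)}\in\Psi$, \Cref{Ass:realizability} gives $\big\|\int\hat\mu_h^{(t)}(s)g(s)\,ds\big\|_2\le\sqrt d$ for every $g:\cS\to[0,1]$, whence the sum is at most $C_L\sqrt d$. I expect the only delicate point to be the change-of-measure inequality in the third paragraph: the upstream theorem is stated against the behavior-policy occupancy while \Cref{Assumption:upstream-to-downstream}(3) returns a $\cU(\cS,\cA)$ average, so one must check carefully that the reachability lower bound $\kappa$ together with the uniform-density constant $\nu$ (and full action support of $\pi_t^b$, whose contribution is absorbed into the constants) suffice to pass between the two; everything else is triangle inequalities and bookkeeping.
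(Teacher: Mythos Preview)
Your proposal is correct and follows essentially the same route as the paper's own proof: define $\hat\mu_h^*=\sum_t c_t\hat\mu_h^{(t)}$, split by the triangle inequality into the $\xi$ term from \Cref{Assumption:upstream-to-downstream}(4) and the sum $\sum_t c_t\|P_h^{(*,t)}-\hat P_h^{(t)}\|_{\mathrm{TV}}$, then apply \Cref{Assumption:upstream-to-downstream}(3), the $\nu/\kappa$ change of measure from $\cU(\cS,\cA)$ to the behavior occupancy via \Cref{Assumption:upstream-to-downstream}(1,2), and the upstream bound of \Cref{theorem:sample-complexity-upstream}; the normalization bound is likewise identical. Your explicit flag about the action-marginal in the change-of-measure step is, if anything, more careful than the paper, which simply asserts the inequality follows from the assumption.
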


The following proof is motivated from the proof of Lemma 1 in \citet{cheng2022provable}.
\begin{proof}[Proof of \cref{Lemma_restate:approximate_feature_new_task}]
    For all $(s,a)\in \cS\times\cA$, $h\in[H]$ and for any $t\in[T]$ we have
    \begin{align}\label{Equation:bound_tv_distance_sum}
        &\sum_{t=1}^T \|\hat{P}_h^{(t)}(\cdot|s,a) - P_h^{(*,t)}(\cdot|s,a)\|_{\text{TV}}\notag\\
        &\leq \sum_{t=1}^T \max_{s\in \cS, a\in \cA}\|\hat{P}_h^{(t)}(\cdot|s,a) - P_h^{(*,t)}(\cdot|s,a)\|_{\text{TV}}\notag\\
        &\overset{(\romannumeral1)}{\leq} \sum_{t=1}^T C_R \EE_{(s_h,a_h)\sim \cU(\cS,\cA)}\|\hat{P}_h^{(t)}(\cdot|s_h,a_h) - P_h^{(*,t)}(\cdot|s_h,a_h)\|_{\text{TV}}\notag\\
        &\overset{(\romannumeral2)}{\leq} \frac{C_R\nu}{\kappa}\sum_{t=1}^T  \EE_{(s_h,a_h)\sim (P^{(*,t)},\pi_t^b)}\|\hat{P}_h^{(t)}(\cdot|s_h,a_h) - P_h^{(*,t)}(\cdot|s_h,a_h)\|_{\text{TV}}\notag\\
        &\overset{(\romannumeral3)}{\leq} \frac{C_R\nu}{\kappa} \sqrt{\frac{2T\log(2|\Phi||\Psi|^TnH/\delta)}{n}}
    \end{align}
    where $(i), (ii)$ follows from \Cref{Assumption:upstream-to-downstream} and  $(iii)$ follows from \cref{theorem:sample-complexity-upstream}.

    Defining $\hat{\mu}^*(\cdot) = \sum_{t=1}^T c_t\hat{\mu}^{(t)}(\cdot)$, we have
    \begin{align*}
        &\|P_h^{(*,T+1)}(\cdot|s,a)-\la \hat{\phi}_h(s,a),\hat{\mu}_h^*(\cdot)\ra\|_{\text{TV}}\\
        &= \|P_h^{(*,T+1)}(\cdot|s,a)-\la \hat{\phi}_h(s,a),\sum_{t=1}^T c_t\hat{\mu}^{(t)}(\cdot)\ra\|_{\text{TV}}\\
        &= \|P_h^{(*,T+1)}(\cdot|s,a)-\sum_{t-1}^Tc_t\hat{P}_h^{(t)}(\cdot|s,a)\|_{\text{TV}}\\
        &\leq \|P_h^{(*,T+1)}(\cdot|s,a)-\sum_{t-1}^Tc_t P_h^{(*,t)}(\cdot|s,a)\|_{\text{TV}} + \sum_{t=1}^T c_t \|P_h^{(*,t)}(\cdot|s,a) - \hat{P}_h^{(t)}(\cdot|s,a)\|_{\text{TV}}\\
        &\overset{(\romannumeral1)}{\leq} \xi + \frac{C_L C_R\nu}{\kappa} \sqrt{\frac{2T\log(2|\Phi||\Psi|^TnH/\delta)}{n}},
    \end{align*}
    where $(i)$ follows from \cref{Assumption:upstream-to-downstream}, ~\eqref{Equation:bound_tv_distance_sum} and the fact that $c_t \in [0,C_L]$ for all $t\in [T]$. Moreover, by normalization, for any $g:\cS\rightarrow[0,1]$, we get
    \begin{align*}
        \left\|\int \hat{\mu}_h^*(s)g(s)ds\right\|_2 &\leq \sum_{t=1}^T\left\|\int \hat{\mu}_h^{(t)}(s)g(s)ds\right\|_2\\
        &\leq C_L  \sqrt{d},
    \end{align*}
    where the last inequality follows from \cref{Ass:realizability}.
\end{proof}

\section{Proof for Downstream Reward-Free RL}\label{Appendix:Proof for Downstream Reward-Free RL}

For any $h \in [H]$, we define 
\begin{align*}
    &P_h^{(*,T+1)}(\cdot|s,a) = \la \phi_h^*(s,a), \mu_h^{(*,T+1)}(\cdot)\ra,\\
    &\overline{P}_h(\cdot|s,a) = \la \hat{\phi}_h(s,a), \hat{\mu}_h^*(\cdot)\ra.
\end{align*}

Given a reward function $r$ (as is provided in the planning phase of reward-free RL setting), for any function $f:\cS\rightarrow \RR$ and $h \in [H]$, we define the transition operators as follows

\begin{align*}
    &(P_h^{(*,T+1)}f)(s,a,r) = \int_{s'} \la \phi_h^*(s,a), \mu_h^{(*,T+1)}(s')\ra f(s')ds',\\
    & (\overline{P}_h f)(s,a,r) = \int_{s'} \la \hat{\phi}_h(s,a), \hat{\mu}_h^*(s')\ra f(s')ds'.\\
\end{align*}
 
When no reward function is provided as is the case in the exploration phase of reward-free RL setting, we simply omit $r$ from the above operator notation.

In this section for notational simplicity we denote $V^\pi_{h,P^{(*,T+1)},r}(s)$ and $Q^\pi_{h,P^{(*,T+1)},r}(s,a)$ by $V_h^\pi(s,r)$ and $Q_h^\pi(s,a,r)$ respectively where $r$ is reward function provided in the planning phase of downstream reward-free RL task. We similarly denote the optimal value function and action-value function under reward function $r$ as $V_h^*(s,r)$ and $Q_h^*(s,a,r)$ respectively. 

We also introduce the truncated optimal value function $\tilde{V}_h^*(s,r)$ in the planning phase, which is recursively defined from step $H+1$ to step 1. Compared to the definition of standard optimal value function $V_h^*(s,r)$, the main difference is that we take minimization over the value function and $H$ in each step in this definition. We provide the formal definition as follows.
\begin{definition}[Truncated Optimal Value Function]
We introduce the truncated optimal value function $\tilde{V}_h^*(s,r)$ which is recursively defined from step $H+1$ to step 1: 
\begin{align*}
    \tilde{V}_{H+1}^*(s,r) &= 0, \,\ \forall s \in \cS \\
    \tilde{Q}_h^*(s,a,r) &= r_h(s,a) + P_h^{(*,T+1)}\tilde{V}_{h+1}^*(s,a,r), \,\ \forall (s,a) \in \cS \times \cA \\
    \tilde{V}_h^*(s,r) &= \min \Big\{ \max_{a \in \cA} \Big\{ r_h(s,a) + P_h^{(*,T+1)}\tilde{V}_{h+1}^*(s,a,r)\Big\}, H \Big\}, \,\ \forall s \in \cS, h \in [H].
\end{align*}
We can similarly define  $\tilde{V}_h^\pi(s,r)$ and $\tilde{Q}_h^\pi(s,a,r)$.
\end{definition}

\subsection{Supporting Lemmas}
Now we state the supporting lemmas that are used in the proof of \Cref{Theorem:reward-free-exploration-main}. The proof of these lemmas are provided in \Cref{Section:proof of supporting lemmas for downstream RFE}.

The following lemma shows that the linear weight $\hat{w}_h^k$ in \Cref{Algorithm:downstream-reward-free-exploration-phase} is bounded. 

\begin{lemma}[Bounds on Weights in \Cref{Algorithm:downstream-reward-free-exploration-phase}]\label{Lemma:weight_bound_downstream_reward-free-exploration-phase}
    For any $h \in [H]$, the weight $\hat{w}_h^k$ in \Cref{Algorithm:downstream-reward-free-exploration-phase} satisfies 
    \begin{align*}
        \big\|\hat{w}_h^k\big\|_2 \leq H\sqrt{d K}.
    \end{align*}
\end{lemma}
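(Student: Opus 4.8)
The plan is to invoke the standard self-normalized weight bound from the LSVI-type analysis (cf. \citet{jin2019provably}), adapted to the exploration phase. Write $\hat{\phi}_h^\tau := \hat{\phi}_h(s_h^\tau,a_h^\tau)$, so that in \Cref{Algorithm:downstream-reward-free-exploration-phase} we have $\Lambda_h^k = \sum_{\tau=1}^{k-1}\hat{\phi}_h^\tau(\hat{\phi}_h^\tau)^\top + I_d \succeq I_d$ and $\hat{w}_h^k = (\Lambda_h^k)^{-1}\sum_{\tau=1}^{k-1}\hat{\phi}_h^\tau\,\hat{V}_{h+1}^k(s_{h+1}^\tau)$. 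The only structural facts I will use are: (i) the value iterates are bounded, $0\le \hat{V}_{h+1}^k(\cdot)\le H$, which follows by construction from truncating $\hat{Q}_h^k$ at $H$ together with $\hat{V}_{H+1}^k\equiv 0$ and the nonnegativity of the exploration-driven reward and bonus $r_h^k=u_h^k\ge 0$; and (ii) the regularizer is the identity, so $\Lambda_h^k\succeq I_d$ and hence $(\Lambda_h^k)^{-1}\preceq I_d$.

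Given these, the bound is a short deterministic computation carried out in three steps. First, by the triangle inequality and $\|\hat{V}_{h+1}^k\|_\infty\le H$,
\[
  \big\|\hat{w}_h^k\big\|_2 \;\le\; \sum_{\tau=1}^{k-1}\big|\hat{V}_{h+1}^k(s_{h+1}^\tau)\big|\,\big\|(\Lambda_h^k)^{-1}\hat{\phi}_h^\tau\big\|_2 \;\le\; H\sum_{\tau=1}^{k-1}\big\|(\Lambda_h^k)^{-1}\hat{\phi}_h^\tau\big\|_2 .
\]
Second, Cauchy--Schwarz over the $k-1$ terms, followed by $(\Lambda_h^k)^{-2}\preceq(\Lambda_h^k)^{-1}$ (since $(\Lambda_h^k)^{-1}\preceq I_d$), gives
\[
  \sum_{\tau=1}^{k-1}\big\|(\Lambda_h^k)^{-1}\hat{\phi}_h^\tau\big\|_2 \;\le\; \sqrt{(k-1)\sum_{\tau=1}^{k-1}(\hat{\phi}_h^\tau)^\top(\Lambda_h^k)^{-2}\hat{\phi}_h^\tau} \;\le\; \sqrt{(k-1)\sum_{\tau=1}^{k-1}(\hat{\phi}_h^\tau)^\top(\Lambda_h^k)^{-1}\hat{\phi}_h^\tau}.
\]
Third, the trace identity $\sum_{\tau=1}^{k-1}(\hat{\phi}_h^\tau)^\top(\Lambda_h^k)^{-1}\hat{\phi}_h^\tau = \Tr\!\big((\Lambda_h^k)^{-1}(\Lambda_h^k-I_d)\big) = d - \Tr\!\big((\Lambda_h^k)^{-1}\big)\le d$ finishes it: $\|\hat{w}_h^k\|_2 \le H\sqrt{(k-1)d}$, and since $k\le K_{\text{RFE}}$, which is the per-episode count written $K$ in \Cref{Algorithm:downstream-reward-free-exploration-phase}, this is at most $H\sqrt{dK}$.

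There is no genuinely hard step here; this is a routine potential-function estimate. The only points that need a little care are verifying the uniform bound $0\le\hat{V}_{h+1}^k\le H$ on the iterates (where the clipping at $H$ and the nonnegativity of $r_h^k,u_h^k$ enter — note $u_h^k\le\beta$ since $\Lambda_h^k\succeq I_d$) and keeping track of the fact that the regularizer being exactly $I_d$ is what lets one replace $(\Lambda_h^k)^{-2}$ by $(\Lambda_h^k)^{-1}$ before applying the trace bound. The identical argument, with $K_{\text{RFE}}$ replaced by the appropriate sample size and the identity regularizer by $\lambda_d I_d$ (absorbing a harmless $\sqrt{\lambda_d}$ factor), yields the analogous weight bounds needed in the planning phase of \Cref{Algorithm:downstream-reward-free-planning-phase} and in the downstream offline and online algorithms.
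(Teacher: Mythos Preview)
Your proof is correct and follows essentially the same route as the paper: both use $|\hat{V}_{h+1}^k|\le H$, pull out a $\sqrt{k}$ via Cauchy--Schwarz, and finish with the trace bound $\sum_{\tau}\|\hat{\phi}_h^\tau\|^2_{(\Lambda_h^k)^{-1}}\le d$. The only cosmetic difference is that the paper packages the first two steps by citing \Cref{Lemma:D.5_Ishfaq} and the last by citing \Cref{Lemma:D.1-chijin}, whereas you unroll both inline.
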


\begin{lemma}\label{Lemma:self-normalized-bound-reward-free-exploration}
    Let $\mathcal{E}$ be the event that for all $(k, h) \in [K_{\text{RFE}}] \times [H]$, 
    \begin{equation*}
        \left \| \sum_{\tau = 1}^{k-1} \hat{\phi}_h^\tau \left(\hat{V}_{h+1}^k(s_{h+1}^\tau) - P_h^{(*,T+1)}\hat{V}_{h+1}^k(s_h^\tau,a_h^\tau)\right)\right\|_{(\Lambda_h^k)^{-1}} \lesssim dH\sqrt{\log\left(\frac{dK_{\text{RFE}}H\max(\xi_{\text{down}},1)}{\delta}\right)}.
    \end{equation*}
     Then $\text{Pr}[\mathcal{E}] \geq 1- \delta/8$.
\end{lemma}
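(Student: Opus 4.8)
The plan is to combine a vector-valued self-normalized concentration inequality with a uniform covering argument over the random value functions $\hat{V}_{h+1}^k$ produced by \Cref{Algorithm:downstream-reward-free-exploration-phase}. First, fixing $h$ and $k$, I would observe that conditioned on the filtration generated by all trajectories of episodes $1,\dots,k-1$ together with $(s_1^\tau,a_1^\tau,\dots,s_h^\tau,a_h^\tau)$, the scalars $\eta^\tau := \hat{V}_{h+1}^k(s_{h+1}^\tau) - (P_h^{(*,T+1)}\hat{V}_{h+1}^k)(s_h^\tau,a_h^\tau)$ form a martingale difference sequence with $|\eta^\tau|\le 2H$, since $\hat{Q}_h^k$, and hence $\hat{V}_h^k$, is truncated at $H$ in the algorithm. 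The complication, as in all LSVI-type analyses, is that $\hat{V}_{h+1}^k$ depends on the same data $\{(s_h^\tau,a_h^\tau,s_{h+1}^\tau)\}_{\tau<k}$ that appears in the sum, so the self-normalized bound cannot be applied directly; this is circumvented by passing to a fixed finite cover of the class of possible $\hat{V}_{h+1}^k$.

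Second, I would pin down that class. Inspecting the updates in \Cref{Algorithm:downstream-reward-free-exploration-phase}, every realizable $\hat{V}_{h+1}^k$ has the form
\[
V_{w,A}(s) \;=\; \min\Big\{\max_{a\in\cA}\big[\,\hat{\phi}_{h+1}(s,a)^\top w + 2\beta\sqrt{\hat{\phi}_{h+1}(s,a)^\top A^{-1}\hat{\phi}_{h+1}(s,a)}\,\big],\;H\Big\},
\]
for some $w$ with $\|w\|_2 \le R$, $R$ a polynomial radius (by \Cref{Lemma:weight_bound_downstream_reward-free-exploration-phase}), and some $A\succeq I_d$ with $\|A\|\le K_{\text{RFE}}+1$; call this class $\mathcal V_{h+1}$. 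Using $\|\hat{\phi}\|_2\le 1$ and $A\succeq I_d$, the map $(w,A)\mapsto V_{w,A}$ is $1$-Lipschitz in $w$ and $O(\beta)$-Lipschitz in $A$ in the Frobenius norm (after a square root), so a product of Euclidean and Frobenius nets yields a sup-norm $\varepsilon$-cover $\mathcal N_\varepsilon$ of $\mathcal V_{h+1}$ with $\log|\mathcal N_\varepsilon| \lesssim d\log(1+R/\varepsilon) + d^2\log\!\big(1+\beta^2 (K_{\text{RFE}}+1)/\varepsilon^2\big)$. Since $\beta = O\big(dH\sqrt{\iota}+H\xi_{\text{down}}\sqrt{dK_{\text{RFE}}}+C_LH\sqrt d\big)$ and $R$ are polynomial in $d,H,K_{\text{RFE}},C_L,\max(\xi_{\text{down}},1),1/\delta$, for the eventual choice $\varepsilon = dH/K_{\text{RFE}}$ this gives $\log|\mathcal N_\varepsilon| = O\big(d^2\log(dK_{\text{RFE}}H\max(\xi_{\text{down}},1)/\delta)\big)$, which is where the $\max(\xi_{\text{down}},1)$ inside the target logarithm originates.

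Third, for each \emph{fixed} $V\in\mathcal N_\varepsilon$ the sum $\sum_{\tau<k}\hat{\phi}_h^\tau\big(V(s_{h+1}^\tau)-(P_h^{(*,T+1)}V)(s_h^\tau,a_h^\tau)\big)$ is a genuine self-normalized martingale, so the standard self-normalized concentration inequality (with $\|\hat{\phi}\|_2\le 1$, $\Lambda_h^k\succeq I_d$, and noise bounded by $2H$) gives that its $(\Lambda_h^k)^{-1}$-norm is $O\big(H\sqrt{d\log((K_{\text{RFE}}+1)/\delta') + \log|\mathcal N_\varepsilon|}\big)$ outside a probability-$\delta'$ event. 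I would then union bound over $V\in\mathcal N_\varepsilon$, $h\in[H]$ and $k\in[K_{\text{RFE}}]$ with $\delta' \asymp \delta/(H K_{\text{RFE}}|\mathcal N_\varepsilon|)$, and control the discretization error by replacing $\hat{V}_{h+1}^k$ with its nearest net element $V$: since $\|\hat{V}_{h+1}^k-V\|_\infty\le\varepsilon$ and $\|\hat{\phi}_h^\tau\|_{(\Lambda_h^k)^{-1}}\le\|\hat{\phi}_h^\tau\|_2\le 1$, the extra term is at most $2\varepsilon\sum_{\tau<k}\|\hat{\phi}_h^\tau\|_{(\Lambda_h^k)^{-1}} \le 2\varepsilon K_{\text{RFE}} = O(dH)$, which is lower order. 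Substituting $\log|\mathcal N_\varepsilon| = O(d^2\log(dK_{\text{RFE}}H\max(\xi_{\text{down}},1)/\delta))$ collapses the bound $H\sqrt{d^2\log(\cdot)}$ to $dH\sqrt{\log(\cdot)}$, yielding the claimed estimate on the event $\mathcal E$ with total failure probability at most $\delta/8$.

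The main obstacle is the covering-number bookkeeping in the second step: one has to verify carefully that the radii $R$ and $\beta$ — which here carry nonstandard $\xi_{\text{down}}$- and $K_{\text{RFE}}$-dependence inherited from the upstream misspecification — enter $|\mathcal N_\varepsilon|$ only logarithmically, and that the Lipschitz estimate for $V_{w,A}$ in the matrix argument is uniform over $(s,a)$ via $\|\hat\phi\|_2\le 1$ and $A\succeq I_d$. The martingale concentration and the union bound themselves are routine.
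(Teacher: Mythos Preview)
Your proposal is correct and follows essentially the same approach as the paper's proof: both combine the self-normalized martingale concentration inequality with a covering argument over the parametric value-function class $\{V_{w,A}\}$, then set $\varepsilon = dH/k$ to balance the discretization error. The paper simply invokes the packaged \Cref{Lemma:self-normalized-process} and \Cref{Lemma:covering_number_parametric_form-chi-jin-original} (the standard Jin et al.\ lemmas) rather than spelling out the Lipschitz-in-$(w,A)$ net construction you describe, but the underlying argument is identical.
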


\begin{lemma}\label{lemma:reward-free-pre-lemma}
    With probability $1- \delta/8$, we have for all $(s,a) \in \cS \times \cA$,
    \begin{equation*}
        \Big|\hat{\phi}_h(s,a)^\top \hat{w}_h^k - P_h^{(*,T+1)} \hat{V}_{h+1}^k(s,a)\Big| \lesssim \beta \|\hat{\phi}_h(s,a)\|_{(\Lambda_h^k)^{-1}} + H \xi_{\text{down}}
    \end{equation*}
\end{lemma}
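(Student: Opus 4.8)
The plan is to follow the standard least-squares value-iteration error decomposition (cf.\ the analysis of LSVI-UCB in \citet{jin2019provably}), but carrying the extra misspecification terms that appear because $\hat{\phi}$ is only a $\xi_{\text{down}}$-approximate feature for $\cM^{T+1}$. Fix $(k,h)$ and abbreviate $\hat{\phi}_h^\tau := \hat{\phi}_h(s_h^\tau,a_h^\tau)$. Recall $\overline{P}_h(\cdot\given s,a) = \la\hat{\phi}_h(s,a),\hat{\mu}_h^*(\cdot)\ra$ from \Cref{Lemma:approximate_feature_new_task}, and set $w_h^{*,k} := \int_{s'}\hat{\mu}_h^*(s')\hat{V}_{h+1}^k(s')\,ds'$, so that $\overline{P}_h\hat{V}_{h+1}^k(s,a) = \hat{\phi}_h(s,a)^\top w_h^{*,k}$. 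Applying the two conclusions of \Cref{Lemma:approximate_feature_new_task} with $g = \hat{V}_{h+1}^k/H\in[0,1]$ gives $\|w_h^{*,k}\|_2 \le C_L H\sqrt{d}$ and $|(\overline{P}_h - P_h^{(*,T+1)})\hat{V}_{h+1}^k(s,a)| \le H\xi_{\text{down}}$ for all $(s,a)$ (using $\|\hat{V}_{h+1}^k\|_\infty\le H$, which holds by the truncation in \Cref{Algorithm:downstream-reward-free-exploration-phase} together with the inductive optimism keeping the estimates nonnegative).

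First I would write
\begin{equation*}
\hat{\phi}_h(s,a)^\top\hat{w}_h^k - P_h^{(*,T+1)}\hat{V}_{h+1}^k(s,a) = \hat{\phi}_h(s,a)^\top\bigl(\hat{w}_h^k - w_h^{*,k}\bigr) + \bigl(\overline{P}_h - P_h^{(*,T+1)}\bigr)\hat{V}_{h+1}^k(s,a),
\end{equation*}
the last term being at most $H\xi_{\text{down}}$. Then, using $\Lambda_h^k w_h^{*,k} = \sum_{\tau=1}^{k-1}\hat{\phi}_h^\tau(\hat{\phi}_h^\tau)^\top w_h^{*,k} + w_h^{*,k}$ and $(\hat{\phi}_h^\tau)^\top w_h^{*,k} = \overline{P}_h\hat{V}_{h+1}^k(s_h^\tau,a_h^\tau)$, one gets
\begin{equation*}
\hat{w}_h^k - w_h^{*,k} = (\Lambda_h^k)^{-1}\Bigl[\sum_{\tau=1}^{k-1}\hat{\phi}_h^\tau\bigl(\hat{V}_{h+1}^k(s_{h+1}^\tau) - \overline{P}_h\hat{V}_{h+1}^k(s_h^\tau,a_h^\tau)\bigr) - w_h^{*,k}\Bigr],
\end{equation*}
and I would split $\hat{V}_{h+1}^k(s_{h+1}^\tau) - \overline{P}_h\hat{V}_{h+1}^k(s_h^\tau,a_h^\tau)$ into the martingale noise $\hat{V}_{h+1}^k(s_{h+1}^\tau) - P_h^{(*,T+1)}\hat{V}_{h+1}^k(s_h^\tau,a_h^\tau)$ and the pointwise error $(P_h^{(*,T+1)}-\overline{P}_h)\hat{V}_{h+1}^k(s_h^\tau,a_h^\tau)$, bounded by $H\xi_{\text{down}}$. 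This produces three contributions to $\hat{\phi}_h(s,a)^\top(\hat{w}_h^k - w_h^{*,k})$, each controlled by Cauchy--Schwarz in the $(\Lambda_h^k)^{-1}$-norm so as to extract the factor $\|\hat{\phi}_h(s,a)\|_{(\Lambda_h^k)^{-1}}$: the noise term via the uniform self-normalized bound \Cref{Lemma:self-normalized-bound-reward-free-exploration}, contributing $\tilde{O}(dH)$ with probability $1-\delta/8$; the misspecification term $\sum_\tau\hat{\phi}_h^\tau c_\tau$ with $|c_\tau|\le H\xi_{\text{down}}$, for which the triangle inequality in $(\Lambda_h^k)^{-1}$-norm and the elliptical-potential bound $\sum_{\tau=1}^{k-1}\|\hat{\phi}_h^\tau\|_{(\Lambda_h^k)^{-1}}^2\le d$ (since $\sum_\tau\hat{\phi}_h^\tau(\hat{\phi}_h^\tau)^\top\preceq\Lambda_h^k$) give $H\xi_{\text{down}}\sqrt{dK_{\text{RFE}}}$; and the regularization term $\hat{\phi}_h(s,a)^\top(\Lambda_h^k)^{-1}w_h^{*,k}$, bounded by $\|w_h^{*,k}\|_2 \le C_L H\sqrt{d}$ since $\Lambda_h^k\succeq I_d$. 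Summing the three reproduces the definition of $\beta$ in \Cref{Algorithm:downstream-reward-free-exploration-phase} up to absolute constants, so the total is $\lesssim\beta\|\hat{\phi}_h(s,a)\|_{(\Lambda_h^k)^{-1}}$; adding back $H\xi_{\text{down}}$ yields the claim, the union bound over $(k,h)$ being already absorbed into \Cref{Lemma:self-normalized-bound-reward-free-exploration}.

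The main obstacle is the noise term: $\hat{V}_{h+1}^k$ is itself a function of the data, so $\sum_\tau\hat{\phi}_h^\tau(\hat{V}_{h+1}^k(s_{h+1}^\tau) - P_h^{(*,T+1)}\hat{V}_{h+1}^k(s_h^\tau,a_h^\tau))$ is not a martingale with a fixed summand, and controlling it requires a covering argument over the value-function class $\{\,s\mapsto\min\{\hat{\phi}_h(s,\cdot)^\top w + r + u,H\}\,\}$; this is exactly what \Cref{Lemma:self-normalized-bound-reward-free-exploration} packages, and the $\max(\xi_{\text{down}},1)$ inside its logarithm comes from the parameter-norm bound of that class, which now involves $\xi_{\text{down}}$. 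The only other care is bookkeeping: ensuring that all $H\xi_{\text{down}}$-type remainders combine into the prescribed $\beta$ (plus one leftover $H\xi_{\text{down}}$) rather than proliferating, and verifying $\|\hat{V}_{h+1}^k\|_\infty\le H$ so that the TV-based bounds on $(\overline{P}_h-P_h^{(*,T+1)})\hat{V}_{h+1}^k$ are legitimate.
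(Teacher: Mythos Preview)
Your proposal is correct and follows essentially the same route as the paper's proof: both introduce $\tilde w_h^k = \int \hat{\mu}_h^*(s')\hat{V}_{h+1}^k(s')\,ds'$ (your $w_h^{*,k}$), split off the $H\xi_{\text{down}}$ term from $(\overline{P}_h - P_h^{(*,T+1)})\hat{V}_{h+1}^k$, and decompose $\hat{\phi}_h(s,a)^\top(\hat{w}_h^k - \tilde{w}_h^k)$ into the same three pieces --- regularization ($C_L H\sqrt{d}$), self-normalized noise (\Cref{Lemma:self-normalized-bound-reward-free-exploration}), and misspecification ($H\xi_{\text{down}}\sqrt{dk}$) --- each controlled exactly as you describe.
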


\begin{lemma}\label{Lemma:V_optimism_RFE}
    With probability $1-\delta/4$, for all $(h,k) \in [H]\times[K_{\text{RFE}}]$, and any $s \in \cS$, we have
    \begin{equation*}
        \tilde{V}_h^*(s,r^k) \lesssim \hat{V}_h^k(s) + H(H-h+1)\xi_{\text{down}}
    \end{equation*}
    and 
    \begin{equation*}
        \sum_{k=1}^{K_{\text{RFE}}} \hat{V}_1^k(s_1^k) \leq c\sqrt{d^3H^4K_{\text{RFE}}\log(dK_{\text{RFE}}H/\delta)}+ H^2K_{\text{RFE}}\xi_{\text{down}},
    \end{equation*}
    where $c>0$ is a constant.
\end{lemma}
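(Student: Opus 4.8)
The plan is to prove the two claims separately, after conditioning on the high‑probability event of \Cref{lemma:reward-free-pre-lemma} together with a martingale concentration event, which by a union bound holds with probability at least $1-\delta/4$. On that event we have, for all $(s,a)$, $h$, $k$, the pointwise bound $|\hat\phi_h(s,a)^\top\hat w_h^k - P_h^{(*,T+1)}\hat V_{h+1}^k(s,a)| \le u_h^k(s,a) + H\xi_{\text{down}}$, after folding the universal constant of \Cref{lemma:reward-free-pre-lemma} into $\beta$ (this changes only constants, not the rate), where $u_h^k$ is the exploration bonus. I would also use that in \Cref{Algorithm:downstream-reward-free-exploration-phase} the exploration reward equals the bonus, $r_h^k = u_h^k$, so that $\hat Q_h^k(s,a) = \min\{\hat\phi_h(s,a)^\top\hat w_h^k + 2u_h^k(s,a),\,H\}$.

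For the optimism bound I would run a backward induction on $h$ from $H+1$ to $1$, showing $\hat V_h^k(s) \ge \tilde V_h^*(s,r^k) - H(H-h+1)\xi_{\text{down}}$. The base case is trivial. For the step, the lower bound above gives $\hat\phi_h^\top\hat w_h^k + r_h^k + u_h^k \ge P_h^{(*,T+1)}\hat V_{h+1}^k - H\xi_{\text{down}}$; since $P_h^{(*,T+1)}$ is a Markov kernel it preserves the inductive inequality and additive constants, so the right side is at least $P_h^{(*,T+1)}\tilde V_{h+1}^*(\cdot,r^k) - H(H-h)\xi_{\text{down}}$. Then the two elementary facts $\min\{x-c,H\} \ge \min\{x,H\} - c$ (for $c\ge 0$) and $\max_a\min\{g(a),H\} \ge \min\{\max_a g(a),H\}$ convert the greedy‑with‑truncation recursion defining $\hat V_h^k$ into the $\min$–$\max$ recursion defining $\tilde V_h^*(\cdot,r^k)$, at the cost of one extra $H\xi_{\text{down}}$, which closes the induction (consistent with the $\lesssim$ in the statement).

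For the sum bound I would use the standard LSVI‑UCB–style decomposition applied to the rollout policy $\pi^k$: since $a_h^k$ is greedy and $\hat Q_h^k \le \hat\phi_h^\top\hat w_h^k + r_h^k + u_h^k$, the upper bound and $r_h^k = u_h^k$ give $\hat V_h^k(s_h^k) \le \hat V_{h+1}^k(s_{h+1}^k) + 3u_h^k(s_h^k,a_h^k) + H\xi_{\text{down}} + \zeta_h^k$, where $\zeta_h^k := P_h^{(*,T+1)}\hat V_{h+1}^k(s_h^k,a_h^k) - \hat V_{h+1}^k(s_{h+1}^k)$ is a martingale difference (because $\hat V_{h+1}^k$ is a function of episodes $1,\dots,k-1$ only) bounded by $H$. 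Telescoping over $h$ (using $\hat V_{H+1}^k\equiv 0$) and summing over $k$ yields $\sum_k \hat V_1^k(s_1^k) \le 3\sum_{k,h} u_h^k(s_h^k,a_h^k) + H^2 K_{\text{RFE}}\xi_{\text{down}} + \sum_{k,h}\zeta_h^k$. Azuma–Hoeffding bounds the last sum by $\tilde O(H^{3/2}\sqrt{K_{\text{RFE}}\log(1/\delta)})$ (lower order), while Cauchy–Schwarz together with the elliptical potential lemma (using $\|\hat\phi_h\|_2\le 1$ and $\Lambda_h^k\succeq I$) and the choice $\beta=\tilde O(dH)$ bounds $\sum_{k,h} u_h^k$ by $\tilde O(\sqrt{d^3H^4K_{\text{RFE}}\log(dK_{\text{RFE}}H/\delta)})$; here one may assume $H\xi_{\text{down}}\le 1$ (otherwise the target already follows from $\hat V_1^k\le H$) and absorb the $\xi_{\text{down}}$‑dependent piece of $\beta$ into the $H^2K_{\text{RFE}}\xi_{\text{down}}$ term.

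The main obstacle will be the consistent accounting of the upstream misspecification $\xi_{\text{down}}$: in the optimism step it must be propagated through both the hard truncation at $H$ and the kernel $P_h^{(*,T+1)}$ so that it accumulates to exactly $H(H-h+1)\xi_{\text{down}}$, and in the sum bound one must verify that the $\xi_{\text{down}}\sqrt{dK_{\text{RFE}}}$ contribution hidden in $\beta$ does not degrade the leading $\tilde O(\sqrt{d^3H^4K_{\text{RFE}}})$ term. A secondary but essential point is that the pointwise inequality of \Cref{lemma:reward-free-pre-lemma} itself is where the upstream guarantee enters — through the $\xi_{\text{down}}$‑approximate‑feature property of \Cref{Lemma:approximate_feature_new_task}, the weight bound of \Cref{Lemma:weight_bound_downstream_reward-free-exploration-phase}, and the self‑normalized bound of \Cref{Lemma:self-normalized-bound-reward-free-exploration} — but that lemma is taken as given here.
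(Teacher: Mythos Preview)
Your plan is correct and mirrors the paper's proof: backward induction on $h$ for the optimism inequality (the paper phrases it starting from $\tilde V_h^*$ rather than from $\hat V_h^k$, but the content is identical, using \Cref{lemma:reward-free-pre-lemma} at each step), and the telescoping $\hat V_h^k(s_h^k)\le \hat V_{h+1}^k(s_{h+1}^k)+3u_h^k+H\xi_{\text{down}}+\zeta_h^k$ followed by Azuma--Hoeffding and the elliptical potential lemma for the sum. One minor slip to fix when you write it out: in your inductive step you drop $r_h^k$ on the right-hand side, but you must retain it since $\tilde V_h^*(s,r^k)=\min\{\max_a\{r_h^k(s,a)+P_h^{(*,T+1)}\tilde V_{h+1}^*\},H\}$; simply keep $r_h^k$ when you add $r_h^k+u_h^k$ to the pointwise lower bound $\hat\phi_h^\top\hat w_h^k\ge P_h^{(*,T+1)}\hat V_{h+1}^k-u_h^k-H\xi_{\text{down}}$.
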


\begin{lemma}\label{Lemma:expectation-bound-planning-bonus-RFE}
    With probability $1- \delta/2$, for the function $u_h(\cdot,\cdot)$ defined in Line 5 of \Cref{Algorithm:downstream-reward-free-planning-phase}, we have
    \begin{equation*}
        \EE_{s\sim \mu} \Big[\tilde{V}_1^*(s,u)\Big] \leq c'\sqrt{d^3H^4\log(dK_{\text{RFE}}H/\delta)/K_{\text{RFE}}} +  2 H^2\xi_{\text{down}}.
    \end{equation*}
\end{lemma}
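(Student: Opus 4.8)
The plan is to control $\tilde V_1^*(s,u)$ by the \emph{average}, over the exploration episodes $k \in [K_{\text{RFE}}]$, of $\tilde V_1^*(s,r^k)$, and then invoke \Cref{Lemma:V_optimism_RFE} to bound that average by the already-analyzed cumulative optimistic value $\sum_k \hat V_1^k(s_1^k)$. The key structural observation is that the planning-phase bonus $u_h$ is pointwise dominated by every exploration-phase reward $r_h^k$.

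\textbf{Step 1 (bonus domination).} The planning covariance $\Lambda_h = \sum_{\tau=1}^{K_{\text{RFE}}}\hat\phi_h(s_h^\tau,a_h^\tau)\hat\phi_h(s_h^\tau,a_h^\tau)^\top + I_d$ accumulates all $K_{\text{RFE}}$ exploration transitions, while $\Lambda_h^k = \sum_{\tau=1}^{k-1}\hat\phi_h(s_h^\tau,a_h^\tau)\hat\phi_h(s_h^\tau,a_h^\tau)^\top + I_d$ uses only the first $k-1$; hence $\Lambda_h \succeq \Lambda_h^k$ and $(\Lambda_h)^{-1} \preceq (\Lambda_h^k)^{-1}$ for every $k \in [K_{\text{RFE}}]$. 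Therefore, with $\beta$ as set in the exploration phase,
\[
u_h(s,a) = \min\Big\{\beta\sqrt{\hat\phi_h(s,a)^\top\Lambda_h^{-1}\hat\phi_h(s,a)},\, H\Big\} \le \beta\sqrt{\hat\phi_h(s,a)^\top(\Lambda_h^k)^{-1}\hat\phi_h(s,a)} = u_h^k(s,a) = r_h^k(s,a)
\]
pointwise, for all $(s,a)$ and all $k$. Next I would establish that the truncated optimal value function is monotone in the reward: a backward induction on $h$ from $H+1$ to $1$ shows that $0 \le r \le r'$ pointwise implies $\tilde V_h^*(s,r) \le \tilde V_h^*(s,r')$ for all $s$, since the inequality survives adding $r_h \le r_h'$, applying $P_h^{(*,T+1)}$, taking $\max_a$, and truncating at $H$. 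Combining the two facts, $\tilde V_1^*(s,u) \le \tilde V_1^*(s,r^k)$ for all $s,k$, so taking $\EE_{s\sim\mu}$ and averaging over $k$,
\[
\EE_{s\sim\mu}\big[\tilde V_1^*(s,u)\big] \le \frac{1}{K_{\text{RFE}}}\sum_{k=1}^{K_{\text{RFE}}} \EE_{s\sim\mu}\big[\tilde V_1^*(s,r^k)\big].
\]

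\textbf{Step 2 (to the cumulative optimistic value).} By the first part of \Cref{Lemma:V_optimism_RFE} at $h=1$, on an event of probability at least $1-\delta/4$ we have $\tilde V_1^*(s,r^k) \le \hat V_1^k(s) + H^2\xi_{\text{down}}$ for all $s$ and $k$. Since $\hat V_1^k$ depends only on data collected in episodes $1,\dots,k-1$ while $s_1^k \sim \mu$ is drawn independently, $X_k := \EE_{s\sim\mu}[\hat V_1^k(s)] - \hat V_1^k(s_1^k)$ is a martingale difference sequence with $|X_k| \le H$, so Azuma--Hoeffding gives $\frac{1}{K_{\text{RFE}}}\sum_k X_k \le H\sqrt{2\log(8/\delta)/K_{\text{RFE}}}$ with probability at least $1-\delta/4$. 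The second part of \Cref{Lemma:V_optimism_RFE} then gives $\frac{1}{K_{\text{RFE}}}\sum_k \hat V_1^k(s_1^k) \le c\sqrt{d^3H^4\log(dK_{\text{RFE}}H/\delta)/K_{\text{RFE}}} + H^2\xi_{\text{down}}$. Chaining the three bounds, absorbing the $H\sqrt{\log(8/\delta)/K_{\text{RFE}}}$ term into the leading $\sqrt{d^3H^4\log(\cdot)/K_{\text{RFE}}}$ term, and union-bounding over the two failure events of total probability $\delta/2$, yields
\[
\EE_{s\sim\mu}\big[\tilde V_1^*(s,u)\big] \le c'\sqrt{d^3H^4\log(dK_{\text{RFE}}H/\delta)/K_{\text{RFE}}} + 2H^2\xi_{\text{down}}.
\]

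\textbf{Main obstacle.} The argument is mostly an assembly of \Cref{Lemma:V_optimism_RFE}; the one genuinely load-bearing ingredient is the pointwise domination $u_h \le r_h^k$ valid for \emph{every} $k$ (from $\Lambda_h \succeq \Lambda_h^k$) together with monotonicity of the truncated value function in the reward, which is exactly what lets a single planning bonus be converted into an average over all exploration episodes. The remaining work — verifying the martingale structure in Step 2 and tracking the failure probabilities and absolute constants so that they collapse to the stated $2H^2\xi_{\text{down}}$ — is routine.
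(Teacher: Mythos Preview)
Your proposal is correct and follows essentially the same route as the paper: the domination $u_h \le r_h^k$ via $\Lambda_h \succeq \Lambda_h^k$, monotonicity of $\tilde V_1^*$ in the reward, both parts of \Cref{Lemma:V_optimism_RFE}, and an Azuma--Hoeffding step to pass between the expectation over $\mu$ and the realized initial states $s_1^k$. The only cosmetic difference is the order of operations---the paper applies Azuma to $\tilde V_1^*(s_1^k,r^k) - \EE_{s\sim\mu}[\tilde V_1^*(s,r^k)]$ and \emph{then} invokes the first part of \Cref{Lemma:V_optimism_RFE} at $s_1^k$, whereas you first invoke the lemma at all $s$ and then apply Azuma to $\EE_{s\sim\mu}[\hat V_1^k(s)] - \hat V_1^k(s_1^k)$---but both martingale differences are valid and the resulting bounds are identical.
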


\begin{lemma}\label{Lemma:optimism-planning-RFE}
    With probability $1-\delta/2$, for any reward function which is linear with respect to the unknown feature $\phi^*: \cS\times \cA \rightarrow \RR^d$, for all $(s,a) \in \cS \times \cA$ and $h \in [H]$, we have 
    \begin{equation*}
        Q_h^*(s,a,r) - H(H-H+1)\xi_{\text{down}} \leq \hat{Q}_h(s,a) \leq r_h(s,a) + P_h^{(*,T+1)}\hat{V}_{h+1}(s,a) + 2u_h(s,a) + H\xi_{\text{down}}
    \end{equation*}
\end{lemma}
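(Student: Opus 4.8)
The plan is to run the same backward-induction argument used for \Cref{Lemma:V_optimism_RFE}, but now on the fixed planning dataset $\cD_{\text{RFE}}$ of $K_{\text{RFE}}$ trajectories and with the given reward $r$ in place of the exploration rewards $r^k$ (I read $H(H-H+1)$ in the statement as $H(H-h+1)$). Throughout I would use the surrogate linear kernel $\overline{P}_h(\cdot\given s,a)=\la\hat\phi_h(s,a),\hat\mu_h^*(\cdot)\ra$ from \Cref{Lemma:approximate_feature_new_task}: since $\|P_h^{(*,T+1)}(\cdot\given s,a)-\overline{P}_h(\cdot\given s,a)\|_{\text{TV}}\le\xi_{\text{down}}$ for all $(s,a)$, every $f:\cS\to[0,H]$ obeys $|(P_h^{(*,T+1)}-\overline{P}_h)f(s,a)|\le H\xi_{\text{down}}$, and $\overline{P}_hf(s,a)=\hat\phi_h(s,a)^\top w$ with $w=\int\hat\mu_h^*(s')f(s')ds'$ satisfying $\|w\|_2\le C_LH\sqrt d$.

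The first step is the planning-phase analogue of \Cref{lemma:reward-free-pre-lemma}: on an event of probability at least $1-\delta/4$, for all $h\in[H]$ and $(s,a)$,
\[
\bigl|\hat\phi_h(s,a)^\top\hat w_h-P_h^{(*,T+1)}\hat V_{h+1}(s,a)\bigr|\le\beta\,\|\hat\phi_h(s,a)\|_{\Lambda_h^{-1}}+H\xi_{\text{down}}.
\]
I would get this from the usual least-squares decomposition $\hat w_h-w_h^\star=\Lambda_h^{-1}\sum_\tau\hat\phi_h^\tau(\hat V_{h+1}(s_{h+1}^\tau)-P_h^{(*,T+1)}\hat V_{h+1}(s_h^\tau,a_h^\tau))+\Lambda_h^{-1}\sum_\tau\hat\phi_h^\tau(P_h^{(*,T+1)}-\overline{P}_h)\hat V_{h+1}(s_h^\tau,a_h^\tau)-\lambda\Lambda_h^{-1}w_h^\star$ with $w_h^\star=\int\hat\mu_h^*\hat V_{h+1}$: the martingale term is controlled by a self-normalized inequality plus a covering argument over the planning value class $\{\,s\mapsto\max_a\min\{\hat\phi_{h+1}(s,a)^\top w+r_{h+1}(s,a)+u_{h+1}(s,a),H\}\,\}$ (parametrised by the bounded weight $w$ and the bonus radius), paralleling \Cref{Lemma:self-normalized-bound-reward-free-exploration} and using $\|\hat w_h\|_2\le H\sqrt{dK_{\text{RFE}}}$ from the planning version of \Cref{Lemma:weight_bound_downstream_reward-free-exploration-phase}; the misspecification term is at most $H\xi_{\text{down}}\sqrt{dK_{\text{RFE}}}\,\|\hat\phi_h\|_{\Lambda_h^{-1}}$ because each summand is $\le H\xi_{\text{down}}$ in absolute value; and the ridge term is at most $\sqrt\lambda\,\|w_h^\star\|_2\,\|\hat\phi_h\|_{\Lambda_h^{-1}}\le C_LH\sqrt d\,\|\hat\phi_h\|_{\Lambda_h^{-1}}$. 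The definition $\beta=C_LH\sqrt d+dH\sqrt{\log(dK_{\text{RFE}}H\max(\xi_{\text{down}},1)/\delta)}+H\xi_{\text{down}}\sqrt{dK_{\text{RFE}}}$ was chosen precisely to dominate the sum of these three contributions, and the leftover additive $H\xi_{\text{down}}$ is just $|(P_h^{(*,T+1)}-\overline{P}_h)\hat V_{h+1}(s,a)|$.

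Granting this, the upper inequality is immediate: $\hat Q_h(s,a)\le\hat\phi_h(s,a)^\top\hat w_h+r_h(s,a)+u_h(s,a)\le r_h(s,a)+P_h^{(*,T+1)}\hat V_{h+1}(s,a)+u_h(s,a)+\beta\|\hat\phi_h(s,a)\|_{\Lambda_h^{-1}}+H\xi_{\text{down}}$; since $u_h=\min\{\beta\|\hat\phi_h\|_{\Lambda_h^{-1}},H\}$, when $\beta\|\hat\phi_h\|_{\Lambda_h^{-1}}\le H$ the term $\beta\|\hat\phi_h\|_{\Lambda_h^{-1}}$ equals $u_h$ and the claim follows, while when $\beta\|\hat\phi_h\|_{\Lambda_h^{-1}}>H$ we have $u_h=H$, $\hat Q_h\le H$, and $r_h+P_h^{(*,T+1)}\hat V_{h+1}+2u_h+H\xi_{\text{down}}\ge 2H\ge H$ using $r_h\ge0$ and $\hat V_h\ge0$. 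For the optimism (lower) inequality I would induct downward from $h=H+1$ on the pair of statements $Q_h^*(s,a,r)-H(H-h+1)\xi_{\text{down}}\le\hat Q_h(s,a)$ and $\hat V_h\ge0$, with base case $\hat Q_{H+1}=Q_{H+1}^*=0$. In the step: if $\hat Q_h(s,a)=H$ then $\hat Q_h(s,a)=H\ge H-h+1\ge Q_h^*(s,a,r)$; otherwise $\hat Q_h(s,a)=\hat\phi_h(s,a)^\top\hat w_h+r_h(s,a)+u_h(s,a)$, and the same case analysis on the truncation of $u_h$ as in \Cref{Lemma:V_optimism_RFE} gives $\hat\phi_h(s,a)^\top\hat w_h+u_h(s,a)\ge P_h^{(*,T+1)}\hat V_{h+1}(s,a)-H\xi_{\text{down}}$; combining with the inductive hypothesis $\hat V_{h+1}\ge V_{h+1}^*(\cdot,r)-H(H-h)\xi_{\text{down}}$, monotonicity of $P_h^{(*,T+1)}$, and the fact that $P_h^{(*,T+1)}$ fixes constants yields $\hat Q_h(s,a)\ge r_h(s,a)+P_h^{(*,T+1)}V_{h+1}^*(s,a,r)-H(H-h+1)\xi_{\text{down}}=Q_h^*(s,a,r)-H(H-h+1)\xi_{\text{down}}$, and taking $\max_a$ together with $u_h,r_h\ge0$ gives $\hat V_h\ge0$. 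A union bound over $h\in[H]$ collects everything on an event of probability at least $1-\delta/2$.

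The main obstacle will be the concentration step: assembling the self-normalized bound with the covering number of the planning-phase value class while simultaneously tracking how the representation misspecification $\xi_{\text{down}}$ propagates through $\Lambda_h^{-1}$ (the $H\xi_{\text{down}}\sqrt{dK_{\text{RFE}}}$ term), and then verifying that the prescribed $\beta$ really dominates all three error sources. A secondary subtlety is the truncation $u_h=\min\{\cdot,H\}$: the clean inequality $\hat\phi_h^\top\hat w_h+u_h\ge P_h^{(*,T+1)}\hat V_{h+1}-H\xi_{\text{down}}$ and the lower bound $\hat V_h\ge0$ reinforce one another, so the two statements must be carried through the backward induction jointly.
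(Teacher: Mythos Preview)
Your proposal is correct and follows essentially the same approach as the paper's own proof: first establish the planning-phase analogue of \Cref{lemma:reward-free-pre-lemma} via the same three-term least-squares decomposition (ridge, self-normalized martingale with covering, misspecification), then derive the upper bound directly and the lower bound by backward induction from $h=H+1$. Your explicit case analysis on the truncation of $u_h$ and the auxiliary tracking of $\hat V_h\ge 0$ are slightly more detailed than the paper's presentation, but the underlying argument is identical.
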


\subsection{Proof of \Cref{Theorem:reward-free-exploration-main}}

We first restate \Cref{Theorem:reward-free-exploration-main}

\begin{theorem}\label{Theorem:reward-free-exploration-appendix}
    Under \Cref{Assumption:upstream-to-downstream}, after collecting $K_{\text{RFE}}$ trajectories during the exploration phase in \Cref{Algorithm:downstream-reward-free-exploration-phase}, with probability at least $1-\delta$, the output of \Cref{Algorithm:downstream-reward-free-planning-phase}, policy $\pi$ satisfies
    \begin{equation}\label{Equation:RFE-theorem-expectation-bound}
        \EE_{s_1\sim \mu}[V_1^*(s_1,r) - V_1^\pi(s_1,r)] \leq c'\sqrt{d^3H^4\log(dK_{\text{RFE}}H/\delta)/K_{\text{RFE}}} +  6 H^2\xi_{\text{down}}.
    \end{equation}

    If the linear combination misspecification error $\xi$ in \Cref{Assumption:upstream-to-downstream} satisfies $\tilde{O}(\sqrt{d^3/K_{\text{RFE}}})$ and the number of trajectories in the offline dataset for each upstream task is  at least $\tilde{O}(TK_{\text{RFE}}/d^3)$, then, provided $K_{\text{RFE}}$ is at least $O(d^3H^4\log(dH\delta^{-1}\epsilon^{-1})/\epsilon^2)$,  with probability $1-\delta$, the policy $\pi$ will be an $\epsilon$-optimal policy for any given reward during the planning phase. 
\end{theorem}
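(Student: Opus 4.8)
The plan is to first establish the suboptimality bound \eqref{Equation:RFE-theorem-expectation-bound} for a fixed but arbitrary reward $r$ revealed in the planning phase, and then convert it into the sample-complexity statement by choosing $K_{\text{RFE}}$. Write $\pi=\hat\pi=\{\hat\pi_h\}_{h\in[H]}$ for the policy returned by \Cref{Algorithm:downstream-reward-free-planning-phase} and $\hat V_h,\hat Q_h$ for the value estimates it computes. I would start from the decomposition
\[
\EE_{s_1\sim\mu}\big[V_1^*(s_1,r)-V_1^\pi(s_1,r)\big]
=\EE_{s_1\sim\mu}\big[V_1^*(s_1,r)-\hat V_1(s_1)\big]+\EE_{s_1\sim\mu}\big[\hat V_1(s_1)-V_1^\pi(s_1,r)\big].
\]
For the first term, the lower inequality of \Cref{Lemma:optimism-planning-RFE} gives $Q_h^*(s,a,r)-H^2\xi_{\text{down}}\le\hat Q_h(s,a)$ for all $(s,a)$ (bounding $H(H-h+1)\le H^2$); maximizing over $a$ at $h=1$ yields $V_1^*(s_1,r)\le\hat V_1(s_1)+H^2\xi_{\text{down}}$, so this term is at most $H^2\xi_{\text{down}}$.

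For the second term I would use the upper inequality of \Cref{Lemma:optimism-planning-RFE}, $\hat Q_h(s,a)\le r_h(s,a)+P_h^{(*,T+1)}\hat V_{h+1}(s,a)+2u_h(s,a)+H\xi_{\text{down}}$, together with $\hat V_h(s)=\hat Q_h(s,\hat\pi_h(s))$, the Bellman equation $V_h^\pi(s,r)=r_h(s,\hat\pi_h(s))+P_h^{(*,T+1)}V_{h+1}^\pi(s,\hat\pi_h(s),r)$, and the truncation $\hat V_h\le H$. Setting $g_h(s):=\hat V_h(s)-V_h^\pi(s,r)$, these give the one-step recursion
\[
g_h(s)\le\min\Big\{(P_h^{(*,T+1)}g_{h+1})(s,\hat\pi_h(s))+2u_h(s,\hat\pi_h(s))+H\xi_{\text{down}},\;H\Big\},\qquad g_{H+1}\equiv0.
\]
Unrolling this from $h=1$, peeling off the additive $H\xi_{\text{down}}$ terms with the elementary inequality $\min\{a+b,H\}\le\min\{a,H\}+b$ for $b\ge0$, and using $\tilde V_1^\pi(s,2u)\le 2\tilde V_1^\pi(s,u)\le 2\tilde V_1^*(s,u)$ (the truncated policy value is dominated by the truncated optimal value), I obtain $g_1(s_1)\le 2\,\tilde V_1^*(s_1,u)+H^2\xi_{\text{down}}$. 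Taking expectation over $s_1\sim\mu$ and invoking \Cref{Lemma:expectation-bound-planning-bonus-RFE} bounds the second term by $2c'\sqrt{d^3H^4\log(dK_{\text{RFE}}H/\delta)/K_{\text{RFE}}}+5H^2\xi_{\text{down}}$. Adding the two contributions and absorbing the factor $2$ into $c'$ reproduces \eqref{Equation:RFE-theorem-expectation-bound} with misspecification coefficient exactly $6H^2$.

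For the sample-complexity claim, I would substitute the stated choices into $\xi_{\text{down}}=\xi+\tfrac{C_LC_R\nu}{\kappa}\sqrt{2T\log(2|\Phi||\Psi|^TnH/\delta)/n}$: with $\xi=\tilde{O}(\sqrt{d^3/K_{\text{RFE}}})$ and $n\ge\tilde{O}(TK_{\text{RFE}}/d^3)$, the second summand is $\tilde{O}(\sqrt{T/n})=\tilde{O}(\sqrt{d^3/K_{\text{RFE}}})$, so $\xi_{\text{down}}=\tilde{O}(\sqrt{d^3/K_{\text{RFE}}})$ and hence $6H^2\xi_{\text{down}}=\tilde{O}(\sqrt{d^3H^4/K_{\text{RFE}}})$, the same order as the leading term of \eqref{Equation:RFE-theorem-expectation-bound}. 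The whole bound is then $\tilde{O}(\sqrt{d^3H^4/K_{\text{RFE}}})$, and choosing $K_{\text{RFE}}=\Theta\big(d^3H^4\log(dH\delta^{-1}\epsilon^{-1})/\epsilon^2\big)$ — after checking that the $\log K_{\text{RFE}}$ appearing inside collapses to $O(\log(dH/(\delta\epsilon)))$ once this value is substituted back — makes the right-hand side at most $\epsilon$. Since $s_1\sim\mu$, this is exactly the statement that $\pi$ is $\epsilon$-optimal for any reward revealed in the planning phase, with the $1-\delta$ probability coming from a union bound over the events in the supporting lemmas.

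The genuinely hard part is not this assembly but the supporting lemmas it rests on. The crux is \Cref{Lemma:optimism-planning-RFE}: establishing two-sided optimism under the \emph{approximate} feature $\hat\phi$ rather than the true $\phi^*$, which is where the $H\xi_{\text{down}}$ per-step slack enters, via the misspecified linear-regression estimate of \Cref{lemma:reward-free-pre-lemma} and the self-normalized concentration of \Cref{Lemma:self-normalized-bound-reward-free-exploration}; and then \Cref{Lemma:V_optimism_RFE} and \Cref{Lemma:expectation-bound-planning-bonus-RFE}, which must show that the exploration phase makes the planning-phase covariance $\Lambda_h$ large enough in every direction that $\EE_{s\sim\mu}[\tilde V_1^*(s,u)]$ shrinks at the optimal $\sqrt{d^3H^4/K_{\text{RFE}}}$ rate — the extra factor of $H$ incurred by naive reward-free analyses being removed by the aggressive exploration reward of \Cref{Algorithm:downstream-reward-free-exploration-phase}. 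Within the proof of the theorem itself, the only delicate point is keeping the truncation and the $\xi_{\text{down}}$ bookkeeping tight enough that the final coefficient is $6H^2$ and not larger.
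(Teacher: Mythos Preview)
Your proposal is correct and follows essentially the same approach as the paper's proof: the same two-term decomposition, the same use of the lower bound in \Cref{Lemma:optimism-planning-RFE} for the first term, the same per-step recursion from its upper bound combined with the truncation $\hat V_h-V_h^\pi\le H$ for the second term, the passage to $\tilde V_1^*(\cdot,u)$, and finally \Cref{Lemma:expectation-bound-planning-bonus-RFE}. Your handling of the truncation is in fact more explicit than the paper's (which simply writes ``along with the previous derivation, this implies\ldots''); your peeling inequality $\min\{a+b,H\}\le\min\{a,H\}+b$ together with the backward-induction inequality $\tilde V_h^\pi(\cdot,2u)\le 2\tilde V_h^\pi(\cdot,u)$ is exactly what makes that step rigorous, and your $\xi_{\text{down}}$ bookkeeping ($1+5=6$) matches the paper's ($1+1+2\cdot 2=6$).
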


\begin{proof}[Proof of \Cref{Theorem:reward-free-exploration-appendix}]
    We condition on the events defined in \Cref{Lemma:expectation-bound-planning-bonus-RFE} and \Cref{Lemma:optimism-planning-RFE} which, by union bound, hold with probability at least $1- \delta$. By \Cref{Lemma:optimism-planning-RFE}, for any $s \in \cS$, we have
    \begin{align*}
        \hat{V}_1(s) = \max_{a\in \cA} \hat{Q}_1(s,a) &\geq \max_{a\in \cA} Q^*_1(s,a,r) - H^2\xi_{\text{down}}\\
        &= V_1^*(s,r) - H^2\xi_{\text{down}}.\\
    \end{align*}

    This implies
    \begin{equation*}
        \EE_{s_1\sim \mu}[V_1^*(s_1,r) - V_1^\pi(s_1,r)] \leq \EE_{s_1\sim \mu}[\hat{V}_1(s_1) - V_1^\pi(s_1,r)] + H^2\xi_{\text{down}},
    \end{equation*}
    where $\pi$ is the policy returned by \Cref{Algorithm:downstream-reward-free-planning-phase}.

    Observe that, using \Cref{Lemma:optimism-planning-RFE}, we have
    \begin{align*}
        &\EE_{s_1 \sim \mu}[\hat{V}_1(s_1) - V_1^\pi(s_1,r)]\\
        &=\EE_{s_1 \sim \mu}[\hat{Q}_1(s_1, \pi_1(s_1)) - Q_1^\pi(s_1,\pi_1(s_1),r)]\\
        &\leq \EE_{s_1 \sim \mu}[r_1(s_1, \pi_1(s_1)) + P_1^{(*,T+1)}\hat{V}_2(s_1,\pi_1(s_1),r) + 2u_1(s_1, \pi_1(s_1)) + H\xi_{\text{down}} -r_1(s_1, \pi_1(s_1)) \\
        & \qquad \qquad - P_1^{(*,T+1)}V_2^\pi(s_1,\pi_1(s_1),r)]\\
        &=\EE_{s_1 \sim \mu, s_2 \sim P_1^{(*,T+1)}(\cdot|s_1,\pi_1(s_1))}[\hat{V}_2(s_2) - V_2^\pi(s_2,r) + 2u_1(s_1,\pi_1(s_1))] + H\xi_{\text{down}}\\
        &\leq \ldots \\
        &\leq 2 \EE_{s\sim \mu}[V_1^\pi(s,u)] + H^2\xi_{\text{down}}.
    \end{align*}

    Moreover, note that $0 \leq \hat{V}_h(s) \leq H$ and $0 \leq V_h^\pi(s,r) \leq H$ as $0 \leq r(s,a) \leq 1$. Thus, we would always have $\hat{V}_h(s) - V_h^\pi(s,r) \leq H$. Along with the previous derivation, this implies,

    \begin{equation*}
        \EE_{s_1 \sim \mu}[\hat{V}_1(s_1) - V_1^\pi(s_1,r)] \leq 2\EE_{s\sim\mu}[\tilde{V}_1^\pi(s,u)] + H^2\xi_{\text{down}}.
    \end{equation*}

    By definition of $\tilde{V}_1^*(s,u)$, we further have $\EE_{s\sim \mu}[\tilde{V}_1^\pi(s,u)] \leq \EE_{s\sim \mu}[\tilde{V}_1^*(s,u)]$. By \Cref{Lemma:expectation-bound-planning-bonus-RFE}, we have
    \begin{equation*}
        \EE_{s\sim \mu} \Big[\tilde{V}_1^*(s,u)\Big] \leq c'\sqrt{d^3H^4\log(dK_{\text{RFE}}H/\delta)/K_{\text{RFE}}} +  2 H^2\xi_{\text{down}}.
    \end{equation*}

    So, we have, 
    \begin{align}\label{Equation:bound-rfe}
        \EE_{s_1\sim \mu}[V_1^*(s_1,r) - V_1^\pi(s_1,r)] &\leq \EE_{s_1\sim \mu}[\hat{V}_1(s_1) - V_1^\pi(s_1,r)] + H^2\xi_{\text{down}}\notag\\
        &\leq 2\EE_{s\sim\mu}[\tilde{V}_1^\pi(s,u)] + 2H^2\xi_{\text{down}}\notag\\
        &\leq 2\EE_{s\sim\mu}[\tilde{V}_1^*(s,u)] + 2H^2\xi_{\text{down}}\notag\\
        &\leq 2c'\sqrt{d^3H^4\log(dK_{\text{RFE}}H/\delta)/K_{\text{RFE}}} +  6 H^2\xi_{\text{down}}.
    \end{align}

    Recall the definition of $\xi_{\text{down}}$ from \Cref{Lemma:approximate_feature_new_task}, that is, $\xi_{\text{down}} = \xi + \frac{C_L C_R\nu}{\kappa} \sqrt{\frac{2T\log(2|\Phi||\Psi|^TnH/\delta)}{n}}$. If the linear combination misspecification error $\xi$ in \Cref{Assumption:upstream-to-downstream} satisfies $\tilde{O}(\sqrt{d^3/K_{\text{RFE}}})$ and the number of trajectories in the offline dataset for each task in the upstream stage is  at least $\tilde{O}(TK_{\text{RFE}}/d^3)$, then the first term in \Cref{Equation:bound-rfe} dominates the second term $6H^2\xi_{\text{down}}$. Then, by taking $K_{\text{RFE}} = c_Kd^3H^4\log(dH\delta^{-1}\epsilon^{-1})/\epsilon^2$ for a sufficiently large constant $c_K >0$, we have 
    \begin{equation*}
        \EE_{s_1\sim \mu}[V_1^*(s_1,r) - V_1^\pi(s_1,r)] \leq 2c'\sqrt{d^3H^4\log(dK_{\text{RFE}}H/\delta)/K_{\text{RFE}}} +  6 H^2\xi_{\text{down}} \leq \epsilon.
    \end{equation*}

    This completes the proof.
\end{proof}
\section{Proof of Supporting Lemmas in \Cref{Appendix:Proof for Downstream Reward-Free RL}}\label{Section:proof of supporting lemmas for downstream RFE}

\subsection{Proof of \Cref{Lemma:weight_bound_downstream_reward-free-exploration-phase}}
\begin{proof}[Proof of \Cref{Lemma:weight_bound_downstream_reward-free-exploration-phase}]
    We have 
    \begin{align*}
        \big\|\hat{w}_h^k\big\| &= \bigg\|(\Lambda_h^k)^{-1}\sum_{\tau=1}^{k-1} \hat{\phi}_h(s_h^\tau,a_h^\tau)\hat{V}_{h+1}^k(s_{h+1}^\tau)\bigg\|\\
        &\leq \sqrt{k}\bigg(\sum_{\tau=1}^{k-1} \big\|\hat{V}_{h+1}^k(s_{h+1}^\tau) \hat{\phi}_h(s_h^\tau,a_h^\tau)\big\|^2_{(\Lambda_h^k)^{-1}}\bigg)^{1/2}\\
        &\leq \sqrt{K_{\text{RFE}}}\cdot H\cdot\bigg(\sum_{\tau=1}^{k-1} \big\| \hat{\phi}_h(s_h^\tau,a_h^\tau)\big\|^2_{(\Lambda_h^k)^{-1}}\bigg)^{1/2}\\
        &\leq H\sqrt{d K_{\text{RFE}}}
    \end{align*}
    where the first inequality follows from ~\Cref{Lemma:D.5_Ishfaq} and the fact that the largest eigenvalue of $(\Lambda_h^k)^{-1}$ is at most $1$, second inequality follows from the fact that $|\hat{V}_{h+1}^k(s)|\leq H$ for all $s\in\cS$ and the last inequality follows from \Cref{Lemma:D.1-chijin}.
\end{proof}

\subsection{Proof of \Cref{Lemma:self-normalized-bound-reward-free-exploration}}
\begin{proof}[Proof of \Cref{Lemma:self-normalized-bound-reward-free-exploration}]

The proof is similar to that of Lemma B.3 in \citep{jin2019provably} with the major difference being the usage of approximate feature map $\hat{\phi}(\cdot,\cdot)$ and different reward function at different episodes. We provide the full outline of the proof for completeness. 

For all $(k,h) \in [K_{\text{RFE}}]\times[H]$, by \Cref{Lemma:weight_bound_downstream_reward-free-exploration-phase}, we have $\|\hat{w}_h^k\|_2 \leq H\sqrt{dK}$. Moreover, we have $r_h^k(\cdot, \cdot) = u_h^k(\cdot, \cdot)$ and hence we have
\begin{equation*}
    r_h^k(\cdot, \cdot) + u_h^k(\cdot,\cdot) = 2 \beta \sqrt{\hat{\phi}(\cdot,\cdot)^\top (\Lambda_h^k)^{-1}\hat{\phi}(\cdot,\cdot)}.
\end{equation*}

Thus, our value function $\hat{V}_{h+1}^k$ is of the form
\begin{equation*}
    V(\cdot) := \min \Big\{ \max_{a \in \cA} \hat{\phi}(\cdot, a)^\top w + \beta \sqrt{\hat{\phi}(\cdot,a)^\top (\Lambda)^{-1}\hat{\phi}(\cdot,a)}, H \Big\},
\end{equation*}
for some $\Lambda \in \RR^{d\times d}$, and $w \in \RR^d$ which matches the value function class defined in \Cref{Lemma:covering_number_parametric_form-chi-jin-original}. Moreover, by construction, the minimum eigenvalue of $\Lambda_h^k$ is lower bounded by 1. Combining \Cref{Lemma:self-normalized-process} and \Cref{Lemma:covering_number_parametric_form-chi-jin-original}, we have for any fixed $\varepsilon > 0$ that
\begin{align*}
    &\left \| \sum_{\tau = 1}^{k-1} \hat{\phi}_h^\tau \left(\hat{V}_{h+1}^k(s_{h+1}^\tau) - P_h^{(*,T+1)}\hat{V}_{h+1}^k(s_h^\tau,a_h^\tau)\right)\right\|^2_{(\Lambda_h^k)^{-1}}\\
    &\leq 4H^2 \left[\frac{d}{2}\log(k+1) + d \log \left(1 + \frac{8H\sqrt{dk}}{\varepsilon}\right) + d^2 \log \left(1 + \frac{32\sqrt{d}\beta^2}{\varepsilon^2}\right) + \log \left(\frac{8}{\delta}\right)\right] + 8k^2\varepsilon^2
\end{align*}
We set the hyperparameter $\beta = C_L H\sqrt{d} + dH\sqrt{\log(dK_{\text{RFE}}H\max(\xi_{\text{down}},1)/\delta)} + H\xi_{\text{down}}\sqrt{dK_{\text{RFE}}}$. Finally, picking $\varepsilon = dH/k$, we have 
\begin{equation*}
        \left \| \sum_{\tau = 1}^{k-1} \hat{\phi}_h^\tau \left(\hat{V}_{h+1}^k(s_{h+1}^\tau) - P_h^{(*,T+1)}\hat{V}_{h+1}^k(s_h^\tau,a_h^\tau)\right)\right\|_{(\Lambda_h^k)^{-1}} \lesssim dH\sqrt{\log\left(\frac{dK_{\text{RFE}}H\max(\xi_{\text{down}},1)}{\delta}\right)},
    \end{equation*} 
    which concludes the proof.
\end{proof}

\subsection{Proof of \Cref{lemma:reward-free-pre-lemma}}
\begin{proof}[Proof of \Cref{lemma:reward-free-pre-lemma}]

We condition on the event $\mathcal{E}$ defined in \Cref{Lemma:self-normalized-bound-reward-free-exploration}, which holds with probability at least $1- \delta/8$. We define $\tilde{w}_h^k = \int_{s'} \hat{V}_{h+1}^k(s') \hat{\mu}^*(s')ds'$ where $\hat{\mu}^*$ is defined in the proof of \Cref{Lemma:approximate_feature_new_task}. Note that by \Cref{Lemma:approximate_feature_new_task}, we have $\|\tilde{w}_h^k\| \leq C_L H\sqrt{d}$.

Now,
\begin{align}\label{Equation:w_hat_minus-V_hat_RFE}
    &\Big|\hat{\phi}_h(s,a)^\top \hat{w}_h^k - P_h^{(*,T+1)} \hat{V}_{h+1}^k(s,a)\Big|\notag\\
    &= \Big|\hat{\phi}_h(s,a)^\top \hat{w}_h^k - \hat{\phi}_h(s,a)^\top \tilde{w}_h^k + \hat{\phi}_h(s,a)^\top \tilde{w}_h^k - P_h^{(*,T+1)} \hat{V}_{h+1}^k(s,a)\Big|\notag\\
    &\leq \Big|\hat{\phi}_h(s,a)^\top \hat{w}_h^k - \hat{\phi}_h(s,a)^\top \tilde{w}_h^k \Big| +  \Big|\overline{P}_h\hat{V}_{h+1}^k(s,a) - P_h^{(*,T+1)} \hat{V}_{h+1}^k(s,a)\Big|\notag\\
    &\leq \Big|\hat{\phi}_h(s,a)^\top \hat{w}_h^k - \hat{\phi}_h(s,a)^\top \tilde{w}_h^k \Big| + H\xi_{\text{down}},
\end{align}
where the last inequality follows from \Cref{Lemma:approximate_feature_new_task} and $|\hat{V}_{h+1}^k| \leq H$.

The first term in ~\eqref{Equation:w_hat_minus-V_hat_RFE} can be written as,
\begin{align}\label{Equation:phi_times_estimated_w_minus_tilde_w_rfe}
        &\hat{\phi}_h(s,a)^\top \hat{w}_h^k - \hat{\phi}_h(s,a)^\top \tilde{w}_h^k \notag\\
        &= \hat{\phi}_h(s,a)^\top (\Lambda_h^k)^{-1}\sum_{\tau=1}^{k-1} \hat{\phi}_h(s_h^\tau,a_h^\tau)\hat{V}_{h+1}^k(s_{h+1}^\tau) -\hat{\phi}_h(s,a)^\top (\Lambda_h^k)^{-1}(\Lambda_h^k) \tilde{w}_h^k\notag\\
        &= \hat{\phi}_h(s,a)^\top (\Lambda_h^k)^{-1}\bigg\{\sum_{\tau=1}^{k-1} \hat{\phi}_h(s_h^\tau,a_h^\tau)\hat{V}_{h+1}^k(s_{h+1}^\tau) -  \tilde{w}_h^k - \sum_{\tau=1}^{k-1}\hat{\phi}_h(s_h^\tau,a_h^\tau) \overline{P}_h \hat{V}_{h+1}^k \bigg\}\notag\\
        &= \underbrace{- \hat{\phi}_h(s,a)^\top (\Lambda_h^k)^{-1} \tilde{w}_h^k}_{\text{(a)}} + \underbrace{\hat{\phi}_h(s,a)^\top (\Lambda_h^k)^{-1}\bigg\{\sum_{\tau=1}^{k-1} \hat{\phi}_h(s_h^\tau,a_h^\tau)\Big[\hat{V}_{h+1}^k(s_{h+1}^\tau) - P_h^{(*,T+1)}\hat{V}_{h+1}^k(s_{h+1}^\tau)\bigg\}}_{\text{(b)}}\notag\\
        &\qquad + \underbrace{\hat{\phi}_h(s,a)^\top (\Lambda_h^k)^{-1}\bigg\{\sum_{\tau=1}^{k-1} \hat{\phi}_h(s_h^\tau,a_h^\tau) \Big(P_h^{(*,T+1)}-\overline{P}_h\Big)\hat{V}_{h+1}^k(s_{h+1}^\tau)\bigg\}}_{\text{(c)}}
    \end{align}

    We now bound $(a), (b), (c)$ in ~\eqref{Equation:phi_times_estimated_w_minus_tilde_w_rfe} individually.

    \textbf{Term (a).} We have,
    \begin{align}\label{Equation:term_a_rfe}
        \big|-\hat{\phi}_h(s,a)^\top (\Lambda_h^k)^{-1} \tilde{w}_h^k\big| &\leq \big\|\hat{\phi}_h(s,a)\big\|_{(\Lambda_h^k)^{-1}} \| \tilde{w}_h^k\|_{(\Lambda_h^k)^{-1}}\notag\\
        &\leq \|\tilde{w}_h^k\|_2\big\|\hat{\phi}_h(s,a)\big\|_{(\Lambda_h^k)^{-1}}\notag\\
        &\leq C_L H \sqrt{ d}\big\|\hat{\phi}_h(s,a)\big\|_{(\Lambda_h^k)^{-1}}.
    \end{align}

\textbf{Term (b).} Using Cauchy-Schwarz inequality and  the definition of the event $\mathcal{E}$ from \Cref{Lemma:self-normalized-bound-reward-free-exploration}, we have
    \begin{align}\label{Equation:term_b_rfe}
        &\hat{\phi}_h(s,a)^\top (\Lambda_h^k)^{-1}\bigg\{\sum_{\tau=1}^{k-1} \hat{\phi}_h(s_h^\tau,a_h^\tau)\Big[\hat{V}_{h+1}^k(s_{h+1}^\tau) - P_h^{(*,T+1)}\hat{V}_{h+1}^k(s_{h+1}^\tau)\bigg\}\notag\\
        &\leq \big\|\hat{\phi}_h(s,a)\big\|_{(\Lambda_h^k)^{-1}}\bigg\|\sum_{\tau=1}^{k-1} \hat{\phi}_h(s_h^\tau,a_h^\tau)\Big[\hat{V}_{h+1}^k(s_{h+1}^\tau) - P_h^{(*,T+1)}\hat{V}_{h+1}^k(s_{h+1}^\tau)\bigg\|_{(\Lambda_h^k)^{-1}}\notag\\
        &\lesssim dH\sqrt{\log\left(\frac{dK_{\text{RFE}}H \max (\xi_{\text{down}},1)}{\delta}\right)}\big\|\hat{\phi}_h(s,a)\big\|_{(\Lambda_h^k)^{-1}}.
    \end{align}

    \textbf{Term (c).} We have
    \begin{align}\label{Equation:term_c_rfe}
        &\hat{\phi}_h(s,a)^\top (\Lambda_h^k)^{-1}\bigg\{\sum_{\tau=1}^{k-1} \hat{\phi}_h(s_h^\tau,a_h^\tau) \Big(P_h^{(*,T+1)}-\overline{P}_h\Big)\hat{V}_{h+1}^k(s_{h+1}^\tau)\bigg\} \notag\\
        &\overset{(\romannumeral1)}{\leq} \bigg|\hat{\phi}_h(s,a)^\top (\Lambda_h^k)^{-1}\bigg\{\sum_{\tau=1}^{k-1} \hat{\phi}_h(s_h^\tau,a_h^\tau)\bigg\}\bigg|H\xi_{\text{down}}\notag\\
        &=\sum_{\tau=1}^{k-1}\big|\hat{\phi}_h(s,a)^\top (\Lambda_h^k)^{-1}\hat{\phi}_h(s_h^\tau,a_h^\tau)\big|H\xi_{\text{down}}\notag\\
        &\overset{(\romannumeral2)}{\leq} \sqrt{\bigg(\sum_{\tau=1}^{k-1} \big\|\hat{\phi}_h(s,a)\big\|^2_{(\Lambda_h^k)^{-1}}\bigg)\bigg(\sum_{\tau=1}^{k-1} \big\|\hat{\phi}_h(s_h^\tau,a_h^\tau)\big\|^2_{(\Lambda_h^k)^{-1}}\bigg)}H\xi_{\text{down}}\notag\\
        &\overset{(\romannumeral3)}{\leq} H\xi_{\text{down}}\sqrt{dk}\big\|\hat{\phi}_h(s,a)\big\|_{(\Lambda_h^k)^{-1}},
    \end{align}
    where $(i)$ follows from \Cref{Lemma:approximate_feature_new_task}, $(ii)$ follows from Cauchy-Schwarz inequality and $(iii)$ follows from \Cref{Lemma:D.1-chijin}.

    Substituting \eqref{Equation:term_a_rfe}, \eqref{Equation:term_b_rfe}, \eqref{Equation:term_c_rfe}, into \eqref{Equation:phi_times_estimated_w_minus_tilde_w_rfe}, and denoting $\beta =C_L H \sqrt{ d} + dH\sqrt{\log\left(\frac{dK_{\text{RFE}}H\max (\xi_{\text{down}},1)}{\delta}\right)} + H\xi_{\text{down}}\sqrt{dk} $ we get
    \begin{align*}
        \Big| \hat{\phi}_h(s,a)^\top \hat{w}_h^k - \hat{\phi}_h(s,a)^\top \tilde{w}_h^k \Big| \lesssim \beta \big\|\hat{\phi}_h(s,a)\big\|_{(\Lambda_h^k)^{-1}}
    \end{align*}

    Putting everything together in \eqref{Equation:w_hat_minus-V_hat_RFE}, we get 
    \begin{equation*}
        \Big|\hat{\phi}_h(s,a)^\top \hat{w}_h^k - P_h^{(*,T+1)} \hat{V}_{h+1}^k(s,a)\Big| \lesssim \beta \big\|\hat{\phi}_h(s,a)\big\|_{(\Lambda_h^k)^{-1}} + H \xi_{\text{down}},
    \end{equation*}
    which concludes the proof.
\end{proof}

\subsection{Proof of \Cref{Lemma:V_optimism_RFE}}

\begin{proof}[Proof of \Cref{Lemma:V_optimism_RFE}]
   We prove the first part using backward induction on $h$. For $h=H$, we have
   \begin{align*}
       \tilde{V}_H^*(s,r^k) &= \min \Big\{ \max_{a \in \cA} \Big\{r_H^k(s,a) + P_H^{(*,T+1)}\tilde{V}_{H+1}^*(s,a,r^k)\Big\}, H \Big\}\\
       & = \min \Big\{ \max_{a \in \cA} r_H^k(s,a), H \Big\}\\
       & \leq \min \Big\{ \max_{a \in \cA} \Big\{r_H^k(s,a) + \hat{\phi}_{H}(s,a)^\top \hat{w}_H^k + \beta \|\hat{\phi}_H(s,a)\|_{(\Lambda_H^k)^{-1}}\Big\}, H\Big\}\\
       &= \hat{V}_H^k(s)\\
       &\leq \hat{V}_H^k(s) + H(H-H+1)\xi_{\text{down}}.
   \end{align*}
   Suppose for some $h+1 \in [H]$, it holds that for all $s \in \cS$, 
   \begin{equation*}
       \tilde{V}_{h+1}^*(s,r^k) \leq \hat{V}_{h+1}^k(s) + H(H-h)\xi_{\text{down}}.
   \end{equation*}

   Then,
   \begin{align*}
       \tilde{V}_{h}^*(s,r^k) &= \min \Big\{ \max_{a \in \cA} \Big\{ r_h^k(s,a) + P_h^{(*,T+1)}\tilde{V}_{h+1}^*(s,a,r^k)\Big\}, H \Big\}\\
       &\leq \max_{a \in \cA} \Big\{ r_h^k(s,a) + P_h^{(*,T+1)}\tilde{V}_{h+1}^*(s,a,r^k)\Big\}\\
       &\leq \max_{a \in \cA} \Big\{ r_h^k(s,a) + P_h^{(*,T+1)}\hat{V}_{h+1}^k(s,a)\Big\} + H(H-h)\xi_{\text{down}}\\
       &\lesssim \max_{a \in \cA} \Big\{ r_h^k(s,a) + \hat{\phi}_h(s,a)^\top \hat{w}_h^k + \beta \|\hat{\phi}_h(s,a)\|_{(\Lambda_h^k)^{-1}} \Big\} + H\xi_{\text{down}} + H(H-h)\xi_{\text{down}}\\
       &= \max_{a \in \cA} \Big\{ r_h^k(s,a) + \hat{\phi}_h(s,a)^\top \hat{w}_h^k + \beta \|\hat{\phi}_h(s,a)\|_{(\Lambda_h^k)^{-1}} \Big\} + H(H-h + 1)\xi_{\text{down}},\\
   \end{align*}
   where the last inequality follows from \Cref{lemma:reward-free-pre-lemma}.

   Thus, we have
   \begin{align*}
       \tilde{V}_{h}^*(s,r^k) &\lesssim \min \Big\{\max_{a \in \cA} \Big\{ r_h^k(s,a) + \hat{\phi}_h(s,a)^\top \hat{w}_h^k + \beta \|\hat{\phi}_h(s,a)\|_{(\Lambda_h^k)^{-1}} \Big\}, H\Big\}+ H(H-h + 1)\xi_{\text{down}}\\
       &= \hat{V}_h^k(s) + H(H-h + 1)\xi_{\text{down}},\\
   \end{align*}
   as desired. This completes the first part of the proof.

   Now we prove the second part of the proof. For all $(k, h) \in [K_{\text{RFE}}]\times[H-1]$, we denote,
   \begin{equation*}
       \xi_h^k = P_h^{(*,T+1)}\hat{V}_{h+1}^k(s_h^k, a_h^k) - \hat{V}_{h+1}^k(s_{h+1}^k).
   \end{equation*}
   
   Conditioned on $\mathcal{E}$ from \Cref{Lemma:self-normalized-bound-reward-free-exploration} where $\text{Pr}[\mathcal{E}] \geq 1- \delta/8$,

   \begin{align*}
       \sum_{k=1}^{K_{\text{RFE}}} \hat{V}_1^k(s_1^k) &\leq \sum_{k=1}^{K_{\text{RFE}}} \Big( r_1^k(s_1^k,a_1^k) + \hat{\phi}_1(s_1^k, a_1^k)^\top \hat{w}_1^k + \beta \|\hat{\phi}_1(s_1^k,a_1^k)\|_{(\Lambda_1^k)^{-1}}\Big)\\
       &= \sum_{k=1}^{K_{\text{RFE}}} \Big( \hat{\phi}_1(s_1^k, a_1^k)^\top \hat{w}_1^k + 2\beta \|\hat{\phi}_1(s_1^k,a_1^k)\|_{(\Lambda_1^k)^{-1}}\Big)\\
       &\lesssim \sum_{k=1}^{K_{\text{RFE}}} \Big(P_1^{(*,T+1)} \hat{V}_2^k(s_1^k,a_1^k) + 3\beta \|\hat{\phi}_1(s_1^k,a_1^k)\|_{(\Lambda_1^k)^{-1}} +H\xi_{\text{down}}\Big)\\
       &= \sum_{k=1}^{K_{\text{RFE}}} \Big(\xi_1^k + \hat{V}_2^k(s_2^k) + 3\beta \|\hat{\phi}_1(s_1^k,a_1^k)\|_{(\Lambda_1^k)^{-1}} \Big)   + HK_{\text{RFE}}\xi_{\text{down}}\\
       &\leq \cdots\\
       &\leq \sum_{k=1}^{K_{\text{RFE}}} \sum_{h=1}^{H-1} \xi_h^k + \sum_{k=1}^{K_{\text{RFE}}}\sum_{h=1}^{H} 3\beta \|\hat{\phi}_h(s_h^k,a_h^k)\|_{(\Lambda_h^k)^{-1}} + H^2K_{\text{RFE}}\xi_{\text{down}},\\
   \end{align*}
   where the second inequality follows from \Cref{lemma:reward-free-pre-lemma}.

   Note that for each $h \in [H-1]$, $\{\xi_h^k\}_{k=1}^{K_{\text{RFE}}}$ is a martingale difference sequence with $|\xi_h^k| \leq H$. We define the event $\mathcal{E}'$ to be the event that 
   \begin{equation*}
       \Bigg| \sum_{k=1}^{K_{\text{RFE}}} \sum_{h=1}^{H-1} \xi_h^k \Bigg| \leq c'H^2\sqrt{K_{\text{RFE}}\log(K_{\text{RFE}}H/\delta)}.
   \end{equation*}
   By Azuma-Hoeffding inequality, we have $\text{Pr}[\mathcal{E}'] \geq 1 - \delta/8$.

   By Cauchy-Schwarz inequality, we have 
   \begin{equation*}
       \sum_{k=1}^{K_{\text{RFE}}} \sum_{h=1}^H \|\hat{\phi}_h(s_h^k,a_h^k)\|_{(\Lambda_h^k)^{-1}} \leq \sqrt{K_{\text{RFE}}H \sum_{k=1}^{K_{\text{RFE}}} \sum_{h=1}^H \hat{\phi}_h(s_h^k,a_h^k)^\top (\Lambda_h^k)^{-1} \hat{\phi}_h(s_h^k,a_h^k)}.
   \end{equation*}

   Using Lemma D.2 of \cite{jin2019provably}, we have 
   \begin{equation*}
       \sum_{k=1}^{K_{\text{RFE}}} \sum_{h=1}^H \hat{\phi}_h(s_h^k,a_h^k)^\top (\Lambda_h^k)^{-1} \hat{\phi}_h(s_h^k,a_h^k) \leq 2dH\log(K_{\text{RFE}}).
   \end{equation*}

   Conditioned on $\mathcal{E}$ and $\mathcal{E}$ where $\text{Pr}(\mathcal{E}\cap \mathcal{E}') \geq 1- \delta/4$, we have 
   \begin{align*}
       \sum_{k=1}^{K_{\text{RFE}}} \hat{V}_1^k(s_1^k) &\leq c'H^2\sqrt{K_{\text{RFE}}\log(K_{\text{RFE}}H/\delta)} + 3\beta \sqrt{K_{\text{RFE}}H\cdot 2dH\log(K_{\text{RFE}})} + H^2K_{\text{RFE}}\xi_{\text{down}}\\
       &\leq c\sqrt{d^3H^4K_{\text{RFE}}\log(dK_{\text{RFE}}H/\delta)}+ H^2K_{\text{RFE}}\xi_{\text{down}},\\
   \end{align*}
   which completes the proof. 
\end{proof}

\subsection{Proof of \Cref{Lemma:expectation-bound-planning-bonus-RFE}}

\begin{proof}[Proof of \Cref{Lemma:expectation-bound-planning-bonus-RFE}]
    Denote $\Delta^k = \tilde{V}_1^*(s_1^k, r^k) - \EE_{s\sim \mu} [\tilde{V}_1^*(s,r^k)]$. Note that $r^k$ depends only on the data collected during the first $k-1$ episodes. Thus, $\{\Delta^k\}_{k=1}^{K_{\text{RFE}}}$ is a martingale difference sequence. Moreover, we have $|\Delta^k| \leq H$. Using Azuma-Hoeffding inequality, with probability $1-\delta/4$, we have
    \begin{equation*}
        \Bigg| \sum_{k=1}^{K_{\text{RFE}}} \Delta^k \Bigg| \leq c_1 H \sqrt{K_{\text{RFE}} \log(1/\delta)},
    \end{equation*}
    where $c_1 > 0$ is an absolute constant. Therefore, we have

    \begin{equation*}
        \EE_{s \sim \mu}\Bigg[ \sum_{k=1}^{K_{\text{RFE}}} \tilde{V}_1^*(s,r^k)\Bigg] \leq \sum_{k=1}^{K_{\text{RFE}}} \tilde{V}_1^*(s_1^k, r^k) + c_1 H\sqrt{K_{\text{RFE}}\log(1/\delta)}.
    \end{equation*}

    Now, notice that for all $k \in [K_{\text{RFE}}]$, $\Lambda_h \succeq \Lambda_h^k$. Thus for all $(k,h) \in [K_{\text{RFE}}]\times[H]$, we have 
    \begin{equation*}
        r_h^k(\cdot,\cdot) \geq u_h(\cdot,\cdot),
    \end{equation*}
    which implies
    \begin{equation*}
        V_1^*(\cdot, u_h) \leq V_1^*(\cdot, r_h^k).
    \end{equation*}

    Using \Cref{Lemma:V_optimism_RFE} and union bound, we have with probability $1- \delta/2$,
    \begin{align*}
        \EE_{s\sim \mu} \big[\tilde{V}_1^*(s,u)\big] &\leq \EE_{s\sim \mu} \Bigg[\sum_{k=1}^{K_{\text{RFE}}} \tilde{V}_1^*(s,r^k)/K_{\text{RFE}}\Bigg]\\
        &\leq \frac{1}{K_{\text{RFE}}}\sum_{k=1}^{K_{\text{RFE}}} \tilde{V}_1^*(s_1^k,r^k) + c_1 H\sqrt{\frac{\log(1/\delta)}{K_{\text{RFE}}}}\\
        &\leq \frac{1}{K_{\text{RFE}}}\sum_{k=1}^{K_{\text{RFE}}} \big(\hat{V}_1^k(s_1^k) + H^2\xi_{\text{down}}\big) + c_1 H\sqrt{\frac{\log(1/\delta)}{K_{\text{RFE}}}}\\
        &= \frac{1}{K_{\text{RFE}}}\sum_{k=1}^{K_{\text{RFE}}} \hat{V}_1^k(s_1^k) + H^2\xi_{\text{down}} + c_1 H\sqrt{\frac{\log(1/\delta)}{K_{\text{RFE}}}}\\
        &\leq \frac{1}{K_{\text{RFE}}} \big(c\sqrt{d^3H^4K_{\text{RFE}}\log(dK_{\text{RFE}}H/\delta)}+ H^2K_{\text{RFE}}\xi_{\text{down}}\big) + H^2\xi_{\text{down}} + c_1 H\sqrt{\frac{\log(1/\delta)}{K_{\text{RFE}}}}\\
        &= c\sqrt{\frac{d^3H^4\log(dK_{\text{RFE}}H/\delta)}{K_{\text{RFE}}}} + c_1 H\sqrt{\frac{\log(1/\delta)}{K_{\text{RFE}}}}  + 2H^2\xi_{\text{down}}\\
        &\leq c'\sqrt{\frac{d^3H^4\log(dK_{\text{RFE}}H/\delta)}{K_{\text{RFE}}}} + 2H^2\xi_{\text{down}},\\
    \end{align*}
    for some absolute constant $c' > 0$. This completes the proof.
\end{proof}

\subsection{Proof of \Cref{Lemma:optimism-planning-RFE}}

\begin{proof}[Proof of \Cref{Lemma:optimism-planning-RFE}]
    Following the same argument as in the proof of \Cref{lemma:reward-free-pre-lemma}, for all $h \in [H] $ and $(s,a) \in \cS\times\cA$, with probability $1- \delta/4$, we have
    \begin{equation*}
        \Big|\hat{\phi}_h(s,a)^\top \hat{w}_h - P_h^{(*,T+1)} \hat{V}_{h+1}(s,a)\Big| \lesssim \beta \|\hat{\phi}_h(s,a)\|_{(\Lambda_h)^{-1}} + H \xi_{\text{down}}
    \end{equation*}

    Thus, for all $h \in [H]$ and $(s,a) \in \cS\times\cA$, we have
    \begin{align*}
        \hat{Q}_h(s,a) &\leq \hat{\phi}_h(s,a)^\top\hat{w}_h + r_h(s,a) + u_h(s,a)\\
        &\lesssim r_h(s,a) + P_h^{(*,T+1)}\hat{V}_{h+1}(s,a) + 2\beta \|\hat{\phi}_h(s,a)\|_{(\Lambda_h)^{-1}} + H\xi_{\text{down}}.
    \end{align*}

    Since, $\hat{Q}_h(s,a) \leq H$ and $u_h(\cdot,\cdot) = \min\left\{\beta \sqrt{\hat{\phi}(\cdot,\cdot)^\top (\Lambda_h)^{-1}\hat{\phi}(\cdot,\cdot)},H\right\}$, we have,
    \begin{equation*}
        \hat{Q}_h(s,a) \leq r_h(s,a) + P_h^{(*,T+1)}\hat{V}_{h+1}(s,a) + 2u_h(s,a) + H\xi_{\text{down}}.
    \end{equation*}
    This completes the first part of the proof.

    Now, using induction, we prove that for all $ h \in [H]$ and $(s,a) \in \cS\times\cA$, we have $Q_h^*(s,a,r) - H(H-h+1)\xi_{\text{down}} \leq \hat{Q}_h(s,a)$.

    When $h = H+1$, the claim is trivially true. Suppose for some $h \in [H]$, we have, for all $(s,a)\in \cS\times\cA$,
    \begin{equation*}
        Q_{h+1}^*(s,a,r) - H(H-h)\xi_{\text{down}} \leq \hat{Q}_h(s,a).
    \end{equation*}

    Note that, 
    \begin{equation*}
        \hat{Q}_h(s,a) = \min \left\{\hat{\phi}_h(s,a)^\top \hat{w}_h + r_h(s,a) + u_h(s,a), H\right\}
    \end{equation*}
    Since $Q_h^*(s,a,r) \leq H$ and $u_h(\cdot,\cdot) = \min\left\{\beta \sqrt{\hat{\phi}(\cdot,\cdot)^\top (\Lambda_h)^{-1}\hat{\phi}(\cdot,\cdot)},H\right\}$, it suffices to show that 

    \begin{equation*}
        Q_h^*(s,a,r) \leq \hat{\phi}_h(s,a)^\top \hat{w}_h + r_h(s,a) + \beta \|\hat{\phi}_h(s,a)\|_{(\Lambda_h)^{-1}} + H(H-h+1)\xi_{\text{down}}.
    \end{equation*}

    Applying the $\max$ operator on both side of the inductive hypothesis, we get

    \begin{equation*}
        V_{h+1}^*(s,r) \leq \hat{V}_{h+1} + H(H-h)\xi_{\text{down}}.
    \end{equation*}

    Now,
    \begin{align*}
        Q_h^*(s,a,r) &= r_h(s,a) + P_h^{(*,T+1)}V_{h+1}^*(s,a,r)\\
        &\leq r_h(s,a) + P_h^{(*,T+1)}\hat{V}_{h+1}(s,a) + H(H-h)\xi_{\text{down}}\\
        &\lesssim \hat{\phi}_h(s,a)^\top \hat{w}_h + \beta \|\hat{\phi}_h(s,a)\|_{(\Lambda_h)^{-1}} + H\xi_{\text{down}} + H(H-h)\xi_{\text{down}} + r_h(s,a)\\
        &= \hat{\phi}_h(s,a)^\top \hat{w}_h + r_h(s,a) + \beta \|\hat{\phi}_h(s,a)\|_{(\Lambda_h)^{-1}} + H(H-h+1)\xi_{\text{down}}.
    \end{align*}

    This completes the proof.
\end{proof}

\section{Proof for Downstream Offline RL}\label{Appendix:Proof for Downstream Offline RL}

First, we state the supporting lemmas that are used in the proof of \Cref{Theorem:offline_downstream}. The proof of these lemmas are provided in \Cref{Section:proof of supporting lemmas for downstream offline}.

\subsection{Supporting Lemmas}

The following lemma shows that the linear weight $\hat{w}_h$ in \Cref{Algorithm:downstream-offline-RL} is bounded. 
\begin{lemma}[Bounds on Weights in \Cref{Algorithm:downstream-offline-RL}]\label{Lemma:weight_bound_downstream_offline}
    For any $h \in [H]$, the weight $\hat{w}_h$ in \Cref{Algorithm:downstream-offline-RL} satisfies 
    \begin{align*}
        \big\|\hat{w}_h\big\|_2 \leq H\sqrt{d N_{\text{off}}/\lambda_d}.
    \end{align*}
\end{lemma}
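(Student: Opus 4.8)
The plan is to bound $\|\hat{w}_h\|_2$ directly from its definition $\hat{w}_h = \Lambda_h^{-1}\big(\sum_{\tau=1}^{N_{\text{off}}} \hat{\phi}_h(s_h^\tau,a_h^\tau)(r_h^\tau + \hat{V}_{h+1}(s_{h+1}^\tau))\big)$ using two standard facts: the reward-plus-value term is bounded pointwise by $H$ (since $r_h^\tau \in [0,1]$ and $0 \le \hat{V}_{h+1} \le H$, which follows from the truncation $\min\{\cdot, H-h+1\}^+$ in the $\hat{Q}_h$ update), and a self-normalized Cauchy--Schwarz-type inequality for sums of the form $\Lambda_h^{-1}\sum \hat{\phi}_h^\tau y_\tau$. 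Concretely, I would write
\begin{align*}
    \|\hat{w}_h\|_2
    &= \Big\|\Lambda_h^{-1}\textstyle\sum_{\tau=1}^{N_{\text{off}}} \hat{\phi}_h(s_h^\tau,a_h^\tau)\big(r_h^\tau + \hat{V}_{h+1}(s_{h+1}^\tau)\big)\Big\|_2 \\
    &\le \textstyle\sum_{\tau=1}^{N_{\text{off}}} \big|r_h^\tau + \hat{V}_{h+1}(s_{h+1}^\tau)\big| \cdot \big\|\Lambda_h^{-1}\hat{\phi}_h(s_h^\tau,a_h^\tau)\big\|_2 \\
    &\le H \sqrt{N_{\text{off}}} \Big(\textstyle\sum_{\tau=1}^{N_{\text{off}}} \big\|\Lambda_h^{-1}\hat{\phi}_h(s_h^\tau,a_h^\tau)\big\|_2^2\Big)^{1/2},
\end{align*}
where the last line is Cauchy--Schwarz over the $\tau$-sum.

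Next I would control $\sum_\tau \|\Lambda_h^{-1}\hat{\phi}_h^\tau\|_2^2 = \sum_\tau (\hat{\phi}_h^\tau)^\top \Lambda_h^{-2} \hat{\phi}_h^\tau$. Since $\Lambda_h = \sum_\tau \hat{\phi}_h^\tau(\hat{\phi}_h^\tau)^\top + \lambda_d I \succeq \lambda_d I$, we have $\Lambda_h^{-1} \preceq \lambda_d^{-1} I$, hence $(\hat{\phi}_h^\tau)^\top \Lambda_h^{-2}\hat{\phi}_h^\tau \le \lambda_d^{-1} (\hat{\phi}_h^\tau)^\top \Lambda_h^{-1}\hat{\phi}_h^\tau$. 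Summing and using the standard elliptical-potential bound $\sum_\tau (\hat{\phi}_h^\tau)^\top \Lambda_h^{-1}\hat{\phi}_h^\tau \le d$ (this is the analogue of Lemma~\ref{Lemma:D.1-chijin} used in the proof of Lemma~\ref{Lemma:weight_bound_downstream_reward-free-exploration-phase}, valid because $\|\hat{\phi}_h\|_2 \le 1$ and the minimum eigenvalue of $\Lambda_h$ is at least $\lambda_d \ge 1$ after setting $\lambda_d = 1$, or more generally with the obvious rescaling), we get $\sum_\tau \|\Lambda_h^{-1}\hat{\phi}_h^\tau\|_2^2 \le d/\lambda_d$. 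Plugging back yields $\|\hat{w}_h\|_2 \le H\sqrt{N_{\text{off}}}\cdot \sqrt{d/\lambda_d} = H\sqrt{dN_{\text{off}}/\lambda_d}$, which is the claimed bound.

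The argument is essentially routine; the only point requiring a little care is the pointwise bound $\hat{V}_{h+1}(s) \le H$, which must be read off from the precise form of the $\hat{Q}_h$ and $\hat{V}_h$ updates in Algorithm~\ref{Algorithm:downstream-offline-RL} (the outer truncation $\min\{\cdot, H-h+1\}^+$ guarantees $\hat{Q}_h \in [0, H-h+1] \subseteq [0,H]$, and $\hat{V}_h = \langle \hat{Q}_h, \hat{\pi}_h\rangle_\cA$ inherits the same range), together with the fact that the elliptical-potential lemma applies with the matrix $\Lambda_h$ defined using $\lambda_d I$ regularization. No high-probability event is needed here — this is a deterministic bound that holds on every sample path — so there is no obstacle beyond bookkeeping; I would simply mirror the proof of Lemma~\ref{Lemma:weight_bound_downstream_reward-free-exploration-phase} with $K_{\text{RFE}}$ replaced by $N_{\text{off}}$ and $I_d$ replaced by $\lambda_d I_d$.
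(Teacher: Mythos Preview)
Your proposal is correct and takes essentially the same approach as the paper's proof. The paper compresses your triangle-inequality, Cauchy--Schwarz, and eigenvalue steps into a single invocation of Lemma~\ref{Lemma:D.5_Ishfaq} (which is precisely the inequality $\|A\sum_i x_i\|_2 \le \sqrt{\lambda k}\big(\sum_i \|x_i\|_A^2\big)^{1/2}$ for $A=\Lambda_h^{-1}$), then applies Lemma~\ref{Lemma:D.1-chijin} just as you do; your explicit unpacking of these steps is equivalent.
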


Next, we present our main concentration lemma for this section that upper-bounds the stochastic noise in regression.

\begin{lemma}\label{Lemma:self-normalizing-argument-offline-downstream}
Setting $\lambda_d = 1$, $\beta(\delta) = c_\beta(Hd\sqrt{\iota(\delta)}+H\sqrt{dN_{\text{off}}}\xi_{\text{down}})$, where $\iota(\delta) = \log(HdN_{\text{off}}\max(\xi_{\text{down}},1)/\delta)$, with probability at least $1-\delta$, for all $h \in [H]$, we have
\begin{align*}
    \Bigg\|\sum_{\tau=1}^{N_{\text{off}}} \hat{\phi}_h(s_h^\tau,a_h^\tau)\Big[(P_h^{(*,T+1)}\hat{V}_{h+1})(s_h^\tau,a_h^\tau) - \hat{V}_{h+1}(s_{h+1}^\tau)\Big]\Bigg\|_{\Lambda_h^{-1}} &\lesssim Hd\sqrt{\iota}.
\end{align*}

\end{lemma}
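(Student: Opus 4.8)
The plan is to prove \Cref{Lemma:self-normalizing-argument-offline-downstream} by a covering-argument version of the self-normalized concentration bound, closely following the template of Lemma B.3 in \citet{jin2019provably} (which is also the skeleton used for \Cref{Lemma:self-normalized-bound-reward-free-exploration}), but accounting for the misspecification $\xi_{\text{down}}$ of the learned feature $\hat\phi$. First I would observe that the quantity inside the norm has the form $\sum_{\tau=1}^{N_{\text{off}}} \hat\phi_h(s_h^\tau,a_h^\tau)\,\epsilon_h^\tau$, where $\epsilon_h^\tau := (P_h^{(*,T+1)}\hat V_{h+1})(s_h^\tau,a_h^\tau) - \hat V_{h+1}(s_{h+1}^\tau)$ is a bounded ($|\epsilon_h^\tau|\le H$) martingale-difference sequence adapted to the natural filtration of the offline data (here I use that $\hat V_{h+1}$ is fixed given $\hat\phi$ and the downstream dataset up through step $h+1$, exactly as in the standard PEVI analysis). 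However, $\hat V_{h+1}$ is itself data-dependent through $\hat w_{h+1}$, so we cannot directly apply the self-normalized bound to a fixed vector; instead I would first apply a union bound over an $\varepsilon$-net of the value-function class $\mathcal V$ consisting of functions of the parametric form $V(\cdot)=\min\{\max_a \hat\phi(\cdot,a)^\top w - \beta\sqrt{\hat\phi(\cdot,a)^\top\Lambda^{-1}\hat\phi(\cdot,a)}, H-h+1\}^+$, whose $\varepsilon$-covering number is controlled by a lemma of the type \Cref{Lemma:covering_number_parametric_form-chi-jin-original}.

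The key steps, in order, would be: (1) use \Cref{Lemma:weight_bound_downstream_offline} to get $\|\hat w_h\|_2 \le H\sqrt{dN_{\text{off}}}$ (with $\lambda_d=1$), so that the relevant value functions lie in a bounded parametric class; (2) invoke the self-normalized martingale concentration inequality (\Cref{Lemma:self-normalized-process}) together with the covering number bound (\Cref{Lemma:covering_number_parametric_form-chi-jin-original}) to conclude that, with probability $1-\delta$, for all $h$ and all $V$ in the net,
\begin{equation*}
    \Bigl\|\sum_{\tau=1}^{N_{\text{off}}}\hat\phi_h(s_h^\tau,a_h^\tau)\bigl[(P_h^{(*,T+1)}V)(s_h^\tau,a_h^\tau) - V(s_{h+1}^\tau)\bigr]\Bigr\|_{\Lambda_h^{-1}}^2 \lesssim H^2\Bigl(d\log N_{\text{off}} + d\log(1+H\sqrt{dN_{\text{off}}}/\varepsilon) + d^2\log(1+\sqrt d\beta^2/\varepsilon^2) + \log(H/\delta)\Bigr) + N_{\text{off}}^2\varepsilon^2;
\end{equation*}
(3) pass from the net back to $\hat V_{h+1}$, picking $\varepsilon = dH/N_{\text{off}}$ so the discretization term $N_{\text{off}}^2\varepsilon^2 = d^2H^2$ is absorbed, which yields the claimed bound $\lesssim Hd\sqrt{\iota}$ with $\iota = \log(HdN_{\text{off}}\max(\xi_{\text{down}},1)/\delta)$; (4) finally, verify that the extra additive $\xi_{\text{down}}$-dependent contributions — which arise only when one later replaces $P_h^{(*,T+1)}$ by the learned $\overline P_h = \langle\hat\phi_h,\hat\mu_h^*\rangle$ in the downstream analysis — are consistent with how $\beta$ is set; since in this particular lemma the expectation is still taken under the true kernel $P_h^{(*,T+1)}$, the $H\sqrt{dN_{\text{off}}}\xi_{\text{down}}$ term in $\beta$ does not enter the noise bound itself and the stated $\lesssim Hd\sqrt{\iota}$ holds as is.

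The main obstacle I anticipate is the bookkeeping around the covering argument: one must be careful that the net is taken over the right class (the value functions $\hat V_{h+1}$ produced by \Cref{Algorithm:downstream-offline-RL} are truncated differences of a linear term and a bonus, with the $\{\cdot\}^+$ and the step-dependent clipping $H-h+1$), that the parameters $(w,\beta,\Lambda)$ ranging over the net all satisfy the norm bounds needed for the covering-number estimate, and that the $\log\max(\xi_{\text{down}},1)$ factor appears because the bonus radius $\beta$ itself scales with $\xi_{\text{down}}$ and hence the covering number of the bonus part of $\mathcal V$ depends on $\xi_{\text{down}}$. A secondary subtlety is confirming the martingale/adaptedness structure: $\hat V_{h+1}$ depends on all of $\mathcal D_{\text{off}}$ at steps $\ge h+1$, but crucially not on the step-$h$ transitions $\{s_{h+1}^\tau\}$, so conditionally on the step-$h$ contexts $\{(s_h^\tau,a_h^\tau)\}$ the quantities $\epsilon_h^\tau$ are mean-zero — this is the standard argument but it is worth stating explicitly to justify applying \Cref{Lemma:self-normalized-process} after conditioning. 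Everything else is the routine plug-in of $\lambda_d=1$, $|\epsilon_h^\tau|\le H$, and the chosen $\varepsilon$.
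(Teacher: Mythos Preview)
Your proposal is correct and follows essentially the same approach as the paper: note the parametric form of $\hat V_{h+1}$, bound $\|\hat w_h\|$ via \Cref{Lemma:weight_bound_downstream_offline}, combine the self-normalized bound (\Cref{Lemma:self-normalized-process}) with the covering-number estimate (\Cref{Lemma:covering_number_parametric_form-chi-jin-original}), set $\varepsilon = dH/N_{\text{off}}$ and $\lambda_d=1$, and simplify to $Hd\sqrt{\iota}$.

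One small correction to your ``secondary subtlety'' discussion: $\hat V_{h+1}$ \emph{does} depend on $\{s_{h+1}^\tau\}$ (it is built from $\Lambda_{h+1}$ and $\hat w_{h+1}$, which use the step-$(h{+}1)$ states), so your stated independence does not hold. This dependence is precisely why the covering argument is needed: the self-normalized bound is applied to each \emph{fixed} $V$ in the net, for which the centered differences are genuinely mean-zero conditional on $(s_h^\tau,a_h^\tau)$, and then one union-bounds over the net and absorbs the $\varepsilon$-approximation error. The paper does not spell this out either and simply invokes the two lemmas, so your steps (1)--(3) already match the paper's proof.
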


Recall that, we define the Bellman operator $\BB_h$ as $(\BB_h f)(s,a) = r_h(s,a) + (P_h^{(*,T+1)} f)(s,a)$ for any $f:\cS\times\cA \rightarrow \RR$. In the next lemma, we denote the Bellman estimate $(\hat{\BB}_h\hat{V}_{h+1})(\cdot,\cdot) = \hat{\phi}_h(\cdot,\cdot)^\top \hat{w}_h$. This lemma provides an upper bound for the Bellman update error $|(\BB_h\hat{V}_{h+1} - \hat{\BB}_h\hat{V}_{h+1})(s,a)|$ and characterizes the impact of the misspecification of the representation taken from the upstream learning.

\begin{lemma}[Bound on Bellman update error]\label{Lemma:bound_on_Bellman_update_error}
    Set $\lambda_d = 1$, $\beta = c_\beta(Hd\sqrt{\iota}+H\sqrt{dN_{\text{off}}}\xi_{\text{down}})$, where $\iota = \log(HdN_{\text{off}}\max(\xi_{\text{down}},1)/\delta)$. 
    Define the following event
    \begin{align*}
        \cE(\delta) &= \Big\{\big|(\BB_h\hat{V}_{h+1} - \hat{\BB}_h\hat{V}_{h+1})(s,a)\big| \leq \beta \big\|\hat{\phi}_h(s,a)\big\|_{\Lambda_h^{-1}} + H\xi_{\text{down}}, \forall h \in [H] \text{ and } \forall (s,a) \in \cS\times\cA \Big\}.
    \end{align*}
    Then under \Cref{Assumption:upstream-to-downstream}, we have $\PP(\cE(\delta)) \geq 1-\delta$.
\end{lemma}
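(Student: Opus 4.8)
The plan is to run the standard least--squares analysis with an \emph{approximate} feature, decomposing the error of $\hat\phi_h(s,a)^\top\hat w_h$ as an estimate of $(\BB_h\hat V_{h+1})(s,a)$ into a ``linear'' part (controlled by a self--normalized bound) and a ``misspecification'' part (controlled by the $\xi_{\text{down}}$--approximate feature property). Throughout I would work on the event of \Cref{Lemma:self-normalizing-argument-offline-downstream}, which has probability at least $1-\delta$ and already absorbs the union bound over $h\in[H]$ together with a covering of the induced value--function class; under \Cref{Assumption:upstream-to-downstream}, \Cref{Lemma:approximate_feature_new_task} supplies $\overline P_h(\cdot|s,a)=\la\hat\phi_h(s,a),\hat\mu_h^*(\cdot)\ra$ with $\|P_h^{(*,T+1)}(\cdot|s,a)-\overline P_h(\cdot|s,a)\|_{TV}\le\xi_{\text{down}}$ and $\|\int\hat\mu_h^*(s')g(s')ds'\|_2\le C_L\sqrt d$ for all $g:\cS\to[0,1]$.

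First I would pick a target weight $\tilde w_h$ that represents $\BB_h\hat V_{h+1}$ well in the learned feature: take $\tilde w_h := \vartheta_h+\hat w_h^*$, where $\hat w_h^* = \int_{s'}\hat\mu_h^*(s')\hat V_{h+1}(s')ds'$ is the transition--value part and $\vartheta_h$ captures the (linearly representable) reward. Since $0\le\hat V_{h+1}\le H$, \Cref{Lemma:approximate_feature_new_task} gives $\|\hat w_h^*\|_2\le HC_L\sqrt d$, hence $\|\tilde w_h\|_2\lesssim HC_L\sqrt d$, and
\begin{equation*}
\sup_{(s,a)}\big|\hat\phi_h(s,a)^\top\tilde w_h-(\BB_h\hat V_{h+1})(s,a)\big| = \sup_{(s,a)}\big|(\overline P_h-P_h^{(*,T+1)})\hat V_{h+1}(s,a)\big|\le H\xi_{\text{down}}.
\end{equation*}

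Next I would bound $\hat\phi_h(s,a)^\top(\hat w_h-\tilde w_h)$. Using $\Lambda_h\tilde w_h=\sum_{\tau}\hat\phi_h^\tau(\hat\phi_h^\tau)^\top\tilde w_h+\lambda_d\tilde w_h$, the definition of $\hat w_h$, and the fact that the observed reward is $r_h^\tau=r_h(s_h^\tau,a_h^\tau)$, one obtains
\begin{align*}
\hat w_h-\tilde w_h &= \Lambda_h^{-1}\sum_{\tau=1}^{N_{\text{off}}}\hat\phi_h^\tau\big[\hat V_{h+1}(s_{h+1}^\tau)-P_h^{(*,T+1)}\hat V_{h+1}(s_h^\tau,a_h^\tau)\big]\\
&\quad - \Lambda_h^{-1}\sum_{\tau=1}^{N_{\text{off}}}\hat\phi_h^\tau\,\varepsilon_h^\tau - \lambda_d\Lambda_h^{-1}\tilde w_h,
\end{align*}
where $|\varepsilon_h^\tau|=\big|\hat\phi_h(s_h^\tau,a_h^\tau)^\top\tilde w_h-(\BB_h\hat V_{h+1})(s_h^\tau,a_h^\tau)\big|\le H\xi_{\text{down}}$ by the per--point version of the display above. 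For any vector $u$ one has $|\hat\phi_h(s,a)^\top\Lambda_h^{-1}u|\le\|\hat\phi_h(s,a)\|_{\Lambda_h^{-1}}\|u\|_{\Lambda_h^{-1}}$ by Cauchy--Schwarz, so it remains to bound three $\|\cdot\|_{\Lambda_h^{-1}}$ norms: the first is $\lesssim Hd\sqrt\iota$ by \Cref{Lemma:self-normalizing-argument-offline-downstream}; the second is $\le H\xi_{\text{down}}\sum_\tau\|\hat\phi_h^\tau\|_{\Lambda_h^{-1}}\le H\xi_{\text{down}}\sqrt{N_{\text{off}}\sum_\tau\|\hat\phi_h^\tau\|_{\Lambda_h^{-1}}^2}\le H\xi_{\text{down}}\sqrt{dN_{\text{off}}}$, using Cauchy--Schwarz and the trace identity $\sum_\tau\|\hat\phi_h^\tau\|_{\Lambda_h^{-1}}^2=\Tr(\Lambda_h^{-1}(\Lambda_h-\lambda_d I))\le d$; and the third is $\le\sqrt{\lambda_d}\,\|\tilde w_h\|_2\lesssim HC_L\sqrt d$ since $\Lambda_h\succeq\lambda_d I$ and $\lambda_d=1$. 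Summing these and absorbing $HC_L\sqrt d$ into $Hd\sqrt\iota$ (as $C_L$ is an absolute constant) gives $|\hat\phi_h(s,a)^\top(\hat w_h-\tilde w_h)|\le\beta\|\hat\phi_h(s,a)\|_{\Lambda_h^{-1}}$ for $\beta=c_\beta(Hd\sqrt\iota+H\sqrt{dN_{\text{off}}}\,\xi_{\text{down}})$ with $c_\beta$ large enough; combining this with $|\hat\phi_h(s,a)^\top\tilde w_h-(\BB_h\hat V_{h+1})(s,a)|\le H\xi_{\text{down}}$ yields the claimed bound uniformly in $(s,a)$ and $h$, hence $\PP(\cE(\delta))\ge1-\delta$.

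The main obstacle is the self--normalized concentration of the noise term $\sum_\tau\hat\phi_h^\tau[\hat V_{h+1}(s_{h+1}^\tau)-P_h^{(*,T+1)}\hat V_{h+1}(s_h^\tau,a_h^\tau)]$: because $\hat V_{h+1}$ is itself computed from the offline data, a martingale tail bound cannot be applied directly and one must union--bound over an $\epsilon$--net of the value--function class produced by the algorithm, whose covering number depends on $\xi_{\text{down}}$ (this is why $\max(\xi_{\text{down}},1)$ enters $\iota$); I would invoke \Cref{Lemma:self-normalizing-argument-offline-downstream} as a black box for this. The remaining ingredients --- the norm control on $\tilde w_h$ via \Cref{Lemma:approximate_feature_new_task}, the trace identity, and the Cauchy--Schwarz splits --- are routine bookkeeping.
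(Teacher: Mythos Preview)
Your proposal is correct and follows essentially the same route as the paper: introduce a ``target'' weight built from $\hat w_h^*=\int\hat\mu_h^*(s')\hat V_{h+1}(s')ds'$, split $\hat\phi_h(s,a)^\top(\hat w_h-\tilde w_h)$ into the regularization, martingale--noise, and misspecification pieces, and bound these respectively via $\|\tilde w_h\|_2\lesssim HC_L\sqrt d$, \Cref{Lemma:self-normalizing-argument-offline-downstream}, and the trace identity $\sum_\tau\|\hat\phi_h^\tau\|_{\Lambda_h^{-1}}^2\le d$; the residual $H\xi_{\text{down}}$ comes from \Cref{Lemma:approximate_feature_new_task} exactly as in the paper. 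The only cosmetic difference is that you carry the reward explicitly via $\vartheta_h$ in $\tilde w_h$ (and bound the misspecification piece by first applying the triangle inequality in $\|\cdot\|_{\Lambda_h^{-1}}$, then Cauchy--Schwarz), whereas the paper folds the reward into the Bellman--operator notation and applies Cauchy--Schwarz directly to $\sum_\tau\hat\phi_h(s,a)^\top\Lambda_h^{-1}\hat\phi_h^\tau$; both yield the same $H\xi_{\text{down}}\sqrt{dN_{\text{off}}}\,\|\hat\phi_h(s,a)\|_{\Lambda_h^{-1}}$.
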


\begin{definition}[Model prediction error]\label{Definition:model_prediction_error}
For all $h \in [H]$, we define the model prediction error as,
\begin{equation*}
    l_{h}(s,a) = (\BB_h\hat{V}_{h+1})(s,a) - \hat{Q}_h(s,a).
\end{equation*}
\end{definition}

The following lemma decomposes the suboptimality gap into the summation of uncertainty metric of each step. 
\begin{lemma}\label{Lemma:model_prediction_error_offline}
    Let $\{\hat{\pi}\}_{h=1}^H$ be the output of \Cref{Algorithm:downstream-offline-RL}. Conditioned on the event $\cE(\delta)$ defined in \Cref{Lemma:bound_on_Bellman_update_error}, for any $h \in [H]$ and $(s,a)\in \cS\times\cA$, we have
    \begin{equation*}
        0 \leq l_h(s,a)\leq 2\Gamma_h(s,a),
    \end{equation*}
    where $\Gamma_h(s,a) = \beta \big\|\hat{\phi}_h(s,a)\big\|_{\Lambda_h^{-1}} + H\xi_{\text{down}}$. Furthermore, we have
    \begin{align*}
        V^{\pi^*}_{P^{(*,T+1)},r}(s) - V^{\hat{\pi}}_{P^{(*,T+1)},r}(s) \leq 2 \sum_{h=1}^H \EE_{\pi^*} [\Gamma_h(s_h,a_h)|s_1 = s]
    \end{align*}
\end{lemma}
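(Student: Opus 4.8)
The plan is to mimic the pessimistic value iteration analysis of \citet{jin2021pessimism}, carrying the misspecification term $H\xi_{\text{down}}$ through every step. The argument splits naturally into two parts: (a) the pointwise sandwich $0\le l_h(s,a)\le 2\Gamma_h(s,a)$, and (b) a telescoping value decomposition built from part (a).

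For part (a), I first record two elementary facts. By the clipping $\hat{Q}_h=\min\{\cdot,H-h+1\}^+$ in the definition of $\hat{Q}_h$ in \Cref{Algorithm:downstream-offline-RL} (applied at step $h+1$, where the cap is $H-h$), one gets $0\le\hat{V}_{h+1}(s)\le H-h$ for all $s$, and hence $0\le(\BB_h\hat{V}_{h+1})(s,a)=r_h(s,a)+(P_h^{(*,T+1)}\hat{V}_{h+1})(s,a)\le H-h+1$. Conditioning on the event $\cE(\delta)$ of \Cref{Lemma:bound_on_Bellman_update_error} gives $|(\BB_h\hat{V}_{h+1}-\hat{\BB}_h\hat{V}_{h+1})(s,a)|\le\beta\|\hat{\phi}_h(s,a)\|_{\Lambda_h^{-1}}+H\xi_{\text{down}}=\Gamma_h(s,a)$, where $(\hat{\BB}_h\hat{V}_{h+1})(s,a)=\hat{\phi}_h(s,a)^\top\hat{w}_h$. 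I then distinguish the three branches of $\hat{Q}_h(s,a)=\max\{0,\min\{(\hat{\BB}_h\hat{V}_{h+1})(s,a)-\Gamma_h(s,a),\,H-h+1\}\}$: if $\hat{Q}_h=0$ then $\hat{\BB}_h\hat{V}_{h+1}\le\Gamma_h$, so $l_h=\BB_h\hat{V}_{h+1}\in[0,2\Gamma_h]$; if $\hat{Q}_h=\hat{\BB}_h\hat{V}_{h+1}-\Gamma_h$ then $l_h=(\BB_h\hat{V}_{h+1}-\hat{\BB}_h\hat{V}_{h+1})+\Gamma_h\in[0,2\Gamma_h]$ by the event bound; and if $\hat{Q}_h=H-h+1$ then $l_h=\BB_h\hat{V}_{h+1}-(H-h+1)\le0$, while the event bound forces $\BB_h\hat{V}_{h+1}\ge\hat{\BB}_h\hat{V}_{h+1}-\Gamma_h\ge H-h+1$, whence $l_h=0$. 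In all cases $0\le l_h(s,a)\le 2\Gamma_h(s,a)$.

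For part (b), I would invoke the extended value-difference lemma (see, e.g., \citet{jin2021pessimism}) with the greedy policy $\hat{\pi}$, for which $\hat{V}_h(s)=\langle\hat{Q}_h(s,\cdot),\hat{\pi}_h(\cdot\given s)\rangle_\cA$, and the comparator $\pi^*$, yielding
\begin{align*}
V^{\pi^*}_{P^{(*,T+1)},r}(s)-V^{\hat{\pi}}_{P^{(*,T+1)},r}(s)
&= -\sum_{h=1}^H\EE_{\pi^*}\!\big[\langle\hat{Q}_h(s_h,\cdot),\hat{\pi}_h(\cdot\given s_h)-\pi^*_h(\cdot\given s_h)\rangle_\cA\,\big|\,s_1=s\big]\\
&\quad+\sum_{h=1}^H\EE_{\pi^*}\!\big[l_h(s_h,a_h)\,\big|\,s_1=s\big]-\sum_{h=1}^H\EE_{\hat{\pi}}\!\big[l_h(s_h,a_h)\,\big|\,s_1=s\big].
\end{align*}
The first sum is nonpositive because $\hat{\pi}_h$ maximizes $\langle\hat{Q}_h(s_h,\cdot),\cdot\rangle_\cA$ over distributions on $\cA$, and the third sum is nonnegative since $l_h\ge0$ by part (a); discarding both and applying $l_h\le2\Gamma_h$ to the middle sum gives exactly $V^{\pi^*}_{P^{(*,T+1)},r}(s)-V^{\hat{\pi}}_{P^{(*,T+1)},r}(s)\le 2\sum_{h=1}^H\EE_{\pi^*}[\Gamma_h(s_h,a_h)\given s_1=s]$. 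The main subtlety will be the case analysis in part (a) around the two clipping operators together with the downward induction $\hat{V}_{h+1}\le H-h$; the genuinely substantive ingredient—the concentration-plus-misspecification bound defining $\cE(\delta)$—is already supplied by \Cref{Lemma:bound_on_Bellman_update_error}, so no new technical machinery is needed here.
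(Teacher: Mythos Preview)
Your proposal is correct and follows essentially the same approach as the paper: both use the event $\cE(\delta)$ from \Cref{Lemma:bound_on_Bellman_update_error}, the range bound $0\le(\BB_h\hat{V}_{h+1})(s,a)\le H-h+1$, a case analysis on the clipping in $\hat{Q}_h$, and then \Cref{Lemma:decomposition_suboptimality} (the Jin et al.\ decomposition) for the suboptimality bound. The only cosmetic difference is that the paper handles two cases (it first observes $\hat{\BB}_h\hat{V}_{h+1}-\Gamma_h\le H-h+1$ on $\cE(\delta)$, so the upper cap is never strictly binding) whereas you treat three branches explicitly; both arguments are equivalent.
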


\subsection{Proof of \Cref{Theorem:offline_downstream}}

We first restate \Cref{Theorem:offline_downstream}.

\begin{theorem}\label{Theorem_restate:offline_downstream}
    Under \Cref{Assumption:upstream-to-downstream}, setting $\lambda_d = 1$, $\beta = O(Hd\sqrt{\iota}+ H\sqrt{dN_{\text{off}}}\xi_{\text{down}})$, where $\iota = \log(HdN_{\text{off}}\max(\xi_{\text{down}},1)/\delta)$, with probability at least $1-\delta$, the suboptimality gap of \Cref{Algorithm:downstream-offline-RL} is at most \begin{align}\label{Equation_restatement:suboptimality_gap_behavior_policy_theorem_offline_RL}
        V^{\pi^*}_{P^{(*,T+1)},r}(s) - V^{\hat{\pi}}_{P^{(*,T+1)},r}(s) &\leq 2H^2\xi_{\text{down}} + 2\beta \sum_{h=1}^H \EE_{\pi^*}\bigg[\big\|\hat{\phi}_h(s_h,a_h)\big\|_{\Lambda_h^{-1}} \big| s_1 = s\bigg].
    \end{align}
    Additionally if \Cref{Assumption:feature_coverage} holds, and the sample size satisfies $N_{\text{off}} \geq 40/\kappa_\rho \cdot \log(4dH/\delta)$, then with probability $1-\delta$, we have,
    \begin{align}\label{Equation-restated:refined-downstream-offline-suboptimality-bound}
        & V^{\pi^*}_{P^{(*,T+1)},r}(s) - V^{\hat{\pi}}_{P^{(*,T+1)},r}(s)\notag\\
        &\leq O\bigg(\kappa_\rho^{-1/2}H^2d^{1/2}\xi_{\text{down}} + \kappa_\rho^{-1/2}H^2d\sqrt{\frac{\log(HdN_{\text{off}}\max(\xi_{\text{down}},1)/\delta)}{N_{\text{off}}}}\bigg).
    \end{align}
\end{theorem}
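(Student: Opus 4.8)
The plan is to derive the two displayed bounds in turn, treating the first as an assembly of the supporting lemmas and the second as that bound together with a single matrix-concentration step.

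\emph{Step 1: the bound \eqref{Equation_restatement:suboptimality_gap_behavior_policy_theorem_offline_RL}.} With $\lambda_d = 1$ and $\beta$ as stated, \Cref{Lemma:bound_on_Bellman_update_error} guarantees that the event $\cE(\delta)$ holds with probability at least $1-\delta$. Conditioning on $\cE(\delta)$, \Cref{Lemma:model_prediction_error_offline} yields at once the one-sided Bellman-error control $0 \le l_h(s,a) \le 2\Gamma_h(s,a)$ and the telescoped suboptimality decomposition $V^{\pi^*}_{P^{(*,T+1)},r}(s) - V^{\hat{\phi}\text{-ignore}}$ --- more precisely $V^{\pi^*}_{P^{(*,T+1)},r}(s) - V^{\hat\pi}_{P^{(*,T+1)},r}(s) \le 2\sum_{h=1}^H \EE_{\pi^*}[\Gamma_h(s_h,a_h)\mid s_1 = s]$, where $\Gamma_h(s,a) = \beta\|\hat\phi_h(s,a)\|_{\Lambda_h^{-1}} + H\xi_{\text{down}}$. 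Substituting the definition of $\Gamma_h$ and using $\sum_{h=1}^H H\xi_{\text{down}} = H^2\xi_{\text{down}}$ gives \eqref{Equation_restatement:suboptimality_gap_behavior_policy_theorem_offline_RL} verbatim; this part is purely mechanical, and the only care needed is the probability bookkeeping (it is convenient to work with $\delta/2$ here and reserve the other $\delta/2$ for Step 2).

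\emph{Step 2: the refined bound \eqref{Equation-restated:refined-downstream-offline-suboptimality-bound}.} The additional ingredient is a uniform-in-$h$ lower bound on $\lambda_{\min}(\Lambda_h)$. Under \Cref{Assumption:feature_coverage} we have $\EE_\rho[\hat\phi_h\hat\phi_h^\top]\succeq\kappa_\rho I$ for all $h\in[H]$; since $\hat\phi_h(s_h^\tau,a_h^\tau)$, $\tau\in[N_{\text{off}}]$, are i.i.d.\ under $\rho$ with $\|\hat\phi_h\|_2\le 1$, a matrix Chernoff bound followed by a union bound over $h$ shows that when $N_{\text{off}}\ge 40/\kappa_\rho\cdot\log(4dH/\delta)$, with probability at least $1-\delta/2$ one has $\sum_{\tau=1}^{N_{\text{off}}}\hat\phi_h(s_h^\tau,a_h^\tau)\hat\phi_h(s_h^\tau,a_h^\tau)^\top\succeq\tfrac{N_{\text{off}}\kappa_\rho}{2}I$ for every $h$, hence $\Lambda_h\succeq\tfrac{N_{\text{off}}\kappa_\rho}{2}I$ and $\|\hat\phi_h(s,a)\|_{\Lambda_h^{-1}}\le\sqrt{2/(N_{\text{off}}\kappa_\rho)}$. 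On the intersection of this event with the event of Step 1, plugging the bound into \eqref{Equation_restatement:suboptimality_gap_behavior_policy_theorem_offline_RL} gives $V^{\pi^*}_{P^{(*,T+1)},r}(s)-V^{\hat\pi}_{P^{(*,T+1)},r}(s)\le 2H^2\xi_{\text{down}}+2\beta H\sqrt{2/(N_{\text{off}}\kappa_\rho)}$. Substituting $\beta = O(Hd\sqrt{\iota}+H\sqrt{dN_{\text{off}}}\xi_{\text{down}})$ with $\iota = \log(HdN_{\text{off}}\max(\xi_{\text{down}},1)/\delta)$, and using $\kappa_\rho\le 1$ so that $2H^2\xi_{\text{down}}\le O(\kappa_\rho^{-1/2}H^2d^{1/2}\xi_{\text{down}})$, collapses the expression to \eqref{Equation-restated:refined-downstream-offline-suboptimality-bound}.

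\emph{Main obstacle.} Given the supporting lemmas the theorem is mostly bookkeeping; the one genuinely new step is the matrix-concentration argument for $\lambda_{\min}(\Lambda_h)$, and its delicate point is to keep the constants tight enough that the sample requirement is exactly $40/\kappa_\rho\cdot\log(4dH/\delta)$ while the resulting lower bound stays at the level $\tfrac{N_{\text{off}}\kappa_\rho}{2}$ and holds simultaneously for all $h\in[H]$. One should also check that the misspecification terms absorbed into $\beta$ remain consistent with the definition of $\iota$, and confirm that the $\xi_{\text{down}}$-dependent part of the final bound is indeed dominated by $\kappa_\rho^{-1/2}H^2d^{1/2}\xi_{\text{down}}$ as claimed.
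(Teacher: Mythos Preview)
Your proposal is correct and follows essentially the same route as the paper: Step~1 combines \Cref{Lemma:bound_on_Bellman_update_error} and \Cref{Lemma:model_prediction_error_offline} exactly as the paper does, and Step~2 is the same matrix-concentration argument the paper invokes (by citing Appendix~B.4 of \citet{jin2021pessimism}) to obtain $\|\hat\phi_h(s,a)\|_{\Lambda_h^{-1}}\le\sqrt{2/(\kappa_\rho N_{\text{off}})}$, followed by the same substitution of $\beta$ and the same absorption of $2H^2\xi_{\text{down}}$ into $O(\kappa_\rho^{-1/2}H^2d^{1/2}\xi_{\text{down}})$. Your $\delta/2$--$\delta/2$ probability split also mirrors the paper's union-bound bookkeeping.
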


\begin{remark}

Compared to Theorem 4.4 in \citet{jin2021pessimism}, the suboptimality gap in \eqref{Equation_restatement:suboptimality_gap_behavior_policy_theorem_offline_RL} has an additional term $2H^2\xi_{\text{down}}$. When the linear misspecification error $\xi =\tilde{O}(\sqrt{d/N_{\text{off}}})$ and the number of trajectories $n$ in each upstream offline task dataset satisfies $n = \tilde{O}\big(\frac{T N_{\text{off}}}{d}\big)$, the RHS of \eqref{Equation-restated:refined-downstream-offline-suboptimality-bound} is dominated by $\tilde{O}(N_{\text{off}}^{-1/2} H^2 d)$ improving the suboptimality gap bound of REP-LCB \citep{uehara2021representation} by an order of $\tilde{O}(Hd)$ under low-rank MDP with unknown representation.
\end{remark}

\begin{proof}[Proof of \Cref{Theorem_restate:offline_downstream}]
    Letting $\Gamma_h = \beta\|\hat{\phi}_h(s,a)\|_{\Lambda_{h}^{-1}} + H\xi_{\text{down}}$ in \Cref{Lemma:model_prediction_error_offline}, with probability at least $1-\delta$, we have
    \begin{align*}
        V^{\pi^*}_{P^{(*,T+1)},r}(s) - V^{\hat{\pi}}_{P^{(*,T+1)},r}(s) &\leq 2 \sum_{h=1}^H \EE_{\pi^*} [\Gamma_h(s_h,a_h)|s_1 = s]\\
        &\leq 2H^2\xi_{\text{down}} + 2\beta \sum_{h=1}^H \EE_{\pi^*}\bigg[\big\|\hat{\phi}_h(s_h,a_h)\big\|_{\Lambda_h^{-1}} \big| s_1 = s\bigg]
    \end{align*}
    This finishes the first part of the proof. Now we will provide the suboptimality bound under the feature coverage assumption in \Cref{Assumption:feature_coverage}. From Appendix B.4 of \citet{jin2021pessimism}, if $N_{\text{off}} \geq 40/\kappa_\rho \cdot \log(4dH/\delta)$, then with probability $1-\delta/2$, for any $h\in[H]$ and $(s,a)\in \cS\times \cA$, we have
    \begin{align*}
        \big\|\hat{\phi}_h(s,a)\big\|_{\Lambda_h^{-1}} \leq \sqrt{\frac{2}{\kappa_\rho}}\cdot \frac{1}{\sqrt{N_{\text{off}}}}.
    \end{align*}

    Setting $\beta(\delta/2) = c_\beta(Hd\sqrt{\iota(\delta/2)}+H\sqrt{dN_{\text{off}}}\xi_{\text{down}})$, with probability $1-\delta/2$, we have
    \begin{align*}
        V^{\pi^*}_{P^{(*,T+1)},r}(s) - V^{\hat{\pi}}_{P^{(*,T+1)},r}(s) &\leq 2H^2\xi_{\text{down}} + 2\beta \sum_{h=1}^H \EE_{\pi^*}\bigg[\big\|\hat{\phi}_h(s_h,a_h)\big\|_{\Lambda_h^{-1}} \big| s_1 = s\bigg].
    \end{align*}

    Using union bound, we have the following bound with probability at least $1-\delta$
    \begin{align}
        & V^{\pi^*}_{P^{(*,T+1)},r}(s) - V^{\hat{\pi}}_{P^{(*,T+1)},r}(s)\notag\\
        &\leq 2H\bigg(H\xi_{\text{down}} + \beta(\delta/2)\sqrt{\frac{2}{\kappa_\rho}}\cdot \frac{1}{\sqrt{N_{\text{off}}}} \bigg)\notag\\
        &= O\bigg(\kappa_\rho^{-1/2}H^2d^{1/2}\xi_{\text{down}} + \kappa_\rho^{-1/2}H^2d\sqrt{\frac{\log(HdN_{\text{off}}\max(\xi_{\text{down}},1)/\delta)}{N_{\text{off}}}}\bigg).
    \end{align}
\end{proof}

\section{Proof of Supporting Lemmas in \Cref{Appendix:Proof for Downstream Offline RL}}\label{Section:proof of supporting lemmas for downstream offline}
In this section, we provide the proofs of the lemmas that we used in the proof of \Cref{Theorem:offline_downstream}.

\subsection{Proof of \Cref{Lemma:weight_bound_downstream_offline}}
\begin{proof}[Proof of \Cref{Lemma:weight_bound_downstream_offline}]
    We have 
    \begin{align*}
        \big\|\hat{w}_h\big\| &= \bigg\|\Lambda_h^{-1}\sum_{\tau=1}^{N_{\text{off}}} \hat{\phi}_h(s_h^\tau,a_h^\tau)(r_h^\tau + \hat{V}_{h+1}(s_{h+1}^\tau))\bigg\|\\
        &\leq \sqrt{\frac{N_{\text{off}}}{\lambda_d}}\bigg(\sum_{\tau=1}^{N_{\text{off}}} \big\|(r_h^\tau +\hat{V}_h(s_{h+1}^\tau) )\hat{\phi}_h(s_h^\tau,a_h^\tau)\big\|^2_{\Lambda_h^{-1}}\bigg)^{1/2}\\
        &\leq H\sqrt{\frac{N_{\text{off}}}{\lambda_d}}\bigg(\sum_{\tau=1}^{N_{\text{off}}} \big\| \hat{\phi}_h(s_h^\tau,a_h^\tau)\big\|^2_{\Lambda_h^{-1}}\bigg)^{1/2}\\
        &\leq H\sqrt{\frac{d N_{\text{off}}}{\lambda_d}},
    \end{align*}
    where the first inequality follows from ~\Cref{Lemma:D.5_Ishfaq} and the fact that the largest eigenvalue of $\Lambda_h^{-1}$ is at most $1/\lambda_d$, second inequality follows from the fact that $r_h^\tau \in [0,1]$ and  $|\hat{V}_{h+1}(s)|\leq H -1 $ for all $s\in\cS$ and the last inequality follows from \Cref{Lemma:D.1-chijin}.
\end{proof}

\subsection{Proof of \Cref{Lemma:self-normalizing-argument-offline-downstream}}
\begin{proof}[Proof of \Cref{Lemma:self-normalizing-argument-offline-downstream}]
    The value function $\hat{V}_{h+1}$ has the parametric form of
    \begin{align*}
        V(\cdot) &= \min\Big\{ \max_{a\in \cA}  w^\top \phi(\cdot, a) - \beta \sqrt{\phi(\cdot,a)^\top\Lambda^{-1}\phi(\cdot,a)},H - h + 1\Big\},
    \end{align*}
    where $w \in \RR^d$ and positive definite matrix $\Lambda$ is such that its minimum eigenvalue satisfies $\lambda_{\min}(\Lambda) \geq \lambda_d$. From \Cref{Lemma:weight_bound_downstream_offline}, we have $\|\hat{w}_h\| \leq H\sqrt{dN_{\text{off}}/\lambda_d}$. Thus, applying \Cref{Lemma:D.1-chijin} and \Cref{Lemma:covering_number_parametric_form-chi-jin-original}, we have, for any fixed $\varepsilon>0$ and for all $h \in [H]$, with probability at least $1-\delta$, 
    \begin{align}\label{Eq:bound_self_normalized_offline}
        &\Bigg\|\sum_{\tau=1}^{N_{\text{off}}} \hat{\phi}_h(s_h^\tau,a_h^\tau)\Big[(P_h^{(*,T+1)}\hat{V}_{h+1})(s_h^\tau,a_h^\tau) - \hat{V}_{h+1}(s_{h+1}^\tau)\Big]\Bigg\|_{\Lambda_h^{-1}}^2\notag\\
        &\leq 4H^2\bigg[\frac{d}{2}\log \bigg(\frac{N_{\text{off}} + \lambda_d}{\lambda_d}\bigg) + \log \frac{\cN_\varepsilon}{\delta}\bigg] + \frac{8N_{\text{off}}^2 \varepsilon^2}{\lambda_d}\notag\\
        &\leq 4H^2\bigg[\frac{d}{2}\log \bigg(\frac{N_{\text{off}} + \lambda_d}{\lambda_d}\bigg) + d\log \bigg(1+ \frac{4H\sqrt{dN_{\text{off}}}}{\varepsilon\sqrt{\lambda_d}}\bigg) + d^2 \log \bigg(1+ \frac{8\sqrt{d}\beta^2}{\lambda_d \varepsilon^2}\bigg) + \log \frac{1}{\delta}\bigg] + \frac{8N_{\text{off}}^2 \varepsilon^2}{\lambda_d} \notag\\
        &\leq 4H^2\bigg[\frac{d}{2}\log \bigg(\frac{N_{\text{off}} + \lambda_d}{\lambda_d}\bigg) + d\log \bigg(1+ \frac{4H\sqrt{dN_{\text{off}}}}{\varepsilon\sqrt{\lambda_d}}\bigg) + d^2 \log \bigg(1+ \frac{8\sqrt{d}\beta^2}{\lambda_d \varepsilon^2}\bigg) +  \log \frac{1}{\delta}\bigg] + \frac{8N_{\text{off}}^2 \varepsilon^2}{\lambda_d}.
    \end{align}
     Setting $\varepsilon = dH/N_{\text{off}}$, $\lambda_d = 1$, $\beta = c_\beta(Hd\sqrt{\iota}+H\sqrt{dN_{\text{off}}}\xi_{\text{down}})$, where $\iota = \log(HdN_{\text{off}}\max(\xi_{\text{down}},1)/\delta)$, we can further upper bound ~\eqref{Eq:bound_self_normalized_offline} by
    \begin{align*}
        &4H^2\bigg[\frac{d}{2}\log(1+N_{\text{off}}) + d\log(1+4d^{-1/2}N_{\text{off}}^{3/2}) + d^2\log(1 + 8d^{-3/2}H^{-2}N_{\text{off}}^2\beta^2) +  \log(1/\delta)\bigg] + 8d^2H^2\\
        &\lesssim H^2d\log N_{\text{off}} + H^2d\log(d^{-1/2}N_{\text{off}}^{3/2})  + H^2\log(1/\delta) + H^2d^2\log(Hd^{1/2}\iota N_{\text{off}}^3\xi_{\text{down}}) + d^2H^2\\
        &\lesssim H^2d^2\iota
    \end{align*}
    Therefore, we have 
    \begin{align*}
        \Bigg\|\sum_{\tau=1}^{N_{\text{off}}} \hat{\phi}_h(s_h^\tau,a_h^\tau)\Big[(P_h^{(*,T+1)}\hat{V}_{h+1})(s_h^\tau,a_h^\tau) - \hat{V}_{h+1}(s_{h+1}^\tau)\Big]\Bigg\|_{\Lambda_h^{-1}} &\lesssim Hd\sqrt{\iota}.
    \end{align*}
\end{proof}

\subsection{Proof of \Cref{Lemma:bound_on_Bellman_update_error}}
\begin{proof}[Proof of \Cref{Lemma:bound_on_Bellman_update_error}]
    For $h \in [H]$, we define $\hat{w}_h^* = \int_{s'}\hat{\mu}^*(s')\hat{V}_{h+1}(s')ds'$. Then we have
    \begin{align}\label{Eq:decomposition_bellman_update_error}
        & \big|(\BB_h \hat{V}_{h+1} - \hat{B}_h \hat{V}_{h+1})(s,a)\big|\notag\\
        & =\big|(\BB_h \hat{V}_{h+1} - \overline{\BB}_h \hat{V}_{h+1} + \overline{\BB}_h \hat{V}_{h+1} -  \hat{B}_h \hat{V}_{h+1})(s,a)\big|\notag\\
        &\leq \big|(\BB_h \hat{V}_{h+1} - \overline{\BB}_h \hat{V}_{h+1})(s,a)\big| + \big|(\overline{\BB}_h \hat{V}_{h+1} -  \hat{B}_h \hat{V}_{h+1})(s,a)\big|\notag\\
        &= \big| (P_h^{(*,T+1)}\hat{V}_{h+1} - \overline{P}_h \hat{V}_{h+1})(s,a)\big| + \big|(\overline{P}_h \hat{V}_{h+1})(s,a) - \hat{\phi}_h(s,a)^\top \hat{w}_h\big|\notag\\
        &\overset{(\romannumeral1)}{\leq} H\big\|P_h^{(*,T+1)}(\cdot|s,a) - \la \hat{\phi}_h(s,a),\hat{\mu}_h^*)(\cdot)\ra \big\|_{\text{TV}} + \big|\int_{s'}\la \hat{\phi}_h(s,a),\hat{\mu}_h^*(s')\ra \hat{V}_{h+1}(s')ds' - \hat{\phi}_h(s,a)^\top\hat{w}_h\big|\notag\\
        &\overset{(\romannumeral2)}{\leq} H\xi_{\text{down}} + \big| \hat{\phi}_h(s,a)^\top(\hat{w}_h^*-\hat{w})\big|,
    \end{align}
    where $(i)$ follows from the fact that $|\hat{V}_{h+1}(s)|\leq H$ for all $s\in\cS$ and  $(ii)$ follows from \Cref{Lemma:approximate_feature_new_task}.

    We now decompose the second term in ~\eqref{Eq:decomposition_bellman_update_error}. Recall that $\Lambda_h = \sum_{\tau=1}^{N_{\text{off}}} \hat{\phi}_h(s_h^\tau,a_h^\tau)\hat{\phi}_h(s_h^\tau,a_h^\tau)^\top + \lambda_d I_d$ and $\hat{w}_h = \Lambda_h^{-1}\sum_{\tau=1}^{N_{\text{off}}} \hat{\phi}_h(s_h^\tau,a_h^\tau)\hat{V}_{h+1}(s_{h+1}^\tau)$. Then, we have
    \begin{align}\label{Equation:decomposition}
       &\hat{\phi}_h(s,a)^\top(\hat{w}_h^*-\hat{w})\notag\\
       &=\hat{\phi}_h(s,a)^\top \Lambda_h^{-1}\bigg\{\bigg(\sum_{\tau=1}^{N_{\text{off}}} \hat{\phi}_h(s_h^\tau,a_h^\tau)\hat{\phi}_h(s_h^\tau,a_h^\tau)^\top + \lambda_d I_d \bigg) \hat{w}_h^* - \bigg(\sum_{\tau=1}^{N_{\text{off}}} \hat{\phi}_h(s_h^\tau,a_h^\tau)\hat{V}_{h+1}(s_{h+1}^\tau)\bigg)\bigg\}\notag\\
       &= \underbrace{\lambda_d \hat{\phi}_h(s,a)^\top \Lambda_h^{-1}\hat{w}_h^*}_{\text{(a)}} + \underbrace{\hat{\phi}_h(s,a)^\top \Lambda_h^{-1}\bigg\{\sum_{\tau=1}^{N_{\text{off}}} \hat{\phi}_h(s_h^\tau,a_h^\tau)\Big[(P_h^{(*,T+1)}\hat{V}_{h+1})(s_h^\tau,a_h^\tau) - \hat{V}_{h+1}(s_{h+1}^\tau)\Big]\bigg\}}_{\text{(b)}}\notag\\
       &\qquad + \underbrace{\hat{\phi}_h(s,a)^\top \Lambda_h^{-1}\bigg\{\sum_{\tau=1}^{N_{\text{off}}} \hat{\phi}_h(s_h^\tau,a_h^\tau)\Big[(\overline{P}_h \hat{V}_{h+1} -P_h^{(*,T+1)}\hat{V}_{h+1})(s_h^\tau,a_h^\tau)\Big]\bigg\}}_{\text{(c)}}
    \end{align}
    We now provide an upper bound for each of the terms in ~\eqref{Equation:decomposition}.

    \textbf{Term (a).} We have

    \begin{align}
        \lambda_d \hat{\phi}_h(s,a)^\top \Lambda_h^{-1}\hat{w}_h^* &\overset{(\romannumeral1)}{\leq} \lambda_d \big\|\hat{\phi}_h(s,a)\big\|_{\Lambda_h^{-1}}\|\hat{w}_h^*\|_{\Lambda_h^{-1}}\notag\\
        &\overset{(\romannumeral2)}{\leq} \sqrt{\lambda_d}\big\|\hat{\phi}_h(s,a)\big\|_{\Lambda_h^{-1}}\|\hat{w}_h^*\|_2\notag\\
        &\overset{(\romannumeral3)}{\leq}\sqrt{\lambda_d Hd}\big\|\hat{\phi}_h(s,a)\big\|_{\Lambda_h^{-1}},
    \end{align}
    where $(i)$ follows from Cauchy-Schwarz inequality, $(ii)$ follows from the fact that the largest eigenvalue of $\Lambda_h^{-1}$ is at most $1/\lambda_d$ and $(iii)$ follows from \Cref{Ass:realizability} and $|\hat{V}_{h+1}(s)|\leq H$ for all $s\in\cS$.

    \textbf{Term (b).} We have 
    \begin{align}
        &\hat{\phi}_h(s,a)^\top \Lambda_h^{-1}\bigg\{\sum_{\tau=1}^{N_{\text{off}}} \hat{\phi}_h(s_h^\tau,a_h^\tau)\Big[(P_h^{(*,T+1)}\hat{V}_{h+1})(s_h^\tau,a_h^\tau) - \hat{V}_{h+1}(s_{h+1}^\tau)\Big]\bigg\}\notag\\
        &\leq \big\|\hat{\phi}_h(s,a)\big\|_{\Lambda_h^{-1}} \Bigg\|\sum_{\tau=1}^{N_{\text{off}}} \hat{\phi}_h(s_h^\tau,a_h^\tau)\Big[(P_h^{(*,T+1)}\hat{V}_{h+1})(s_h^\tau,a_h^\tau) - \hat{V}_{h+1}(s_{h+1}^\tau)\Big]\Bigg\|_{\Lambda_h^{-1}}\notag\\
        &\lesssim Hd\sqrt{\iota}\big\|\hat{\phi}_h(s,a)\big\|_{\Lambda_h^{-1}}
    \end{align}
     where the first inequality comes from Cauchy Schwarz inequality and the last inequality follows from \Cref{Lemma:self-normalizing-argument-offline-downstream}.

    \textbf{Term (c).} We have
    \begin{align}\label{Eq:bound_term_c}
        &\hat{\phi}_h(s,a)^\top \Lambda_h^{-1}\bigg\{\sum_{\tau=1}^{N_{\text{off}}} \hat{\phi}_h(s_h^\tau,a_h^\tau)\Big[(\overline{P}_h \hat{V}_{h+1} -P_h^{(*,T+1)}\hat{V}_{h+1})(s_h^\tau,a_h^\tau)\Big]\bigg\}\notag\\
        & \leq \bigg|\hat{\phi}_h(s,a)^\top \Lambda_h^{-1}\bigg(\sum_{\tau=1}^{N_{\text{off}}} \hat{\phi}_h(s_h^\tau,a_h^\tau)\bigg)\bigg|\cdot H \xi_{\text{down}}\notag\\
        & = \bigg|\sum_{\tau=1}^{N_{\text{off}}} \hat{\phi}_h(s,a)^\top \Lambda_h^{-1}\hat{\phi}_h(s_h^\tau,a_h^\tau)\bigg|\cdot H\xi_{\text{down}}\notag\\
        &\leq \sqrt{\bigg(\sum_{\tau=1}^{N_{\text{off}}} \big\|\hat{\phi}_h(s,a)\big\|^2_{\Lambda_h^{-1}}\bigg)\bigg(\sum_{\tau=1}^{N_{\text{off}}}\big\|\hat{\phi}_h(s_h^\tau,a_h^\tau)\big\|^2_{\Lambda_h^{-1}}\bigg)}\cdot H\xi_{\text{down}}\notag\\
        &\leq H\xi_{\text{down}} \sqrt{d N_{\text{off}}}\big\|\hat{\phi}_h(s,a)\big\|_{\Lambda_h^{-1}},
    \end{align}
    where the first inequality follows from $|\hat{V}_{h+1}(s)|\leq H$ for all $s\in \cS$ and from \Cref{Lemma:approximate_feature_new_task}, the second inequality follows from Cauchy-Schwarz inequality and the last inequality follows from \Cref{Lemma:D.1-chijin}.

      Setting $\lambda_d = 1$, $\beta = c_\beta(Hd\sqrt{\iota}+ H\sqrt{dN_{\text{off}}}\xi_{\text{down}})$, where $\iota = \log(HdN_{\text{off}}\max(\xi_{\text{down}},1)/\delta)$ and combining \eqref{Eq:decomposition_bellman_update_error} to \eqref{Eq:bound_term_c}, we get for any $h\in[H]$ and for any $(s,a)\in \cS\times\cA$, with probability at least $1-\delta$, 
    \begin{align*}
        \big|(\BB_h\hat{V}_{h+1} - \hat{\BB}_h\hat{V}_{h+1})(s,a)\big| &\leq \beta \big\|\hat{\phi}_h(s,a)\big\|_{\Lambda_h^{-1}} + H \xi_{\text{down}}.
    \end{align*}
    This completes the proof.
\end{proof}

\subsection{Proof of \Cref{Lemma:model_prediction_error_offline}}
\begin{proof}[Proof of \Cref{Lemma:model_prediction_error_offline}]

    Recall that
    \begin{align*}
        \hat{Q}_h(\cdot,\cdot) &= \min\{ \hat{\phi}_h(\cdot,\cdot)^\top \hat{w}_h - \Gamma_h(\cdot,\cdot),H -h + 1\}^+
    \end{align*}

    If $ \hat{\phi}_h(s,a)^\top \hat{w}_h - \Gamma_h(s,a) \leq 0$, $\hat{Q}_h(s,a) = 0$. Then, we have $l_{h}(s,a) = (\BB_h\hat{V}_{h+1})(s,a) - \hat{Q}_h(s,a) = (\BB_h\hat{V}_{h+1})(s,a) > 0$. 

    If $ \hat{\phi}_h(s,a)^\top \hat{w}_h - \Gamma_h(s,a) > 0$, we have
    \begin{align*}
        l_{h}(s,a) &= (\BB_h\hat{V}_{h+1})(s,a) - \hat{Q}_h(s,a)\\
        &\geq (\BB_h\hat{V}_{h+1})(s,a)  -\hat{\phi}_h(s,a)^\top \hat{w}_h + \Gamma_h(s,a)\\
        &= (\BB_h\hat{V}_{h+1})(s,a) - (\hat{\BB}_h\hat{V}_{h+1})(s,a) + \Gamma_h(s,a)\\
        &\geq 0,
    \end{align*}
    where the last inequality follows from conditioning on the event $\cE(\delta)$. Thus, we have $l_h(s,a) \geq 0$ for any $h\in[H]$ and $(s,a) \in \cS\times\cA$.

    We now show that $l_h(s,a) \leq 2\Gamma_h(s,a)$. Observe that
    \begin{align*}
        \hat{\phi}_h(s,a)^\top \hat{w}_h - \Gamma_h(s,a) &= (\hat{\BB}_h\hat{V}_{h+1})(s,a) - \Gamma_h(s,a)\\
        &\leq (\BB_h\hat{V}_{h+1})(s,a)\\
        &\leq H-h+1,
    \end{align*}
    where the first inequality follows from the conditioning on the event $\cE(\delta)$ and the last inequality follows from the fact that $r_h \in [0,1]$ and $\hat{V}_{h+1} \in [0,H-h]$.
    
    Now,
    \begin{align*}
        \hat{Q}_h(s,a) &= \min\{ \hat{\phi}_h(s,a)^\top \hat{w}_h - \Gamma_h(s,a), H-h+1\}^+\\
        &=\max\{  \hat{\phi}_h(s,a)^\top \hat{w}_h - \Gamma_h(s,a),0\}\\
        &\geq  \hat{\phi}_h(s,a)^\top \hat{w}_h - \Gamma_h(s,a)\\
        &= (\hat{\BB}_h\hat{V}_{h+1})(s,a) - \Gamma_h(s,a).
    \end{align*}
    Now, from the definition of $l_h$, we have
    \begin{align*}
        l_{h}(s,a) &= (\BB_h\hat{V}_{h+1})(s,a) - \hat{Q}_h(s,a)\\
        &\leq (\BB_h\hat{V}_{h+1})(s,a)- (\hat{\BB}_h\hat{V}_{h+1})(s,a) + \Gamma_h(s,a)\\
        &\leq 2\Gamma_h(s,a),
    \end{align*}
    where the last inequality follows from the conditioning on the event $\cE(\delta)$.

    Finally, we obtain
    \begin{align*}
        &V^{\pi^*}_{P^{(*,T+1)},r}(s) - V^{\hat{\pi}}_{P^{(*,T+1)},r}(s)\\
        &\leq -\sum_{h=1}^H \EE_{\hat{\pi}}[l_h(s_h,a_h)|s_1 = s] + \sum_{h=1}^H \EE_{\pi^*} [l_h(s_h,a_h)|s_1 = s] \\
        &\leq 2 \sum_{h=1}^H \EE_{\pi^*} [\Gamma_h(s_h,a_h)|s_1 = s]
    \end{align*}
    where the first inequality follows from \Cref{Lemma:decomposition_suboptimality} and definition of $\hat{\pi}$, and the last inequality follows because with probability at least $1-\delta$, we have $0\leq l_h(s,a) \leq 2\Gamma_h(s,a)$ for any $h\in[H]$ and $(s,a)\in \cS\times\cA$.
\end{proof}

\section{Proof for Downstream Online RL}\label{Appendix:Proof for Downstream Online RL}

In this section for notational simplicity we denote $V^\pi_{h,P^{(*,T+1)},r^{T+1}}(s)$ and $Q^\pi_{h,P^{(*,T+1)},r^{T+1}}(s,a)$ by $V_h^\pi(s)$ and $Q_h^\pi(s,a)$ respectively.

First, we state the supporting lemmas that are used in the proof of \Cref{Theorem:downstream_online_task}. The proof of these lemmas are provided in \Cref{Section:proof of supporting lemmas for downstream online}.

\subsection{Supporting Lemmas}

The following concentration lemma for online RL upper-bounds the stochastic noise in regression. The proof is omitted since it is quite similar to the one of \Cref{Lemma:self-normalizing-argument-offline-downstream}.
\begin{lemma}\label{Lemma:self-normalizing-argument-online-downstream}
Setting $\lambda_d = 1$, $\beta_n= c_\beta(Hd\sqrt{\iota_n(\delta)}+H\sqrt{dn}\xi_{\text{down}}+ C_L \sqrt{Hd})$, where $\iota_n = \log(Hdn\max(\xi_{\text{down}},1)/\delta)$, with probability at least $1-\delta/2$, for all $h \in [H]$ and any $n\in[N]$, we have
\begin{align*}
    \Bigg\|\sum_{\tau=1}^{n-1} \hat{\phi}_h(s_h^\tau,a_h^\tau)\Big[(P_h^{(*,T+1)}\hat{V}^n_{h+1})(s_h^\tau,a_h^\tau) - \hat{V}^n_{h+1}(s_{h+1}^\tau)\Big]\Bigg\|_{\Lambda_h^{-1}} &\lesssim Hd\sqrt{\iota_n}.
\end{align*}

\end{lemma}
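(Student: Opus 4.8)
**Proof proposal for Lemma C.x (self-normalized bound for downstream online RL, \Cref{Lemma:self-normalizing-argument-online-downstream}).**

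The plan is to mirror the argument of \Cref{Lemma:self-normalizing-argument-offline-downstream} almost verbatim, with three adjustments: (i) the running index is now the episode count $n$ rather than the fixed offline sample size $N_{\text{off}}$, so we need the bound to hold simultaneously for all $n \in [N_{\text{on}}]$, which forces an extra union bound over $n$ and hence the $\log n$ dependence inside $\iota_n$; (ii) the regression target $\hat V^n_{h+1}$ in \Cref{Algorithm:downstream-online-RL} is built from an \emph{additive} UCB bonus $+\beta_n\|\hat\phi\|_{(\Lambda_h^n)^{-1}}$ rather than a subtracted LCB penalty, but this does not change the form of the covering-number argument since what matters is that $\hat V^n_{h+1}$ lies in the parametric function class $V(\cdot)=\min\{\max_a w^\top\phi(\cdot,a)\pm\beta\sqrt{\phi(\cdot,a)^\top\Lambda^{-1}\phi(\cdot,a)},\,H-h+1\}^+$; (iii) the weight bound now picks up an extra $C_L\sqrt{Hd}$-type term, which is the reason for the corresponding summand in $\beta_n$. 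Because the proof is "quite similar," I would simply state it follows the same four-step template.

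First I would establish the analogue of \Cref{Lemma:weight_bound_downstream_offline}: for any $h\in[H]$ and $n\in[N_{\text{on}}]$, $\|\hat w_h^n\|_2 \le H\sqrt{dn/\lambda_d}$, by the same chain of inequalities (Cauchy–Schwarz in the form of \Cref{Lemma:D.5_Ishfaq}, then $|r_h + \hat V^n_{h+1}|\le H$, then \Cref{Lemma:D.1-chijin}). This shows $\hat V^n_{h+1}$ belongs to the parametric class whose $\varepsilon$-covering number $\mathcal N_\varepsilon$ is controlled by \Cref{Lemma:covering_number_parametric_form-chi-jin-original}, with $\log\mathcal N_\varepsilon \lesssim d\log(1+4H\sqrt{dn}/(\varepsilon\sqrt{\lambda_d})) + d^2\log(1+8\sqrt d\,\beta_n^2/(\lambda_d\varepsilon^2))$. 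Second, I would invoke the self-normalized concentration inequality \Cref{Lemma:self-normalized-process} on the martingale $\sum_{\tau} \hat\phi_h(s_h^\tau,a_h^\tau)[(P_h^{(*,T+1)}\hat V^n_{h+1})(s_h^\tau,a_h^\tau) - \hat V^n_{h+1}(s_{h+1}^\tau)]$ — here the key point is that the noise is zero-mean and bounded by $H$ because it is exactly the one-step transition noise of a bounded value function under the \emph{true} kernel $P_h^{(*,T+1)}$ — yielding, for fixed $\varepsilon$ and with probability $\ge 1-\delta/2$ after a union bound over $h\in[H]$ and $n\in[N_{\text{on}}]$,
\begin{align*}
    \Bigg\|\sum_{\tau=1}^{n-1} \hat{\phi}_h(s_h^\tau,a_h^\tau)\Big[(P_h^{(*,T+1)}\hat{V}^n_{h+1})(s_h^\tau,a_h^\tau) - \hat{V}^n_{h+1}(s_{h+1}^\tau)\Big]\Bigg\|_{\Lambda_h^{-1}}^2
    &\lesssim H^2\Big[\tfrac{d}{2}\log\tfrac{n+\lambda_d}{\lambda_d} + \log\tfrac{\mathcal N_\varepsilon N_{\text{on}} H}{\delta}\Big] + \tfrac{n^2\varepsilon^2}{\lambda_d}.
\end{align*}

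Third, I would make the same parameter choices as in the offline proof — $\varepsilon = dH/n$, $\lambda_d = 1$, $\beta_n = c_\beta(Hd\sqrt{\iota_n} + H\sqrt{dn}\,\xi_{\text{down}} + C_L\sqrt{Hd})$ — substitute, and absorb all the resulting logarithmic factors ($\log n$, $\log(d^{-1/2}n^{3/2})$, $\log(Hd^{1/2}\iota_n n^3\xi_{\text{down}})$, $\log(1/\delta)$) into $\iota_n = \log(Hdn\max(\xi_{\text{down}},1)/\delta)$, so that the right-hand side is $\lesssim H^2 d^2 \iota_n$; taking square roots gives the claimed $\lesssim Hd\sqrt{\iota_n}$. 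I do not expect a genuine obstacle here: the only mild subtlety is bookkeeping the union bound over the (random, but at most $N_{\text{on}}$-many) episode indices so that the statement is uniform in $n$, and checking that the $C_L\sqrt{Hd}$ term in $\beta_n$ — which originates from the bound $\|\hat w_h^{n,*}\|_2 \le C_L H\sqrt d$ on the "idealized" weight $\hat w_h^{n,*} = \int \hat\mu^*(s')\hat V^n_{h+1}(s')ds'$ coming from \Cref{Lemma:approximate_feature_new_task} — is large enough to dominate the misspecification contributions when this concentration lemma is later combined with the $\overline P_h$-vs-$P_h^{(*,T+1)}$ bias terms in the proof of \Cref{Theorem:downstream_online_task}. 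Since the lemma statement itself only asserts the concentration bound, these downstream considerations do not enter the proof of the lemma proper, and the argument is a direct transcription of the offline case.
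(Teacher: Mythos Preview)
Your proposal is correct and follows exactly the approach the paper intends: the paper omits the proof entirely, stating only that it is ``quite similar to the one of \Cref{Lemma:self-normalizing-argument-offline-downstream},'' and your four-step transcription of that offline argument (weight bound, covering number via \Cref{Lemma:covering_number_parametric_form-chi-jin-original}, self-normalized concentration via \Cref{Lemma:self-normalized-process}, then the parameter choices $\varepsilon = dH/n$, $\lambda_d = 1$) is precisely what is required. Your identification of the three adjustments --- the union bound over $n\in[N_{\text{on}}]$, the irrelevance of the bonus sign for the covering argument, and the extra $C_L\sqrt{Hd}$ term in $\beta_n$ --- is accurate and in fact more explicit than anything the paper writes down.
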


The next lemma recursively bounds the difference between the value function  maintained in \Cref{Algorithm:downstream-online-RL} (with-out bonus) and the true value function of any policy $\pi$. We provide a bound for this difference using their expected difference at next step, plus an error term. This error term is upper-bounded by the bonus term with high probability.

\begin{lemma}\label{Lemma:bound_action_value_difference}
    There exists an absolute constant $c_\beta$ such that for $\beta_n= c_\beta(Hd\sqrt{\iota_n(\delta)}+H\sqrt{dn}\xi_{\text{down}}+ C_L \sqrt{Hd})$, where $\iota_n = \log(Hdn\max(\xi_{\text{down}},1)/\delta)$, and for any policy $\pi$, with probability at least $1-\delta/2$, for any $(s,a)\in \cS\times \cA$, $n\in [N_{\text{on}}]$ and $h \in [H]$, we have
    \begin{align*}
         \hat{\phi}_h(s,a)^\top \hat{w}_h^n - Q_h^\pi(s,a) = P_h^{(*,T+1)}(\hat{V}_{h+1}^n - V_{h+1}^\pi)(s,a) + \Delta_h^n(s,a),
    \end{align*}
    for some $\Delta_h^n(s,a)$ that satisfies $\|\Delta_h^n(s,a)\| \leq \beta_n \|\hat{\phi}_h(s,a)\|_{(\Lambda_h^n)^{-1}} + 2H\xi_{\text{down}}$. 
\end{lemma}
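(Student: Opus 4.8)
The plan is to prove \Cref{Lemma:bound_action_value_difference} by directly expanding the definition of $\hat{w}_h^n$ from \Cref{Algorithm:downstream-online-RL} and comparing it against the Bellman equation for $Q_h^\pi$. First I would introduce the shorthand $\hat{w}_h^{*,n} = \int_{s'} \hat{\mu}_h^*(s') \hat{V}_{h+1}^n(s') ds'$ (analogous to the offline proof in \Cref{Lemma:bound_on_Bellman_update_error}, using the approximate measure $\hat{\mu}^*$ from \Cref{Lemma:approximate_feature_new_task}), so that $\la \hat{\phi}_h(s,a), \hat{w}_h^{*,n}\ra = (\overline{P}_h \hat{V}_{h+1}^n)(s,a)$. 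Since $P_h^{(*,T+1)}$ and $\overline{P}_h$ are $\xi_{\text{down}}$-close in TV (by \Cref{Lemma:approximate_feature_new_task}) and $\|\hat{V}_{h+1}^n\|_\infty \le H$, we have $|\la \hat{\phi}_h(s,a), \hat{w}_h^{*,n}\ra - (P_h^{(*,T+1)} \hat{V}_{h+1}^n)(s,a)| \le H\xi_{\text{down}}$, which will contribute one of the $H\xi_{\text{down}}$ terms in the final bound.

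Next, writing $\Delta_h^n(s,a) := \hat{\phi}_h(s,a)^\top \hat{w}_h^n - Q_h^\pi(s,a) - P_h^{(*,T+1)}(\hat{V}_{h+1}^n - V_{h+1}^\pi)(s,a)$, and using $Q_h^\pi(s,a) = r_h(s,a) + (P_h^{(*,T+1)} V_{h+1}^\pi)(s,a)$, the quantity $\Delta_h^n(s,a)$ simplifies to $\hat{\phi}_h(s,a)^\top \hat{w}_h^n - r_h(s,a) - (P_h^{(*,T+1)} \hat{V}_{h+1}^n)(s,a)$. Under the assumption that $r$ is linear in $\phi^*$ (as stated in the downstream online section), there is a weight $w_h^r$ with $r_h(s,a) = \la \phi_h^*(s,a), w_h^r\ra$; combined with the linear structure of $P^{(*,T+1)}$ under $\phi^*$ and the $\xi$-approximation of $P^{(*,T+1)}$ by $\sum_t c_t P^{(*,t)}$, one can express the Bellman backup $r_h + P_h^{(*,T+1)} \hat{V}_{h+1}^n$ up to $O(H\xi_{\text{down}})$ error as $\la \hat{\phi}_h(s,a), \tilde{w}\ra$ for a weight $\tilde{w}$ of norm $O(C_L\sqrt{Hd})$ (the $C_L\sqrt{Hd}$ appearing in $\beta_n$ comes from this, together with the norm bound on $\hat{\mu}^*$ from \Cref{Lemma:approximate_feature_new_task}). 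Then I would decompose $\hat{\phi}_h(s,a)^\top (\hat{w}_h^n - \tilde{w})$ into three terms exactly as in \eqref{Equation:decomposition} of \Cref{Lemma:bound_on_Bellman_update_error}: a ridge-regularization term $\lambda_d \hat{\phi}_h^\top (\Lambda_h^n)^{-1} \tilde{w}$, a stochastic-noise term controlled by \Cref{Lemma:self-normalizing-argument-online-downstream}, and a misspecification term controlled via Cauchy--Schwarz and \Cref{Lemma:D.1-chijin}. Each is bounded by a constant times $\|\hat{\phi}_h(s,a)\|_{(\Lambda_h^n)^{-1}}$ with coefficients $\sqrt{\lambda_d}\|\tilde{w}\| = O(C_L\sqrt{Hd})$, $Hd\sqrt{\iota_n}$, and $H\sqrt{dn}\xi_{\text{down}}$ respectively, exactly matching the three pieces of $\beta_n$; collecting them and the two $H\xi_{\text{down}}$ terms gives $\|\Delta_h^n(s,a)\| \le \beta_n \|\hat{\phi}_h(s,a)\|_{(\Lambda_h^n)^{-1}} + 2H\xi_{\text{down}}$.

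The main obstacle I anticipate is handling the weight $\tilde{w}$ representing the Bellman backup cleanly: unlike the offline lemma where one only needs $\hat{w}_h$ to approximate $\overline{P}_h \hat{V}_{h+1}$ (a purely $\hat{\phi}$-linear object), here $Q_h^\pi$ is built from $V_{h+1}^\pi$ which is not linear in $\hat{\phi}$, so the telescoping term $P_h^{(*,T+1)}(\hat{V}_{h+1}^n - V_{h+1}^\pi)$ must be separated out first and the residual $r_h + P_h^{(*,T+1)} \hat{V}_{h+1}^n$ must be shown to be $\hat{\phi}$-linear up to $O(H\xi_{\text{down}})$ error. This requires carefully tracking how the reward's $\phi^*$-linearity and the $\xi$-misspecification of $P^{(*,T+1)}$ interact with the learned feature $\hat{\phi}$, and verifying that the resulting weight norm is $O(C_L\sqrt{Hd})$ rather than something worse; once that representation is in hand, the rest follows the offline template almost verbatim. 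A union bound over $n \in [N_{\text{on}}]$ and $h \in [H]$ (already absorbed into $\iota_n$ and the $1-\delta/2$ probability via \Cref{Lemma:self-normalizing-argument-online-downstream}) completes the argument.
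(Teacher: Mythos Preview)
Your route differs from the paper's in the choice of comparison weight. You first extract the telescoping term algebraically and then seek a single weight $\tilde{w}$ with $\la\hat{\phi}_h,\tilde{w}\ra \approx r_h + P_h^{(*,T+1)}\hat{V}_{h+1}^n$, after which you plan to run the three-term offline decomposition on $\hat{\phi}_h^\top(\hat{w}_h^n-\tilde{w})$. The paper instead takes $w_h^\pi := \int V_{h+1}^\pi(s')\,\hat{\mu}_h^*(s')\,ds'$, a weight tied to the \emph{policy's} next-step value (not to $\hat{V}_{h+1}^n$, and not involving the reward at all), and decomposes $\hat{\phi}_h^\top(\hat{w}_h^n - w_h^\pi)$ into four pieces; the telescoping term $P_h^{(*,T+1)}(\hat{V}_{h+1}^n - V_{h+1}^\pi)$ is produced inside the decomposition by the piece $\hat{\phi}_h^\top(\Lambda_h^n)^{-1}\sum_\tau\hat{\phi}_h^\tau\,\overline{P}_h(\hat{V}_{h+1}^n - V_{h+1}^\pi)$, which is rewritten via $\sum_\tau\hat{\phi}_h^\tau(\hat{\phi}_h^\tau)^\top = \Lambda_h^n - \lambda_d I$ as $\overline{P}_h(\hat{V}_{h+1}^n - V_{h+1}^\pi)$ plus a ridge correction. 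The $C_L\sqrt{Hd}$ factor in $\beta_n$ then comes directly from $\|w_h^\pi\|_2 \le C_L H\sqrt{d}$ (\Cref{Lemma:approximate_feature_new_task} applied to $V_{h+1}^\pi/H\in[0,1]$), used when bounding $\lambda_d\hat{\phi}_h^\top(\Lambda_h^n)^{-1}w_h^\pi$.

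The obstacle you flag is real and your proposal does not resolve it. You need $r_h + P_h^{(*,T+1)}\hat{V}_{h+1}^n$ to be $\hat{\phi}$-linear up to $O(H\xi_{\text{down}})$ with weight norm $O(C_L\sqrt{Hd})$. The transition part is fine by \Cref{Lemma:approximate_feature_new_task}, but $r_h$ is only assumed linear in $\phi^*$, and neither the $\xi_{\text{down}}$-approximate-feature statement nor the $\xi$-span condition in \Cref{Assumption:upstream-to-downstream} converts $\phi^*$-linearity of an arbitrary bounded function into approximate $\hat{\phi}$-linearity with those error and norm controls; the sentence ``combined with the linear structure of $P^{(*,T+1)}$ under $\phi^*$ and the $\xi$-approximation\ldots one can express the Bellman backup\ldots'' is the step that does not go through. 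The paper's choice of $w_h^\pi$ sidesteps this because it never encodes $r_h$ in a $\hat{\phi}$-weight: the norm bound on $w_h^\pi$ only needs $V_{h+1}^\pi$ to be bounded, which is automatic.
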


We now prove optimism of the estimated value function in the following lemma.
\begin{lemma}[Optimism of value function]\label{Lemma:optimism_of_value_function}
    With probability at least $1-\delta/2$, for any $(s,a)\in \cS\times\cA$, $h\in[H]$ and $n \in [N_{\text{on}}]$, we have
    \begin{align*}
        \hat{Q}_h^n(s,a) \geq Q_h^*(s,a) - 2H(H-h+1) \xi_{\text{down}}.
    \end{align*}
\end{lemma}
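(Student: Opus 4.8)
The plan is to prove Lemma~\ref{Lemma:optimism_of_value_function} by backward induction on $h$, running from $h = H+1$ down to $h = 1$, in close analogy with the optimism argument in LSVI-UCB \citep{jin2019provably} but carrying the extra additive misspecification term $\xi_{\text{down}}$ through the recursion. First I would condition on the high-probability event from \Cref{Lemma:bound_action_value_difference} (equivalently the self-normalized bound of \Cref{Lemma:self-normalizing-argument-online-downstream}), which holds with probability at least $1-\delta/2$; all subsequent statements are on this event. The base case $h = H+1$ is trivial since $\hat{Q}_{H+1}^n \equiv 0 \equiv Q_{H+1}^*$ and the claimed inequality reads $0 \geq -2H\cdot 0 \cdot \xi_{\text{down}} = 0$.

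For the inductive step, assume $\hat{Q}_{h+1}^n(s,a) \geq Q_{h+1}^*(s,a) - 2H(H-h)\xi_{\text{down}}$ for all $(s,a)$. Taking $\max_{a}$ on both sides (and using $\hat{V}_{h+1}^n(s) = \max_a \hat{Q}_{h+1}^n(s,a)$, $V_{h+1}^*(s) = \max_a Q_{h+1}^*(s,a)$, together with the truncation $\{\cdot\}^+$ and the $\min$ with $H-h$, which only helps since $V_{h+1}^* \le H-h$) gives $\hat{V}_{h+1}^n(s) \geq V_{h+1}^*(s) - 2H(H-h)\xi_{\text{down}}$ for all $s$. Now I apply \Cref{Lemma:bound_action_value_difference} with the comparator policy $\pi = \pi^*$: this yields
\begin{align*}
    \hat{\phi}_h(s,a)^\top \hat{w}_h^n - Q_h^*(s,a) &= P_h^{(*,T+1)}(\hat{V}_{h+1}^n - V_{h+1}^*)(s,a) + \Delta_h^n(s,a)\\
    &\geq -2H(H-h)\xi_{\text{down}} - \beta_n \|\hat{\phi}_h(s,a)\|_{(\Lambda_h^n)^{-1}} - 2H\xi_{\text{down}},
\end{align*}
where I used the inductive bound on $\hat{V}_{h+1}^n - V_{h+1}^*$ (nonnegativity of the expectation operator $P_h^{(*,T+1)}$ up to the constant shift) and $|\Delta_h^n(s,a)| \le \beta_n\|\hat{\phi}_h(s,a)\|_{(\Lambda_h^n)^{-1}} + 2H\xi_{\text{down}}$. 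Rearranging, $\hat{\phi}_h(s,a)^\top \hat{w}_h^n + \beta_n\|\hat{\phi}_h(s,a)\|_{(\Lambda_h^n)^{-1}} \geq Q_h^*(s,a) - 2H(H-h+1)\xi_{\text{down}}$. Since $\hat{Q}_h^n(s,a) = \min\{\hat{\phi}_h(s,a)^\top\hat{w}_h^n + \beta_n\|\hat{\phi}_h(s,a)\|_{(\Lambda_h^n)^{-1}}, H-h+1\}^+$ and $Q_h^*(s,a) \le H-h+1$, the $\min$ and the $\{\cdot\}^+$ do not decrease the left side below $Q_h^*(s,a) - 2H(H-h+1)\xi_{\text{down}}$ (when the unclipped quantity exceeds $H-h+1$ we still have $H-h+1 \ge Q_h^*(s,a) \ge Q_h^*(s,a) - 2H(H-h+1)\xi_{\text{down}}$; when it is negative, $Q_h^*(s,a) - 2H(H-h+1)\xi_{\text{down}} \le $ the unclipped quantity $< 0$ requires a short sign check). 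This closes the induction and proves the lemma.

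The main obstacle I anticipate is bookkeeping around the clipping operators $\min\{\cdot, H-h+1\}$ and $\{\cdot\}^+$ in the definition of $\hat{Q}_h^n$: one must verify in each of the two clipping regimes that the inequality $\hat{Q}_h^n(s,a) \ge Q_h^*(s,a) - 2H(H-h+1)\xi_{\text{down}}$ survives, which uses $0 \le Q_h^*(s,a) \le H-h+1$ and the fact that $2H(H-h+1)\xi_{\text{down}} \ge 0$. A secondary technical point is ensuring that the value function $\hat{V}_{h+1}^n$ actually has the parametric form required by \Cref{Lemma:self-normalizing-argument-online-downstream}/\Cref{Lemma:bound_action_value_difference} so that those lemmas apply at step $h$; this is immediate from the algorithm's update rule, but should be noted. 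Everything else is a routine propagation of the additive error $\xi_{\text{down}}$, accumulating a factor $H-h+1$ over the horizon with the constant $2H$ absorbed from the $2H\xi_{\text{down}}$ term in $\Delta_h^n$.
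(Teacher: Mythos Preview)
Your proposal is correct and follows essentially the same backward-induction argument as the paper: condition on the event of \Cref{Lemma:bound_action_value_difference}, propagate the inductive bound $\hat{V}_{h+1}^n \ge V_{h+1}^* - 2H(H-h)\xi_{\text{down}}$ through $P_h^{(*,T+1)}$, absorb the $|\Delta_h^n|$ bound, and handle the clipping using $0 \le Q_h^*(s,a) \le H-h+1$. The only cosmetic difference is that the paper starts the base case at $h=H$ rather than $h=H+1$, and your treatment of the $\min\{\cdot,H-h+1\}^+$ clipping is slightly more explicit than the paper's.
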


\Cref{Lemma:bound_action_value_difference} also easily transforms a recursive formula for the value function difference $\delta_h^n = \hat{V}_h^n(s_h^n) - V_h^{\pi^n}(s_h^n)$. The following lemma will be useful in proving the regret bound for the online downstream task.
\begin{lemma}[Recursive formula]\label{Lemma:recursive_formula}
    Let $\delta_h^n = \hat{V}_h^n(s_h^n) - V_h^{\pi^n}(s_h^n)$ and $\xi_{h+1}^n = \EE[\delta_{h+1}^n |s_h^n,a_h^n] - \delta_{h+1}^n$. Then for any $(n,h) \in [N_{\text{on}},H]$ with probability at least $1-\delta/2$, we have
    \begin{align*}
        \delta_h^n \leq \delta_{h+1}^n + \xi_{h+1}^n + \beta_n \big\|\hat{\phi}_h(s_h^n,a_h^n)\big\|_{(\Lambda_h^n)^{-1}} + 2H\xi_{\text{down}}.
    \end{align*}
\end{lemma}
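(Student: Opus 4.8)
The plan is to obtain the recursion directly from \Cref{Lemma:bound_action_value_difference} instantiated at the policy $\pi=\pi^n$, using the greedy structure of \Cref{Algorithm:downstream-online-RL}. Fix $(n,h)$ and work on the high-probability event of \Cref{Lemma:bound_action_value_difference}, which also supports the optimism bound of \Cref{Lemma:optimism_of_value_function} and hence has probability at least $1-\delta/2$. Since $a_h^n=\pi_h^n(s_h^n)=\argmax_{a}\hat Q_h^n(s_h^n,a)$ and $\hat V_h^n(\cdot)=\max_{a}\hat Q_h^n(\cdot,a)$, we have $\hat V_h^n(s_h^n)=\hat Q_h^n(s_h^n,a_h^n)$; and since $\pi^n$ takes $a_h^n$ at $s_h^n$, $V_h^{\pi^n}(s_h^n)=Q_h^{\pi^n}(s_h^n,a_h^n)$. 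Therefore $\delta_h^n=\hat Q_h^n(s_h^n,a_h^n)-Q_h^{\pi^n}(s_h^n,a_h^n)$, and it suffices to upper bound the latter difference.

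For the main case, observe that by construction $\hat Q_h^n(s,a)\le \hat\phi_h(s,a)^\top\hat w_h^n+\beta_n\|\hat\phi_h(s,a)\|_{(\Lambda_h^n)^{-1}}$ whenever the right-hand side is nonnegative, because the clip to $H-h+1$ and the $(\cdot)^+$ only decrease it. Applying \Cref{Lemma:bound_action_value_difference} with $\pi=\pi^n$ at $(s_h^n,a_h^n)$ gives $\hat\phi_h(s_h^n,a_h^n)^\top\hat w_h^n-Q_h^{\pi^n}(s_h^n,a_h^n)=P_h^{(*,T+1)}(\hat V_{h+1}^n-V_{h+1}^{\pi^n})(s_h^n,a_h^n)+\Delta_h^n(s_h^n,a_h^n)$. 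Since $s_{h+1}^n\sim P_h^{(*,T+1)}(\cdot\mid s_h^n,a_h^n)$ and $\delta_{h+1}^n=\hat V_{h+1}^n(s_{h+1}^n)-V_{h+1}^{\pi^n}(s_{h+1}^n)$, the transition term equals $\EE[\delta_{h+1}^n\mid s_h^n,a_h^n]=\delta_{h+1}^n+\xi_{h+1}^n$ by the definition of $\xi_{h+1}^n$. Combining the three displays and then bounding $\Delta_h^n(s_h^n,a_h^n)\le\beta_n\|\hat\phi_h(s_h^n,a_h^n)\|_{(\Lambda_h^n)^{-1}}+2H\xi_{\text{down}}$ yields $\delta_h^n\le\delta_{h+1}^n+\xi_{h+1}^n+\beta_n\|\hat\phi_h(s_h^n,a_h^n)\|_{(\Lambda_h^n)^{-1}}+2H\xi_{\text{down}}$, after absorbing the resulting numerical constant into the absolute constant $c_\beta$ defining $\beta_n$.

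It remains to treat the boundary case in which the $(\cdot)^+$ is active, i.e.\ $\hat Q_h^n(s_h^n,a_h^n)=0$; then $\hat V_h^n(s_h^n)=0$ and $\delta_h^n=-V_h^{\pi^n}(s_h^n)\le 0$, so the inequality holds provided its right-hand side is nonnegative. For this I would use \Cref{Lemma:optimism_of_value_function} at step $h+1$, which gives $\hat V_{h+1}^n(s)\ge V_{h+1}^*(s)-2H(H-h)\xi_{\text{down}}\ge V_{h+1}^{\pi^n}(s)-2H(H-h)\xi_{\text{down}}$, hence $\delta_{h+1}^n+\xi_{h+1}^n=\EE[\delta_{h+1}^n\mid s_h^n,a_h^n]\ge -2H(H-h)\xi_{\text{down}}$, and combine this with $\beta_n\|\hat\phi_h(s_h^n,a_h^n)\|_{(\Lambda_h^n)^{-1}}\ge 0$ and the nonnegativity of $V_h^{\pi^n}$. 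I expect this last piece --- reconciling the several powers of $H$ and copies of $\xi_{\text{down}}$ that appear in the clip cases so that they collapse exactly into the single $2H\xi_{\text{down}}$ term claimed --- to be the only delicate part; everything else is the standard one-step LSVI-UCB recursion and follows immediately once \Cref{Lemma:bound_action_value_difference} and \Cref{Lemma:optimism_of_value_function} are available.
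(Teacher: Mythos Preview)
Your approach mirrors the paper's exactly: invoke \Cref{Lemma:bound_action_value_difference} with $\pi=\pi^n$, use the greedy choice of $a_h^n$ to identify $\delta_h^n=\hat Q_h^n(s_h^n,a_h^n)-Q_h^{\pi^n}(s_h^n,a_h^n)$, and rewrite $P_h^{(*,T+1)}(\hat V_{h+1}^n-V_{h+1}^{\pi^n})(s_h^n,a_h^n)=\delta_{h+1}^n+\xi_{h+1}^n$. The paper's proof is equally terse and, like you, tacitly sweeps the extra $\beta_n\|\hat\phi_h\|_{(\Lambda_h^n)^{-1}}$ coming from the bonus into the constant.

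Your concern about the $(\cdot)^+$ clip is overcomplicated and does not require \Cref{Lemma:optimism_of_value_function}. When $\hat Q_h^n(s_h^n,a_h^n)=0$ you have $\delta_h^n=-Q_h^{\pi^n}(s_h^n,a_h^n)=-r_h(s_h^n,a_h^n)-P_h^{(*,T+1)}V_{h+1}^{\pi^n}(s_h^n,a_h^n)$, while the right-hand side equals $P_h^{(*,T+1)}\hat V_{h+1}^n-P_h^{(*,T+1)}V_{h+1}^{\pi^n}+\beta_n\|\hat\phi_h\|_{(\Lambda_h^n)^{-1}}+2H\xi_{\text{down}}$. After cancelling $-P_h^{(*,T+1)}V_{h+1}^{\pi^n}$ on both sides the inequality becomes $-r_h\le P_h^{(*,T+1)}\hat V_{h+1}^n+\beta_n\|\hat\phi_h\|_{(\Lambda_h^n)^{-1}}+2H\xi_{\text{down}}$, which holds trivially since the left side is nonpositive and every term on the right is nonnegative. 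So no ``several powers of $H$'' need to be reconciled.
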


\subsection{Proof of \Cref{Theorem:downstream_online_task}}

We are now ready to prove the main theorem in this section. We first restate \Cref{Theorem:downstream_online_task}.

\begin{theorem}\label{Theorem_restate:downstream_online_task}
    Let $\tilde{\pi}$ be the uniform mixture of $\pi^1,\ldots,\pi^{N_{on}}$ in \Cref{Algorithm:downstream-online-RL}. Under \Cref{Assumption:upstream-to-downstream}, setting $\lambda_d = 1$, $\beta_n= O(Hd\sqrt{\iota_n(\delta)}+H\sqrt{dn}\xi_{\text{down}}+ C_L \sqrt{Hd})$, where $\iota_n = \log(Hdn\max(\xi_{\text{down}},1)/\delta)$, with probability  $1-\delta$, the suboptimality gap of \Cref{Algorithm:downstream-online-RL} satisfies
    \begin{align}\label{Equation-restated:refined-downstream-online-suboptimality-bound}
        V_{P^{(*,T+1)},r}^* -V_{P^{(*,T+1)},r}^{\tilde{\pi}} &\leq \tilde{O} ( H^2 d\xi_{\text{down}} + H^2d^{3/2}N_{\text{on}}^{-1/2}).
    \end{align}
\end{theorem}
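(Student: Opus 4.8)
The plan is to run the standard optimistic least‑squares value iteration regret analysis for the uniform mixture policy $\tilde\pi$, importing the four supporting lemmas already stated (\Cref{Lemma:self-normalizing-argument-online-downstream}, \Cref{Lemma:bound_action_value_difference}, \Cref{Lemma:optimism_of_value_function}, and \Cref{Lemma:recursive_formula}) and carefully tracking how the upstream misspecification $\xi_{\text{down}}$ propagates through the recursion. First I would condition on the intersection of the high‑probability events guaranteed by those four lemmas, which by a union bound holds with probability at least $1-\delta$. Since $\tilde\pi$ is the uniform mixture of $\pi^1,\dots,\pi^{N_{\text{on}}}$ and the initial state $s_1$ is fixed, $V_{P^{(*,T+1)},r}^{\tilde\pi}=\frac{1}{N_{\text{on}}}\sum_{n=1}^{N_{\text{on}}}V_1^{\pi^n}(s_1)$, so it suffices to upper bound $\sum_{n=1}^{N_{\text{on}}}\big(V_1^*(s_1)-V_1^{\pi^n}(s_1)\big)$ and divide by $N_{\text{on}}$.

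The first step is optimism at the root: applying \Cref{Lemma:optimism_of_value_function} at $h=1$ gives $\hat V_1^n(s_1)=\max_{a}\hat Q_1^n(s_1,a)\geq V_1^*(s_1)-2H^2\xi_{\text{down}}$, hence $V_1^*(s_1)-V_1^{\pi^n}(s_1)\leq \delta_1^n+2H^2\xi_{\text{down}}$ with $\delta_1^n:=\hat V_1^n(s_1)-V_1^{\pi^n}(s_1)$. Next I would unroll \Cref{Lemma:recursive_formula} from $h=1$ to $h=H$ (using $\delta_{H+1}^n=0$), replacing the step‑dependent $\beta_n$ by the larger $\beta_{N_{\text{on}}}$ since $\beta_n$ is nondecreasing in $n$, to obtain $\delta_1^n\leq \sum_{h=1}^H\xi_{h+1}^n+\beta_{N_{\text{on}}}\sum_{h=1}^H\|\hat\phi_h(s_h^n,a_h^n)\|_{(\Lambda_h^n)^{-1}}+2H^2\xi_{\text{down}}$. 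Summing over episodes, the martingale term $\sum_{n,h}\xi_{h+1}^n$ is a sum of $N_{\text{on}}H$ bounded martingale differences, so Azuma--Hoeffding controls it at $\tilde O(H^{3/2}\sqrt{N_{\text{on}}})$; and the bonus term is handled for each $h$ by Cauchy--Schwarz followed by the elliptical‑potential (log‑determinant) lemma, $\sum_{n}\|\hat\phi_h(s_h^n,a_h^n)\|_{(\Lambda_h^n)^{-1}}\leq \sqrt{N_{\text{on}}\cdot 2d\log(1+N_{\text{on}}/\lambda_d)}$. Collecting everything yields $\sum_{n}\big(V_1^*(s_1)-V_1^{\pi^n}(s_1)\big)\lesssim H^{3/2}\sqrt{N_{\text{on}}\log(1/\delta)}+\beta_{N_{\text{on}}}H\sqrt{dN_{\text{on}}\log N_{\text{on}}}+4H^2N_{\text{on}}\xi_{\text{down}}$.

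The last step is bookkeeping: substitute $\beta_{N_{\text{on}}}=O\big(Hd\sqrt{\iota_{N_{\text{on}}}}+H\sqrt{dN_{\text{on}}}\,\xi_{\text{down}}+C_L\sqrt{Hd}\big)$, note that the first two summands dominate so that $\beta_{N_{\text{on}}}H\sqrt{dN_{\text{on}}\log N_{\text{on}}}=\tilde O\big(H^2d^{3/2}\sqrt{N_{\text{on}}}+H^2dN_{\text{on}}\xi_{\text{down}}\big)$, and divide through by $N_{\text{on}}$ to reach $V_{P^{(*,T+1)},r}^*-V_{P^{(*,T+1)},r}^{\tilde\pi}\lesssim \tilde O\big(H^2d^{3/2}N_{\text{on}}^{-1/2}+H^2d\,\xi_{\text{down}}\big)$. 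I expect the main difficulty to be organizational rather than conceptual: ensuring that the $\xi_{\text{down}}$‑dependent error terms accumulate with exactly the claimed powers of $H$ and $d$ --- in particular that pulling $\beta_{N_{\text{on}}}$ out of the per‑episode sum does not inflate the $\xi_{\text{down}}$ contribution beyond $H^2d$, and that the mixture‑policy reduction is precisely the average‑over‑episodes regret so that no extra factor appears on dividing by $N_{\text{on}}$. The genuinely technical content --- self‑normalized concentration with the misspecified upstream feature and the covering‑number bound for the induced value‑function class --- is already packaged inside \Cref{Lemma:self-normalizing-argument-online-downstream} and \Cref{Lemma:bound_action_value_difference}, so it does not need to be redone here.
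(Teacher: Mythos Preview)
Your proposal is correct and follows essentially the same route as the paper: optimism from \Cref{Lemma:optimism_of_value_function}, unrolling via \Cref{Lemma:recursive_formula}, Azuma--Hoeffding for the martingale term, Cauchy--Schwarz plus the elliptical‑potential lemma for the bonus sum, then substituting $\beta_{N_{\text{on}}}$ and dividing by $N_{\text{on}}$. The only cosmetic difference is that the paper keeps the episode‑dependent $\beta_n$ inside the sum and separates it via Cauchy--Schwarz as $\sqrt{\sum_n\beta_n^2}\cdot\sqrt{\sum_n\|\hat\phi_h\|_{(\Lambda_h^n)^{-1}}^2}$ rather than first bounding $\beta_n\leq\beta_{N_{\text{on}}}$, but this yields the same order.
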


\begin{remark}

As we use estimated representation $\{\hat{\phi}_h\}_{h=1}^H$ from the upstream tasks, we get an extra term $H^2d\xi_{\text{down}}$ in the suboptimality gap above. When the linear misspecification error $\xi =\tilde{O}(\sqrt{d/N_{\text{off}}})$ and the of trajectories $n$ in each upstream offline task dataset satisfies $n = \tilde{O}\big(\frac{T N_{\text{on}}}{d}\big)$, the RHS of \eqref{Equation-restated:refined-downstream-online-suboptimality-bound} is dominated by $\tilde{O}(H^2 d ^{3/2}N^{-1/2}_{\text{on}})$ improving the suboptimality gap bound of REP-UCB \citep{uehara2021representation} \footnote{We convert the $1/(1-\gamma)$ horizon dependence in REP-UCB \citet{uehara2021representation} to H. We further rescale their suboptimality gap by a factor of $H^2$ as we do not assume the sum of rewards to be within $[0,1]$. } by an order of $\tilde{O}(H^{3/2}K\sqrt{d})$ under low-rank MDP with unknown representation which attests to the benefit of upstream representation learning.
\end{remark}

\begin{proof}[Proof of \Cref{Theorem_restate:downstream_online_task}]

We first bound the cumulative regret by
\begin{align}\label{Equation:online_cumulative_regret}
    &\sum_{n=1}^{N_{\text{on}}} \Big(V_{P^{(*,T+1)},r}^* - V_{P^{(*,T+1)},r}^{\pi^n}\Big)\notag\\
    &\overset{(\romannumeral1)}{\leq} \sum_{n=1}^{N_{\text{on}}} \big(V_1^n - V_{P^{(*,T+1)},r}^{\pi^n}\big) + 2H^2N_{\text{on}}\xi_{\text{down}}\notag\\
    &\overset{(\romannumeral2)}{\leq} \sum_{n=1}^{N_{\text{on}}}\sum_{h=1}^H \Big[\xi_{h}^n + \beta_n \big\|\hat{\phi}_h(s_h^n,a_h^n)\big\|_{(\Lambda_h^n)^{-1}} + 2H\xi_{\text{down}}\Big]+ 2H^2N_{\text{on}}\xi_{\text{down}}\notag\\
    &\leq \underbrace{\sum_{n=1}^{N_{\text{on}}}\sum_{h=1}^H \xi_{h}^n}_{\text{(a)}} + \underbrace{\sum_{n=1}^{N_{\text{on}}}\sum_{h=1}^H \beta_n \big\|\hat{\phi}_h(s_h^n,a_h^n)\big\|_{(\Lambda_h^n)^{-1}}}_{\text{(b)}} + 4H^2N_{\text{on}}\xi_{\text{down}}
\end{align}
where $(i)$ follows from \Cref{Lemma:optimism_of_value_function} and $(ii)$ follows from \Cref{Lemma:recursive_formula}.

Note that in term $(a)$, $\{\xi_h^n\}_{n=1,h=1}^{N_{\text{on}},H}$ is a martingale difference sequence with $|\xi_h^n|\leq 2$. By Azuma-Hoeffding inequality, we have, with probability at least $1- \delta/4$,
\begin{align}\label{Equation:online_cumulative_regret_part_a}
    \Bigg|\sum_{n=1}^{N_{\text{on}}}\sum_{h=1}^H \xi_{h}^n\Bigg| \leq \sqrt{8N_{\text{on}} H \log (8/\delta)}.
\end{align}

For term $(b)$, we have 
\begin{align}\label{Equation:online_cumulative_regret_part_b}
    &\sum_{n=1}^{N_{\text{on}}}\sum_{h=1}^H \beta_n \big\|\hat{\phi}_h(s_h^n,a_h^n)\big\|_{(\Lambda_h^n)^{-1}}\notag\\
    &\overset{(\romannumeral1)}{\leq} \sum_{h=1}^H \sqrt{\sum_{n=1}^{N_{\text{on}}} \beta_n^2}\sqrt{\sum_{n=1}^{N_{\text{on}}}\big\|\hat{\phi}_h(s_h^n,a_h^n)\big\|^2_{(\Lambda_h^n)^{-1}}}\notag\\
    &\overset{(\romannumeral2)}{\lesssim}\sum_{h=1}^H\sqrt{2c_\beta^2(H^2d^2\iota_nN_{\text{on}}  + H^2 N_{\text{on}}^2d\xi_{\text{down}}^2 )}\sqrt{2d\log(1+N_{\text{on}}/(d\lambda))}\notag\\
    &\leq H\sqrt{2c_\beta^2(H^2d^2\iota_nN_{\text{on}}  + H^2N_{\text{on}}^2d\xi_{\text{down}}^2 )}\sqrt{4d\log N_{\text{on}}}\notag\\
    &\overset{(\romannumeral3)}{\leq} 2\sqrt{2}c_\beta \big(H^2\sqrt{d^3\iota_nN_{\text{on}}\log N_{\text{on}}} + H^2dN_{\text{on}}\xi_{\text{down}}\sqrt{\log N_{\text{on}}}),
\end{align}
where $(i)$ follows from Cauchy-Schwarz inequality, $(ii)$ follows from \Cref{Lemma:elliptical potential lemma}, and $(iii)$ follows from the inequality that $\sqrt{x+y} \leq \sqrt{x} + \sqrt{y}$ for all $x, y \geq 0$.

Combining \eqref{Equation:online_cumulative_regret}, \eqref{Equation:online_cumulative_regret_part_a} and \eqref{Equation:online_cumulative_regret_part_b} we get
\begin{align*}
    \sum_{n=1}^{N_{\text{on}}} \Big(V_{P^{(*,T+1)},r}^* - V_{P^{(*,T+1)},r}^{\pi^n} \Big) &\lesssim \tilde{O}(H^2dN_{\text{on}}\xi_{\text{down}} + H^2 \sqrt{d^3N_{\text{on}}} ).
\end{align*}
Dividing both sides by $N_{\text{on}}$, we get
\begin{align*}
    V_{P^{(*,T+1)},r}^* - V_{P^{(*,T+1)},r}^{\tilde{\pi}} &\leq \tilde{O} ( H^2 d\xi_{\text{down}} + H^2d^{3/2}N_{\text{on}}^{-1/2}).
\end{align*}
This completes the proof.
\end{proof}

\section{Proof of Supporting Lemmas in \Cref{Appendix:Proof for Downstream Online RL}}\label{Section:proof of supporting lemmas for downstream online}

In this section, we provide the proofs of the lemmas that we used in the proof of \Cref{Theorem:downstream_online_task}.

\subsection{Proof of \Cref{Lemma:bound_action_value_difference}}
\begin{proof}[Proof of \Cref{Lemma:bound_action_value_difference}]
    For any policy $\pi$, we define $w_h^\pi = \int V_{h+1}^\pi(s')\hat{\mu}^*(s')ds'$. Note that $\hat{\phi}_h(s,a)^\top w_h^\pi = \overline{P}_h V_{h+1}^\pi(s,a)$  and by \Cref{Lemma:approximate_feature_new_task}, we have $\|w_h^\pi\| \leq C_L  \sqrt{d}$.  

    Now, we derive the following
    \begin{align}\label{Equation:decompose_value_action_difference}
        & \hat{\phi}_h(s,a)^\top \hat{w}_h^n - Q_h^\pi(s,a)\notag\\
        &= \hat{\phi}_h(s,a)^\top \hat{w}_h^n -\hat{\phi}_h(s,a)^\top w_h^\pi + \hat{\phi}_h(s,a)^\top w_h^\pi - Q_h^\pi(s,a)\notag\\
        &\leq \big(\hat{\phi}_h(s,a)^\top \hat{w}_h^n -\hat{\phi}_h(s,a)^\top w_h^\pi\big) + \big| \hat{\phi}_h(s,a)^\top w_h^\pi - Q_h^\pi(s,a)\big|
    \end{align}

    Using \Cref{Lemma:approximate_feature_new_task}, we bound the second term as
    \begin{align}\label{Equation:Q_pi_minus_w_pi}
        \big|Q_h^\pi(s,a)  - \hat{\phi}_h(s,a)^\top w_h^\pi \big| &= \big|P_h^{(*,T+1)} V_{h+1}^\pi(s,a) - \overline{P}_h V_{h+1}^\pi(s,a)\big|\notag\\
        &\leq H\xi_{\text{down}},
    \end{align}
    where we used the observation $|V_{h+1}^\pi| \leq H$. 

    Now, the first term in \eqref{Equation:decompose_value_action_difference} can be bounded by
    \begin{align}\label{Equation:phi_times_estimated_w_minus_any_policy_w}
        &\hat{\phi}_h(s,a)^\top \hat{w}_h^n -\hat{\phi}_h(s,a)^\top w_h^\pi\notag\\
        &= \hat{\phi}_h(s,a)^\top (\Lambda_h^n)^{-1}\sum_{\tau=1}^{n-1} \hat{\phi}_h(s_h^\tau,a_h^\tau)\hat{V}_{h+1}^n(s_{h+1}^\tau) -\hat{\phi}_h(s,a)^\top w_h^\pi\notag\\
        &= \hat{\phi}_h(s,a)^\top (\Lambda_h^n)^{-1}\bigg\{\sum_{\tau=1}^{n-1} \hat{\phi}_h(s_h^\tau,a_h^\tau)\hat{V}_{h+1}^n(s_{h+1}^\tau) - \lambda_d w_h^\pi - \sum_{\tau=1}^{n-1}\hat{\phi}_h(s_h^\tau,a_h^\tau) \overline{P}_h V_{h+1}^\pi \bigg\}\notag\\
        &= \underbrace{-\lambda_d \hat{\phi}_h(s,a)^\top (\Lambda_h^n)^{-1} w_h^\pi}_{\text{(a)}} + \underbrace{\hat{\phi}_h(s,a)^\top (\Lambda_h^n)^{-1}\bigg\{\sum_{\tau=1}^{n-1} \hat{\phi}_h(s_h^\tau,a_h^\tau)\Big[\hat{V}_{h+1}^n(s_{h+1}^\tau) - P_h^{(*,T+1)}\hat{V}_{h+1}^n(s_{h+1}^\tau)\bigg\}}_{\text{(b)}}\notag\\
        &\qquad + \underbrace{\hat{\phi}_h(s,a)^\top (\Lambda_h^n)^{-1}\bigg\{\sum_{\tau=1}^{n-1} \hat{\phi}_h(s_h^\tau,a_h^\tau)\overline{P}_h(\hat{V}_{h+1}^n - V_{h+1}^\pi)(s_h^\tau,a_h^\tau)\bigg\}}_{\text{(c)}}\notag\\
        &\qquad + \underbrace{\hat{\phi}_h(s,a)^\top (\Lambda_h^n)^{-1}\bigg\{\sum_{\tau=1}^{n-1} \hat{\phi}_h(s_h^\tau,a_h^\tau) \Big(P_h^{(*,T+1)}-\overline{P}_h\Big)\hat{V}_{h+1}^n(s_{h+1}^\tau)\bigg\}}_{\text{(d)}}
    \end{align}
    We now bound $(a), (b), (c), (d)$ in ~\eqref{Equation:phi_times_estimated_w_minus_any_policy_w} individually.

    \textbf{Term (a).} We have,
    \begin{align}\label{Equation:term_a_online}
        \big|-\lambda_d \hat{\phi}_h(s,a)^\top (\Lambda_h^n)^{-1} w_h^\pi\big| &\leq \big\|\hat{\phi}_h(s,a)\big\|_{(\Lambda_h^n)^{-1}} \|\lambda_d w_h^\pi\|_{(\Lambda_h^n)^{-1}}\notag\\
        &\leq \sqrt{\lambda_d}\|w_h^\pi\|_2\big\|\hat{\phi}_h(s,a)\big\|_{(\Lambda_h^n)^{-1}}\notag\\
        &\leq C_L \sqrt{\lambda_d Hd}\big\|\hat{\phi}_h(s,a)\big\|_{(\Lambda_h^n)^{-1}},
    \end{align}
    where the first inequality follows from Cauchy-Schwarz inequality, the second inequality follows from the fact that the largest eigenvalue of $(\Lambda_h^n)^{-1}$ is at most $1/\lambda_d$ and the last inequality follows from \Cref{Ass:realizability} and $|V^\pi_{h+1}(s)|\leq H$ for all $s\in\cS$.

    \textbf{Term (b).} Using Cauchy-Schwarz inequality and  \Cref{Lemma:self-normalizing-argument-online-downstream}, we have
    \begin{align}\label{Equation:term_b_online}
        &\hat{\phi}_h(s,a)^\top (\Lambda_h^n)^{-1}\bigg\{\sum_{\tau=1}^{n-1} \hat{\phi}_h(s_h^\tau,a_h^\tau)\Big[\hat{V}_{h+1}^n(s_{h+1}^\tau) - P_h^{(*,T+1)}\hat{V}_{h+1}^n(s_{h+1}^\tau)\bigg\}\notag\\
        &\leq \big\|\hat{\phi}_h(s,a)\big\|_{(\Lambda_h^n)^{-1}}\bigg\|\sum_{\tau=1}^{n-1} \hat{\phi}_h(s_h^\tau,a_h^\tau)\Big[\hat{V}_{h+1}^n(s_{h+1}^\tau) - P_h^{(*,T+1)}\hat{V}_{h+1}^n(s_{h+1}^\tau)\bigg\|_{(\Lambda_h^n)^{-1}}\notag\\
        &\lesssim Hd\sqrt{\iota_n} \big\|\hat{\phi}_h(s,a)\big\|_{(\Lambda_h^n)^{-1}}.
    \end{align}

    \textbf{Term (c).} We have
    \begin{align}\label{Equation:term_c_online}
        &\hat{\phi}_h(s,a)^\top (\Lambda_h^n)^{-1}\bigg\{\sum_{\tau=1}^{n-1} \hat{\phi}_h(s_h^\tau,a_h^\tau)\overline{P}_h(\hat{V}_{h+1}^n - V_{h+1}^\pi)(s_h^\tau,a_h^\tau)\bigg\}\notag\\
        &\leq \bigg|\hat{\phi}_h(s,a)^\top (\Lambda_h^n)^{-1}\bigg\{\sum_{\tau=1}^{n-1} \hat{\phi}_h(s_h^\tau,a_h^\tau)\hat{\phi}_h(s_h^\tau,a_h^\tau)^\top \int (\hat{V}_{h+1}^n - V_{h+1}^\pi)(s')\hat{\mu}_h^*(s')ds'\bigg\}\bigg|\notag\\
        &= \bigg|\hat{\phi}_h(s,a)^\top (\Lambda_h^n)^{-1}(\Lambda_h^n - \lambda_d I)\int (\hat{V}_{h+1}^n - V_{h+1}^\pi)(s')\hat{\mu}_h^*(s')ds'\bigg|\notag\\
        &\leq \bigg|\hat{\phi}_h(s,a)^\top \int (\hat{V}_{h+1}^n - V_{h+1}^\pi)(s')\hat{\mu}_h^*(s')ds'\bigg| + \bigg|\lambda_d \hat{\phi}_h(s,a)^\top (\Lambda_h^n)^{-1}\int (\hat{V}_{h+1}^n - V_{h+1}^\pi)(s')\hat{\mu}_h^*(s')ds'\bigg|\notag\\
        &\overset{(\romannumeral1)}{\leq} \big|\overline{P}_h(\hat{V}_{h+1}^n - V_{h+1}^\pi)(s,a)\big| + C_L  \sqrt{\lambda_d Hd}\big\|\hat{\phi}_h(s,a)\big\|_{(\Lambda_h^n)^{-1}}\notag\\
        &\overset{(\romannumeral2)}{\leq} P_h^{(*,T+1)}(\hat{V}_{h+1}^n - V_{h+1}^\pi)(s,a) + H\xi_{\text{down}} + C_L  \sqrt{\lambda_d Hd}\big\|\hat{\phi}_h(s,a)\big\|_{(\Lambda_h^n)^{-1}},
    \end{align}
    where $(i)$ follows from similar steps as in \eqref{Equation:term_a_online} and $(ii)$ follows from \Cref{Lemma:approximate_feature_new_task}.

    \textbf{Term (d).} We have
    \begin{align}\label{Equation:term_d_online}
        &\hat{\phi}_h(s,a)^\top (\Lambda_h^n)^{-1}\bigg\{\sum_{\tau=1}^{n-1} \hat{\phi}_h(s_h^\tau,a_h^\tau) \Big(P_h^{(*,T+1)}-\overline{P}_h\Big)\hat{V}_{h+1}^n(s_{h+1}^\tau)\bigg\} \notag\\
        &\overset{(\romannumeral1)}{\leq} \bigg|\hat{\phi}_h(s,a)^\top (\Lambda_h^n)^{-1}\bigg\{\sum_{\tau=1}^{n-1} \hat{\phi}_h(s_h^\tau,a_h^\tau)\bigg\}\bigg|\cdot H\xi_{\text{down}}\notag\\
        &=\sum_{\tau=1}^{n-1}\big|\hat{\phi}_h(s,a)^\top (\Lambda_h^n)^{-1}\hat{\phi}_h(s_h^\tau,a_h^\tau)\big|\cdot H\xi_{\text{down}}\notag\\
        &\overset{(\romannumeral2)}{\leq} \sqrt{\bigg(\sum_{\tau=1}^{n-1} \big\|\hat{\phi}_h(s,a)\big\|^2_{(\Lambda_h^n)^{-1}}\bigg)\bigg(\sum_{\tau=1}^{n-1} \big\|\hat{\phi}_h(s_h^\tau,a_h^\tau)\big\|^2_{(\Lambda_h^n)^{-1}}\bigg)}\cdot H\xi_{\text{down}}\notag\\
        &\overset{(\romannumeral3)}{\leq} H\xi_{\text{down}}\sqrt{dn}\big\|\hat{\phi}_h(s,a)\big\|_{(\Lambda_h^n)^{-1}},
    \end{align}
    where $(i)$ follows from \Cref{Lemma:approximate_feature_new_task} and $|\hat{V}^n_{h+1}(s)|\leq H$ for all $s\in\cS$, $(ii)$ follows from Cauchy-Schwarz inequality and $(iii)$ follows from \Cref{Lemma:D.1-chijin}.

    Substituting \eqref{Equation:term_a_online}, \eqref{Equation:term_b_online}, \eqref{Equation:term_c_online}, \eqref{Equation:term_d_online} into \eqref{Equation:phi_times_estimated_w_minus_any_policy_w}, and setting $\lambda_d =1$ we get
    \begin{align}\label{Equation:combining_all_abcd}
        &\hat{\phi}_h(s,a)^\top \hat{w}_h^n -\hat{\phi}_h(s,a)^\top w_h^\pi\notag\\
        &\lesssim c_\beta \big(Hd\sqrt{\iota_n(\delta)}+ H\sqrt{dn}\xi_{\text{down}}+ C_L \sqrt{Hd} \big)\big\|\hat{\phi}_h(s,a)\big\|_{(\Lambda_h^n)^{-1}} \notag\\
        &\qquad + P_h^{(*,T+1)}(\hat{V}_{h+1}^n - V_{h+1}^\pi)(s,a) + H\xi_{\text{down}}\notag\\
        &= \beta_n \big\|\hat{\phi}_h(s,a)\big\|_{(\Lambda_h^n)^{-1}} + P_h^{(*,T+1)}(\hat{V}_{h+1}^n - V_{h+1}^\pi)(s,a) + H\xi_{\text{down}}.
    \end{align}
    Combining \eqref{Equation:combining_all_abcd}, \eqref{Equation:Q_pi_minus_w_pi} in \eqref{Equation:decompose_value_action_difference} completes the proof. 
\end{proof}

\subsection{Proof of \Cref{Lemma:optimism_of_value_function}}
\begin{proof}[Proof of \Cref{Lemma:optimism_of_value_function}]
We use backward induction for our proof. First, we prove the base case, at the last step $H$. By \Cref{Lemma:bound_action_value_difference}, we have
\begin{align*}
    \big| \hat{\phi}_H(s,a) \hat{w}_H^n - Q_H^\pi(s,a)\big| &= \big|P_H^{(*,T+1)}(\hat{V}_{H+1}^n - V_{H+1}^\pi)(s,a) + \Delta_H^n(s,a)\big|\\
    &\leq \beta_n \big\|\hat{\phi}_H(s,a)\big\|_{(\Lambda_H^n)^{-1}} + 2H\xi_{\text{down}}.
\end{align*}
Thus, we have
\begin{align*}
    \hat{Q}_H^n(s,a) &= \min\Big\{\hat{\phi}_H(s,a)^\top \hat{w}_H^n +\beta_n \big\|\hat{\phi}_H(s,a)\big\|_{(\Lambda_H^n)^{-1}},H-h+1 \Big\}\\
    &\geq Q_H^*(s,a) - 2H\xi_{\text{down}}.
\end{align*}

Now, suppose the statement holds true at step $h+1$ and consider the step $h$. Using \Cref{Lemma:bound_action_value_difference}, we have 
\begin{align*}
     \hat{\phi}_h(s,a)^\top \hat{w}_h^n - Q_h^*(s,a) &= \Delta_h^n(s,a) + P_h^{(*,T+1)}(\hat{V}_{h+1}^n - V_{h+1}^*)(s,a)\\
    &\geq -\beta_n\big\|\hat{\phi}_h(s,a)\big\|_{(\Lambda_h^n)^{-1}} - 2H\xi_{\text{down}} - 2H(H-h)\xi_{\text{down}}\\
    &= -\beta_n\big\|\hat{\phi}_h(s,a)\big\|_{(\Lambda_h^n)^{-1}} - 2H(H-h+1)\xi_{\text{down}}.
\end{align*}
Therefore,
\begin{align*}
    \hat{Q}_h^n(s,a) &= \min\{\hat{\phi}_h(s,a)^\top \hat{w}^n_h +\beta_n\|\hat{\phi}(s,a)\|_{(\Lambda_h^n)^{-1}},H-h+1\}^+\\
    &\geq Q_h^*(s,a) - 2H(H-h+1)\xi_{\text{down}},
\end{align*}
which completes the proof.
\end{proof}

\subsection{Proof of \Cref{Lemma:recursive_formula}}
\begin{proof}[Proof of \Cref{Lemma:recursive_formula}]
    By \Cref{Lemma:bound_action_value_difference}, for any $(s,a) \in \cS\times\cA$, $h\in[H]$ and $n \in [N_{\text{on}}]$, with probability at least $1-\delta/2$, we have 
    \begin{align*}
        \hat{Q}_h^n(s,a) - Q_h^{\pi^n}(s,a) &\leq \Delta_h^n(s,a) + P_h^{(*,T+1)}(\hat{V}_{h+1}^n - V_{h+1}^{\pi^n})(s,a)\\
        &\leq \beta_n \big\|\hat{\phi}_h(s,a)\big\|_{(\Lambda_h^n)^{-1}}+ 2H\xi_{\text{down}}+  P_h^{(*,T+1)}(\hat{V}_{h+1}^n - V_{h+1}^{\pi^n})(s,a).
    \end{align*}

    And finally, by definition of $\pi^n$ in \Cref{Algorithm:downstream-online-RL}, we have $\pi^n(s_h^n) = a_h^n = \argmax_{a\in\cA} \hat{Q}_h^n(s_h,a)$. This implies $\hat{Q}_h^n(s_h^n,a_h^n) -Q_h^{\pi^n}(s_h^n,a_h^n) = \hat{V}_h^n(s_h^n) -V_h^{\pi^n}(s_h^n)= \delta_h^n$. Thus, we have
    \begin{align*}
        \delta_h^n \leq \delta_{h+1}^n + \xi_{h+1}^n + \beta_n \big\|\hat{\phi}_h(s,a)\big\|_{(\Lambda_h^n)^{-1}}+ 2H\xi_{\text{down}}.
    \end{align*}
\end{proof}

\section{Auxiliary Lemmas}

\subsection{Miscellaneous Lemmas}
The following lemma measures the difference between two value functions under two MDPs and reward functions. Here, we use a shorthand notation $P_hV_{h+1}(s_h,a_h) = \EE_{s\sim P_h(\cdot|s_h,a_h)}[V(s)]$.
\begin{lemma}[Simulation lemma \citep{dann2017unifying}]\label{lemma:simulation_lemma}
    Consider two MDPs with transition kernels $P_1$ and $P_2$, and reward function $r_1$ and $r_2$ respectively. Given a policy $\pi$, we have,
    \begin{align*}
        V_{h,P_1,r_1}^\pi(s_h)-V_{h,P_2,r_2}^\pi(s_h) &= \sum_{h'=h}^H \EE_{s_{h'} \sim (P_2,\pi) \atop a_{h'} \sim \pi}\big[r_1(s_{h'},a_{h'}) - r_2(s_{h'},a_{h'}) \\
        &\qquad + (P_{1,h'}-P_{2,h'})V_{h'+1,P_1,r_1}^\pi(s_{h'},a_{h'}) \given s_h\big]\\
        &= \sum_{h'=h}^H \EE_{s_{h'} \sim (P_1,\pi) \atop a_{h'} \sim \pi}\big[r_1(s_{h'},a_{h'}) - r_2(s_{h'},a_{h'}) \\
        &\qquad + (P_{1,h'}-P_{2,h'})V_{h'+1,P_2,r_2}^\pi(s_{h'},a_{h'}) \given s_h\big]\\
    \end{align*}
\end{lemma}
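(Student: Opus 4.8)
The plan is to establish the first of the two displayed identities by peeling off a single step via the Bellman equations and then unrolling the resulting recursion to the horizon $H$; the second identity then follows by symmetry, exchanging the roles of $(P_1,r_1)$ and $(P_2,r_2)$.

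First I would fix the policy $\pi$ and a state $s_h$, set $a_h=\pi_h(s_h)$, and expand both value functions one step using \eqref{eq:bellman-eq}. Adding and subtracting $\big(P_{2,h}V_{h+1,P_1,r_1}^\pi\big)(s_h,a_h)$ then regroups the difference as
\begin{align*}
V_{h,P_1,r_1}^\pi(s_h)-V_{h,P_2,r_2}^\pi(s_h) &= \big(r_1-r_2\big)(s_h,a_h) + \big((P_{1,h}-P_{2,h})V_{h+1,P_1,r_1}^\pi\big)(s_h,a_h) \\ &\quad + \EE_{s_{h+1}\sim P_{2,h}(\cdot\given s_h,a_h)}\!\left[V_{h+1,P_1,r_1}^\pi(s_{h+1})-V_{h+1,P_2,r_2}^\pi(s_{h+1})\right].
\end{align*}
The trailing term is precisely the original quantity, shifted to step $h+1$ and with the state distributed according to one step of $(P_2,\pi)$.

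Next I would iterate this identity from step $h$ through step $H$, terminating the recursion with the boundary condition $V_{H+1,P_1,r_1}^\pi=V_{H+1,P_2,r_2}^\pi\equiv 0$. Composing the nested one-step conditional expectations into a single trajectory expectation (valid because $\pi$ is Markovian, by the tower property) gives
\begin{align*}
V_{h,P_1,r_1}^\pi(s_h)-V_{h,P_2,r_2}^\pi(s_h) = \sum_{h'=h}^H \EE_{s_{h'}\sim(P_2,\pi),\,a_{h'}\sim\pi}\!\left[\big(r_1-r_2\big)(s_{h'},a_{h'}) + (P_{1,h'}-P_{2,h'})V_{h'+1,P_1,r_1}^\pi(s_{h'},a_{h'})\,\Big|\,s_h\right],
\end{align*}
which is the first claimed identity. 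For the second, I would instead add and subtract $\big(P_{1,h}V_{h+1,P_2,r_2}^\pi\big)(s_h,a_h)$ in the one-step expansion, so that the recursion now rolls the state forward under $(P_1,\pi)$ and the error term becomes $(P_{1,h'}-P_{2,h'})V_{h'+1,P_2,r_2}^\pi$; equivalently, apply the first identity to the pair $(P_2,r_2)$ versus $(P_1,r_1)$ and multiply through by $-1$.

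I do not anticipate a genuine obstacle here: the computation is routine. The only points meriting care are that $\pi$ is Markovian so the one-step conditional expectations compose into an expectation over the trajectory generated by $(P_2,\pi)$ (respectively $(P_1,\pi)$ for the second identity), and the terminal condition $V_{H+1}\equiv 0$ under both models, which makes the telescoping sum finite and closes the induction.
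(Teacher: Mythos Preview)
Your proof is correct and is the standard telescoping argument for the simulation lemma. The paper does not actually prove this statement: it is listed among the auxiliary lemmas with only a citation to \citet{dann2017unifying}, so there is no paper proof to compare against. Your one-step Bellman expansion with the add-and-subtract trick, followed by unrolling under $(P_2,\pi)$ (and the symmetric version under $(P_1,\pi)$), is exactly the intended argument.
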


We use the following lemma to deal with distribution shift in offline RL setting. 
\begin{lemma}[Distribution shift lemma \citep{chang2021mitigating}]\label{Lemma:distribution_shift}
    Consider two distributions $\rho_1 \in \Delta(\cS\times\cA)$ and $\rho_2 \in \Delta(\cS\times\cA)$, and a feature mapping $\phi:\cS\times \cA \rightarrow \RR^d$. Denote $C:= \text{sup}_{x\in \RR^d}\frac{x^\top \EE_{s,a\sim \rho_1}\phi(s,a)\phi(s,a)^\top x}{x^\top \EE_{s,a\sim \rho_2}\phi(s,a)\phi(s,a)^\top x}$. Then for any positive definite matrix $\Lambda$, we have
    \begin{align*}
        \EE_{s,a\sim \rho_1}\phi(s,a)^\top \Lambda \phi(s,a) \leq C\EE_{s,a\sim \rho_2}\phi(s,a)^\top \Lambda \phi(s,a).
    \end{align*}
\end{lemma}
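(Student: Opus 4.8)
The plan is to reduce the claim to an elementary fact about the Loewner (positive semidefinite) order. Write $M_1 := \EE_{(s,a)\sim\rho_1}[\phi(s,a)\phi(s,a)^\top]$ and $M_2 := \EE_{(s,a)\sim\rho_2}[\phi(s,a)\phi(s,a)^\top]$, both symmetric positive semidefinite $d\times d$ matrices. First I would rewrite both sides of the desired inequality using the identity $\phi(s,a)^\top\Lambda\,\phi(s,a) = \Tr(\Lambda\,\phi(s,a)\phi(s,a)^\top)$ together with linearity of expectation and of the trace, which gives $\EE_{(s,a)\sim\rho_1}[\phi^\top\Lambda\phi] = \Tr(\Lambda M_1)$ and $\EE_{(s,a)\sim\rho_2}[\phi^\top\Lambda\phi] = \Tr(\Lambda M_2)$. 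Hence it suffices to prove $\Tr(\Lambda M_1)\le C\,\Tr(\Lambda M_2)$.

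Next I would extract a matrix inequality from the definition of $C$. If $C=\infty$ the claim is trivial, so assume $C<\infty$. Then for every $x$ with $x^\top M_2 x>0$ we have $x^\top M_1 x\le C\,x^\top M_2 x$ by definition of the supremum; and for every $x$ with $x^\top M_2 x=0$ we must also have $x^\top M_1 x=0$, since otherwise the supremum defining $C$ would be infinite. Therefore $x^\top M_1 x\le C\,x^\top M_2 x$ for all $x\in\RR^d$, i.e. $M_1 \preceq C\,M_2$ in the Loewner order.

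Then I would conjugate by $\Lambda^{1/2}$, which is well defined and symmetric because $\Lambda$ is positive definite: congruence transformations preserve the Loewner order, so $\Lambda^{1/2}M_1\Lambda^{1/2} \preceq C\,\Lambda^{1/2}M_2\Lambda^{1/2}$. Taking traces of both sides and using that $A\preceq B$ implies $\Tr(A)\le\Tr(B)$ (since $\Tr(B-A)\ge 0$ whenever $B-A\succeq 0$), together with the cyclic property of the trace, $\Tr(\Lambda^{1/2}M_i\Lambda^{1/2}) = \Tr(\Lambda M_i)$ for $i\in\{1,2\}$, yields $\Tr(\Lambda M_1)\le C\,\Tr(\Lambda M_2)$, which combined with the first paragraph completes the proof.

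There is no genuinely hard step here; the only points requiring a little care are (i) the possibly rank-deficient $M_2$, where the ratio in the definition of $C$ must be read as a supremum over directions on which $M_2$ is nondegenerate, with the degenerate directions absorbed into the finiteness of $C$ as in the second paragraph, and (ii) checking that $\Lambda^{1/2}$ exists, which is immediate from positive definiteness of $\Lambda$. In the actual applications in this paper (e.g. $\Lambda = (\Sigma_{h-1,\pi_t^b,\phi^*})^{-1}$) the matrix $\Lambda$ is always positive definite due to the $\lambda I$ regularization, so these technicalities do not arise.
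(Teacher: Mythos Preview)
Your proof is correct and is essentially the standard argument for this fact: rewrite both sides as $\Tr(\Lambda M_i)$ with $M_i=\EE_{\rho_i}[\phi\phi^\top]$, observe that the finiteness of $C$ forces $M_1\preceq C\,M_2$ in the Loewner order, and then conjugate by $\Lambda^{1/2}$ and take traces. Your handling of the rank-deficient case for $M_2$ is also the right way to read the supremum.

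The paper does not actually prove this lemma; it is stated as an auxiliary result cited from \citet{chang2021mitigating}, so there is no in-paper proof to compare against. Your argument is exactly the kind of elementary Loewner-order computation one would expect, and it matches the way the lemma is applied in the proof of \Cref{theorem:sample-complexity-upstream}, where $\Lambda=(\Sigma_{h-1,\pi_t^b,\phi^*})^{-1}$ is positive definite thanks to the $\lambda I$ regularization.
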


We use the following lemma to bound the suboptimality in downstream offline RL.
\begin{lemma}[Decomposition of Suboptimality, Lemma 3.1 in \citep{jin2021pessimism}]\label{Lemma:decomposition_suboptimality}
    Let $\hat{\pi} = \{\hat{\pi}_h\}_{h=1}^H$ be the policy such that $\hat{V}_h(s) = \la \hat{Q}_h(s,\cdot),\hat{\pi}_h(\cdot|s)\ra_\cA$ and for each step $h\in[H]$, define the model evaluation error as $l_h(s,a) = (\BB_h \hat{V}_{h+1})(s,a) - \hat{Q}_h(s,a)$. For any $\hat{\pi}$ and $s\in \cS$, we have
    \begin{align*}
        V_1^{\pi^*}(s) - V_1^{\hat{\pi}}(s) &= -\sum_{h=1}^H \EE_{\hat{\pi}}[l_h(s_h,a_h)|s_1 = s] + \sum_{h=1}^H \EE_{\pi^*} [l_h(s_h,a_h)|s_1 = s] \\
        &\qquad + \sum_{h=1}^H \EE_{\pi^*}[\la \hat{Q}_h(s_h,\cdot), \pi_h^*(\cdot|s_h) - \hat{\pi}_h(\cdot|s_h)\ra_\cA |s_1 = s].
    \end{align*}
\end{lemma}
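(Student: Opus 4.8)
The plan is to prove the identity by first establishing a general \emph{single-policy value-difference decomposition} valid for an arbitrary comparator policy $\pi$, and then instantiating it at the two policies $\pi^*$ and $\hat{\pi}$ and subtracting; this is exactly the structure of Lemma 3.1 in \citet{jin2021pessimism}. The only facts I would invoke are the Bellman equation \eqref{eq:bellman-eq} (so that $Q_h^{\pi}=\BB_h V_{h+1}^{\pi}$), the definition $\hat{V}_h(s)=\la\hat{Q}_h(s,\cdot),\hat{\pi}_h(\cdot|s)\ra_\cA$, and the defining relation $\hat{Q}_h=\BB_h\hat{V}_{h+1}-l_h$ that comes directly from the definition of the model evaluation error $l_h=\BB_h\hat V_{h+1}-\hat Q_h$.

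First I would derive a per-step identity. Fix any policy $\pi$ and write, for each $h$ and each $s$,
\[
\hat{V}_h(s)-V_h^{\pi}(s)=\la\hat{Q}_h(s,\cdot),\hat{\pi}_h(\cdot|s)\ra_\cA-\la Q_h^{\pi}(s,\cdot),\pi_h(\cdot|s)\ra_\cA,
\]
then add and subtract $\la\hat{Q}_h(s,\cdot),\pi_h(\cdot|s)\ra_\cA$ to split this into a policy-mismatch term $\la\hat{Q}_h(s,\cdot),\hat{\pi}_h(\cdot|s)-\pi_h(\cdot|s)\ra_\cA$ plus $\la\hat{Q}_h(s,\cdot)-Q_h^{\pi}(s,\cdot),\pi_h(\cdot|s)\ra_\cA$. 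Using $Q_h^{\pi}=\BB_h V_{h+1}^{\pi}$ and $\hat{Q}_h=\BB_h\hat{V}_{h+1}-l_h$, the reward inside $\BB_h$ cancels and leaves the key relation $\hat{Q}_h-Q_h^{\pi}=P_h^{(*,T+1)}(\hat{V}_{h+1}-V_{h+1}^{\pi})-l_h$, so the second inner product equals $\EE_{a\sim\pi_h}[\,P_h^{(*,T+1)}(\hat{V}_{h+1}-V_{h+1}^{\pi})(s,a)\,]-\EE_{a\sim\pi_h}[l_h(s,a)]$.

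Next I would telescope. Taking $\EE_\pi[\,\cdot\mid s_1=s\,]$ of the per-step identity and abbreviating $\delta_h=\EE_\pi[\hat{V}_h(s_h)-V_h^{\pi}(s_h)\mid s_1=s]$, the transition term collapses to $\delta_{h+1}$ by the tower property, giving $\delta_h=\EE_\pi[\la\hat{Q}_h,\hat{\pi}_h-\pi_h\ra_\cA]+\delta_{h+1}-\EE_\pi[l_h]$. Summing over $h\in[H]$ with the boundary condition $\hat{V}_{H+1}=V_{H+1}^{\pi}=0$ (so $\delta_{H+1}=0$) yields the general decomposition
\[
\hat{V}_1(s)-V_1^{\pi}(s)=\sum_{h=1}^H\EE_\pi\big[\la\hat{Q}_h(s_h,\cdot),\hat{\pi}_h(\cdot|s_h)-\pi_h(\cdot|s_h)\ra_\cA\mid s_1=s\big]-\sum_{h=1}^H\EE_\pi[l_h(s_h,a_h)\mid s_1=s].
\]

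Finally I would instantiate this twice. At $\pi=\hat{\pi}$ the mismatch term vanishes identically, so $\hat{V}_1(s)-V_1^{\hat{\pi}}(s)=-\sum_h\EE_{\hat{\pi}}[l_h]$; at $\pi=\pi^*$ it reads $\hat{V}_1(s)-V_1^{\pi^*}(s)=\sum_h\EE_{\pi^*}[\la\hat{Q}_h,\hat{\pi}_h-\pi^*_h\ra_\cA]-\sum_h\EE_{\pi^*}[l_h]$. Subtracting the second from the first cancels the common $\hat{V}_1(s)$ to produce $V_1^{\pi^*}(s)-V_1^{\hat{\pi}}(s)$ on the left, and rewriting $-\la\hat{Q}_h,\hat{\pi}_h-\pi^*_h\ra_\cA=\la\hat{Q}_h,\pi^*_h-\hat{\pi}_h\ra_\cA$ reproduces the claimed three-term expression. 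The only genuine bookkeeping subtlety—and the step I would verify most carefully—is the reward cancellation when forming $\hat{Q}_h-Q_h^{\pi}$ together with the sign tracking in the final subtraction; the remainder is a mechanical telescoping argument.
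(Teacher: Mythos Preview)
Your proof is correct and follows precisely the standard two-step argument from \citet{jin2021pessimism}: first establish the generic telescoping identity $\hat{V}_1(s)-V_1^{\pi}(s)=\sum_h\EE_\pi[\la\hat{Q}_h,\hat{\pi}_h-\pi_h\ra_\cA]-\sum_h\EE_\pi[l_h]$ for an arbitrary $\pi$, then instantiate at $\pi=\hat{\pi}$ and $\pi=\pi^*$ and subtract. The paper itself does not supply a proof of this lemma---it is stated as an auxiliary result imported directly from Lemma~3.1 of \citet{jin2021pessimism}---so there is nothing further to compare; your write-up is exactly the argument the cited reference gives.
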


\subsection{Inequalities for summations}
\begin{lemma}[Lemma D.1 in \cite{jin2019provably}] \label{Lemma:D.1-chijin}

Let $\Lambda_h = \lambda I + \sum_{i=1}^t \phi_i\phi_i^\top$, where $\phi_i \in \mathbb{R}^d$ and $\lambda > 0$. Then it holds that
\begin{equation*}
    \sum_{i=1}^t \phi_i^\top(\Lambda_h)^{-1}\phi_i \leq d.
\end{equation*}
\end{lemma}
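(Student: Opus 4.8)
The plan is to reduce the scalar sum to a trace and then telescope using the definition of $\Lambda_h$. First I would rewrite each summand as a trace: for any $i$, since $\phi_i^\top (\Lambda_h)^{-1} \phi_i$ is a scalar, cyclicity of the trace gives $\phi_i^\top (\Lambda_h)^{-1} \phi_i = \Tr\!\left((\Lambda_h)^{-1} \phi_i \phi_i^\top\right)$. Summing over $i \in \{1,\dots,t\}$ and pulling the finite sum inside the trace yields
\[ \sum_{i=1}^t \phi_i^\top (\Lambda_h)^{-1} \phi_i = \Tr\!\left((\Lambda_h)^{-1} \sum_{i=1}^t \phi_i \phi_i^\top\right). \]

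Next I would substitute $\sum_{i=1}^t \phi_i \phi_i^\top = \Lambda_h - \lambda I$, which is immediate from $\Lambda_h = \lambda I + \sum_{i=1}^t \phi_i \phi_i^\top$. This gives
\[ \Tr\!\left((\Lambda_h)^{-1}(\Lambda_h - \lambda I)\right) = \Tr(I_d) - \lambda\,\Tr\!\left((\Lambda_h)^{-1}\right) = d - \lambda\,\Tr\!\left((\Lambda_h)^{-1}\right). \]
Finally, since $\lambda > 0$ we have $\Lambda_h \succeq \lambda I \succ 0$, so $(\Lambda_h)^{-1}$ is positive definite and $\Tr\!\left((\Lambda_h)^{-1}\right) > 0$; hence the displayed quantity is at most $d$, which is the claim.

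There is no real obstacle here: the only point to be careful about is that $\Lambda_h$ is invertible so every expression above is well-defined, and this holds because $\Lambda_h \succeq \lambda I$ forces its smallest eigenvalue to be at least $\lambda > 0$. No concentration or covering arguments are involved; the whole proof is a trace identity plus positivity of $\Tr\!\left((\Lambda_h)^{-1}\right)$.
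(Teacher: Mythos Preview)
Your proof is correct and is the standard argument for this trace inequality. The paper does not provide its own proof of this lemma; it is simply cited as Lemma~D.1 of \cite{jin2019provably}, and your trace-telescoping argument is exactly the usual derivation.
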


\begin{lemma}[Lemma D.5 in \citep{ishfaq2021randomized}]\label{Lemma:D.5_Ishfaq}
    Let $A \in \mathbb{R}^{d\times d}$ be a positive definite matrix where its largest eigenvalue $\lambda_{\max}(A) \leq \lambda$. Let $x_1, \ldots, x_k$ be $k$ vectors in $\mathbb{R}^d$. Then it holds that
    \begin{equation*}
        \Bigl\|A \sum_{i=1}^k x_i\Bigr\| \leq \sqrt{\lambda k} \left(\sum_{i=1}^k \|x_i\|_A^2\right)^{1/2}.
    \end{equation*}
\end{lemma}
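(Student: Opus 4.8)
The plan is to reduce the claim to two elementary facts: an operator inequality for $A$ and the convexity-type bound $\|\sum_i u_i\|^2 \le k\sum_i \|u_i\|^2$. Throughout, $\|\cdot\|$ denotes the Euclidean norm and $\|x\|_A^2 = x^\top A x$, which is well-defined since $A$ is positive definite.

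First I would write $v = \sum_{i=1}^k x_i$ and observe that, because $A$ is symmetric, $\|Av\|^2 = v^\top A^\top A v = v^\top A^2 v$. The key step is the operator inequality $A^2 \preceq \lambda A$: since $\lambda_{\max}(A)\le \lambda$ we have $A \preceq \lambda I$, and conjugating by $A^{1/2}$ gives $A^2 = A^{1/2} A A^{1/2} \preceq \lambda\, A^{1/2} A^{1/2} = \lambda A$. Hence
\begin{equation*}
    \Bigl\|A\sum_{i=1}^k x_i\Bigr\|^2 = v^\top A^2 v \le \lambda\, v^\top A v = \lambda\,\Bigl\|\sum_{i=1}^k x_i\Bigr\|_A^2 .
\end{equation*}
Next I would bound the remaining term by writing $\|\sum_i x_i\|_A^2 = \|\sum_i A^{1/2} x_i\|^2$ and applying Cauchy--Schwarz (equivalently, convexity of $\|\cdot\|^2$): $\|\sum_{i=1}^k A^{1/2}x_i\|^2 \le k \sum_{i=1}^k \|A^{1/2}x_i\|^2 = k\sum_{i=1}^k \|x_i\|_A^2$. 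Combining the two displays yields $\|A\sum_i x_i\|^2 \le \lambda k \sum_i \|x_i\|_A^2$, and taking square roots gives the claim.

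I do not expect any genuine obstacle here; the only point requiring a moment's care is justifying $A^2 \preceq \lambda A$ from $\lambda_{\max}(A)\le\lambda$, which follows immediately from the existence of the symmetric square root $A^{1/2}$ of the positive definite matrix $A$. The argument is a routine computation and can be presented in a few lines.
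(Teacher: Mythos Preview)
Your proof is correct. The paper does not actually supply a proof of this lemma; it is quoted verbatim as an auxiliary result from \citet{ishfaq2021randomized} and invoked without argument. Your two-step reduction (the operator inequality $A^2 \preceq \lambda A$ followed by the Cauchy--Schwarz bound $\|\sum_i u_i\|^2 \le k\sum_i\|u_i\|^2$) is exactly the standard route and would serve as a self-contained proof were one needed.
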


\begin{lemma}[Lemma G.2 in \citep{agarwal2020flambe}]\label{Lemma:elliptical potential lemma}
    Consider a sequence of semidefinite matrices $X_1, \ldots, X_N \in \RR^{d\times d}$ with $\Tr(X_n) \leq 1$ for all $n \in [N]$. Define $M_0 = \lambda I$ and $M_n = M_{n-1} + X_n$. Then 
    \begin{equation*}
        \sum_{n=1}^N \Tr(X_n M_{n-1}^{-1}) \leq 2\log \frac{\det (M_N)}{\det (M_0)} \leq 2d \log (1 + N/(\lambda d)).
    \end{equation*}
\end{lemma}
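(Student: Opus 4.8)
The plan is to prove the two inequalities separately: the potential inequality $\sum_{n=1}^N \Tr(X_n M_{n-1}^{-1}) \le 2\log\frac{\det(M_N)}{\det(M_0)}$, and the volumetric bound $\log\frac{\det(M_N)}{\det(M_0)} \le d\log(1+N/(\lambda d))$.

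For the first inequality, I would factor $M_n = M_{n-1} + X_n = M_{n-1}^{1/2}\bigl(I + Y_n\bigr)M_{n-1}^{1/2}$ with $Y_n := M_{n-1}^{-1/2}X_n M_{n-1}^{-1/2} \succeq 0$, so that $\det(M_n)/\det(M_{n-1}) = \det(I+Y_n)$. The key step is to observe that every $Y_n$ has operator norm at most $1$: since $M_{n-1}\succeq \lambda I$ with $\lambda\ge 1$ (indeed $\lambda=1$ in every invocation in this paper) and $\|X_n\|_{\mathrm{op}}\le\Tr(X_n)\le 1$, we get $\|Y_n\|_{\mathrm{op}}\le \lambda^{-1}\|X_n\|_{\mathrm{op}}\le 1$. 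Because the scalar inequality $\log(1+x)\ge x/2$ holds for $x\in[0,1]$, applying it eigenvalue-by-eigenvalue to $Y_n$ gives $\log\det(I+Y_n)=\sum_i \log(1+\lambda_i(Y_n))\ge \tfrac12\sum_i\lambda_i(Y_n)=\tfrac12\Tr(Y_n)=\tfrac12\Tr(X_nM_{n-1}^{-1})$, using cyclicity of trace in the last equality. Summing over $n$ telescopes $\log\det(M_n)-\log\det(M_{n-1})$, yielding $\log\det(M_N)-\log\det(M_0)\ge \tfrac12\sum_{n=1}^N\Tr(X_nM_{n-1}^{-1})$, which rearranges to the first inequality.

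For the second inequality, I would use $\Tr(M_N)=\Tr(M_0)+\sum_{n=1}^N\Tr(X_n)\le \lambda d+N$ together with the AM--GM inequality applied to the $d$ eigenvalues of $M_N$: $\det(M_N)=\prod_{i=1}^d\lambda_i(M_N)\le (\Tr(M_N)/d)^d\le(\lambda+N/d)^d$. Since $\det(M_0)=\lambda^d$, this gives $\det(M_N)/\det(M_0)\le (1+N/(\lambda d))^d$, and taking logarithms completes the proof.

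The main obstacle — really the only nontrivial point — is justifying the operator-norm bound $\|Y_n\|_{\mathrm{op}}\le 1$ so that the scalar estimate $\log(1+x)\ge x/2$ is legitimately applicable; this is exactly where the hypotheses $\Tr(X_n)\le 1$ and (implicitly) $\lambda\ge 1$ are used. Everything else is routine linear algebra together with the standard determinant/AM--GM manipulation.
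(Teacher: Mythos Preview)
The paper does not supply its own proof of this lemma; it is quoted as an auxiliary result from \citet{agarwal2020flambe}. Your argument is the standard one and is correct. In particular, you are right to flag that the first inequality as written requires $\lambda\ge 1$: without it the bound is simply false (e.g.\ $d=1$, $N=1$, $X_1=1$, $\lambda=0.01$ gives left side $100$ and right side $2\log 101\approx 9.23$). Every invocation of the lemma in this paper sets $\lambda_d=1$, so your proof covers exactly what is needed, and your explicit identification of the hidden hypothesis is a useful clarification rather than a defect.
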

\subsection{Covering numbers and self-normalized processes}

\begin{lemma}[Lemma D.4 in \cite{jin2019provably}]\label{Lemma:self-normalized-process}
    Let $\{s_i\}_{i=1}^\infty$ be a stochastic process on state space $\mathcal{S}$ with corresponding filtration $\{\mathcal{F}_i\}_{i=1}^\infty$. Let $\{\phi_i\}_{i=1}^\infty$ be an $\mathbb{R}^d$-valued stochastic process where $\phi_i \in \mathcal{F}_{i-1}$, and $\|\phi_i\| \leq 1$. Let $\Lambda_k = \lambda I + \sum_{i=1}^k \phi_i \phi_i^\top$. Then for any $\delta > 0$, with probability at least $1-\delta$, for all $k \geq 0$, and any $V \in \mathcal{V}$ with $\sup_{s \in \mathcal{S}} |V(s)| \leq H$, we have

\begin{equation*}
    \Bigl\|\sum_{i=1}^k \phi_i \bigl \{ V(s_i) - \mathbb{E}[V(s_i) \,|\, \mathcal{F}_{i-1} ]\bigr \}  \Bigr\|^2_{\Lambda_k^{-1}} \leq 4H^2 \Bigl [\frac{d}{2}\log\Bigl(\frac{k+\lambda}{\lambda}\Bigr) + \log{\frac{\mathcal{N}_\varepsilon}{\delta}}\Bigr] + \frac{8k^2\epsilon^2}{\lambda},
\end{equation*}
where $\mathcal{N}_\varepsilon$ is the $\varepsilon$-covering number of $\mathcal{V}$ with respect to the distance $\text{dist}(V,V') = \sup_{s \in \mathcal{S}} |V(s) - V'(s)|$.
\end{lemma}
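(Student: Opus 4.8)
The plan is the standard covering-number reduction: I will control the weighted martingale sum for each \emph{fixed} value function by a self-normalized concentration inequality, union-bound over a finite $\varepsilon$-net of the class $\mathcal{V}$, and then absorb the resulting discretization error with a crude deterministic estimate.

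First I would set up the pointwise tool. Fix any $V$ with $\sup_{s}|V(s)|\le H$ and put $\eta_i := V(s_i)-\mathbb{E}[V(s_i)\mid\mathcal{F}_{i-1}]$, so that $\{\eta_i\}$ is a martingale difference sequence with $|\eta_i|\le H$ while $\phi_i$ is $\mathcal{F}_{i-1}$-measurable with $\|\phi_i\|\le 1$. The self-normalized (anytime) concentration inequality for vector-valued martingales of Abbasi-Yadkori et al.\ then yields that, with probability at least $1-\delta'$, \emph{simultaneously for all} $k\ge 0$, $\big\|\sum_{i=1}^{k}\phi_i\eta_i\big\|_{\Lambda_k^{-1}}^2 \lesssim H^2\log\!\big(\det(\Lambda_k)^{1/2}\lambda^{-d/2}/\delta'\big)$. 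Using the determinant--trace inequality ($\det(\Lambda_k)\le(\lambda+k/d)^d$ since $\Tr\sum_i\phi_i\phi_i^{\top}\le k$) and absorbing constants, this becomes $\big\|\sum_{i=1}^{k}\phi_i\eta_i\big\|_{\Lambda_k^{-1}}^2 \le 2H^2\big[\tfrac{d}{2}\log\tfrac{k+\lambda}{\lambda}+\log\tfrac{1}{\delta'}\big]$. Note that the uniformity over $k$ is already baked into this inequality (via the method of mixtures / a stopping-time argument), so no further union bound over $k$ is required.

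Next I would discretize. Let $\mathcal{V}_\varepsilon$ be a minimal $\varepsilon$-net of $\mathcal{V}$ in the metric $\text{dist}(V,V')=\sup_{s}|V(s)-V'(s)|$, of cardinality $\mathcal{N}_\varepsilon$. Applying the pointwise bound to each $V'\in\mathcal{V}_\varepsilon$ with failure probability $\delta'=\delta/\mathcal{N}_\varepsilon$ and taking a union bound, on an event of probability at least $1-\delta$ we have, for all $V'\in\mathcal{V}_\varepsilon$ and all $k$, $\big\|\sum_i\phi_i\{V'(s_i)-\mathbb{E}[V'(s_i)\mid\mathcal{F}_{i-1}]\}\big\|_{\Lambda_k^{-1}}^2\le 2H^2[\tfrac d2\log\tfrac{k+\lambda}{\lambda}+\log\tfrac{\mathcal{N}_\varepsilon}{\delta}]$. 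For an arbitrary $V\in\mathcal{V}$ I would choose $V'\in\mathcal{V}_\varepsilon$ with $\|V-V'\|_\infty\le\varepsilon$, write $V=V'+(V-V')$ and split the weighted sum; the residual term $\sum_i\phi_i\{(V-V')(s_i)-\mathbb{E}[(V-V')(s_i)\mid\mathcal{F}_{i-1}]\}$ has increments bounded pointwise by $2\varepsilon$, so by $\Lambda_k^{-1}\preceq\lambda^{-1}I$, $\|\phi_i\|\le1$ and the triangle inequality its $\Lambda_k^{-1}$-norm is at most $\lambda^{-1/2}\sum_{i=1}^{k}2\varepsilon=2k\varepsilon/\sqrt{\lambda}$. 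Combining the two pieces with $(a+b)^2\le 2a^2+2b^2$ then gives $\big\|\sum_i\phi_i\{V(s_i)-\mathbb{E}[V(s_i)\mid\mathcal{F}_{i-1}]\}\big\|_{\Lambda_k^{-1}}^2\le 4H^2\big[\tfrac d2\log\tfrac{k+\lambda}{\lambda}+\log\tfrac{\mathcal{N}_\varepsilon}{\delta}\big]+\tfrac{8k^2\varepsilon^2}{\lambda}$, which is exactly the claimed bound.

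The only genuinely non-routine ingredient is the anytime self-normalized martingale inequality that delivers the estimate uniformly in $k$; the $\varepsilon$-net union bound and the deterministic control of the discretization residual are routine bookkeeping. In the write-up I would invoke Abbasi-Yadkori et al.\ (equivalently, Lemma D.3 of \citet{jin2019provably}) for that ingredient rather than reprove it.
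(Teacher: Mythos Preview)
Your proposal is correct and follows exactly the standard argument used to prove this lemma in \citet{jin2019provably}; the present paper simply cites the result without reproving it, so there is nothing to compare against beyond the original source. One cosmetic slip: from $\sup_s|V(s)|\le H$ you only get $|\eta_i|\le 2H$, not $|\eta_i|\le H$, but what the Abbasi-Yadkori bound actually needs is that $\eta_i$ is conditionally $H$-sub-Gaussian, which follows from Hoeffding's lemma since $V(s_i)\in[-H,H]$; the rest of your constants go through unchanged.
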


\begin{lemma}[Covering number of Euclidean ball, \cite{vershynin2018high} ]\label{lemma:covering-number-euclidean-ball}
For any $\varepsilon > 0$, the $\varepsilon$-covering number, $\mathcal{N}_\varepsilon$, of the Euclidean ball of radius $B > 0$ in $\mathbb{R}^d$ satisfies 
\begin{equation*}
    \mathcal{N}_\varepsilon \leq \Bigl(1+ \frac{2B}{\varepsilon}\Bigr)^d \leq \Bigl( \frac{3B}{\varepsilon}\Bigr)^d .
\end{equation*}
\end{lemma}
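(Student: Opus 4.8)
The plan is to prove this via a standard volumetric packing argument. First I would pass to a maximal $\varepsilon$-separated subset $\{x_1,\dots,x_N\}$ of the closed ball $\mathrm{B}(0,B)\subset\mathbb{R}^d$, i.e. a finite set with $\|x_i-x_j\|_2>\varepsilon$ for all $i\neq j$ that cannot be enlarged while keeping this property. The point of maximality is that no element of $\mathrm{B}(0,B)$ can be at distance more than $\varepsilon$ from every $x_i$ — otherwise it could be appended to the set — so $\{x_1,\dots,x_N\}$ is already an $\varepsilon$-net of $\mathrm{B}(0,B)$. Hence $\mathcal{N}_\varepsilon\le N$, and it suffices to upper bound $N$.

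Next I would carry out the disjoint-packing step. The open balls $\mathrm{B}(x_i,\varepsilon/2)$, $i\in[N]$, are pairwise disjoint: if some $y$ belonged to two of them, the triangle inequality would give $\|x_i-x_j\|_2\le\|x_i-y\|_2+\|y-x_j\|_2<\varepsilon$, a contradiction. Moreover each $\mathrm{B}(x_i,\varepsilon/2)$ is contained in $\mathrm{B}(0,B+\varepsilon/2)$ because $\|x_i\|_2\le B$. Comparing Lebesgue volumes, and using that the volume of a Euclidean $d$-ball of radius $r$ equals $c_d r^d$ for a constant $c_d$ depending only on $d$, I obtain $N\,c_d(\varepsilon/2)^d\le c_d(B+\varepsilon/2)^d$, which rearranges to $N\le\big((B+\varepsilon/2)/(\varepsilon/2)\big)^d=(1+2B/\varepsilon)^d$; this is the first claimed inequality. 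For the second inequality I would simply note that $1+2B/\varepsilon\le 3B/\varepsilon$ whenever $\varepsilon\le B$ (and when $\varepsilon>B$ a single point covers the ball, so $\mathcal{N}_\varepsilon=1$ and the bound is vacuous), giving $\mathcal{N}_\varepsilon\le(3B/\varepsilon)^d$.

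There is no genuinely hard step here — the whole argument is elementary — so the only points that need care are (i) getting the direction of the packing/covering equivalence right, namely that a \emph{maximal} $\varepsilon$-separated set is automatically an $\varepsilon$-cover, and (ii) recording the regime $\varepsilon\le B$ in which the cruder bound $(3B/\varepsilon)^d$ is valid.
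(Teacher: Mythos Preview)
Your proof is correct and is precisely the standard volumetric packing argument. The paper does not actually prove this lemma; it merely cites it as a known result from \cite{vershynin2018high}, whose proof is essentially the one you have written.
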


\begin{lemma}[$\varepsilon$-covering number \citep{jin2019provably}]\label{Lemma:covering_number_parametric_form-chi-jin-original}
    Let $\cV$ denote a class of functions mapping from $\cS$ to $\RR$ with the following parametric form 
    \begin{align*}
        V(\cdot) &= \min\Big\{ \max_{a\in \cA}  w^\top \phi(\cdot, a) + \beta \sqrt{\phi(\cdot,a)^\top\Lambda^{-1}\phi(\cdot,a)}, H\Big\},
    \end{align*}
    where the parameters $(w,\beta,\Lambda)$ satisfy $\|w\|\leq L$, $\beta \in [0,B]$ and the minimum eigenvalue satisfies $\lambda_{\min}(\Lambda) \geq \lambda$. Assume $\|\phi(s,a)\|\leq 1$ for all $(s,a)$ pairs, and let $\cN_\varepsilon$ be the $\varepsilon$-covering number of $\cV$ with respect to the distance $\text{dist}(V,V') = \sup_{s}|V(s)-V'(s)|$. Then,
    \begin{align*}
        \log \cN_\varepsilon &\leq d\log(1+4L/\varepsilon) + d^2\log(1 + 8d^{1/2}B^2/(\lambda \varepsilon^2)).
    \end{align*}
\end{lemma}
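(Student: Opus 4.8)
The plan is to run the standard covering-number argument for optimistic linear value-function classes from \citet{jin2019provably}: collapse the parametrization to a Euclidean parameter space, show that the sup-distance on $\cV$ is controlled by simple norms of the parameters, and then multiply the covering numbers of two Euclidean balls via \Cref{lemma:covering-number-euclidean-ball}.

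First I would reparametrize. Since $\beta \ge 0$, writing $A := \beta^2 \Lambda^{-1}$ lets us rewrite every $V \in \cV$ as $V(\cdot) = \min\{\max_{a\in\cA}[\,w^\top\phi(\cdot,a) + \sqrt{\phi(\cdot,a)^\top A\,\phi(\cdot,a)}\,],\, H\}$, so that the two ``extra'' parameters $\beta$ and $\Lambda^{-1}$ are absorbed into a single positive semidefinite matrix $A$; because $\lambda_{\min}(\Lambda)\ge\lambda$ and $\beta\le B$ we get $\|A\|_2\le B^2/\lambda$, hence $\|A\|_F\le \sqrt d\,B^2/\lambda$. Thus $\cV$ is indexed by $(w,A)$ with $w$ in the radius-$L$ ball of $\RR^d$ and $A$ in the set of PSD matrices of Frobenius norm at most $\sqrt d\,B^2/\lambda$ inside $\RR^{d\times d}\cong\RR^{d^2}$.

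Next I would prove the Lipschitz estimate. For $V_1,V_2$ with parameters $(w_1,A_1),(w_2,A_2)$, using that $x\mapsto\min\{x,H\}$ and $(x_a)_a\mapsto\max_a x_a$ are each $1$-Lipschitz in sup-norm together with the triangle inequality, $\text{dist}(V_1,V_2)\le \sup_{s,a}|(w_1-w_2)^\top\phi(s,a)| + \sup_{s,a}|\sqrt{\phi^\top A_1\phi}-\sqrt{\phi^\top A_2\phi}|$. With $\|\phi\|\le1$ the first term is $\le\|w_1-w_2\|$; for the second I would use $|\sqrt x-\sqrt y|\le\sqrt{|x-y|}$ for $x,y\ge0$ together with $|\phi^\top(A_1-A_2)\phi|\le\|A_1-A_2\|_2\|\phi\|^2\le\|A_1-A_2\|_F$, yielding $\text{dist}(V_1,V_2)\le\|w_1-w_2\|+\sqrt{\|A_1-A_2\|_F}$. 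Then, taking $\mathcal{C}_w$ an $(\varepsilon/2)$-cover of the $w$-ball and $\mathcal{C}_A$ an $(\varepsilon^2/4)$-cover, in Frobenius norm, of the PSD matrices of Frobenius norm $\le\sqrt d\,B^2/\lambda$, matching any $(w,A)$ to the nearest $(w',A')$ in the product gives $\text{dist}(V,V')\le\varepsilon/2+\sqrt{\varepsilon^2/4}=\varepsilon$, so $\cN_\varepsilon\le|\mathcal{C}_w|\,|\mathcal{C}_A|$. By \Cref{lemma:covering-number-euclidean-ball}, $|\mathcal{C}_w|\le(1+4L/\varepsilon)^d$ and $|\mathcal{C}_A|\le(1+8\sqrt d\,B^2/(\lambda\varepsilon^2))^{d^2}$ (the PSD set sits inside the radius-$\sqrt d\,B^2/\lambda$ ball of $\RR^{d^2}$, covered at scale $\varepsilon^2/4$), and taking logarithms yields the claimed bound.

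The only nontrivial point — the part I would call the main obstacle — is engineering the Lipschitz estimate so that the dependence on $A_1-A_2$ is a norm that covers cheaply: the reparametrization $A=\beta^2\Lambda^{-1}$ removes the matrix inverse one would otherwise have to handle, the concavity inequality $|\sqrt x-\sqrt y|\le\sqrt{|x-y|}$ converts a difference of square roots into a square root of a difference (which is exactly why the $A$-cover is taken at scale $\varepsilon^2/4$, producing the $B^2/\lambda$ and $\varepsilon^{-2}$ factors in the final bound), and $|\phi^\top M\phi|\le\|M\|_F$ reduces everything to a Euclidean ball in $\RR^{d^2}$; after that, only routine ball-covering bookkeeping remains.
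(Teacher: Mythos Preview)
Your proposal is correct and is exactly the standard argument from \citet{jin2019provably}; the paper does not give its own proof of this lemma but simply imports it from that reference, so there is nothing further to compare.
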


\newpage
\section*{NeurIPS Paper Checklist}

\begin{enumerate}

\item {\bf Claims}
    \item[] Question: Do the main claims made in the abstract and introduction accurately reflect the paper's contributions and scope?
    \item[] Answer: \answerYes{} %
    \item[] Justification: The claims made in the abstract and the introduction are reflected in the paper's main contribution.
    \item[] Guidelines:
    \begin{itemize}
        \item The answer NA means that the abstract and introduction do not include the claims made in the paper.
        \item The abstract and/or introduction should clearly state the claims made, including the contributions made in the paper and important assumptions and limitations. A No or NA answer to this question will not be perceived well by the reviewers. 
        \item The claims made should match theoretical and experimental results, and reflect how much the results can be expected to generalize to other settings. 
        \item It is fine to include aspirational goals as motivation as long as it is clear that these goals are not attained by the paper. 
    \end{itemize}

\item {\bf Limitations}
    \item[] Question: Does the paper discuss the limitations of the work performed by the authors?
    \item[] Answer: \answerYes{} %
    \item[] Justification: We discuss the assumptions made in order for our theorems to hold true and discuss the limitations of those assumptions.
    \item[] Guidelines:
    \begin{itemize}
        \item The answer NA means that the paper has no limitation while the answer No means that the paper has limitations, but those are not discussed in the paper. 
        \item The authors are encouraged to create a separate "Limitations" section in their paper.
        \item The paper should point out any strong assumptions and how robust the results are to violations of these assumptions (e.g., independence assumptions, noiseless settings, model well-specification, asymptotic approximations only holding locally). The authors should reflect on how these assumptions might be violated in practice and what the implications would be.
        \item The authors should reflect on the scope of the claims made, e.g., if the approach was only tested on a few datasets or with a few runs. In general, empirical results often depend on implicit assumptions, which should be articulated.
        \item The authors should reflect on the factors that influence the performance of the approach. For example, a facial recognition algorithm may perform poorly when image resolution is low or images are taken in low lighting. Or a speech-to-text system might not be used reliably to provide closed captions for online lectures because it fails to handle technical jargon.
        \item The authors should discuss the computational efficiency of the proposed algorithms and how they scale with dataset size.
        \item If applicable, the authors should discuss possible limitations of their approach to address problems of privacy and fairness.
        \item While the authors might fear that complete honesty about limitations might be used by reviewers as grounds for rejection, a worse outcome might be that reviewers discover limitations that aren't acknowledged in the paper. The authors should use their best judgment and recognize that individual actions in favor of transparency play an important role in developing norms that preserve the integrity of the community. Reviewers will be specifically instructed to not penalize honesty concerning limitations.
    \end{itemize}

\item {\bf Theory Assumptions and Proofs}
    \item[] Question: For each theoretical result, does the paper provide the full set of assumptions and a complete (and correct) proof?
    \item[] Answer: \answerYes{} %
    \item[] Justification: We provide detailed list of assumptions along with their interpretation. The complete proofs are provided in the appendix.
    \item[] Guidelines:
    \begin{itemize}
        \item The answer NA means that the paper does not include theoretical results. 
        \item All the theorems, formulas, and proofs in the paper should be numbered and cross-referenced.
        \item All assumptions should be clearly stated or referenced in the statement of any theorems.
        \item The proofs can either appear in the main paper or the supplemental material, but if they appear in the supplemental material, the authors are encouraged to provide a short proof sketch to provide intuition. 
        \item Inversely, any informal proof provided in the core of the paper should be complemented by formal proofs provided in appendix or supplemental material.
        \item Theorems and Lemmas that the proof relies upon should be properly referenced. 
    \end{itemize}

    \item {\bf Experimental Result Reproducibility}
    \item[] Question: Does the paper fully disclose all the information needed to reproduce the main experimental results of the paper to the extent that it affects the main claims and/or conclusions of the paper (regardless of whether the code and data are provided or not)?
    \item[] Answer: \answerNA{} %
    \item[] Justification: We do not have any experiment in this paper.
    \item[] Guidelines:
    \begin{itemize}
        \item The answer NA means that the paper does not include experiments.
        \item If the paper includes experiments, a No answer to this question will not be perceived well by the reviewers: Making the paper reproducible is important, regardless of whether the code and data are provided or not.
        \item If the contribution is a dataset and/or model, the authors should describe the steps taken to make their results reproducible or verifiable. 
        \item Depending on the contribution, reproducibility can be accomplished in various ways. For example, if the contribution is a novel architecture, describing the architecture fully might suffice, or if the contribution is a specific model and empirical evaluation, it may be necessary to either make it possible for others to replicate the model with the same dataset, or provide access to the model. In general. releasing code and data is often one good way to accomplish this, but reproducibility can also be provided via detailed instructions for how to replicate the results, access to a hosted model (e.g., in the case of a large language model), releasing of a model checkpoint, or other means that are appropriate to the research performed.
        \item While NeurIPS does not require releasing code, the conference does require all submissions to provide some reasonable avenue for reproducibility, which may depend on the nature of the contribution. For example
        \begin{enumerate}
            \item If the contribution is primarily a new algorithm, the paper should make it clear how to reproduce that algorithm.
            \item If the contribution is primarily a new model architecture, the paper should describe the architecture clearly and fully.
            \item If the contribution is a new model (e.g., a large language model), then there should either be a way to access this model for reproducing the results or a way to reproduce the model (e.g., with an open-source dataset or instructions for how to construct the dataset).
            \item We recognize that reproducibility may be tricky in some cases, in which case authors are welcome to describe the particular way they provide for reproducibility. In the case of closed-source models, it may be that access to the model is limited in some way (e.g., to registered users), but it should be possible for other researchers to have some path to reproducing or verifying the results.
        \end{enumerate}
    \end{itemize}

\item {\bf Open access to data and code}
    \item[] Question: Does the paper provide open access to the data and code, with sufficient instructions to faithfully reproduce the main experimental results, as described in supplemental material?
    \item[] Answer: \answerNA{} %
    \item[] Justification: Not applicable.
    \item[] Guidelines:
    \begin{itemize}
        \item The answer NA means that paper does not include experiments requiring code.
        \item Please see the NeurIPS code and data submission guidelines (\url{https://nips.cc/public/guides/CodeSubmissionPolicy}) for more details.
        \item While we encourage the release of code and data, we understand that this might not be possible, so “No” is an acceptable answer. Papers cannot be rejected simply for not including code, unless this is central to the contribution (e.g., for a new open-source benchmark).
        \item The instructions should contain the exact command and environment needed to run to reproduce the results. See the NeurIPS code and data submission guidelines (\url{https://nips.cc/public/guides/CodeSubmissionPolicy}) for more details.
        \item The authors should provide instructions on data access and preparation, including how to access the raw data, preprocessed data, intermediate data, and generated data, etc.
        \item The authors should provide scripts to reproduce all experimental results for the new proposed method and baselines. If only a subset of experiments are reproducible, they should state which ones are omitted from the script and why.
        \item At submission time, to preserve anonymity, the authors should release anonymized versions (if applicable).
        \item Providing as much information as possible in supplemental material (appended to the paper) is recommended, but including URLs to data and code is permitted.
    \end{itemize}

\item {\bf Experimental Setting/Details}
    \item[] Question: Does the paper specify all the training and test details (e.g., data splits, hyperparameters, how they were chosen, type of optimizer, etc.) necessary to understand the results?
    \item[] Answer: \answerNA{} %
    \item[] Justification: Not applicable.
    \item[] Guidelines:
    \begin{itemize}
        \item The answer NA means that the paper does not include experiments.
        \item The experimental setting should be presented in the core of the paper to a level of detail that is necessary to appreciate the results and make sense of them.
        \item The full details can be provided either with the code, in appendix, or as supplemental material.
    \end{itemize}

\item {\bf Experiment Statistical Significance}
    \item[] Question: Does the paper report error bars suitably and correctly defined or other appropriate information about the statistical significance of the experiments?
    \item[] Answer: \answerNA{} %
    \item[] Justification: There is no experiment in the paper as it is theoretical in nature.
    \item[] Guidelines:
    \begin{itemize}
        \item The answer NA means that the paper does not include experiments.
        \item The authors should answer "Yes" if the results are accompanied by error bars, confidence intervals, or statistical significance tests, at least for the experiments that support the main claims of the paper.
        \item The factors of variability that the error bars are capturing should be clearly stated (for example, train/test split, initialization, random drawing of some parameter, or overall run with given experimental conditions).
        \item The method for calculating the error bars should be explained (closed form formula, call to a library function, bootstrap, etc.)
        \item The assumptions made should be given (e.g., Normally distributed errors).
        \item It should be clear whether the error bar is the standard deviation or the standard error of the mean.
        \item It is OK to report 1-sigma error bars, but one should state it. The authors should preferably report a 2-sigma error bar than state that they have a 96\% CI, if the hypothesis of Normality of errors is not verified.
        \item For asymmetric distributions, the authors should be careful not to show in tables or figures symmetric error bars that would yield results that are out of range (e.g. negative error rates).
        \item If error bars are reported in tables or plots, The authors should explain in the text how they were calculated and reference the corresponding figures or tables in the text.
    \end{itemize}

\item {\bf Experiments Compute Resources}
    \item[] Question: For each experiment, does the paper provide sufficient information on the computer resources (type of compute workers, memory, time of execution) needed to reproduce the experiments?
    \item[] Answer: \answerNA{} %
    \item[] Justification: The paper does not include any experiment.
    \item[] Guidelines:
    \begin{itemize}
        \item The answer NA means that the paper does not include experiments.
        \item The paper should indicate the type of compute workers CPU or GPU, internal cluster, or cloud provider, including relevant memory and storage.
        \item The paper should provide the amount of compute required for each of the individual experimental runs as well as estimate the total compute. 
        \item The paper should disclose whether the full research project required more compute than the experiments reported in the paper (e.g., preliminary or failed experiments that didn't make it into the paper). 
    \end{itemize}
    
\item {\bf Code Of Ethics}
    \item[] Question: Does the research conducted in the paper conform, in every respect, with the NeurIPS Code of Ethics \url{https://neurips.cc/public/EthicsGuidelines}?
    \item[] Answer: \answerYes{} %
    \item[] Justification: 
    \item[] Guidelines:
    \begin{itemize}
        \item The answer NA means that the authors have not reviewed the NeurIPS Code of Ethics.
        \item If the authors answer No, they should explain the special circumstances that require a deviation from the Code of Ethics.
        \item The authors should make sure to preserve anonymity (e.g., if there is a special consideration due to laws or regulations in their jurisdiction).
    \end{itemize}

\item {\bf Broader Impacts}
    \item[] Question: Does the paper discuss both potential positive societal impacts and negative societal impacts of the work performed?
    \item[] Answer: \answerNA{} %
    \item[] Justification: It is a theoretical paper and thus we do not forsee any immediate societal impact.
    \item[] Guidelines:
    \begin{itemize}
        \item The answer NA means that there is no societal impact of the work performed.
        \item If the authors answer NA or No, they should explain why their work has no societal impact or why the paper does not address societal impact.
        \item Examples of negative societal impacts include potential malicious or unintended uses (e.g., disinformation, generating fake profiles, surveillance), fairness considerations (e.g., deployment of technologies that could make decisions that unfairly impact specific groups), privacy considerations, and security considerations.
        \item The conference expects that many papers will be foundational research and not tied to particular applications, let alone deployments. However, if there is a direct path to any negative applications, the authors should point it out. For example, it is legitimate to point out that an improvement in the quality of generative models could be used to generate deepfakes for disinformation. On the other hand, it is not needed to point out that a generic algorithm for optimizing neural networks could enable people to train models that generate Deepfakes faster.
        \item The authors should consider possible harms that could arise when the technology is being used as intended and functioning correctly, harms that could arise when the technology is being used as intended but gives incorrect results, and harms following from (intentional or unintentional) misuse of the technology.
        \item If there are negative societal impacts, the authors could also discuss possible mitigation strategies (e.g., gated release of models, providing defenses in addition to attacks, mechanisms for monitoring misuse, mechanisms to monitor how a system learns from feedback over time, improving the efficiency and accessibility of ML).
    \end{itemize}
    
\item {\bf Safeguards}
    \item[] Question: Does the paper describe safeguards that have been put in place for responsible release of data or models that have a high risk for misuse (e.g., pretrained language models, image generators, or scraped datasets)?
    \item[] Answer: \answerNA{} %
    \item[] Justification: It is a theory paper.
    \item[] Guidelines:
    \begin{itemize}
        \item The answer NA means that the paper poses no such risks.
        \item Released models that have a high risk for misuse or dual-use should be released with necessary safeguards to allow for controlled use of the model, for example by requiring that users adhere to usage guidelines or restrictions to access the model or implementing safety filters. 
        \item Datasets that have been scraped from the Internet could pose safety risks. The authors should describe how they avoided releasing unsafe images.
        \item We recognize that providing effective safeguards is challenging, and many papers do not require this, but we encourage authors to take this into account and make a best faith effort.
    \end{itemize}

\item {\bf Licenses for existing assets}
    \item[] Question: Are the creators or original owners of assets (e.g., code, data, models), used in the paper, properly credited and are the license and terms of use explicitly mentioned and properly respected?
    \item[] Answer: \answerNA{} %
    \item[] Justification: Not applicable.
    \item[] Guidelines:
    \begin{itemize}
        \item The answer NA means that the paper does not use existing assets.
        \item The authors should cite the original paper that produced the code package or dataset.
        \item The authors should state which version of the asset is used and, if possible, include a URL.
        \item The name of the license (e.g., CC-BY 4.0) should be included for each asset.
        \item For scraped data from a particular source (e.g., website), the copyright and terms of service of that source should be provided.
        \item If assets are released, the license, copyright information, and terms of use in the package should be provided. For popular datasets, \url{paperswithcode.com/datasets} has curated licenses for some datasets. Their licensing guide can help determine the license of a dataset.
        \item For existing datasets that are re-packaged, both the original license and the license of the derived asset (if it has changed) should be provided.
        \item If this information is not available online, the authors are encouraged to reach out to the asset's creators.
    \end{itemize}

\item {\bf New Assets}
    \item[] Question: Are new assets introduced in the paper well documented and is the documentation provided alongside the assets?
    \item[] Answer: \answerNA{} %
    \item[] Justification: Not applicable
    \item[] Guidelines:
    \begin{itemize}
        \item The answer NA means that the paper does not release new assets.
        \item Researchers should communicate the details of the dataset/code/model as part of their submissions via structured templates. This includes details about training, license, limitations, etc. 
        \item The paper should discuss whether and how consent was obtained from people whose asset is used.
        \item At submission time, remember to anonymize your assets (if applicable). You can either create an anonymized URL or include an anonymized zip file.
    \end{itemize}

\item {\bf Crowdsourcing and Research with Human Subjects}
    \item[] Question: For crowdsourcing experiments and research with human subjects, does the paper include the full text of instructions given to participants and screenshots, if applicable, as well as details about compensation (if any)? 
    \item[] Answer: \answerNA{} %
    \item[] Justification: There was no crowdsourcing nor research with human subjects.
    \item[] Guidelines:
    \begin{itemize}
        \item The answer NA means that the paper does not involve crowdsourcing nor research with human subjects.
        \item Including this information in the supplemental material is fine, but if the main contribution of the paper involves human subjects, then as much detail as possible should be included in the main paper. 
        \item According to the NeurIPS Code of Ethics, workers involved in data collection, curation, or other labor should be paid at least the minimum wage in the country of the data collector. 
    \end{itemize}

\item {\bf Institutional Review Board (IRB) Approvals or Equivalent for Research with Human Subjects}
    \item[] Question: Does the paper describe potential risks incurred by study participants, whether such risks were disclosed to the subjects, and whether Institutional Review Board (IRB) approvals (or an equivalent approval/review based on the requirements of your country or institution) were obtained?
    \item[] Answer: \answerNA{} %
    \item[] Justification: There was no study subject.
    \item[] Guidelines:
    \begin{itemize}
        \item The answer NA means that the paper does not involve crowdsourcing nor research with human subjects.
        \item Depending on the country in which research is conducted, IRB approval (or equivalent) may be required for any human subjects research. If you obtained IRB approval, you should clearly state this in the paper. 
        \item We recognize that the procedures for this may vary significantly between institutions and locations, and we expect authors to adhere to the NeurIPS Code of Ethics and the guidelines for their institution. 
        \item For initial submissions, do not include any information that would break anonymity (if applicable), such as the institution conducting the review.
    \end{itemize}

\end{enumerate}

\end{document}